\documentclass[11pt]{article}

\usepackage{vitercik}
\usepackage{bm}
\usepackage{amsmath,amssymb}
\usepackage{graphicx}
\usepackage{color}
\usepackage{subcaption}
\usepackage{todonotes}
\usepackage[hidelinks]{hyperref}
\usepackage{bibentry}

\newcommand \ind [1]{\mathbb{I}_{\left\{#1\right\}}}
\newcommand \configs {\pazocal{P}}

\newcommand \cY {\pazocal{Y}}
\newcommand \cL {\pazocal{L}}
\newcommand{\score}{\texttt{score}}

\newcommand{\pspace}{\pazocal{P}}

\newcommand{\reals}{\mathbb{R}}

\newcommand{\alignment}{L}

\newcommand{\tree}{\pazocal{T}}

\newcommand{\slin}{u}
\def\qp{A}

\newcommand \mt {\textsc{mt}}
\newcommand \ms {\textsc{ms}}
\newcommand \id {\textsc{id}}
\newcommand \gp {\textsc{gp}}

\newcommand \numseqs {\kappa}
\newcommand \gapchar {-}
\newcommand \height [1]{\operatorname{height}\left(#1\right)}

	\title{How much data is sufficient to learn high-performing algorithms?
		Generalization guarantees for data-driven algorithm design}
\author{
	Maria-Florina Balcan \\ \small Carnegie Mellon University \\ \small \texttt{ninamf@cs.cmu.edu}
	\and 
	Dan DeBlasio \\ \small University of Texas at El Paso \\ \small \texttt{dfdeblasio@utep.edu}
	\and 
	Travis Dick \\ \small University of Pennsylvania \\ \small \texttt{tbd@seas.upenn.edu}
	\and 
	Carl Kingsford \\ \small Carnegie Mellon University \\ \small Ocean Genomics, Inc. \\ \small \texttt{carlk@cs.cmu.edu}
	\and 
	Tuomas Sandholm \\ \small Carnegie Mellon University \\
	\small Optimized Markets, Inc.\\
	\small Strategic Machine, Inc.\\
	\small Strategy Robot, Inc.\\
	\small \texttt{sandholm@cs.cmu.edu}
	\and 
	Ellen Vitercik \\ \small Carnegie Mellon University \\ \small \texttt{vitercik@cs.cmu.edu}}

\begin{document}

\maketitle
\begin{abstract}
	Algorithms often have tunable parameters that impact performance metrics such as runtime and solution quality. For many algorithms used in practice, no parameter settings admit meaningful worst-case bounds, so the parameters are made available for the user to tune. Alternatively, parameters may be tuned implicitly within the proof of a worst-case approximation ratio or runtime bound. Worst-case instances, however, may be rare or nonexistent in practice. A growing body of research has demonstrated that \emph{data-driven algorithm design} can lead to significant improvements in performance. This approach uses a \emph{training set} of problem instances sampled from an unknown, application-specific distribution and returns a parameter setting with strong average performance on the training set.

We provide a broadly applicable theory for deriving \emph{generalization guarantees} that bound the difference between the algorithm's average performance over the training set and its expected performance on the unknown distribution. Our results apply no matter how the parameters are tuned, be it via an automated or manual approach. The challenge is that for many types of algorithms, performance is a volatile function of the parameters: slightly perturbing the parameters can cause a large change in behavior. Prior research~\citep[e.g.,][]{Gupta17:PAC,Balcan17:Learning,Balcan18:General,Balcan18:Learning} has proved generalization bounds by employing case-by-case analyses of greedy algorithms, clustering algorithms, integer programming algorithms, and selling mechanisms. We uncover a unifying structure which we use to prove extremely general guarantees, yet we recover the bounds from prior research. Our guarantees, which are tight up to logarithmic factors in the worst case, apply whenever an algorithm's performance is a piecewise-constant, -linear, or---more generally---\emph{piecewise-structured} function of its parameters. Our theory also implies novel bounds for voting mechanisms and dynamic programming algorithms from computational biology.
\end{abstract}

\section{Introduction}
	Algorithms often have tunable parameters that impact performance metrics such as runtime, solution quality, and memory usage.
These parameters may be set explicitly, as is often the case in applied disciplines.  For example, integer programming solvers expose over one hundred parameters for the user to tune.
There may not be parameter settings that admit meaningful worst-case bounds, but after careful parameter tuning, these algorithms can quickly find solutions to computationally challenging problems. However, applied approaches to parameter tuning have rarely come with provable guarantees.
	Alternatively, an algorithm's parameters may be set implicitly, as is often the case in theoretical computer science: a proof may implicitly optimize over a parameterized family of algorithms in order to guarantee a worst-case approximation factor or runtime bound. Worst-case bounds, however, can be overly pessimistic in practice.
	A growing body of research (surveyed in a book chapter by~\citet{Balcan20:Data}) has demonstrated the power of \emph{data-driven algorithm design}, where machine learning is used to find parameter settings that work particularly well on problems from the application domain at hand.

We present a broadly applicable theory for proving \emph{generalization guarantees} in the context of data-driven algorithm design. We adopt a natural learning-theoretic model of data-driven algorithm design introduced by \citet{Gupta17:PAC}. As in the applied literature on automated algorithm configuration~\cite[e.g.,][]{Horvitz01:Bayesian,Sandholm13:Very-Large-Scale,Xu08:SATzilla,Hutter09:Paramils,Leyton-Brown09:Empirical,Kadioglu10:ISAC,Xu11:Hydra-MIP}, we assume there is an unknown, application-specific distribution over the algorithm's input instances.  A learning procedure receives a \emph{training set} sampled from this distribution and returns a parameter setting---or \emph{configuration}---with strong average performance over the training set. If the training set is too small, this configuration may have poor expected performance. \emph{Generalization guarantees} bound the difference between  average performance over the training set and actual expected performance. Our guarantees apply no matter how the parameters are optimized, via an algorithmic search as in \emph{automated algorithm configuration}~\cite[e.g.,][]{Sandholm13:Very-Large-Scale,Xu08:SATzilla,Xu11:Hydra-MIP,DeBlasio18:Parameter}, or manually as in \emph{experimental algorithmics}~\citep[e.g.,][]{Bentley84:Some,Mcgeoch12:Guide,Iyer01:Experimental}.

Across many types of algorithms---for example, combinatorial algorithms, integer programs, and dynamic programs---the algorithm's performance is a volatile function of its parameters. This is a key challenge that distinguishes our results from prior research on generalization guarantees. For well-understood functions in machine learning theory, there is generally a simple connection between a function's parameters and the value of the function. Meanwhile, slightly perturbing an algorithm's parameters can cause significant changes in its behavior and performance. To provide generalization bounds, we uncover structure that governs these volatile performance functions.

The structure we discover depends on the relationship between \emph{primal} and \emph{dual} functions~\citep{Assouad83:Densite}. To derive generalization bounds, a common strategy is to calculate the \emph{intrinsic complexity} of a function class $\cU$ which we refer to as the \emph{primal class}. Every function $u_{\vec{\rho}}\in \cU$ is defined by a parameter setting $\vec{\rho} \in \R^d$ and $u_{\vec{\rho}}(x) \in \R$ measures the performance of the algorithm parameterized by $\vec{\rho}$ given the input $x$.
We measure intrinsic complexity using the classic notion of \emph{pseudo-dimension}~\citep{Pollard84:Convergence}.
This is a challenging task because the domain of every function in $\cU$ is a set of problem instances, so there are no obvious notions of Lipschitz continuity or smoothness on which we can rely.
Instead, we use structure exhibited by the \emph{dual class} $\cU^*$. Every \emph{dual function} $u_x^* \in \cU^*$ is defined by a problem instance $x$ and measures the algorithm's performance as a function of its parameters given $x$ as input. The dual functions have a simple, Euclidean domain $\R^d$ and we demonstrate that they have ample structure which we can use to bound the pseudo-dimension of $\cU$.

\subsection{Our contributions}\label{sec:contributions}

Our results apply to any parameterized algorithm with dual functions that exhibit a clear-cut, ubiquitous structural property: the duals are piecewise constant, piecewise linear, or---more broadly---piecewise \emph{structured}. The parameter space decomposes into a small number of regions such that within each region,
the algorithm's performance is ``well behaved.''
\begin{figure}
	\centering
	\includegraphics[scale =1]{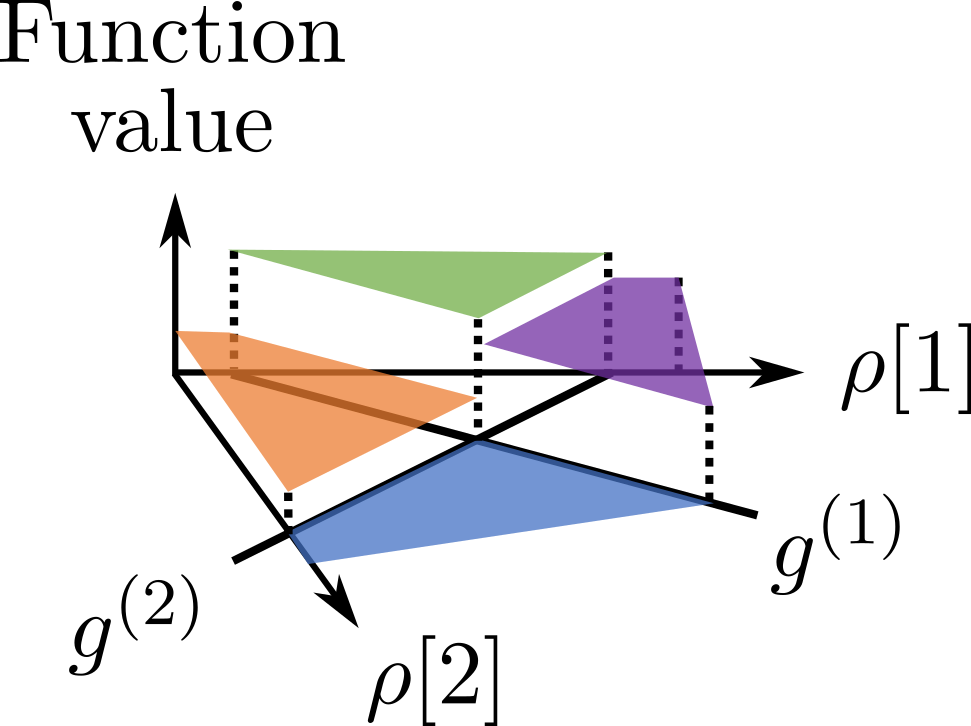}
	\caption{A piecewise-constant function over $\R^2_{\geq 0}$ with linear boundary functions $g^{(1)}$ and $g^{(2)}$.}
	\label{fig:decomp}
\end{figure}
As an example, Figure~\ref{fig:decomp} illustrates a piecewise-structured
function of two parameters $\rho[1]$ and $\rho[2]$. There are two functions $g^{(1)}$ and $g^{(2)}$ that define a partition of the parameter space and four constant functions that define the function value on each subset from this partition.

More formally, the dual class $\cU^*$ is \emph{$(\cF, \cG, k)$-piecewise decomposable} if for every problem instance, there are at most $k$ boundary functions from a set $\cG$ (for example, the set of linear separators) that partition the parameter space into regions such that within each region, algorithmic performance is defined by a function from a set $\cF$ (for example, the set of constant functions).
We bound the pseudo-dimension of $\cU$ in terms of the pseudo- and VC-dimensions of the dual classes $\cF^*$ and $\cG^*$, denoted $\pdim\left(\cF^*\right)$ and $\VC\left(\cG^*\right)$. This yields our main theorem: if $[0,H]$ is the range of the functions in $\cU$, then with probability $1-\delta$ over the draw of $N$
	training instances, for any parameter setting, the difference between the algorithm's average performance over the training set and its expected performance is $\tilde O\left(H \sqrt{\frac{1}{N}\left(\pdim\left(\cF^*\right) + \VC\left(\cG^*\right) \ln k + \ln\frac{1}{\delta}\right)}\right).$ Specifically, we prove that $\pdim(\cU) = \tilde O\left(\pdim\left(\cF^*\right) + \VC\left(\cG^*\right)\ln k\right)$ and that this bound is tight up to log factors. The classes $\cF$ and $\cG$ are often so well structured that bounding $\pdim\left(\cF^*\right)$ and $\VC\left(\cG^*\right)$ is straightforward.

	This is the most broadly applicable generalization bound for data-driven algorithm design in the distributional learning model that applies to arbitrary input distributions.
	A nascent line of research~\cite{Gupta17:PAC,Balcan17:Learning,Balcan18:General,Balcan18:Learning,Balcan18:Dispersion,Balcan20:LearningToLink}
	provides generalization bounds for a selection of parameterized algorithms.
	Unlike the results in this paper, those papers analyze each algorithm individually,
	case by case.
	Our approach recovers those bounds, implying guarantees for configuring greedy algorithms~\cite{Gupta17:PAC}, clustering algorithms~\citep{Balcan17:Learning}, and integer programming algorithms~\citep{Balcan17:Learning,Balcan18:Learning}, as well as mechanism design for revenue maximization~\citep{Balcan18:General}.	We also derive novel generalization bounds for computational biology algorithms and voting mechanisms.
	
\paragraph{Proof insights.} At a high level, we prove this guarantee by counting the number of parameter settings with significantly different performance over any set $\sample$ of problem instances. To do so, we first count the number of regions induced by the $|\sample|k$ boundary functions  that correspond to these problem instances. This step subtly depends not on the VC-dimension of the class of boundary functions $\cG$, but rather on $\VC\left(\cG^*\right)$. These $|\sample|k$ boundary functions partition the parameter space into regions where across all instances $x$ in $\sample$, the dual functions $u_x^*$ are simultaneously structured. Within any one region, we use the pseudo-dimension of the dual class $\cF^*$ to count the number of parameter settings in that region with significantly different performance. We aggregate these bounds over all regions to bound the pseudo-dimension of $\cU$.

\paragraph{Parameterized dynamic programming algorithms from computational biology.}
Our results imply bounds for a variety of computational biology algorithms that are used in practice.
We analyze parameterized sequence alignment algorithms~\citep{Gotoh87:Improved,Gusfield94:Parametric, FernandezBaca04:Parametric, Pachter04:Tropical, Pachter04:Parametric} as well as RNA folding algorithms~\cite{Nussinov80:Fast}, which predict how an input RNA strand  would  naturally  fold,  offering  insight  into  the   molecule’s  function.   We also provide guarantees for algorithms that predict topologically associating domains in DNA sequences~\cite{Filippova14:Identification}, which shed light on how DNA wraps into three-dimensional structures that influence genome function.

\paragraph{Parameterized voting mechanisms.} A \emph{mechanism} is a special type of algorithm designed to help a set of agents come to a collective decision. For example, a town's residents may want to build a public resource such as a park, pool, or skating rink, and a mechanism would help them decide which to build. We analyze \emph{neutral affine maximizers}~\cite{Roberts79:Characterization,Mishra12:Roberts,Nath19:Efficiency}, a well-studied family of parameterized mechanisms. The parameters can be tuned to maximize social welfare, which is the sum of the agents' values for the mechanism's outcome.

\subsection{Additional related research}
A growing body of theoretical research investigates how machine learning can be incorporated in the process of algorithm design~\citep{Chawla20:Pandoras,Weisz18:LeapsAndBounds,Weisz19:CapsAndRuns,Kleinberg17:Efficiency,Kleinberg19:Procrastinating,Alabi19:Learning,Lykouris18:Competitive,Purohit18:Improving,Mitzenmacher18:Model,Hsu19:Learning,Blum21:Learning,Gupta17:PAC,Balcan17:Learning,Balcan18:General,Balcan18:Learning,Balcan18:Dispersion,Balcan20:LearningToLink,Bamas20:Primal,Wei20:Optimal,Bamas20:Learning,Garg18:Supervising,Balcan20:Semi}.
A chapter by \citet{Balcan20:Data} provides a survey. We highlight a few of the papers that are most related to ours below.
\subsubsection{Prior research}

\paragraph{Runtime optimization with provable guarantees.}
\citet{Kleinberg17:Efficiency,Kleinberg19:Procrastinating} and \citet{Weisz18:LeapsAndBounds,Weisz19:CapsAndRuns} provide configuration procedures with provable guarantees when the goal is to minimize runtime.
In contrast, our bounds apply to arbitrary performance metrics, such as solution quality as well as runtime.
Also, their procedures are designed for the case where the set of parameter settings is finite (although they can still offer some guarantees when the parameter space is infinite by first sampling a finite set of parameter settings and then running the configuration procedure; \citet{Balcan18:Learning,Balcan20:Learning} study what kinds of guarantees discretization approaches can and cannot provide). In contrast, our guarantees apply immediately to infinite parameter spaces.  Finally, unlike our results, the guarantees from this prior research are not configuration-procedure-agnostic: they apply only to the specific procedures that are proposed.

\paragraph{Learning-augmented algorithms.}
A related line of research has designed algorithms that replace some steps of a classic worst-case algorithm with a machine-learned oracle that makes predictions about structural aspects of the input~\citep{Lykouris18:Competitive,Purohit18:Improving,Mitzenmacher18:Model,Hsu19:Learning}.
If the prediction is accurate, the algorithm's performance (for example, its error or runtime) is superior to the original worst-case algorithm, and if the prediction is incorrect, the algorithm performs as well as that worst-case algorithm. Though similar, our approach to data-driven algorithm design is different because we are not attempting to learn structural aspects of the input; rather, we are optimizing the algorithm's parameters directly using the training set. Moreover, we can also compete with the best-known worst-case algorithm by including it in the algorithm class over which we optimize. Just adding that one extra algorithm---however different---does not increase our sample complexity bounds.  That best-in-the-worst-case algorithm does not have to be a special case of the parameterized algorithm.

\paragraph{Dispersion.}
Prior research by \citet*{Balcan18:Dispersion} as well as concurrent research by~\citet*{Balcan20:Semi} provides provable guarantees for algorithm configuration, with a particular focus on online learning and privacy-preserving algorithm configuration. These tasks are impossible in the worst case, so these papers identify a property of the dual functions under which online and private configuration are possible. This condition is \emph{dispersion}, which, roughly speaking, requires that the discontinuities of the dual functions are not too concentrated in any ball.  Online learning guarantees imply sample complexity guarantees due to online-to-batch conversion, and \citet{Balcan18:Dispersion} also provide sample complexity guarantees based on dispersion using Rademacher complexity.

To prove that dispersion holds, one typically needs to show that under the distribution over problem instances, the dual functions' discontinuities do not concentrate. This argument is typically made by assuming that the distribution is sufficiently nice or---when applicable---by appealing to the random nature of the parameterized algorithm. Thus, for arbitrary distributions and deterministic algorithms, dispersion does not necessarily hold. In contrast, our results hold even when the discontinuities concentrate, and thus are applicable to a broader set of problems in the distributional learning model. In other words, the results from this paper cannot be recovered using the techniques of \citet{Balcan18:Dispersion,Balcan20:Semi}.

\subsubsection{Concurrent and subsequent research}

Subsequently to the appearance of the original version of this paper in 2019~\citep{Balcan19:How}, an extensive body of research has developed that studies the use of machine learning in the context of algorithm design, as we highlight below.

\paragraph{Learning-augmented algorithms.} The literature on learning-augmented algorithms (summarized in the previous section) has continued to flourish in subsequent research~\citep{Bamas20:Primal,Wei20:Optimal,Bamas20:Learning,Eden21:Learning,Dutting20:Secretaries,Indyk20:Online,Lavastida20:Learnable,Lavastida20:Learnable}. Some of these papers make explicit connections to the types of parameter optimization problems we study in this paper, such as research by \citet{Lavastida20:Learnable}, who study online flow allocation and makespan minimization problems. They formulate the machine-learned predictions as a set of parameters and study the sample complexity of learning a good parameter setting. An interesting direction for future research is to investigate which other problems from this literature can be formulated as parameter optimization algorithms, and whether the techniques in this paper can be used to derive tighter or novel guarantees.

\paragraph{Sample complexity bounds for data-driven algorithm design.} \citet{Chawla20:Pandoras} study a data-driven algorithm design problem for the \emph{Pandora's box problem}, where there is a set of alternatives with costs drawn from an unknown distribution. A search algorithm observes the alternatives' costs one-by-one, eventually stopping and selecting one alternative. 
The authors show how to learn an algorithm that minimizes the selected alternative's expected cost, plus the number of  alternatives the algorithm examines. The primary contributions of that paper are 1) identifying a finite subset of algorithms that compete with the optimal algorithm, and 2) showing how to efficiently optimize over that finite subset of algorithms. Since the authors prove that they only need to optimize over a finite subset of algorithms, the sample complexity of this approach follows from a Hoeffding and union bound.

\citet{Blum21:Learning} study a data-driven approach to learning a nearly optimal cooling schedule for the simulated annealing algorithm. They provide upper and lower sample complexity bounds, with their upper bound following from a careful covering number argument. We leave as an open question whether our techniques can be combined with theirs to match their sample complexity lower bound of $\tilde \Omega\left(\sqrt[3]{m}\right)$, where $m$ is the cooling schedule length.

\paragraph{Machine learning for combinatorial optimization.} A growing body of applied research has developed machine learning approaches to discrete optimization, largely with the aim of improving classic optimization algorithms such as branch-and-bound~\citep{Furian21:Machine, Prouvost20:Ecole,Tang20:Reinforcement,Selsam19:Guiding,Shen19:Lorm,Ferber20:Mipaal,Zarpellon21:Parameterizing,Etheve20:Reinforcement,Wattez20:Learning,Song20:General,Kotary21:End}. For example, \citet{Chmiela21:Learning} present data-driven approaches to scheduling heuristics in branch-and-bound, and they leave as an open question whether the techniques in this paper can be used to provide provable guarantees.

\section{Notation and problem statement}\label{sec:prelim}
We study algorithms parameterized by a set $\configs \subseteq
\R^d$. 
As a concrete example, parameterized algorithms are often used for sequence alignment~\citep{Gotoh87:Improved}. There are many features of an alignment one might wish to optimize, such as the number of matches, mismatches, or indels (defined in Section~\ref{sec:sequence}). A parameterized objective function is defined by weighting these features.
As another example, hierarchical clustering algorithms often use linkage routines such as single, complete, and average linkage. Parameters can be used to interpolate between these three classic procedures~\citep{Balcan17:Learning}, which can be outperformed with a careful parameter tuning~\citep{Balcan20:LearningToLink}.

We use $\cX$ to denote the set of problem
instances the algorithm takes as input. We measure the performance of the algorithm parameterized by $\vec{\rho} = (\rho[1], \dots, \rho[d]) \in \R^d$ via a utility function $u_{\vec{\rho}} :
\cX \to [0,H]$, with $\cU = \left\{u_{\vec{\rho}} : \vec{\rho} \in \configs\right\}$ denoting the set of all such functions.
We assume there is an unknown, application-specific distribution $\dist$ over $\cX$.

Our goal is to find a parameter vector in $\configs$ with high performance in expectation over the distribution $\dist$. We provide \emph{generalization guarantees} for this problem.
Given a training set of problem instances $\sample$ sampled from $\dist$, a generalization guarantee bounds the difference---for any choice of the parameters $\vec{\rho}$---between the average performance of the algorithm over $\sample$ and its expected performance.

Specifically, our main technical contribution is a bound on the \emph{pseudo-dimension}~\citep{Pollard84:Convergence} of the set $\cU$.  For any arbitrary set of functions $\cH$ that map an abstract domain $\cY$ to $\R$, the \emph{pseudo-dimension of $\cH$}, denoted $\pdim(\cH)$, is the size of the largest set $\left\{y_1, \dots, y_N\right\} \subseteq \cY$ such that for some set of \emph{targets} $z_1, \dots, z_N \in \R$, \begin{equation}\left|\left\{
	\left(\sign\left(h\left(y_1\right) - z_1\right), \dots,
	\sign\left(h\left(y_N\right) - z_N\right)\right)
	\,\middle\vert\,
	h \in \cH
	\right\}\right| = 2^N.\label{eq:pdim}\end{equation}  Classic results from learning theory~\citep{Pollard84:Convergence} translate pseudo-dimension bounds into generalization guarantees. For example, suppose $[0,H]$ is the range of the functions in $\cH$. For any $\delta \in (0,1)$ and any distribution $\dist$ over $\cY$, with probability $1-\delta$
over the draw of $\sample\sim \dist^N$, for
all functions $h \in \cH$, the difference between the average value of $h$ over $\sample$ and its expected value is bounded as follows:
\begin{equation}\left|\frac{1}{N}\sum_{y \in \sample} h(y) - \E_{y \sim \dist}\left[h(y)\right]\right| = O\left(H \sqrt{\frac{1}{N}\left(\pdim(\cH) + \ln \frac{1}{\delta}\right)}\right).\label{eq:pollard}\end{equation}
When $\cH$ is a set of binary-valued functions that map $\cY$ to $\{0,1\}$, the pseudo-dimension of $\cH$ is more commonly referred to as the \emph{VC-dimension of $\cH$}~\citep{Vapnik71:Uniform}, denoted $\VC(\cH)$. 

\section{Generalization guarantees for data-driven algorithm design}\label{sec:general}
In data-driven algorithm design, there are two closely related function classes. First, for each parameter setting $\vec{\rho} \in \configs$, $u_{\vec{\rho}} : \cX \to \R$ measures performance as a function of the input $x\in \cX$.
Similarly, for each input $x$, there is a function $u_{x} :
\configs \to \R$ defined as $u_x(\vec{\rho}) = u_{\vec{\rho}}(x)$ that
measures performance as a function of the parameter vector $\vec{\rho}$. The set $\left\{u_x   \mid x \in \cX\right\}$ is equivalent to \citeauthor{Assouad83:Densite}'s notion of the \emph{dual class}~\citep{Assouad83:Densite}.

\begin{definition}[Dual class~\citep{Assouad83:Densite}]\label{def:dual}
	For any domain $\cY$ and set of functions $\cH \subseteq \R^{\cY}$,
	the \emph{dual class} of $\cH$ is defined as $\cH^* = \left\{h^*_y : \cH \to \reals \mid y \in \cY\right\}$
	where $h^*_y(h) = h(y)$. Each function $h^*_y \in \cH^*$ fixes an
	input $y \in \cY$ and maps each function $h \in \cH$ to $h(y)$. We refer to the class $\cH$ as the \emph{primal class.}
\end{definition}
The set of functions $\left\{u_x  \mid x \in \cX\right\}$ is equivalent to the dual class $\cU^* = \left\{u_x^* : \cU \to [0,H] \mid x \in \cX\right\}$ in the sense that for every parameter vector $\vec{\rho} \in \cP$ and every problem instance $x \in \cX$, $u_x(\vec{\rho}) = u_x^*\left(u_{\vec{\rho}}\right)$.

Many combinatorial algorithms share a clear-cut, useful structure: for
each instance $x \in \cX$, the function $u_x$ is \emph{piecewise structured}. For example, each function $u_x$ might be piecewise constant with a
small number of pieces. Given the equivalence of the functions $\left\{u_x \mid x \in \cX\right\}$ and the dual class $\cU^*$, the dual class exhibits this piecewise structure as well.
We use this structure to bound the pseudo-dimension of the primal class $\cU$.

Intuitively, a function $h : \cY \to \reals$ is piecewise structured if
we can partition the domain $\cY$ into subsets $\cY_1, \dots,
\cY_M$ so that when we restrict $h$ to one piece $\cY_i$, $h$ equals some well-structured function $f : \cY \to \reals$.  In other words, for all
$y \in \cY_i$, $h(y) = f(y)$. We define the partition $\cY_1, \dots, \cY_M$ using \emph{boundary functions} $g^{(1)}, \dots, g^{(k)} : \cY \to
\{0,1\}$. Each function $g^{(i)}$ divides the domain $\cY$ into two
sets: the points it labels 0 and the points it labels 1.
\begin{figure}
	\centering
\includegraphics[scale =1]{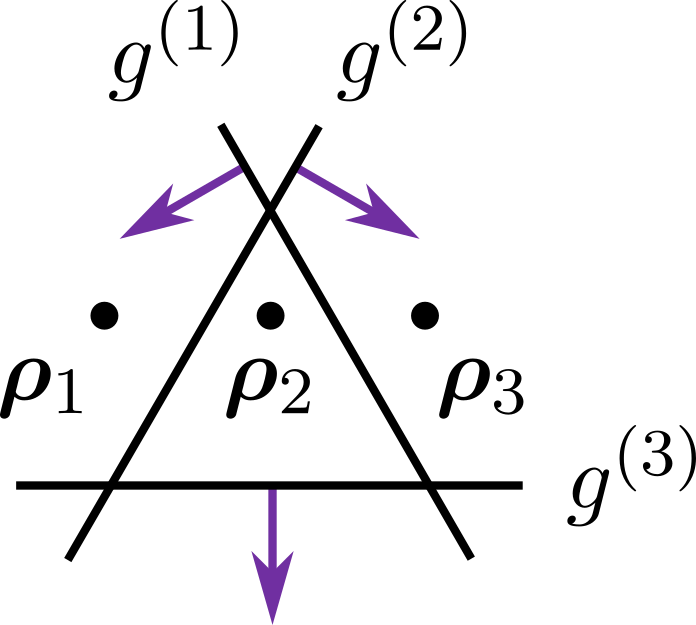}
	\caption{Boundary functions partitioning $\R^2$. The arrows indicate on which side of each function $g^{(i)}(\vec{\rho}) = 0$ and on which side $g^{(i)}(\vec{\rho}) = 1$. For example, $g^{(1)}\left(\vec{\rho}_1\right) = 1$, $g^{(1)}\left(\vec{\rho}_2\right) = 1$, and $g^{(1)}\left(\vec{\rho}_3\right) = 0$.}
	\label{fig:hyperplanes}
\end{figure}
Figure~\ref{fig:hyperplanes} illustrates a partition of $\R^2$ by boundary functions. Together, the $k$
boundary functions partition the domain $\cY$ into at most $2^k$ regions, each
one corresponding to a bit vector $\vec{b} \in \{0,1\}^k$ that describes on which
side of each boundary the region belongs. For each region, we specify a
\emph{piece function} $f_{\vec{b}} : \cY \to \reals$ that defines the function
values of $h$ restricted to that region. Figure~\ref{fig:decomp}
shows an example of a piecewise-structured
function with two boundary functions and four piece functions.

For many parameterized algorithms, every function in the dual class is piecewise structured. Moreover, across dual functions, the boundary functions come from a single, fixed class, as do the piece functions. For example, the
boundary functions might always be halfspace indicator functions, while
the piece functions might always be linear functions. The following
definition captures this structure.

\begin{definition}\label{def:decomposable_sign}
  A function class $\cH \subseteq \reals^{\cY}$ that maps a domain
  $\cY$ to $\reals$ is \emph{$(\cF, \cG, k)$-piecewise decomposable} for a
  class $\cG \subseteq \{0,1\}^{\cY}$ of boundary functions and a class $\cF
  \subseteq \reals^{\cY}$ of piece functions if the following holds: for
  every $h \in \cH$, there are $k$ boundary functions $g^{(1)},
  \dots, g^{(k)} \in \cG$ and a piece function $f_{\vec{b}} \in \cF$ for each
  bit vector $\vec{b} \in \{0,1\}^k$ such that for all $y \in \cY$,
$h(y) = f_{\vec{b}_y}(y)$ where $\vec{b}_y = \left(g^{(1)}(y), \dots, g^{(k)}(y)\right) \in \{0,1\}^k.$
\end{definition}

Our main theorem shows that whenever the dual class $\cU^*$ is $(\cF, \cG,
k)$-piecewise decomposable, we can bound the pseudo-dimension of $\cU$ in terms of the VC-dimension of $\cG^*$ and the pseudo-dimension of
$\cF^*$.
Later, we show that for many common  classes $\cF$ and $\cG$, we can easily bound the complexity of their duals.

\begin{restatable}{theorem}{main}\label{thm:main}
Suppose that the dual function class $\cU^*$ is $(\cF, \cG, k)$-piecewise decomposable with boundary functions $\cG \subseteq \{0,1\}^{\cU}$ and piece functions $\cF \subseteq \R^{\cU}$.
The pseudo-dimension of $\cU$ is bounded as follows:
\[\pdim(\cU) = O\left(\left(\pdim(\cF^*) + \VC(\cG^*)\right)\ln \left(\pdim(\cF^*) + \VC(\cG^*)\right) + \VC(\cG^*)\ln k\right).\]
\end{restatable}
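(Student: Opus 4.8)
The plan is to bound $N := \pdim(\cU)$ directly from the definition of pseudo-dimension by a counting argument over the parameter space. Fix a set $\{x_1, \dots, x_N\} \subseteq \cX$ together with witnesses $z_1, \dots, z_N \in \R$ that is pseudo-shattered by $\cU$, so that as $\vec{\rho}$ ranges over $\configs$, the vector $\left(\sign\left(u_{\vec{\rho}}(x_1) - z_1\right), \dots, \sign\left(u_{\vec{\rho}}(x_N) - z_N\right)\right)$ takes all $2^N$ values in $\{0,1\}^N$. I would show that the number of distinct such sign patterns is actually at most a polynomial in $N$ and $k$ whose degree is governed by $\VC(\cG^*)$ and $\pdim(\cF^*)$; comparing this with $2^N$ then pins down $N$.

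First I would count ``regions'' of the parameter space using $\cG^*$. Applying the $(\cF,\cG,k)$-piecewise decomposability of $\cU^*$ to each dual function $u_{x_i}^*$ produces $k$ boundary functions $g_i^{(1)}, \dots, g_i^{(k)} \in \cG$; collect all $Nk$ of them. For any $\vec{\rho}$, the tuple $\left(g_i^{(j)}(u_{\vec{\rho}})\right)_{i \in [N],\, j \in [k]} \in \{0,1\}^{Nk}$ is exactly the labeling that the dual function $g^*_{u_{\vec{\rho}}} \in \cG^*$ induces on this fixed $Nk$-point subset of $\cG$. Hence, by the Sauer--Shelah lemma applied to $\cG^*$, the number of distinct such labelings---equivalently, the number of maximal regions of $\configs$ on which every bit vector $\vec{b}_{x_i}(u_{\vec{\rho}}) = \left(g_i^{(1)}(u_{\vec{\rho}}), \dots, g_i^{(k)}(u_{\vec{\rho}})\right)$ is simultaneously constant---is at most $O\left((Nk)^{\VC(\cG^*)}\right)$ (assuming $Nk \geq \VC(\cG^*)$, else the theorem is immediate).

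Next I would count sign patterns within one such region using $\cF^*$. On a fixed region, decomposability gives $u_{\vec{\rho}}(x_i) = u_{x_i}^*(u_{\vec{\rho}}) = f_i(u_{\vec{\rho}})$ for a single piece function $f_i \in \cF$ (the one indexed by the constant bit vector $\vec{b}_{x_i}$ on that region), for each $i \in [N]$. So, restricted to the region, the sign pattern equals $\left(\sign\left(f^*_{u_{\vec{\rho}}}(f_i) - z_i\right)\right)_{i=1}^N$, which is a target-witnessed labeling that the class $\cF^*$ induces on the fixed $N$-point subset $\{f_1, \dots, f_N\} \subseteq \cF$. The pseudo-dimension analogue of Sauer's lemma (which follows from the standard fact that the pseudo-dimension of a real-valued class equals the VC-dimension of its associated class of subgraph indicators) bounds the number of these labelings by $O\left(N^{\pdim(\cF^*)}\right)$. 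Multiplying by the region count, $2^N \le O\left((Nk)^{\VC(\cG^*)} \cdot N^{\pdim(\cF^*)}\right)$; taking logarithms gives $N = O\left(\left(\VC(\cG^*) + \pdim(\cF^*)\right)\ln N + \VC(\cG^*)\ln k\right)$, and the elementary fact that $x \le a\ln x + b$ implies $x = O(a\ln a + b)$ yields the claimed bound.

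The main obstacle is getting the first two steps to reference the \emph{dual} classes $\cG^*$ and $\cF^*$ rather than $\cG$ and $\cF$: the subtlety is that once the shattered set is fixed, the $Nk$ boundary functions and the relevant piece functions are \emph{fixed points}, and it is the parameter $\vec{\rho}$ that varies, so the quantity to control is the number of behaviors of the $\vec{\rho}$-indexed dual functions on a fixed finite set---precisely what $\VC(\cG^*)$ and $\pdim(\cF^*)$ measure. A secondary point to handle with care is that different regions may assign different piece functions to the same instance $x_i$; this is harmless because the per-region dichotomy bound is uniform over regions, so one only multiplies the two Sauer-type bounds. Finally, one should state the pseudo-dimension version of Sauer's lemma precisely, including that the witnesses $z_i$ are common to all regions (which only makes the count smaller).
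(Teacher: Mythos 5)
Your proposal is correct and follows essentially the same route as the paper: you partition the parameter space via the $Nk$ boundary functions and bound the number of regions by applying Sauer's lemma to the dual class $\cG^*$ (the paper's Lemma~\ref{lem:regions}), then bound the sign patterns within each region via the pseudo-dimension analogue of Sauer's lemma for $\cF^*$, multiply the two counts against $2^N$, and finish with the $x \le a\ln x + b$ inequality. The subtlety you flag---that the fixed shattered set makes the boundary and piece functions fixed points while $\vec{\rho}$ varies, so the relevant complexity measures are those of the duals---is exactly the paper's key insight.
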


To help make the proof of Theorem~\ref{thm:main} succinct, we extract a key insight in the following lemma. Given a set of functions $\cH$ that map a domain $\cY$ to $\{0,1\}$, Lemma~\ref{lem:regions} bounds the number of binary vectors \begin{equation}\left(h_1(y), \dots, h_N(y)\right)\label{eq:vector}\end{equation} we can obtain for any $N$ functions $h_1, \dots, h_N \in \cH$ as we vary the input $y \in \cY$.  Pictorially, if we partition $\R^2$ using the functions $g^{(1)}$, $g^{(2)}$, and $g^{(3)}$ from Figure~\ref{fig:hyperplanes} for example, Lemma~\ref{lem:regions} bounds the number of regions in the partition.
This bound depends not on the VC-dimension of the class $\cH$, but rather on that of its dual $\cH^*$. We use a classic lemma by~\citet{Sauer72:Density} to prove Lemma~\ref{lem:regions}. \citeauthor{Sauer72:Density}'s lemma~\citep{Sauer72:Density} bounds the number of binary vectors of the form $\left(h\left(y_1\right), \dots, h\left(y_N\right)\right)$ we can obtain for any $N$ elements $y_1, \dots, y_N \in \cY$ as we vary the function $h \in \cH$ by $(eN)^{\VC(\cH)}$. Therefore, it does not immediately imply a bound on the number of vectors from Equation~\eqref{eq:vector}. In order to apply \citeauthor{Sauer72:Density}'s lemma, we must transition to the dual class.

\begin{lemma}\label{lem:regions}
	Let $\cH$ be a set of functions that map a domain $\cY$ to $\{0,1\}$. For any functions $h_1, \dots, h_N \in \cH$, the number of binary vectors $\left(h_1(y), \dots, h_N(y)\right)$ obtained by varying the input $y \in \cY$ is bounded as follows: \begin{equation}\left|\left\{\left(h_1(y), \dots, h_N(y)\right)\,\middle\vert\, y \in \cY \right\}\right| \leq (eN)^{\VC\left(\cH^*\right)}.\label{eq:regions}\end{equation}
	\end{lemma}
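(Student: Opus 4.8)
The plan is to reduce the claim to the classical Sauer--Shelah lemma applied to the \emph{dual} class $\cH^*$ rather than to $\cH$ itself; this dualization is the one genuine idea. Fix functions $h_1, \dots, h_N \in \cH$. By Definition~\ref{def:dual}, for each $y \in \cY$ the dual function $h_y^* \in \cH^*$ satisfies $h_y^*(h) = h(y)$ for every $h \in \cH$, so $\left(h_1(y), \dots, h_N(y)\right) = \left(h_y^*(h_1), \dots, h_y^*(h_N)\right)$. Consequently the set of binary vectors on the left-hand side of~\eqref{eq:regions} is contained in
\[
\Pi := \left\{\left(h^*(h_1), \dots, h^*(h_N)\right) \,\middle\vert\, h^* \in \cH^*\right\},
\]
which is precisely the set of dichotomies that the class $\cH^*$ induces on the $N$-element list $h_1, \dots, h_N$ drawn from its domain $\cH$.

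First I would dispense with edge cases: if $\VC(\cH^*) = \infty$ the right-hand side of~\eqref{eq:regions} is infinite and there is nothing to prove, so assume $d := \VC(\cH^*) < \infty$. If $d = 0$, then $\cH^*$ cannot shatter even a single point, so $|\Pi| = 1 = (eN)^0$ and we are done; I will also implicitly use $N \geq 1$, which holds in every application. For $d \geq 1$, I would then invoke Sauer's lemma~\citep{Sauer72:Density}: the number of distinct dichotomies induced by a class of VC-dimension $d$ on any $N$ points of its domain is at most $\sum_{i=0}^{d} \binom{N}{i}$. Repeated entries among $h_1, \dots, h_N$ only force coordinates of the vectors in $\Pi$ to coincide and hence cannot increase $|\Pi|$, so Sauer's lemma applies verbatim to the (possibly non-distinct) list $h_1, \dots, h_N$. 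Finally, the standard estimate $\sum_{i=0}^{d} \binom{N}{i} \leq \left(eN/d\right)^d \leq (eN)^d$ gives $|\Pi| \leq (eN)^{\VC(\cH^*)}$, which implies~\eqref{eq:regions}.

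The only place one must be careful is the very first step: it is tempting to apply Sauer's lemma directly to $\cH$, but that would bound the number of label vectors $\left(h(y_1), \dots, h(y_N)\right)$ obtained by varying $h \in \cH$ over a fixed set of inputs $y_1, \dots, y_N$---the reverse of what~\eqref{eq:regions} asks for. Recognizing that ``fixing the functions and varying the input'' is exactly ``fixing a set of domain points of $\cH^*$ and varying the dual function'' is what licenses the use of $\VC(\cH^*)$ in place of $\VC(\cH)$, and is the reason the bound is stated in terms of the dual. Everything after that transition is routine.
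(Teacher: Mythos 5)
Your proof is correct and follows essentially the same route as the paper's: rewrite each vector $\left(h_1(y), \dots, h_N(y)\right)$ as $\left(h_y^*(h_1), \dots, h_y^*(h_N)\right)$, so that one is fixing $N$ points in the domain of $\cH^*$ and varying the dual function, and then apply Sauer's lemma to $\cH^*$ with the standard bound $\sum_{i=0}^{d}\binom{N}{i} \leq (eN)^{d}$. Your treatment of the edge cases (infinite or zero dual VC-dimension, repeated functions) is a harmless elaboration of the same argument.
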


\begin{proof}
We rewrite the left-hand-side of Equation~\eqref{eq:regions} as $\left|\left\{\left(h_y^*\left(h_1\right), \dots, h_y^*\left(h_N\right)\right)\,\middle\vert\, y \in \cY \right\}\right|.$ Since we fix $N$ inputs and vary the function $h_y^*$, the lemma statement follows from \citeauthor{Sauer72:Density}'s lemma~\citep{Sauer72:Density}.
		\end{proof}
	
	We now prove Theorem~\ref{thm:main}.

\begin{proof}[Proof of Theorem~\ref{thm:main}]
Fix an arbitrary set of problem instances $x_1, \dots, x_N \in \cX$ and targets $z_1, \dots, z_N
			\in \reals$. We bound the number of ways that $\cU$ can label the problem instances
			$x_1, \dots, x_N$ with respect to the target thresholds $z_1, \dots, z_N \in \R$.
			In other words, as per Equation~\eqref{eq:pdim}, we bound the size of the set \begin{align}\left|\left\{
				\begin{pmatrix}
					\sign\left(u_{\vec{\rho}}\left(x_1\right) - z_1\right)\\
					\vdots\\
					\sign\left(u_{\vec{\rho}}\left(x_N\right) - z_N\right)
				\end{pmatrix}
				\,\middle\vert\,
				\vec{\rho} \in \cP
				\right\}\right| =\left|\left\{
				\begin{pmatrix}
					\sign\left(u_{x_1}^*\left(u_{\vec{\rho}}\right) - z_1\right)\\
					\vdots\\
					\sign\left(u_{x_N}^*\left(u_{\vec{\rho}}\right) - z_N\right)
				\end{pmatrix}
				\,\middle\vert\,
				\vec{\rho} \in \cP
				\right\}\right|\label{eq:first_transition_to_dual}\end{align} by
		$(ekN)^{\VC(\cG^*)} (eN)^{\pdim(\cF^*)}$.
		Then solving for the largest $N$ such
		that \[2^N \leq (ekN)^{\VC(\cG^*)} (eN)^{\pdim(\cF^*)}\] gives a bound on the
		pseudo-dimension of $\cU$. Our bound on Equation~\eqref{eq:first_transition_to_dual} has two main steps:
		\begin{enumerate}
			\item In Claim~\ref{claim:partition}, we show that there are $M <
			\left(ekN\right)^{\VC(\cG^*)}$ subsets $\cP_1, \dots, \cP_M$ partitioning the
			parameter space $\cP$ such that within any one subset, the dual functions
			$u_{x_1}^*, \dots, u_{x_N}^*$ are simultaneously structured. In
			particular, for each subset $\cP_j$, there exist piece functions $f_1,
			\dots, f_N \in \cF$ such that $u_{x_i}^*\left(u_{\vec{\rho}}\right) = f_i\left(u_{\vec{\rho}}\right)$ for all parameter settings $\vec{\rho} \in \cP_j$
			and $i \in [N]$. This is the partition of $\cP$ induced by aggregating all
			of the boundary functions corresponding to the dual functions $u_{x_1}^*,
			\dots, u_{x_N}^*$.
			\item We then show that for any region $\cP_j$ of the partition, as we vary the parameter vector $\vec{\rho} \in \cP_j$, $u_{\vec{\rho}}$ can label the problem instances $x_1, \dots, x_N$   in at
			most $(eN)^{\pdim(\cF^*)}$ ways with respect to the target thresholds $z_1, \dots, z_N$. It follows that the total number of ways that
			$\cU$ can label the problem instances $x_1, \dots, x_N$ is bounded by
			$(ekN)^{\VC(\cG^*)}(eN)^{\pdim(\cF^*)}$.
		\end{enumerate}
		
		We now prove the first claim.
		
		\begin{claim}\label{claim:partition}
			There are $M < \left(ekN\right)^{\VC(\cG^*)}$ subsets $\cP_1, \dots, \cP_M$
			partitioning the parameter space $\cP$ such that within any one subset, the
			dual functions $u_{x_1}^*, \dots, u_{x_N}^*$ are simultaneously structured. In
			particular, for each subset $\cP_j$, there exist piece functions $f_1, \dots,
			f_N \in \cF$ such that $u_{x_i}^*\left(u_{\vec{\rho}}\right) = f_i\left(u_{\vec{\rho}}\right)$ for all parameter settings $\vec{\rho} \in \cP_j$ and $i
			\in [N]$.
		\end{claim}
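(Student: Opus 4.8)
The plan is to aggregate all the boundary functions coming from the $N$ dual functions into one large collection, count the number of regions they induce on $\cP$, and then show that the dual functions are simultaneously well-structured on each region. By the $(\cF, \cG, k)$-piecewise decomposability of $\cU^*$, for each problem instance $x_i$ there exist $k$ boundary functions $g_i^{(1)}, \dots, g_i^{(k)} \in \cG$ and a piece function $f_{i,\vec{b}} \in \cF$ for every $\vec{b} \in \{0,1\}^k$ such that $u_{x_i}^*\left(u_{\vec{\rho}}\right) = f_{i, \vec{b}_i(\vec{\rho})}\left(u_{\vec{\rho}}\right)$, where $\vec{b}_i(\vec{\rho}) = \left(g_i^{(1)}(u_{\vec{\rho}}), \dots, g_i^{(k)}(u_{\vec{\rho}})\right)$. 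First I would collect all $kN$ of these boundary functions into a single set $\left\{g_i^{(j)} : i \in [N], j \in [k]\right\} \subseteq \cG$, and consider the partition of $\cP$ (equivalently, of $\cU$, via the identification $\vec{\rho} \leftrightarrow u_{\vec{\rho}}$) that it induces: two parameter vectors lie in the same region exactly when all $kN$ boundary functions agree on them.

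The key counting step is to bound the number $M$ of such regions. Here I would invoke Lemma~\ref{lem:regions}: taking $\cH = \cG$ and applying the lemma to the $kN$ functions $g_i^{(j)}$, the number of distinct bit vectors $\left(g_1^{(1)}(u), \dots, g_N^{(k)}(u)\right)$ obtained as $u$ ranges over $\cU$ is at most $(e \cdot kN)^{\VC(\cG^*)}$. Each region of the partition corresponds to (at most) one such bit vector, so $M \leq (ekN)^{\VC(\cG^*)}$; enumerate the nonempty regions as $\cP_1, \dots, \cP_M$. This is exactly the point flagged in the paper's ``proof insights'': the relevant complexity measure is $\VC(\cG^*)$, not $\VC(\cG)$, because we are fixing $kN$ functions and varying the point, which is precisely the dual setup that Sauer's lemma requires after passing to $\cG^*$.

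Finally I would verify the ``simultaneously structured'' conclusion. Fix a region $\cP_j$. For every $i \in [N]$, the bit vector $\vec{b}_i(\vec{\rho}) = \left(g_i^{(1)}(u_{\vec{\rho}}), \dots, g_i^{(k)}(u_{\vec{\rho}})\right)$ is constant as $\vec{\rho}$ ranges over $\cP_j$, since each $g_i^{(j)}$ is constant on $\cP_j$ by construction; call this common value $\vec{b}_i^{(j)}$. Then setting $f_i := f_{i, \vec{b}_i^{(j)}} \in \cF$, the decomposability identity gives $u_{x_i}^*\left(u_{\vec{\rho}}\right) = f_i\left(u_{\vec{\rho}}\right)$ for all $\vec{\rho} \in \cP_j$ and all $i \in [N]$, which is the claim.

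I expect the main obstacle to be conceptual rather than computational: recognizing that the region count must be controlled via the \emph{dual} class $\cG^*$ and setting up Lemma~\ref{lem:regions} with the right choice of fixed functions (the $kN$ boundary functions) and varying argument (the point $u_{\vec{\rho}} \in \cU$). Once that is in place, the rest is bookkeeping — checking that each $g_i^{(j)}$ is constant on each region and reading off the piece functions. A minor point to handle carefully is the identification between $\cP$ and $\cU$, i.e., that a boundary function $g \in \cG \subseteq \{0,1\}^{\cU}$ acting on $u_{\vec{\rho}}$ can be viewed as partitioning $\cP$; this is immediate but worth stating so the partition of $\cP$ is well-defined even if the map $\vec{\rho} \mapsto u_{\vec{\rho}}$ is not injective (regions are pulled back along this map).
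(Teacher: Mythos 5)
Your proposal is correct and follows essentially the same route as the paper's proof: aggregate the $kN$ boundary functions across the $N$ instances, bound the number of induced sign patterns by $(ekN)^{\VC(\cG^*)}$ via Lemma~\ref{lem:regions} applied in the dual (fixed functions, varying point $u_{\vec{\rho}}$), and observe that on each region the boundary values are constant so the piece functions $f_1, \dots, f_N$ are fixed. The extra remarks about pulling the partition back along $\vec{\rho} \mapsto u_{\vec{\rho}}$ are fine but not needed beyond what the paper already does implicitly.
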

		
		\begin{proof}[Proof of Claim~\ref{claim:partition}]
			Let $u_{x_1}^*, \dots, u_{x_N}^* \in \cU^*$ be the dual functions
			corresponding to the problem instances $x_1, \dots, x_N$. Since $\cU^*$ is
			$(\cF,\cG,k)$-piecewise decomposable, we know that for each function
			$u_{x_i}^*$, there are $k$ boundary functions $g_i^{(1)}, \dots, g_i^{(k)} \in
			\cG\subseteq \{0,1\}^{\cU}$ that define its piecewise decomposition. Let
			$\hat{\cG} = \bigcup_{i = 1}^N \left\{g_i^{(1)}, \dots, g_i^{(k)}\right\}$ be
			the union of these boundary functions across all $i \in [N]$. For ease of
			notation, we relabel the functions in $\hat{\cG}$, calling them $g_1, \dots,
			g_{kN}$. Let $M$ be the total number of $kN$-dimensional vectors we can obtain
			by applying the functions in $\hat{\cG} \subseteq \{0,1\}^{\cU}$ to elements
			of $\cU$:
			\begin{equation}M:= \left|\left\{\begin{pmatrix} g_1\left(u_{\vec{\rho}}\right)\\
					\vdots\\
					g_{kN}\left(u_{\vec{\rho}}\right)\end{pmatrix} : \vec{\rho} \in \cP\right\}\right|.\label{eq:Mdef}\end{equation}
			By Lemma~\ref{lem:regions}, $M <
			\left(ekN\right)^{\VC(\cG^*)}.$ Let $\vec{b}_1, \dots, \vec{b}_M$ be the binary vectors in the set
			from Equation~\eqref{eq:Mdef}. For each $i \in [M]$, let $\cP_j = \left\{\vec{\rho}
			\mid \left(g_1\left(u_{\vec{\rho}}\right), \dots, g_{kN}\left(u_{\vec{\rho}}\right)\right) = \vec{b}_j\right\}.$
			By construction, for each set $\cP_j$, the values of all the boundary functions $g_1\left(u_{\vec{\rho}}\right), \dots,
			g_{kN}\left(u_{\vec{\rho}}\right)$ are constant as we vary $\vec{\rho} \in \cP_j$. Therefore,
			there is a fixed set of piece functions $f_1, \dots,
			f_N \in \cF$ so that $u_{x_i}^*\left(u_{\vec{\rho}}\right) = f_i\left(u_{\vec{\rho}}\right)$ for all parameter vectors $\vec{\rho} \in \cP_j$
			and indices $i \in [N]$. Therefore, the claim holds.
		\end{proof}
		
Claim~\ref{claim:partition} and Equation~\eqref{eq:first_transition_to_dual} imply that for every subset $\cP_j$ of the partition, \begin{align}\left|\left\{
				\begin{pmatrix}\sign\left(u_{\vec{\rho}}\left(x_1\right) - z_1\right)\\
					\vdots\\
					\sign\left(u_{\vec{\rho}}\left(x_N\right) - z_N\right)
				\end{pmatrix}
				\,\middle\vert\,
				\vec{\rho} \in \cP_j
				\right\}\right| = \left|\left\{
				\begin{pmatrix}
					\sign\left(f_1\left(u_{\vec{\rho}}\right) - z_1\right)\\
					\vdots\\
					\sign\left(f_N\left(u_{\vec{\rho}}\right) - z_N\right)
				\end{pmatrix}
				\,\middle\vert\,
				\vec{\rho} \in \cP_j
				\right\}\right|.\label{eq:piece_functions}\end{align}
		
		Lemma~\ref{lem:regions} implies that Equation~\eqref{eq:piece_functions} is bounded by $(eN)^{\pdim(\cF^*)}$. In other words, for any region $\cP_j$ of the partition, as we vary the parameter vector $\vec{\rho} \in \cP_j$, $u_{\vec{\rho}}$ can label the problem instances $x_1, \dots, x_N$ in at
			most $(eN)^{\pdim(\cF^*)}$ ways with respect to the target thresholds $z_1, \dots, z_N$. Because there are $M < \left(ekN\right)^{\VC(\cG^*)}$ regions $\cP_j$ of the partition, we can conclude  that $\cU$ can label the problem instances
		$x_1, \dots, x_N$ in at most $(ekN)^{\VC(\cG^*)}(eN)^{\pdim(\cF^*)}$ distinct ways
		relative to the targets $z_1, \dots, z_N$. In other words, Equation~\eqref{eq:first_transition_to_dual} is bounded by $(ekN)^{\VC(\cG^*)}(eN)^{\pdim(\cF^*)}$. On the other hand, if $\cU$ shatters
		the problem instances $x_1, \dots, x_N$, then the number of distinct labelings must be
		$2^N$. Therefore, the pseduo-dimension of $\cU$ is at most the largest value of
		$N$ such that $2^N \leq (ekN)^{\VC(\cG^*)}(eN)^{\pdim(\cF^*)}$, which implies that \[N =O\left(\left(\pdim(\cF^*) + \VC(\cG^*)\right)\ln \left(\pdim(\cF^*) + \VC(\cG^*)\right) + \VC(\cG^*)\ln k\right),\] as claimed.
\end{proof}

We prove several lower bounds which show that Theorem~\ref{thm:main} is tight up to logarithmic factors.

\begin{restatable}{theorem}{lb}\label{thm:main_lb}
	The following lower bounds hold:
	\begin{enumerate}
\item 	There is a parameterized sequence alignment algorithm with $\pdim(\cU) = \Omega(\log n)$ for some $n \geq 1$. Its dual class $\cU^*$  is $(\cF, \cG, n)$-piecewise decomposable for classes $\cF$ and $\cG$ with $\pdim\left(\cF^*\right) =\VC\left(\cG^*\right) = 1$.
\item There is a parameterized voting mechanism with $\pdim(\cU) = \Omega(n)$ for some $n \geq 1$. Its dual class $\cU^*$  is $\left(\cF, \cG, 2\right)$-piecewise decomposable for  classes $\cF$ and $\cG$ with $\pdim\left(\cF^*\right) =1$ and $\VC\left(\cG^*\right) = n$.
\end{enumerate}
\end{restatable}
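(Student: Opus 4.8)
The plan is to prove each lower bound by exhibiting a concrete parameterized algorithm together with an explicit set of problem instances that $\cU$ shatters, and then verifying that the associated dual class admits the claimed $(\cF,\cG,k)$-decomposition. In both cases the dual functions are piecewise constant, so a shattered set of size $m$ is obtained by engineering $m$ instances whose step-function duals jointly realize all $2^m$ sign patterns --- over a common parameter axis in Part 1, and over the full $n$-dimensional parameter space in Part 2. The targets in the definition of pseudo-dimension are then simply midpoints between the two values each dual takes.

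For Part 1, I would use a one-parameter family of sequence alignment algorithms --- e.g.\ weighting ``number of matches minus $\rho$ times number of gaps'' --- and measure performance by a feature of the returned alignment (such as its match count), which is a piecewise-constant function of $\rho$ whose breakpoints are the parameter values at which the optimal alignment changes. Set $m = \lfloor \log_2 n \rfloor$. Invoking the combinatorial structure of parametric alignment (as $\rho$ sweeps an interval the optimal alignment changes only at a set of breakpoints whose number is polynomial in the sequence length, and both the breakpoints and the associated alignments can be planted by choosing the input strings), I would construct instances $x_1,\dots,x_m$ so that the dual $u^*_{x_i}$, viewed as a step function over $n$ consecutive subintervals of the $\rho$-axis, equals the $i$-th bit of the subinterval index. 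Sweeping $\rho$ across the $n$ subintervals then produces all $2^m$ labelings of $x_1,\dots,x_m$ relative to the thresholds, so $\pdim(\cU) \ge m = \Omega(\log n)$. Since each $u^*_{x_i}$ has at most $n$ breakpoints, $\cU^*$ is $(\cF,\cG,n)$-piecewise decomposable with $\cF$ the constant functions and $\cG$ the one-dimensional thresholds $u_\rho \mapsto \ind{\rho \ge c}$, so $\pdim(\cF^*) = \VC(\cG^*) = 1$ and the $\VC(\cG^*)\ln k$ term of Theorem~\ref{thm:main} is $\Theta(\log n)$ --- matched.

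For Part 2, I would use the neutral affine maximizer family for $n$ agents and two outcomes, parameterized by a weight vector $\vec{\rho}\in\reals^n$, which selects the outcome maximizing $\sum_i \rho[i]\, v_i(\cdot)$; on any fixed valuation profile the chosen outcome, hence the realized social welfare, is a piecewise-constant function of $\vec{\rho}$ with a single linear boundary, giving a $(\cF,\cG,2)$-decomposition with $\cF$ the constants and $\cG$ halfspaces in $\reals^n$, so $\pdim(\cF^*) = 1$ and $\VC(\cG^*) = n$. I would then build instances $x_1,\dots,x_n$ in which the outcome selected on $x_i$ --- and thus the social welfare on $x_i$ --- depends only on the sign of a linear form $\ell_i(\vec{\rho})$, with the $\ell_i$ chosen (e.g.\ by making welfare on $x_i$ sensitive only to the relative weight of a dedicated pair of agents, and placing the $n$ boundary hyperplanes in general position) so that $(\sgn \ell_1(\vec{\rho}),\dots,\sgn \ell_n(\vec{\rho}))$ ranges over all of $\{0,1\}^n$. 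With the natural thresholds this shatters $\{x_1,\dots,x_n\}$, so $\pdim(\cU) \ge n = \Omega(n)$, matching the $(\pdim(\cF^*)+\VC(\cG^*))\ln(\pdim(\cF^*)+\VC(\cG^*))$ term of Theorem~\ref{thm:main} up to the logarithmic factor.

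The main obstacle is the sequence-alignment gadget in Part 1: one must show that enough step functions with prescribed value patterns over $n$ parameter intervals arise \emph{simultaneously} as performance functions of parametric alignment on instances sharing a single parameter axis, which requires an explicit construction of the input strings together with an appeal to the structure of alignment breakpoints. Part 2 is comparatively routine; the only care needed is constructing the linear forms $\ell_i$ so their sign patterns remain fully independent even if the weights are constrained (say, to the positive orthant or a simplex), which is a mild general-position argument.
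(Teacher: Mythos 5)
Your Part~2 plan is essentially the paper's proof of Theorem~\ref{thm:NAM_lb}: two alternatives, one valuation profile per dedicated pair of agents, welfare on each profile determined by which member of the pair gets the larger weight, witnesses at the midpoint. Two small corrections: with $n$ agents you only have $n/2$ disjoint dedicated pairs, so this shatters $n/2$ instances (still $\Omega(n)$, which is all the theorem claims); and your fallback ``general position'' argument for realizing all $2^n$ sign patterns of $n$ homogeneous hyperplanes is not available here, because the NAM parameters are confined to the nonnegative orthant (with a zero coordinate) --- already for $n=2$, two homogeneous lines cannot realize all four sign patterns on the nonnegative axes. The disjoint-pair construction sidesteps this, so Part~2 is fine as long as you state it with $n/2$ instances, exactly as the paper does.

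Part~1 has a genuine gap. The entire difficulty of the sequence-alignment lower bound is the gadget you defer: exhibiting $\Theta(\log n)$ sequence pairs whose duals, over a \emph{common} one-dimensional parameter axis, simultaneously realize the binary-counter patterns. Your one concrete suggestion --- take the utility to be an intrinsic feature of the returned alignment such as its match count --- cannot work: for the objective ``matches minus $\rho$ times gaps,'' a standard exchange argument shows both the gap count and the match count of the optimizer are monotone non-increasing in $\rho$, so every dual is a monotone step function; with monotone duals the sign vectors $\left(\sgn\left(u^*_{x_i}(\rho)-z_i\right)\right)_i$ take at most $m+1$ values as $\rho$ sweeps $\R$, and not even two instances can be shattered. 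The paper's Theorem~\ref{thm:seq_lb} gets non-monotone, oscillating duals precisely by \emph{not} using an intrinsic score: the utility is the Q score against a planted ground-truth alignment, and the instances are built from gadget subsequences in which matching the $\texttt{b}_j$ characters costs $2j$ indels (so the algorithm's behavior flips at thresholds $\rho = \tfrac{1}{2j}$), while the planted ground truth alternates between declaring those matches correct and incorrect; nesting the gadgets at geometrically spaced indices makes pair $i$ oscillate half as often as pair $i-1$, which is what yields the $2^{\Theta(\log n)}$ labelings. Without this (or an equivalent) explicit construction --- including a proof that the designated characters align exactly when claimed --- the $\Omega(\log n)$ bound is asserted rather than proved; and the claim that every dual in the family is a threshold-piecewise-constant function with at most $n$ pieces also needs an argument for arbitrary instances (the paper gets this from the parametric-alignment structure underlying Lemma~\ref{lem:sequence_tighter}), not just for the engineered ones.
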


\begin{proof}
	In Theorem~\ref{thm:seq_lb}, we prove the result for sequence alignment, in which case $n$ is the maximum length of the sequences, $\cF$ is the set of constant functions, and $\cG$ is the set of threshold functions. In Theorem~\ref{thm:NAM_lb}, we prove the result for voting mechanisms, in which case $n$ is the number of agents that participate in the mechanism, $\cF$ is the set of constant functions, and $\cG$ is the set of homogeneous linear separators in $\R^n$.
	\end{proof}

\subsection*{Applications of our main theorem to representative function classes} \label{sec:examplePiecewise}

We now instantiate Theorem~\ref{thm:main} in a general setting inspired by data-driven algorithm design.

\paragraph{One-dimensional functions with a bounded number of oscillations.}
Let $\cU = \left\{u_{\rho} \mid \rho \in \R\right\}$ be a set of utility functions defined over a single-dimensional parameter space. We often find that the dual functions are piecewise constant, linear, or polynomial. More generally, the dual functions are piecewise structured with piece functions that oscillate a fixed number of times.
In other words, the dual class $\cU^*$ is $(\cF, \cG, k)$-piecewise decomposable where the boundary functions $\cG$ are thresholds and the piece functions $\cF$ oscillate a bounded number of times, as formalized below.
\begin{definition}\label{def:oscillate}
	A function $h :
	\R \to \reals$ has at most $B$ oscillations if for every $z \in \reals$,
	the function $\rho \mapsto \ind{h\left(\rho\right) \geq z}$ is piecewise constant with at
	most $B$ discontinuities.
\end{definition}

\begin{figure}
	\centering
	\begin{subfigure}{.24\textwidth}
		\includegraphics[scale =1]{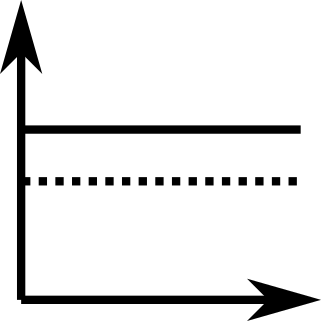}\centering
		\caption{Constant function (zero oscillations).}\label{fig:constant}
	\end{subfigure}\qquad
	\begin{subfigure}{.24\textwidth}
		\includegraphics[scale =1]{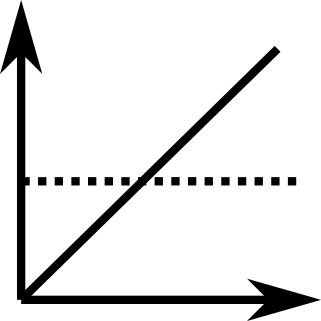}\centering
		\caption{Linear function (one oscillation).}\label{fig:linear}
	\end{subfigure}\qquad
	\begin{subfigure}{.38\textwidth}
		\includegraphics[scale =1]{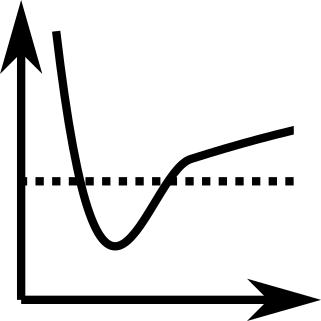}\centering
		\caption{Inverse-quadratic function of the form $h(\rho) = \frac{a}{\rho^2} + b\rho + c$ (two oscillations).}\label{fig:quadratic}
	\end{subfigure}
	\caption{Each solid line is a function with bounded oscillations and each dotted line is an arbitrary threshold. Many parameterized algorithms have piecewise-structured duals with piece functions from these families.}
	\label{fig:examples}
\end{figure}
Figure~\ref{fig:examples} illustrates three common types of functions with bounded oscillations.
In the following lemma, we prove that if $\cH$ is a
class of functions that map $\R$ to $\R$, each of which has at most $B$ oscillations, then $\pdim(\cH^*) = O(\ln B)$.

\begin{lemma}\label{lem:oscillate}
	Let $\cH$ be a
	class of functions mapping $\R$ to $\R$, each of which has at most $B$ oscillations. Then $\pdim(\cH^*) = O(\ln B)$.
\end{lemma}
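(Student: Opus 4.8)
The plan is to bound $\pdim(\cH^*)$ directly using the definition of pseudo-dimension, exploiting the oscillation bound on the functions in $\cH$. Recall that an element of the dual class $\cH^*$ is a function $h^*_\rho : \cH \to \R$ indexed by a real number $\rho \in \R$; it sends $h \mapsto h(\rho)$. So to compute $\pdim(\cH^*)$ I need to understand, for a fixed finite set of functions $h_1, \dots, h_N \in \cH$ and fixed targets $z_1, \dots, z_N \in \R$, how many sign patterns $\left(\sgn(h_1(\rho) - z_1), \dots, \sgn(h_N(\rho) - z_N)\right)$ arise as $\rho$ ranges over $\R$. If this count is strictly less than $2^N$ for every choice of $h_i$ and $z_i$ when $N$ is large enough, then $\pdim(\cH^*) < N$.

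First I would fix $h_1, \dots, h_N \in \cH$ and targets $z_1, \dots, z_N$. For each $i$, the oscillation hypothesis (Definition~\ref{def:oscillate}) says precisely that the indicator function $\rho \mapsto \ind{h_i(\rho) \geq z_i}$ is piecewise constant with at most $B$ discontinuities; equivalently, $\sgn(h_i(\rho) - z_i)$ (up to the measure-zero set where it equals $0$, which I would handle by a standard tie-breaking convention) partitions $\R$ into at most $B+1$ intervals. Superimposing these partitions over all $i \in [N]$ yields a partition of $\R$ into at most $NB + 1$ intervals, and within each such interval the full sign vector $\left(\sgn(h_1(\rho) - z_1), \dots, \sgn(h_N(\rho) - z_N)\right)$ is constant. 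Hence the number of distinct sign vectors realized is at most $NB + 1$. Next I would set up the shattering inequality: if $\cH^*$ shatters a set of size $N$, then $2^N \leq NB + 1$, so $\pdim(\cH^*)$ is the largest $N$ satisfying $2^N \leq NB + 1$. Solving this (for $B \geq 2$, say $N \leq \log_2(NB+1)$, which forces $N = O(\log B)$ by the usual argument that $2^N$ grows faster than any polynomial in $N$ times a constant) gives $\pdim(\cH^*) = O(\ln B)$, as claimed.

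I do not expect a serious obstacle here — the argument is a clean application of the pseudo-dimension definition plus the oscillation structure. The one point requiring a little care is the transition from the ``$\geq z$ indicator has $\leq B$ discontinuities'' hypothesis to a statement about $\sgn(h_i(\rho) - z_i)$, which is a three-valued function: I would either argue that the points where $h_i(\rho) - z_i = 0$ add only finitely many extra breakpoints (subsumed into the $O(\ldots)$), or, cleaner, note that the standard equivalent formulation of pseudo-dimension using $\leq$ rather than $\sgn$ lets me invoke Definition~\ref{def:oscillate} verbatim. A second minor point is just the arithmetic of solving $2^N \leq NB+1$; this is routine and I would not belabor it. The whole proof is two or three sentences once these conventions are pinned down.
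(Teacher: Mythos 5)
Your proposal is correct and follows essentially the same route as the paper's proof: superimpose the at most $B$ breakpoints of each of the $N$ indicator functions $\rho \mapsto \ind{h_i(\rho) \geq z_i}$ to partition $\R$ into at most $NB+1$ intervals on which the full sign vector is constant, conclude that shattering forces $2^N \leq NB+1$, and solve for $N = O(\ln B)$ (the paper dispatches the last step via the inequality $y < a\ln y + b \Rightarrow y < 4a\ln(2a)+2b$). Your side remarks on the $\geq$ convention versus three-valued $\sgn$ are a non-issue, since the pseudo-dimension definition with witnesses is exactly the $\geq$-threshold formulation the paper uses.
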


\begin{proof}  	Suppose that $\pdim\left(\cH^*\right) = N$. Then there exist functions $h_1, \dots, h_N \in \cH$ and witnesses $z_1, \dots, z_N \in \reals$ such that for every subset $T \subseteq [N]$,
	there exists a parameter setting $\rho \in \R$ such that $h_{\rho}^*\left(h_i\right) \geq z_i$ if and only if $i \in T$. We can simplify notation as follows: since $h\left(\rho\right) = h_{\rho}^*\left(h\right)$ for every function $h \in \cH$, we have that for every subset $T \subseteq [N]$,
	there exists a parameter setting $\rho \in \R$ such that $h_i\left(\rho\right) \geq z_i$ if and only if $i \in T$. Let $\cP^*$ be the set of $2^N$ parameter settings corresponding to each subset $T \subseteq [N]$.
	By definition, these parameter settings induce $2^N$ distinct binary vectors as follows:
	\[\left|\left\{\begin{pmatrix}
		\ind{h_1\left(\rho\right) \geq z_1}\\
		\vdots\\
		\ind{h_N\left(\rho\right) \geq z_N}
	\end{pmatrix} : \rho \in \cP^*\right\}\right| = 2^N.
	\]
	
	On the other hand, since each function $h_i$ has at most $B$ oscillations, we can
	partition $\R$ into $M\leq BN+1$ intervals $I_1, \dots, I_M$ such that for
	every interval $I_j$ and every $i \in [N]$, the function $\rho \mapsto \ind{h_i\left(\rho\right) \geq z_i}$ is constant across the interval $I_j$. Therefore, at most one parameter setting $\rho \in \cP^*$ can fall within a single interval $I_j$. Otherwise, if $\rho, \rho' \in I_j \cap \cP^*$, then \[\begin{pmatrix}
		\ind{h_1\left(\rho\right) \geq z_1}\\
		\vdots\\
		\ind{h_N\left(\rho\right) \geq z_N}
	\end{pmatrix} = \begin{pmatrix}
		\ind{h_1\left(\rho'\right) \geq z_1}\\
		\vdots\\
		\ind{h_N\left(\rho'\right) \geq z_N}
	\end{pmatrix},
	\] which is a contradiction.
	As a result, $2^N \leq BN + 1$. The lemma then follows from
	Lemma~\ref{lem:log_ineq}.
\end{proof}

Lemma~\ref{lem:oscillate} implies the following pseudo-dimension bound when the dual function class $\cU^*$ is $(\cF, \cG, k)$-piecewise decomposable, where the boundary functions $\cG$ are thresholds and the piece functions $\cF$ oscillate a bounded number of times.

\begin{restatable}{lemma}{oscillate}\label{lem:oscillate_decomp}
	Let $\cU = \left\{u_{\rho} \mid \rho \in \R\right\}$ be a set of utility functions and suppose the dual class $\cU^*$ is $(\cF, \cG, k)$-decomposable, where the boundary functions $\cG = \left\{g_a \mid a \in \R\right\}$ are thresholds $g_a : u_\rho \mapsto \ind{a \leq \rho}$. Suppose for each $f \in \cF$, the function $\rho \mapsto f\left(u_\rho\right)$ has at most $B$ oscillations. Then $\pdim(\cU) = O((\ln B) \ln(k\ln B))$.
\end{restatable}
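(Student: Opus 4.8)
The plan is to read off $\pdim(\cF^*)$ and $\VC(\cG^*)$ for the classes at hand and then invoke Theorem~\ref{thm:main} directly. Both dual classes turn out to be extremely simple, so the argument is short.

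First I would bound $\VC(\cG^*)$. A dual boundary function $g^*_{u_\rho} \in \cG^*$ sends each $g_a \in \cG$ to $g_a(u_\rho) = \ind{a \le \rho}$. Identifying the index set $\{g_a \mid a \in \R\}$ with (a subset of) $\R$ via the parameter $a$, the function $g^*_{u_\rho}$ is just the map $a \mapsto \ind{a \le \rho}$, i.e., a one-dimensional threshold function with threshold $\rho$. The class of all threshold functions on $\R$ has VC dimension $1$, so $\VC(\cG^*) \le 1$.

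Next I would bound $\pdim(\cF^*)$ via Lemma~\ref{lem:oscillate}. Associate to each $f \in \cF$ the function $\tilde f : \R \to \R$ on the parameter line given by $\tilde f(\rho) = f(u_\rho)$, and set $\tilde\cF = \{\tilde f \mid f \in \cF\}$. By hypothesis each $\tilde f$ has at most $B$ oscillations, so Lemma~\ref{lem:oscillate} gives $\pdim(\tilde\cF^*) = O(\ln B)$. I then claim $\pdim(\cF^*) \le \pdim(\tilde\cF^*)$: if $f_1, \dots, f_N \in \cF$ are pseudo-shattered by $\cF^*$ with witnesses $z_1, \dots, z_N$, then every binary vector $(\ind{f_i(u_\rho) \ge z_i})_{i \in [N]}$ realizable by some $u_\rho \in \cU$ equals the vector $(\ind{\tilde f_i(\rho) \ge z_i})_{i \in [N]}$ realizable for $\tilde\cF^*$ by the corresponding $\rho \in \R$; hence $\tilde f_1, \dots, \tilde f_N$ are pseudo-shattered by $\tilde\cF^*$ using the same witnesses. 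Thus $\pdim(\cF^*) = O(\ln B)$.

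Substituting $\pdim(\cF^*) = O(\ln B)$ and $\VC(\cG^*) = O(1)$ into Theorem~\ref{thm:main} yields $\pdim(\cU) = O(\ln B \cdot \ln \ln B + \ln k) = O((\ln B)\ln(k \ln B))$, which is the claimed bound. The only subtle point, and hence the \emph{main obstacle} (though it is mild), is the reduction from $\cF^*$ to $\tilde\cF^*$: the map $\rho \mapsto u_\rho$ need not be injective, so $\cU$ may be indexed by a strict coarsening of $\R$, but passing from $\cU$ back to $\R$ can only enlarge the set of realizable sign patterns, so it suffices to control the pseudo-dimension of the parameter-line dual $\tilde\cF^*$, to which Lemma~\ref{lem:oscillate} applies verbatim. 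Everything else is a mechanical substitution.
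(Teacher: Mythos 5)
Your proposal is correct and follows essentially the same route as the paper's proof: you bound $\VC(\cG^*)\le 1$ by observing that dual boundary functions act as one-dimensional thresholds, transfer pseudo-shattering from $\cF^*$ to the parameter-line class to get $\pdim(\cF^*)=O(\ln B)$ via Lemma~\ref{lem:oscillate}, and then plug into Theorem~\ref{thm:main}. The injectivity subtlety you flag is handled implicitly in the paper by the identity $f_{u_\rho}^*(f)=f(u_\rho)=h_f(\rho)=h_\rho^*(h_f)$, so there is no substantive difference.
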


\begin{proof}
	First, we claim that $\VC\left(\cG^*\right) = 1$. For a contradiction, suppose $\cG^*$ can shatter two functions $g_a, g_b \in \cG^*$, where $a < b$. There must be a parameter setting $\rho \in \R$ where $g_{u_\rho}^*\left(g_a\right) = g_a\left(u_{\rho}\right) = \ind{a \leq \rho}= 0$ and $g_{u_\rho}^*\left(g_b\right) = g_b\left(u_{\rho}\right) = \ind{b \leq \rho}= 1$. Therefore, $b \leq \rho < a$, which is a contradiction, so $\VC\left(\cG^*\right) = 1$.
	
	Next, we claim that $\pdim\left(\cF^*\right) = O(\ln B)$. For each function $f \in \cF$, let $h_f: \R \to \R$ be defined as $h_f(\rho) = f\left(u_\rho\right)$. By assumption, each function $h_f$ has at most $B$ oscillations. Let $\cH = \left\{h_f \mid f \in \cF\right\}$ and let $N = \pdim\left(\cH^*\right)$. By Lemma~\ref{lem:oscillate}, we know that $N = O(\ln B)$. We claim that $\pdim(\cH^*) \geq \pdim(\cF^*)$. For a contradiction, suppose the class $\cF^*$ can shatter $N+1$ functions $f_1, \dots, f_{N+1}$ using witnesses $z_1, \dots, z_{N+1} \in \R$. By definition, this means that \[\left|\left\{\begin{pmatrix}
		\ind{f_{u_\rho}^*\left(f_1\right) \geq z_1}\\
		\vdots\\
		\ind{f_{u_\rho}^*\left(f_{N+1}\right) \geq z_{N+1}}
	\end{pmatrix} : \rho \in \cP\right\}\right| = 2^{N+1}.\] For any function $f \in \cF$ and any parameter setting $\rho \in \R$, $f_{u_\rho}^*(f) = f\left(u_\rho\right) = h_f(\rho) = h^*_\rho(h_f)$. Therefore, \[\left|\left\{\begin{pmatrix}
		\ind{h_\rho^*\left(h_{f_1}\right) \geq z_1}\\
		\vdots\\
		\ind{h_\rho^*\left(h_{f_{N+1}}\right) \geq z_{N+1}}
	\end{pmatrix} : \rho \in \cP\right\}\right| = \left|\left\{\begin{pmatrix}
		\ind{f_{u_\rho}^*\left(f_1\right) \geq z_1}\\
		\vdots\\
		\ind{f_{u_\rho}^*\left(f_{N+1}\right) \geq z_{N+1}}
	\end{pmatrix} : \rho \in \cP\right\}\right| = 2^{N+1},\] which contradicts the fact that $\pdim(\cH^*) = N$. Therefore, $\pdim(\cF^*) \leq N = O(\ln B)$. The corollary then follows from Theorem~\ref{thm:main}.
\end{proof}

\paragraph{Multi-dimensional piecewise-linear functions.}
In many data-driven algorithm design problems, we find that the boundary functions correspond to halfspace thresholds and the piece functions correspond to constant or linear functions. We handle this case in the following lemma.

\begin{restatable}{lemma}{linear}\label{lem:simple_multidim}
	Let $\cU = \left\{u_{\vec{\rho}} \mid \vec{\rho} \in \cP \subseteq\R^d\right\}$ be a class of utility functions defined over a $d$-dimensional parameter space. Suppose the dual class $\cU^*$ is $(\cF, \cG, k)$-piecewise decomposable, where the boundary functions $\cG = \left\{f_{\vec{a}, \theta} : \cU \to \{0,1\} \mid \vec{a} \in \R^d, \theta \in \R\right\}$ are halfspace indicator functions $g_{\vec{a}, \theta} : u_{\vec{\rho}} \mapsto \ind{\vec{a} \cdot \vec{\rho} \leq \theta}$ and the piece functions $\cF = \left\{f_{\vec{a}, \theta} : \cU \to \R \mid \vec{a} \in \R^d, \theta \in \R\right\}$ are linear functions
	$f_{\vec{a}, \theta} :u_{\vec{\rho}} \mapsto \vec{a} \cdot \vec{\rho} + \theta$. Then $\pdim(\cU) = O(d \ln (dk))$.
\end{restatable}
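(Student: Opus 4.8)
The plan is to instantiate Theorem~\ref{thm:main}. It therefore suffices to prove that $\VC(\cG^*) = O(d)$ and $\pdim(\cF^*) = O(d)$, since substituting these into Theorem~\ref{thm:main} gives $\pdim(\cU) = O(d \ln d + d \ln k) = O(d\ln(dk))$. The only real content is in the transition to the dual classes: both $\cG^*$ and $\cF^*$, after the obvious change of variables, are (subclasses of) textbook geometric concept classes over $\R^{d+1}$, whose VC- and pseudo-dimensions are $d+1$.

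For the boundary functions, identify each $g_{\vec{a},\theta} \in \cG$ with the point $(\vec{a},\theta) \in \R^{d+1}$. Then the dual function $g^*_{u_{\vec{\rho}}} \in \cG^*$ acts on this point by $g^*_{u_{\vec{\rho}}}(g_{\vec{a},\theta}) = \ind{\vec{a}\cdot\vec{\rho} \leq \theta} = \ind{(\vec{\rho},-1)\cdot(\vec{a},\theta) \leq 0}$, which is precisely the indicator of the homogeneous halfspace in $\R^{d+1}$ with normal vector $(\vec{\rho},-1)$. Hence $\cG^*$ is (isomorphic to) a subclass of the class of homogeneous linear separators in $\R^{d+1}$; that class has VC dimension $d+1$, and VC dimension is monotone under passing to subclasses, so $\VC(\cG^*) \leq d+1$. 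Symmetrically, identifying each $f_{\vec{a},\theta} \in \cF$ with $(\vec{a},\theta) \in \R^{d+1}$, we get $f^*_{u_{\vec{\rho}}}(f_{\vec{a},\theta}) = \vec{a}\cdot\vec{\rho} + \theta = (\vec{\rho},1)\cdot(\vec{a},\theta)$, so $\cF^*$ is (isomorphic to) a subclass of the class of linear functionals on $\R^{d+1}$, which has pseudo-dimension $d+1$, giving $\pdim(\cF^*) \leq d+1$. (This last fact can also be re-derived in the spirit of Lemma~\ref{lem:oscillate}: for fixed $f_{\vec{a}_1,\theta_1},\dots,f_{\vec{a}_N,\theta_N} \in \cF$ and thresholds $z_1,\dots,z_N$, the sign vectors $(\sign(\vec{a}_i\cdot\vec{\rho} + \theta_i - z_i))_i$ achievable as $\vec{\rho}$ ranges over $\R^d$ are the cells of an arrangement of $N$ affine hyperplanes in $\R^d$, of which there are at most $\sum_{j=0}^{d}\binom{N}{j}$, which is below $2^N$ once $N > d$.) Finally, restricting $\vec{\rho}$ to $\cP \subseteq \R^d$ only deletes functions from $\cG^*$ and $\cF^*$, so it does not increase either quantity.

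Plugging $\VC(\cG^*) \leq d+1$ and $\pdim(\cF^*) \leq d+1$ into Theorem~\ref{thm:main} yields $\pdim(\cU) = O((d+1)\ln(d+1) + (d+1)\ln k) = O(d\ln(dk))$, as claimed. I do not anticipate any real obstacle: all the work is in correctly dualizing $\cG$ and $\cF$ and recognizing the duals as halfspaces / linear functionals on a $(d+1)$-dimensional space. The only things to watch are the bookkeeping --- which coordinate of $\R^{d+1}$ is pinned to $\pm 1$ in the normal/coefficient vector, and the (harmless) fact that $\cG^*$ and $\cF^*$ are always subclasses, never supersets, of the full geometric families, so the standard $d+1$ dimension bounds transfer directly.
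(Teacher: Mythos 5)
Your proposal is correct and follows essentially the same route as the paper's proof: both instantiate Theorem~\ref{thm:main} after identifying the duals $\cG^*$ and $\cF^*$ with (subclasses of) halfspace indicators and linear functions on $\R^{d+1}$, yielding $\VC(\cG^*) \leq d+1$ and $\pdim(\cF^*) \leq d+1$. The paper phrases the transfer as a shattering-by-contradiction argument rather than invoking monotonicity of VC-/pseudo-dimension under subclasses, but the content is the same.
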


The proof of this lemma follows from classic VC- and pseudo-dimension bounds for linear functions and can be found in Appendix~\ref{app:general}.

\section{Parameterized computational biology algorithms}\label{sec:bio}

We study algorithms that are used in practice for three biological problems:
 sequence alignment, RNA folding, and predicting topologically associated domains in DNA.
In these applications, there are two unifying similarities.
First, algorithmic performance is measured in terms of the distance between the algorithm's output and a ground-truth solution.
In most cases, this  solution
is discovered using laboratory experimentation,
so it is only available for the instances in the training set.
Second, these algorithms use dynamic programming to maximize parameterized objective functions.
This objective function represents a surrogate optimization criterion for the dynamic programming algorithm, whereas utility measures how well the algorithm's output resembles the ground truth.
There may be multiple solutions that maximize this objective function, which we call \emph{co-optimal}. Although co-optimal solutions have the same objective function value, they may have different utilities.
To handle tie-breaking, we assume that in any region of the parameter space where the set of co-optimal solutions is fixed, the algorithm's output is also fixed, which is typically true in practice.
\subsection{Sequence alignment}\label{sec:sequence}
\subsubsection{Global pairwise sequence alignment}

In pairwise sequence alignment, the goal is to line up strings in order to identify regions of similarity.
In biology, for example, these similar regions indicate functional, structural, or evolutionary relationships between the sequences.
Formally, let $\Sigma$ be an alphabet and let $S_1,S_2 \in \Sigma^n$ be two sequences.
A \emph{sequence alignment} is a pair of sequences $\tau_1, \tau_2 \in (\Sigma \cup \{-\})^*$ such that $\left|\tau_1\right| = \left|\tau_2\right|$,
$\texttt{del}\left(\tau_1\right) = S_1$, and $\texttt{del}\left(\tau_2\right) = S_2,$
where \texttt{del} is a function that deletes every $-$, or \emph{gap character}.
There are many features of an alignment that one might wish to optimize,
such as the number of \emph{matches} ($\tau_1[i] = \tau_2[i]$),
\emph{mismatches} ($\tau_1[i] \not= \tau_2[i]$),
\emph{indels} ($\tau_1[i] = -$ or $\tau_2[i] = -$), and
\emph{gaps} (maximal sequences of consecutive gap characters in $\tau \in \left\{\tau_1, \tau_2\right\}$). 
We denote these features using functions $\ell_1, \dots, \ell_d$ that map pairs of sequences $(S_1, S_2)$ and alignments $L$ to $\R$.

A common dynamic programming algorithm $A_{\vec{\rho}}$~\citep{Gotoh87:Improved,Waterman76:Some} returns the alignment $L$ that maximizes the objective function
\begin{equation}
	\rho[1] \cdot \ell_1\left(S_1, S_2, L\right) + \cdots + \rho[d] \cdot \ell_d\left(S_1, S_2, L\right),\label{eq:alignment}
\end{equation}
where $\vec{\rho} \in \R^d$ is a parameter vector.  We denote the output alignment as $A_{\vec{\rho}}\left(S_1, S_2\right)$.
As \citet*{Gusfield94:Parametric} wrote, ``there is considerable disagreement among molecular biologists about the correct choice'' of a parameter setting $\vec{\rho}$.

We assume that there is a utility function that characterizes an alignment's quality, denoted $u(S_1, S_2, L) \in \R$. For example, $u(S_1, S_2, L)$ might measure the distance between $L$ and a ``ground truth'' alignment of $S_1$ and $S_2$~\citep{Sauder00:Large}. We then define $u_{\vec{\rho}}\left(S_1, S_2\right) = u\left(S_1, S_2, A_{\vec{\rho}}\left(S_1, S_2\right)\right)$ to be the utility of the alignment returned by the algorithm $A_{\vec{\rho}}$.

In the following lemma, we prove that the set of utility functions $u_{\vec{\rho}}$ has piecewise-structured dual functions. 

\begin{restatable}{lemma}{sequence}\label{lem:sequence}
	Let $\cU$ be the set of functions $\cU = \left\{u_{\vec{\rho}} : (S_1, S_2) \mapsto u\left(S_1, S_2, A_{\vec{\rho}}\left(S_1, S_2\right)\right) \mid \vec{\rho} \in \R^d\right\}$ that map sequence pairs $S_1, S_2 \in \Sigma^n$ to $\R$.
	The dual class $\cU^*$ is $\left(\cF, \cG, 4^n n^{4n+2}\right)$-piecewise decomposable, where $\cF = \{f_c : \cU \to \R \mid c \in \R\}$ consists of constant functions $f_c : u_{\vec{\rho}} \mapsto c$ and $\cG = \left\{g_{\vec{a}} : \cU \to \{0,1\} \mid \vec{a} \in \R^d\right\}$ consists of halfspace indicator functions $g_{\vec{a}} : u_{\vec{\rho}} \mapsto \ind{\vec{a} \cdot \vec{\rho} < 0}$.
\end{restatable}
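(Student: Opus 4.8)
The plan is to show that for any fixed pair of sequences $(S_1, S_2)$, the parameter space $\R^d$ can be carved up by a collection of halfspaces so that within each cell, the algorithm $A_{\vec\rho}$ returns a fixed alignment, and hence the utility $u_{\vec\rho}(S_1, S_2)$ is a constant. The boundary functions will be halfspace indicators of the form $g_{\vec a}: u_{\vec\rho} \mapsto \ind{\vec a \cdot \vec\rho < 0}$, and the piece functions will be the constant functions $f_c$. The main task is therefore to (i) identify the right set of halfspaces, and (ii) count them, showing the count is at most $4^n n^{4n+2}$.

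First I would set up the objective. For a fixed instance, each candidate alignment $L$ has a feature vector $\bigl(\ell_1(S_1,S_2,L), \dots, \ell_d(S_1,S_2,L)\bigr) \in \R^d$, and its objective value under $\vec\rho$ is the linear form $\vec\rho \cdot \vec\ell(L)$. The algorithm outputs an $L$ maximizing this linear form; by the tie-breaking assumption from Section~\ref{sec:bio}, the output is fixed wherever the set of co-optimal alignments is fixed. For two alignments $L, L'$, whether $L$ beats $L'$ is governed by the sign of $\vec\rho \cdot \bigl(\vec\ell(L) - \vec\ell(L')\bigr)$, which is exactly a halfspace boundary $g_{\vec a}$ with $\vec a = \vec\ell(L) - \vec\ell(L')$. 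So if I take the collection of all such halfspaces, ranging over all ordered pairs of alignments, then within any cell of the induced arrangement the full ordering of alignments by objective value is fixed; in particular the argmax (and the co-optimal set) is fixed, so the output alignment is fixed, so $u_{\vec\rho}(S_1,S_2)$ is a constant function on that cell. This gives the $(\cF, \cG, k)$-piecewise decomposability with $\cF$ the constants and $\cG$ the halfspace indicators; it remains only to bound $k$, the number of boundary functions needed, by $4^n n^{4n+2}$.

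The counting step is where the stated bound comes from, and it is the part requiring care. The number of distinct alignments of two length-$n$ sequences is what drives this: an alignment has length between $n$ and $2n$, and crudely there are at most something like $4^n$ ways to interleave gaps into each string — more precisely one bounds the number of alignments by summing over alignment lengths $m$ the number of ways to choose gap positions, which is at most $\sum_{m=n}^{2n}\binom{m}{m-n}^2$, giving a bound of order $4^n \cdot \mathrm{poly}(n)$; call this $T$. The number of ordered pairs of alignments is then at most $T^2$, and this is what should come out to at most $4^n n^{4n+2}$ after plugging in the explicit bound on $T$ and simplifying. (I would actually be a little careful here: one does not need all $T^2$ halfspaces as genuine boundary functions — many difference vectors $\vec\ell(L)-\vec\ell(L')$ coincide or are zero — but since we only need an upper bound on $k$, it suffices to produce \emph{some} collection of that size, so the crude count is fine. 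If a tighter exponent were needed one could instead count distinct feature vectors rather than distinct alignments, since only the feature vectors matter, but the statement as given is consistent with the naive alignment count.)

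The main obstacle I anticipate is getting the combinatorial bound on the number of alignments (or feature-vector differences) to land exactly at $4^n n^{4n+2}$ rather than some other polynomial factor — this is a bookkeeping exercise in counting interleavings of gap characters into two strings of length $n$, and one has to be slightly careful about whether to count alignments, co-optimal sets, or feature vectors, and about columns that are gap-against-gap (which a reasonable alignment model forbids, but which must be excluded explicitly). Once that count is in hand, the rest is immediate: pass to the dual class $\cU^*$ — since $u_{(S_1,S_2)}^*(u_{\vec\rho}) = u_{\vec\rho}(S_1,S_2)$, the piecewise-constant structure in $\vec\rho$ transfers verbatim to each dual function — and conclude that $\cU^*$ is $(\cF, \cG, 4^n n^{4n+2})$-piecewise decomposable with $\cF$ and $\cG$ as claimed.
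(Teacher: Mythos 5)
Your proposal follows essentially the same route as the paper's proof: for a fixed $(S_1,S_2)$, introduce one halfspace per pair of candidate alignments with normal $\vec{\ell}(L)-\vec{\ell}(L')$, note that on each cell of the arrangement the co-optimal set (hence, by the tie-breaking assumption, the output and the utility) is fixed, transfer this to the dual function, and bound the number of boundary functions by counting pairs of alignments --- exactly the paper's argument, which bounds the number of alignments by $2^n n^{2n+1}$ (Lemma~\ref{lem:count}) and then takes $\binom{2^n n^{2n+1}}{2}\leq 4^n n^{4n+2}$. The one slip in your counting sketch is that $\sum_{m=n}^{2n}\binom{m}{m-n}^2$ is of order $16^n$, not $4^n\cdot\mathrm{poly}(n)$ (since $\binom{2n}{n}\approx 4^n/\sqrt{\pi n}$), but this is harmless: the factor $n^{4n+2}$ in the target dominates any exponential in $n$, so your $T^2$ route still lands under $4^n n^{4n+2}$, and the paper's cruder per-alignment bound makes the bookkeeping immediate.
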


\begin{proof}		
	Fix a sequence pair $S_1$ and $S_2$. Let $\cL$ be the set of alignments the algorithm returns as we range over all parameter vectors $\vec{\rho} \in \R^d$. In other words, $\cL = \left\{A_{\vec{\rho}}(S_1, S_2) \mid \vec{\rho} \in \R^d\right\}$.
	In Lemma~\ref{lem:count}, we prove that $\left|\cL\right| \leq 2^n n^{2n+1}$. For any alignment $\alignment \in \cL$,  the algorithm $A_{\vec{\rho}}$ will return $L$ if and only if \begin{align}\rho[1] \cdot \ell_1\left(S_1, S_2, L\right) + \cdots + \rho[d] \cdot \ell_d\left(S_1, S_2, L\right) > \rho[1] \cdot \ell_1\left(S_1, S_2, L'\right) + \cdots + \rho[d] \cdot \ell_d\left(S_1, S_2, L'\right)\label{eq:generalized_halfspace}
	\end{align}
	for all $\alignment' \in \cL \setminus \{\alignment\}$. Therefore, there is a set $\cH$ of at most ${2^n n^{2n+1} \choose 2} \leq 4^n n^{4n+2}$ hyperplanes such that across all parameter vectors $\vec{\rho}$ in a single connected component of $\R^d \setminus \cH$, the  output of the algorithm parameterized by $\vec{\rho}$, $A_{\vec{\rho}}(S_1, S_2)$, is fixed. (As is standard, $\R^d \setminus \cH$ indicates set removal.) This means that for any connected component $R$ of $\R^d \setminus \cH$, there exists a real value $c_R$ such that $u_{\vec{\rho}}(S_1, S_2) = c_R$ for all $\vec{\rho} \in R$. By definition of the dual, this means that $u_{S_1, S_2}^*\left(u_{\vec{\rho}}\right) = u_{\vec{\rho}}\left(S_1, S_2\right)= c_R$ as well.
	
	We now use this structure to show that the dual class $\cU^*$ is $\left(\cF, \cG, 4^n n^{4n+2}\right)$-piecewise decomposable, as per Definition~\ref{def:decomposable_sign}.
	Recall that $\cG = \left\{g_{\vec{a}} : \cU \to \{0,1\} \mid \vec{a} \in \R^d\right\}$ consists of halfspace indicator functions $g_{\vec{a}} : u_{\vec{\rho}} \mapsto \ind{\vec{a} \cdot \vec{\rho} < 0}$ and $\cF = \{f_c : \cU \to \R \mid c \in \R\}$ consists of constant functions $f_c : u_{\vec{\rho}} \mapsto c$.
	For each pair of alignments $\alignment, \alignment' \in \cL$, let $g^{(\alignment, \alignment')} \in \cG$ correspond to the halfspace represented in Equation~\eqref{eq:generalized_halfspace}. Order these $k := {|\cL| \choose 2}$ functions arbitrarily as $g^{(1)}, \dots, g^{(k)}$.
	Every connected component $R$ of $\R^d \setminus \cH$ corresponds to a sign pattern of the $k$ hyperplanes. For a given region $R$, let $\vec{b}_R \in \{0,1\}^k$ be the corresponding sign pattern. Define the function $f^{\left(\vec{b}_R\right)} \in \cF$ as $f^{\left(\vec{b}_R\right)}= f_{c_R}$, so $f^{\left(\vec{b}_R\right)}\left(u_{\vec{\rho}}\right) =  c_R$ for all $\vec{\rho} \in \R^d$. Meanwhile, for every vector $\vec{b}$ not corresponding to a sign pattern of the $k$ hyperplanes, let $f^{(\vec{b})} = f_0$, so $f^{(\vec{b})}\left(u_{\vec{\rho}}\right) = 0$ for all $\vec{\rho} \in \R^d$.
	In this way, for every $\vec{\rho} \in \R^d$, \[u_{S_1, S_2}^*\left(u_{\vec{\rho}}\right) = \sum_{\vec{b} \in \{0,1\}^{k}} \ind{g^{(i)}\left(u_{\vec{\rho}}\right) = b[i], \forall i \in [k]} f^{(\vec{b})}(u_{\vec{\rho}}),\] as desired.
\end{proof}

Lemmas~\ref{lem:simple_multidim} and \ref{lem:sequence} imply that $\pdim(\cU) = O(nd \ln n + d \ln d).$
	In Appendix~\ref{app:sequence}, we also prove tighter guarantees for a structured subclass of algorithms~\citep{Gotoh87:Improved,Waterman76:Some}. In that case, $d = 4$ and $\ell_1\left(S_1, S_2, L\right)$ is the number of matches in the alignment, $\ell_2\left(S_1, S_2, L\right)$ is the number of mismatches, $\ell_3\left(S_1, S_2, L\right)$ is the number of indels, and $\ell_4\left(S_1, S_2, L\right)$ is the number of gaps. Building on prior research~\citep{Gusfield94:Parametric, FernandezBaca04:Parametric, Pachter04:Parametric}, we show (Lemma~\ref{lem:sequence_tighter}) that the dual class $\cU^*$ is $\left(\cF, \cG, O\left(n^3\right)\right)$-piecewise decomposable with $\cF$ and $\cG$ defined as in Lemma~\ref{lem:sequence}. This implies a pseudo-dimension bound of $O(\ln n)$, which is significantly tighter than that of Lemma~\ref{lem:sequence}. We also prove that this pseudo-dimension bound is tight with a lower bound of $\Omega(\ln n)$ (Theorem~\ref{thm:seq_lb}).
	Moreover, in Appendix~\ref{sec:multisequence}, we provide guarantees for algorithms that align more than two sequences.
	
\subsubsection{Tighter guarantees for a structured algorithm subclass: the affine-gap model} \label{app:affine}
	A line of prior work~\citep{Gusfield94:Parametric, FernandezBaca04:Parametric, Pachter04:Tropical, Pachter04:Parametric} 
	analyzed a specific instantiation of the objective function \eqref{eq:alignment} where $d=3$. In this case, we can obtain a pseudo-dimension bound of $O(\ln n)$, which is exponentially better than the bound implied by Lemma~\ref{lem:sequence}. Given a pair of sequences $S_1, S_2 \in \Sigma^n$, the dynamic programming algorithm $A_{\vec{\rho}}$ returns the alignment $L$ maximizes the objective function \[\mt(S_1, S_2, \alignment) - \rho[1] \cdot \ms(S_1, S_2, \alignment) - \rho[2] \cdot \id(S_1, S_2, \alignment) - \rho[3] \cdot \gp(S_1, S_2, \alignment),\] where $\mt(S_1, S_2, L)$ equals
	the number of matches, $\ms(S_1, S_2, L)$ is the number of mismatches,
	$\id(S_1, S_2, L)$ is the number of indels,
 $\gp(S_1, S_2, L)$ is the number of gaps, and $\vec{\rho} = \left(\rho[1], \rho[2], \rho[3]\right) \in \R^3$ is a parameter vector. We denote the output alignment as $A_{\vec{\rho}}\left(S_1, S_2\right)$. This is known as the \emph{affine-gap scoring model}. 
	We exploit specific structure exhibited by this algorithm family to obtain the exponential pseudo-dimension improvement. This useful structure guarantees that for any pair of sequences $S_1$ and $S_2$, there are only $O\left(n^{3/2}\right)$ different alignments the algorithm family $\left\{A_{\vec{\rho}} \mid \vec{\rho} \in \R^3\right\}$ might produce as we range over parameter vectors~\citep{Gusfield94:Parametric, FernandezBaca04:Parametric, Pachter04:Parametric}. This bound is exponentially smaller than our generic bound of $4^nn^{4n+2}$ from Lemma~\ref{lem:count}.

\begin{lemma}\label{lem:sequence_tighter}
	Let $\cU$ be the set of functions \[\cU = \left\{u_{\vec{\rho} } : (S_1, S_2) \mapsto u\left(S_1, S_2, A_{\vec{\rho}}\left(S_1, S_2\right)\right) \mid  \vec{\rho}\in \R_{\geq 0}\right\}\] that map sequence pairs $S_1, S_2 \in \Sigma^n$ to $\R$.
	The dual class $\cU^*$ is $\left(\cF, \cG, O\left(n^3\right)\right)$-piecewise decomposable, where $\cF = \{f_c : \cU \to \R \mid c \in \R\}$ consists of constant functions $f_c : u_{\vec{\rho}} \mapsto c$ and where $\cG = \{g_{\vec{a}} : \cU \to \{0,1\} \mid \vec{a} \in \R\}$ consists of halfspace indicator functions $g_{\vec{a}} : u_{\vec{\rho}} \mapsto \ind{a[1]\rho[1] + a[2]\rho[2] + a[3]\rho[3] < a[4]}$.
\end{lemma}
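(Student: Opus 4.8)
The plan is to mirror the structure of the proof of Lemma~\ref{lem:sequence}, but exploit the exponentially sharper bound on the number of distinct alignments that is available in the affine-gap model. Fix a pair of sequences $S_1, S_2 \in \Sigma^n$. As before, let $\cL = \left\{A_{\vec{\rho}}(S_1, S_2) \mid \vec{\rho} \in \R^3_{\geq 0}\right\}$ be the set of alignments the algorithm can possibly output over all parameter settings. The key input, due to \citet{Gusfield94:Parametric, FernandezBaca04:Parametric, Pachter04:Parametric}, is that $|\cL| = O\left(n^{3/2}\right)$: the parameter space $\R^3_{\geq 0}$ decomposes (via a planar arrangement argument on the two-dimensional ratio space) into $O(n^{3/2})$ polygonal cells on each of which the co-optimal alignment set is constant. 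Under the tie-breaking assumption stated at the start of Section~\ref{sec:bio}, the algorithm's actual output $A_{\vec{\rho}}(S_1, S_2)$ is therefore constant on each such region as well.

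Next I would translate this alignment-level structure into the boundary/piece-function language of Definition~\ref{def:decomposable_sign}. For each unordered pair of alignments $\alignment, \alignment' \in \cL$, the set of parameters $\vec{\rho}$ for which $\alignment$ outscores $\alignment'$ under the objective $\mt - \rho[1]\ms - \rho[2]\id - \rho[3]\gp$ is a halfspace in $\R^3$: rearranging, the condition is $a[1]\rho[1] + a[2]\rho[2] + a[3]\rho[3] < a[4]$ where the coefficients $a[i]$ are differences of the feature counts of $\alignment$ and $\alignment'$. This is exactly a function $g_{\vec{a}} \in \cG$. Taking all $\binom{|\cL|}{2} = O\left(n^{3/2}\right)^2 = O(n^3)$ such halfspace boundary functions, each connected component of the resulting hyperplane arrangement has a fixed output alignment, hence a fixed utility value $c_R = u(S_1, S_2, A_{\vec{\rho}}(S_1,S_2))$, which I encode by the constant piece function $f_{c_R} \in \cF$; bit vectors $\vec b$ that do not correspond to any realized sign pattern get assigned the dummy $f_0$. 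Writing $u_{S_1,S_2}^*(u_{\vec{\rho}})$ as the sum over $\vec b \in \{0,1\}^k$ of $\ind{g^{(i)}(u_{\vec{\rho}}) = b[i]\,\forall i} \cdot f^{(\vec b)}(u_{\vec{\rho}})$ then exhibits the $(\cF, \cG, O(n^3))$-piecewise decomposition, completing the argument.

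The main obstacle is not the reduction itself — that is essentially a verbatim adaptation of the Lemma~\ref{lem:sequence} proof — but rather correctly invoking and possibly re-deriving the $O\left(n^{3/2}\right)$ bound on $|\cL|$ in the affine-gap model, since this is where all the savings over the generic $4^n n^{4n+2}$ bound come from. This requires care: one must pass to the two-dimensional projectivization of the parameter cone (since scaling $\vec{\rho}$ by a positive constant does not change the argmax), observe that the optimal-alignment regions form a convex polygonal subdivision of that plane, and then bound the number of cells — the classical parametric-sequence-alignment result gives $O(n^{2/3})$ regions in the normalized two-parameter picture, which lifts to $O(n^{3/2})$ here; I would cite this directly rather than reprove it, but would need to state precisely which normalization is being used so the counting is airtight. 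A secondary subtlety is checking that the tie-breaking assumption genuinely makes $A_{\vec{\rho}}$ (not merely the co-optimal set) piecewise constant, so that the piece functions can indeed be taken to be constants in $\cF$; this is exactly the hypothesis imposed at the top of Section~\ref{sec:bio}, so it is available.
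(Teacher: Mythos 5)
Your proposal is correct and follows essentially the same route as the paper's proof: cite the $O\left(n^{3/2}\right)$ bound on $|\cL|$ from the parametric-alignment literature, form the $\binom{|\cL|}{2} = O\left(n^3\right)$ pairwise-comparison halfspaces $g_{\vec{a}} \in \cG$, and assign a constant piece function $f_{c_R}$ to each realized sign pattern (with a dummy $f_0$ otherwise), exactly as in Lemma~\ref{lem:sequence}. One minor aside: the $O\left(n^{3/2}\right)$ bound is the $d=3$ instance of the general $O\left(n^{d(d-1)/(d+1)}\right)$ parametric bound (cf.\ Lemma~\ref{lem:sequence_tighter_highd}), not a lift of the two-parameter $O\left(n^{2/3}\right)$ bound obtained by projectivizing $\vec{\rho}$---scaling $\vec{\rho}$ alone does change the argmax because the match weight is fixed at $1$---but since you, like the paper, simply cite the bound, this does not affect the argument.
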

	
	\begin{proof}
Fix a sequence pair $S_1$ and $S_2$. Let $\cL$ be the set of alignments the algorithm returns as we range over all parameter vectors $\vec{\rho} \in \R^3$. In other words, $\cL = \{A_{\vec{\rho}}(S_1, S_2) \mid \vec{\rho} \in \R^3\}$.
From prior research~\citep{Gusfield94:Parametric, FernandezBaca04:Parametric, Pachter04:Parametric}, we know that $\left|\cL\right| = O\left(n^{3/2}\right)$. For any alignment $\alignment \in \cL$,  the algorithm $A_{\vec{\rho}}$ will return $L$ if and only if \begin{align*}&\mt(S_1, S_2, \alignment) - \rho[1] \cdot \ms(S_1, S_2, \alignment) - \rho[2] \cdot \id(S_1, S_2, \alignment) - \rho[3] \cdot \gp(S_1, S_2, \alignment)\\
	>\text{ } &\mt(S_1, S_2, \alignment') - \rho[1] \cdot \ms(S_1, S_2, \alignment') - \rho[2] \cdot \id(S_1, S_2, \alignment') - \rho[3] \cdot \gp(S_1, S_2, \alignment')\end{align*}
for all $\alignment' \in \cL \setminus \{\alignment\}$. Therefore, there is a set $\cH$ of at most $O\left(n^3\right)$ hyperplanes such that across all parameter vectors $\vec{\rho}$ in a single connected component of $\R^3 \setminus \cH$, the  output of the algorithm parameterized by $\vec{\rho}$, $A_{\vec{\rho}}(S_1, S_2)$, is fixed. The proof now follows by the exact same logic as that of Lemma~\ref{lem:sequence}.
	\end{proof}
	
Lemmas~\ref{lem:simple_multidim} and \ref{lem:sequence_tighter} imply that $\pdim(\cU) = O(\ln n).$
	We also prove that this pseudo-dimension bound is tight up to constant factors. In this lower bound proof, our utility function $u$ is the \emph{Q score} between a given alignment $L$ of two sequences $(S_1, S_2)$ and the ground-truth alignment $L^*$ (the Q score is also known as the \emph{SPS score} in the case of multiple sequence alignment~\citep{Edgar10:Quality}). The $Q$ score between $L$ and the ground-truth alignment $L^*$ is the fraction of aligned letter pairs in $L^*$ that are correctly reproduced in $L$. For example, the following alignment $L$ has a Q score of $\frac{2}{3}$ because it correctly aligns the two pairs of \texttt{C}s, but not the pair of \texttt{G}s:
	\[L = \begin{bmatrix}
		\texttt{G} & \texttt{A} & \texttt{T} & \texttt{C} & \texttt{C}\\
		\texttt{A} & \texttt{G} & \texttt{-} &\texttt{C} & \texttt{C}\\
	\end{bmatrix} \qquad L^* = \begin{bmatrix}
		\texttt{-} & \texttt{G} & \texttt{A} &\texttt{T} & \texttt{C} & \texttt{C}\\
		\texttt{A} & \texttt{G} &\texttt{-} &\texttt{-} & \texttt{C} & \texttt{C}\\
	\end{bmatrix}.\] We use the notation $u\left(S_1, S_2, L\right) \in [0,1]$ to denote the Q score between $L$ and the ground-truth alignment of $S_1$ and $S_2$. The full proof of the following theorem is in Appendix~\ref{app:sequence}.
	
	\begin{restatable}{theorem}{seqLB}\label{thm:seq_lb}
	There exists a set $\left\{A_{\vec{\rho}} \mid \vec{\rho} \in \R_{\geq 0}^3\right\}$ of co-optimal-constant algorithms and an alphabet $\Sigma$ such that the set of functions $\cU = \left\{u_{\vec{\rho}} : (S_1, S_2) \mapsto u\left(S_1, S_2, A_{\vec{\rho}}\left(S_1, S_2\right)\right) \mid \vec{\rho} \in \R^3_{\geq 0}\right\},$ which map sequence pairs $S_1, S_2 \in \cup_{i = 1}^n\Sigma^i$ of length at most $n$ to $[0,1]$, has a pseudo-dimension of $\Omega(\log n)$.
	\end{restatable}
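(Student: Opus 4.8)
I would prove the bound by restricting to a one-dimensional slice of the parameter space and shattering $m = \Omega(\log n)$ carefully constructed sequence pairs using the common witness $z_i = \tfrac12$. Concretely, fix the mismatch and indel penalties $\rho[1],\rho[2]$ to convenient positive constants and let $\lambda = \rho[3]\ge 0$ be the lone free parameter; for each instance $(S_1,S_2)$ I will engineer the map $\lambda \mapsto u_\lambda(S_1,S_2) = u(S_1,S_2,A_\lambda(S_1,S_2))$ to be a piecewise-constant ``square wave'' in $\lambda$, and I will place the $m$ square waves ``out of phase'' so that the joint sign pattern $\bigl(\sgn(u_\lambda(x_1)-\tfrac12),\dots,\sgn(u_\lambda(x_m)-\tfrac12)\bigr)$ runs through all $2^m$ possibilities as $\lambda$ varies. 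Since each instance alone would only witness $\pdim = 1$, the whole point is that the $m$ square waves are incommensurate phases of a binary counter.

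The building block is a constant-size \emph{switching gadget}: a pair of short strings over a small fixed alphabet that, in the affine-gap objective~\eqref{eq:alignment} with the chosen $\rho[1],\rho[2]$, has exactly two alignments that are $\lambda$-optimal for some $\lambda$ --- an alignment $A_\theta$ strictly optimal for $\lambda < \theta$ (it uses extra gaps to gain matches) and an alignment $B_\theta$ strictly optimal for $\lambda > \theta$ --- whose crossover threshold $\theta$ is a simple function of the gadget's match/indel/gap counts and so can be placed where we like by adjusting repetition counts inside the gadget. (This is the tradeoff underlying the prior parametric-alignment work invoked for Lemma~\ref{lem:sequence_tighter}.) Each instance is a concatenation of $T-1 = \Theta(n)$ such gadgets interleaved with short runs of a dedicated spacer symbol used by no gadget; taking the spacer runs long enough that any optimal alignment must align spacer-to-spacer as matches forces the global optimal alignment to decompose as the concatenation of the per-gadget optimal alignments (this also keeps the ``gaps'' feature additive across boundaries). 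Hence, as $\lambda$ increases past the sorted thresholds $\theta_1 < \cdots < \theta_{T-1}$, the algorithm's output passes through $T$ distinct alignments $L_1,\dots,L_T$, with $L_t$ produced on the open phase $(\theta_{t-1},\theta_t)$.

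Now fix $m = \lfloor\log_2 T\rfloor$. For each bit position $j\in\{1,\dots,m\}$, instance $x_j$ uses at every position a gadget whose switching threshold is the same value $\theta_t$ for all $j$ (so the phase structure $L_1,\dots,L_T$ is common to all instances), but at the positions $t$ that are multiples of $2^{j-1}$ it uses a \emph{$Q$-flipping} variant --- one whose two $\lambda$-optimal alignments reproduce, respectively, a block of aligned pairs of that instance's ground-truth alignment and none of them --- while at all other positions it uses a \emph{$Q$-neutral} variant, whose two $\lambda$-optimal alignments reproduce the same number of ground-truth pairs. (Distinct instances thus use literally different strings, and hence carry different ground truths, while sharing the threshold skeleton.) Calibrating the pair counts contributed by the flipping gadgets against the total number of pairs in each ground truth, the running fraction of the ground truth correctly reproduced by $L_t$ crosses $\tfrac12$ exactly at the flips, so $\lambda\mapsto\sgn(u_\lambda(x_j)-\tfrac12)$ is constant on blocks of $2^{j-1}$ consecutive phases and toggles between them --- it is precisely bit $j$ of the phase index. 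Therefore, given any $\vec{b}\in\{0,1\}^m$, choosing $\lambda$ in the phase whose index has binary expansion $\vec{b}$ realizes the sign pattern $\vec{b}$, so $\cU$ shatters $\{x_1,\dots,x_m\}$; since each string has length $O(T) = O(n)$ and $m = \lfloor\log_2 T\rfloor = \Omega(\log n)$, this gives $\pdim(\cU) = \Omega(\log n)$. The same construction simultaneously shows that, restricted to this line, $\cU^*$ is $(\cF,\cG,\Theta(n))$-piecewise decomposable with $\cF$ the constant functions and $\cG$ the one-dimensional threshold functions (matching the roles of $\cF$, $\cG$, and $k$ in Theorem~\ref{thm:main_lb}), and it respects co-optimal-constant tie-breaking since every open phase has a unique optimal alignment.

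The main obstacle is the gadget engineering: producing an affine-gap gadget that (i) has exactly the two intended $\lambda$-optimal alignments with a freely placeable crossover threshold, (ii) composes additively under concatenation-with-spacers so that global behavior is the product of local behaviors (including for the gap feature), and (iii) admits the $Q$-flipping/$Q$-neutral dichotomy together with the $\tfrac12$-crossing calibration uniformly across all $m$ instances on a shared threshold skeleton --- all with the alphabet and the parameters $\rho[1],\rho[2]$ held fixed. Once such a gadget is in hand, the counting and shattering steps above are routine.
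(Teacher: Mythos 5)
Your high-level plan coincides with the paper's: freeze all but one parameter, give every instance the same constant witness, arrange each dual function to be a piecewise-constant square wave in that parameter, and de-phase the waves as a binary counter so that instance $j$ toggles every $2^{j-1}$ thresholds. The paper does exactly this using the indel parameter $\rho[2]$ (rather than your $\rho[3]$), witness $\frac{3}{4}$, and instances built by concatenating blocks $t_1^{(j)},t_2^{(j)}$ whose $\texttt{b}_j$ characters are matched if and only if $\rho[2]\leq\frac{1}{2j}$, with instance $i$ containing every $(2^{i-1})$-th block.

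The gaps are in the parts you defer, and one stated premise cannot hold. First, a constant-size switching gadget over a fixed alphabet, with $\rho[1],\rho[2]$ fixed, admits only $O(1)$ distinct match/mismatch/indel/gap count profiles and hence only $O(1)$ achievable crossover thresholds; so you cannot obtain $T-1=\Theta(n)$ distinct thresholds from constant-size gadgets, and a ``freely placeable'' threshold forces the repetition counts, and hence the gadget sizes, to grow. This is exactly what the paper does (block $j$ has length $j+2$ and threshold $\frac{1}{2j}$), which is why it gets only $\Theta(\sqrt{n})$ thresholds in strings of length $n$; the final $\Omega(\log n)$ bound survives this correction, but your counting of $T$ does not. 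Second, the assertion that long spacer runs force every optimal alignment to decompose gadget-by-gadget is not automatic, particularly over a fixed alphabet where gadget characters recur across blocks and an optimal alignment could sacrifice a spacer run to gain cross-block matches; the paper sidesteps this by giving each block its own characters $\texttt{a}_j,\texttt{b}_j,\texttt{c}_j,\texttt{d}_j$ and proving (Claim~\ref{claim:d_match}) that the $\texttt{d}_j$ characters may be assumed matched, which is what licenses the per-block analysis and the exact bookkeeping of the Q score against the witness. Third, the $Q$-flipping/$Q$-neutral calibration making the score cross $\frac{1}{2}$ precisely at the designated toggles is asserted rather than constructed; in the paper this is the explicit alternation of ground-truth blocks $h^{(j)}$ and $\ell^{(j)}$ together with the computation showing the score alternates across $\frac{3}{4}$. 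In short, the skeleton matches the paper, but the gadget existence, the decomposition argument, and the calibration---the items you yourself flag as the main obstacle---are precisely the substance of the paper's proof and are missing from the proposal.
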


\begin{proof}[Proof sketch]
	In this proof sketch, we illustrate the way in which two sequences pairs can be shattered, and then describe how the proof can be generalized to $\Theta(\log n)$ sequence pairs.
	
	\paragraph{Setup.} Our setup consists of the following three elements: the alphabet, the two sequence pairs $\left(S_1^{(1)}, S_2^{(1)}\right)$ and $\left(S_1^{(2)}, S_2^{(2)}\right)$, and ground-truth alignments of these pairs. We detail these elements below:
	\begin{enumerate}
	\item  Our alphabet  consists of twelve characters: $\left\{\texttt{a}_i, \texttt{b}_i, \texttt{c}_i, \texttt{d}_i\right\}_{i =1}^3$.
	\item The two sequence pairs are comprised of three subsequence pairs: $\left(t_1^{(1)}, t_2^{(1)}\right)$, $\left(t_1^{(2)}, t_2^{(2)}\right)$, and $\left(t_1^{(3)}, t_2^{(3)}\right)$, where \begin{equation}\begin{matrix}
		t_1^{(1)} = \texttt{a}_1\texttt{b}_1\texttt{d}_1\\
		t_2^{(1)} = \texttt{b}_1\texttt{c}_1\texttt{d}_1
	\end{matrix}, \qquad \begin{matrix}
	t_1^{(2)} = \texttt{a}_2\texttt{a}_2\texttt{b}_2\texttt{d}_2\\
	t_2^{(2)} = \texttt{b}_2\texttt{c}_2\texttt{c}_2\texttt{d}_2
\end{matrix}, \qquad \text{and} \qquad \begin{matrix}
t_1^{(3)} = \texttt{a}_3\texttt{a}_3\texttt{a}_3\texttt{b}_3\texttt{d}_3\\
t_2^{(3)} = \texttt{b}_3\texttt{c}_3\texttt{c}_3\texttt{c}_3\texttt{d}_3
\end{matrix}.\label{eq:subsequence}\end{equation} We define the two sequence pairs as \[\begin{matrix}
S_1^{(1)} = t_1^{(1)}t_1^{(2)}t_1^{(3)} = \texttt{a}_1\texttt{b}_1\texttt{d}_1\texttt{a}_2\texttt{a}_2\texttt{b}_2\texttt{d}_2\texttt{a}_3\texttt{a}_3\texttt{a}_3\texttt{b}_3\texttt{d}_3\\
S_2^{(1)} = t_2^{(1)}t_2^{(2)}t_2^{(3)} =\texttt{b}_1\texttt{c}_1\texttt{d}_1\texttt{b}_2\texttt{c}_2\texttt{c}_2\texttt{d}_2\texttt{b}_3\texttt{c}_3\texttt{c}_3\texttt{c}_3\texttt{d}_3
\end{matrix} \qquad \text{and} \qquad \begin{matrix}S_1^{(2)} = t_1^{(2)} = \texttt{a}_2\texttt{a}_2\texttt{b}_2\texttt{d}_2\\
S_2^{(2)} = t_2^{(2)} =\texttt{b}_2\texttt{c}_2\texttt{c}_2\texttt{d}_2
\end{matrix}.
\]
\item Finally, we define
ground-truth alignments of the sequence pairs $\left(S_1^{(1)}, S_2^{(1)}\right)$ and $\left(S_1^{(2)}, S_2^{(2)}\right)$. We define the ground-truth alignment of $\left(S_1^{(1)}, S_2^{(1)}\right)$ to be
\begin{equation}\begin{array}{cccccccccccccccccccc}\texttt{a}_1&\texttt{b}_1&\texttt{-} & \texttt{d}_1&\texttt{a}_2&\texttt{a}_2&\texttt{b}_2&\texttt{-}&\texttt{-} & \texttt{d}_2&\texttt{a}_3&\texttt{a}_3&\texttt{a}_3&\texttt{b}_3&\texttt{-}&\texttt{-}&\texttt{-} & \texttt{d}_3 \\
\texttt{b}_1&\texttt{-}&\texttt{c}_1& \texttt{d}_1&	\texttt{-}&\texttt{-}&\texttt{b}_2&\texttt{c}_2&\texttt{c}_2& \texttt{d}_2&\texttt{b}_3&\texttt{-}&\texttt{-}&\texttt{-}&\texttt{c}_3&\texttt{c}_3&\texttt{c}_3 & \texttt{d}_3\end{array}.\label{eq:ground_truth}\end{equation} The most important properties of this alignment are that the $\texttt{d}_j$ characters are always matching and the $\texttt{b}_j$ characters alternate between matching and not matching. Similarly, we define the ground-truth alignment of the pair $\left(S_1^{(2)}, S_2^{(2)}\right)$ to be \[\begin{array}{cccccccccccccccccccc}\texttt{a}_2&\texttt{a}_2&\texttt{b}_2&\texttt{-}&\texttt{-} & \texttt{d}_2 \\
\texttt{-}&\texttt{-}&\texttt{b}_2&\texttt{c}_2&\texttt{c}_2& \texttt{d}_2\end{array}.\]
\end{enumerate}

\paragraph{Shattering.} We now show that these two sequence pairs can be shattered. A key step is proving that the functions $u_{(0, \rho[2], 0)}\left(S_1^{(1)}, S_2^{(1)}\right)$ and $u_{(0, \rho[2], 0)}\left(S_1^{(2)}, S_2^{(2)}\right)$ have the following form: \begin{equation}u_{(0, \rho[2], 0)}\left(S_1^{(1)}, S_2^{(1)}\right) = \begin{cases}
	\frac{4}{6}&\text{if } \rho[2] \leq \frac{1}{6}\\
	\frac{5}{6} &\text{if } \frac{1}{6} < \rho[2] \leq \frac{1}{4}\\
	\frac{4}{6} &\text{if } \frac{1}{4} < \rho[2] \leq \frac{1}{2}\\
	\frac{5}{6} &\text{if } \rho[2] > \frac{1}{2}\\
\end{cases} \text{ and }  u_{(0, \rho[2], 0)}\left(S_1^{(2)}, S_2^{(2)}\right) = \begin{cases}
	1&\text{if } \rho[2] \leq \frac{1}{4}\\
	\frac{1}{2} &\text{if } \rho[2] > \frac{1}{4}
\end{cases}.\label{eq:lower_bound_oscillate}\end{equation}
\begin{figure}
	\centering
	\includegraphics[scale =1]{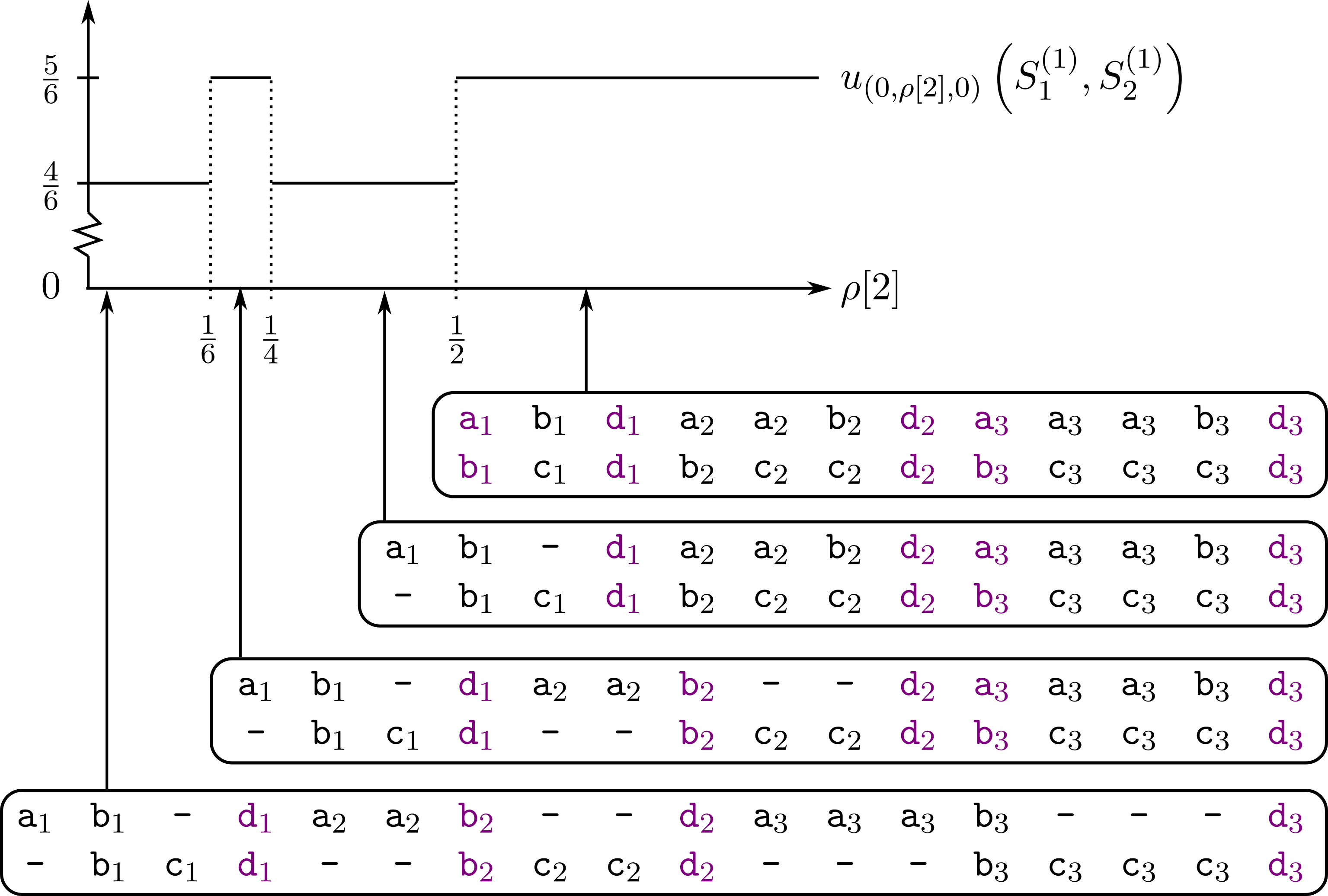}
	\caption{The form of $u_{(0, \rho[2], 0)}\left(S_1^{(1)}, S_2^{(1)}\right)$ as a function of the indel parameter $\rho[2]$. When $\rho[2] \leq \frac{1}{6}$, the algorithm returns the bottom alignment. When $\frac{1}{6} < \rho[2] \leq \frac{1}{4}$, the algorithm returns the alignment that is second to the bottom. When $\frac{1}{4} < \rho[2] \leq \frac{1}{2}$, the algorithm returns the alignment that is second to the top. Finally, when $\rho[2] > \frac{1}{2}$, the algorithm returns the top alignment. The purple characters denote which characters are correctly aligned according to the ground-truth alignment (Equation~\eqref{eq:ground_truth}).}\label{fig:lower_bound}
\end{figure}
The form of $u_{(0, \rho[2], 0)}\left(S_1^{(1)}, S_2^{(1)}\right)$ is illustrated by Figure~\ref{fig:lower_bound}. It is then straightforward to verify that the two sequence pairs are shattered by the parameter settings $(0,0,0)$, $\left(0, \frac{1}{5}, 0\right)$, $\left(0, \frac{1}{3}, 0\right)$, and $\left(0, 1, 0\right)$ with the witnesses $z_1 = z_2 = \frac{3}{4}$. In other words, the mismatch and gap parameters are set to $0$ and the indel parameter $\rho[2]$ takes the values $\left\{0, \frac{1}{5}, \frac{1}{3}, 1\right\}$.

\paragraph{Proof sketch of Equation~\eqref{eq:lower_bound_oscillate}.} The full proof that Equation~\eqref{eq:lower_bound_oscillate} holds follows the following high-level reasoning:
\begin{enumerate}
	\item First, we prove that under the algorithm's output alignment, the $\texttt{d}_j$ characters will always be matching. Intuitively, this is because the algorithm's objective function will always be maximized when each subsequence $t_1^{(j)}$ is aligned with $t_2^{(j)}$.
	\item Second, we prove that the characters $\texttt{b}_j$ will be matched if and only if $\rho[2] \leq \frac{1}{2j}$. Intuitively, this is because in order to match these characters, we must pay with $2j$ indels. Since the objective function is $\mt\left(S_1^{(1)}, S_2^{(1)}, L\right) - \rho[2] \cdot \id\left(S_1^{(1)}, S_2^{(1)}, L\right)$, the $1$ match will be worth the $2j$ indels if and only if $1 \geq 2j\rho[2]$.
\end{enumerate}
These two properties in conjunction mean that when $\rho[2] > \frac{1}{2}$, none of the $\texttt{b}_j$  characters are matched, so the characters that are correctly aligned (as per the ground-truth alignment (Equation~\eqref{eq:ground_truth})) in the algorithm's output are $\left(\texttt{a}_1, \texttt{b}_1\right)$, $\left(\texttt{d}_1, \texttt{d}_1\right)$, $\left(\texttt{d}_2, \texttt{d}_2\right)$, $\left(\texttt{a}_3, \texttt{b}_3\right)$, and $\left(\texttt{d}_3, \texttt{d}_3\right)$, as illustrated by purple in the top alignment of Figure~\ref{fig:lower_bound}. Since there are a total of $6$ aligned letters in the ground-truth alignment, we have that the Q score is $\frac{5}{6}$, or in other words, $u_{(0, \rho[2], 0)}\left(S_1^{(1)}, S_2^{(1)}\right) = \frac{5}{6}.$

When $\rho[2]$ shifts to the next-smallest interval $\left(\frac{1}{4}, \frac{1}{2}\right]$, the indel penalty $\rho[2]$ is sufficiently small that the $\texttt{b}_1$ characters will align. Thus we lose the correct alignment $\left(\texttt{a}_1, \texttt{b}_1\right)$, and the Q score drops to $\frac{4}{6}$. Similarly, if we decrease $\rho[2]$ to the next-smallest interval $\left(\frac{1}{6}, \frac{1}{4}\right]$, the $\texttt{b}_2$ characters will align, which is correct under the ground-truth alignment (Equation~\eqref{eq:ground_truth}). Thus the Q score increases back to $\frac{5}{6}$. Finally, by the same logic, when $\rho[2] \leq \frac{1}{6}$, we lose the correct alignment $\left(\texttt{a}_3, \texttt{b}_3\right)$ in favor of the alignment of the $\texttt{b}_3$ characters, so the Q score falls to $\frac{4}{6}$. In this way, we prove the form of $u_{(0, \rho[2], 0)}\left(S_1^{(1)}, S_2^{(1)}\right)$ from Equation~\eqref{eq:lower_bound_oscillate}. A parallel argument proves the form of $u_{(0, \rho[2], 0)}\left(S_1^{(2)}, S_2^{(2)}\right)$.

\paragraph{Generalization to shattering $\Theta(\log n)$ sequence pairs.} This proof intuition naturally generalizes to $\Theta(\log n)$ sequence pairs of length $O(n)$ by expanding the number of subsequences $t_i^{(j)}$ \emph{a la} Equation~\eqref{eq:subsequence}. In essence, if we define $S_1^{(1)} = t_1^{(1)}t_1^{(2)}\cdots t_1^{(k)}$ and $S_2^{(1)} = t_2^{(1)}t_2^{(2)}\cdots t_2^{(k)}$ for a carefully-chosen $k = \Theta\left(\sqrt{n}\right)$, then we can force $u_{\left(0, \rho[2], 0\right)}\left(S_1^{(1)}, S_2^{(1)}\right)$ to oscillate $O(n)$ times. Similarly, if we define $S_1^{(2)} = t_1^{(2)}t_1^{(4)}\cdots t_1^{(k-1)}$ and $S_2^{(2)} = t_2^{(2)}t_2^{(4)}\cdots t_2^{(k-1)}$, then we can force $u_{\left(0, \rho[2], 0\right)}\left(S_1^{(1)}, S_2^{(1)}\right)$ to oscillate half as many times, and so on. This construction allows us to shatter $\Theta(\log n)$ sequences.
\end{proof}

\subsubsection{Progressive multiple sequence alignment}\label{sec:multisequence}
The multiple sequence alignment problem is a generalization of the pairwise
alignment problem introduced in Section~\ref{sec:sequence}. Let $\Sigma$ be an
abstract alphabet and let $S_1, \dots, S_\numseqs \in \Sigma^n$ be a collection of
sequences in $\Sigma$ of length $n$. A \emph{multiple sequence alignment} is a
collection of sequences $\tau_1, \dots, \tau_\numseqs \in (\Sigma \cup
\{\gapchar\})^*$ such that the following hold:
\begin{enumerate}
  \item The aligned sequences are the same length: $|\tau_1| = |\tau_2| = \cdots = \left|\tau_{\kappa}\right|$.
  \item Removing the gap characters from $\tau_i$ gives $S_i$: for all $i \in [\kappa]$, $\texttt{del}(\tau_i) = S_i$.
  \item For every position in the alignment, at least one of the
  aligned sequences has a non-gap character. In other words, for every position $i \in [|\tau_1|]$, there exists a sequence $\tau_j$ such that $\tau_j[i] \neq \gapchar$.
\end{enumerate}

The extension from pairwise to multiple sequence alignment is computationally challenging: 
all common formulations of the problem are NP-complete~\citep{Wang94:On,Kececioglu04:Aligning}. 
As a result, heuristics have been developed to find good but possibly sub-optimal alignments.
The most common heuristic approach
is called \emph{progressive multiple sequence alignment}.
It leverages
efficient pairwise alignment algorithms to heuristically align multiple
sequences~\citep{Feng87:Progressive}.

The input to a progressive multiple sequence alignment algorithm is a collection of sequences $S_1, \dots, S_\numseqs$ together with a binary \emph{guide tree} $G$ with $\kappa$ leaves\footnote{The problem of constructing the guide tree is also an algorithmic task, often tackled via hierarchical clustering, but we are agnostic to that pre-processing step.}. The tree indicates how the original alignment should be decomposed into a hierarchy of subproblems, each of which can be heuristically
	solved using pairwise alignment. The leaves of the guide tree correspond to the
	input sequences $S_1, \dots, S_\numseqs$.

While there are many formalizations of the progressive alignment method, for the sake of analysis we will focus on ``partial consensus'' described by \citet{Higgins88:Clustal}. Here, we provide a high-level description of the algorithm and in Appendix~\ref{app:multisequence}, we include more detailed pseudo-code.
At a high level, the algorithm recursively constructs an
alignment in two stages:
first, it creates a \emph{consensus sequence} for each node in the guide tree using a pairwise alignment algorithm, and 
then it propagates the node-level alignments to the leaves by inserting additional gap characters.

In a bit more detail, for each node $v$ in the tree, we construct an alignment $L'_v$ 
 of the consensus sequences of its children
as well as a consensus sequence $\sigma'_v \in \Sigma^*$. 
Since each leaf corresponds to a single input sequence, it has a
trivial alignment and the consensus sequence is just the
input sequence itself. 
For an internal node $v$ with children $c_1$ and $c_2$, we use a
pairwise alignment algorithm to construct an alignment of the consensus
strings $\sigma'_{v_1}$ and $\sigma'_{v_2}$. 
Finally, we define the consensus sequence of the node
$v$ to be the string $\sigma_v \in \Sigma^*$ such that $\sigma_v[i]$ is the
most-frequent non-gap character in the $i^\text{th}$ position in the alignment
$L'_v$.
By defining the consensus sequence in this way, we can represent all of the sub-alignments of the leaves of the subtree rooted at $v$ as a single sequence which can be aligned using existing methods.
We obtain a full multiple sequence alignment by iteratively replacing each consensus sequence by the pairwise alignment it represents, adding gap columns to the sub-alignments when necessary. 
Once we add a gap to a sequence, we never remove it: ``once a gap, always a gap.''

The family $\left\{A_{\vec{\rho}} \mid \vec{\rho} \in \reals^d\right\}$ of
parameterized pairwise alignment algorithms introduced in
Section~\ref{sec:sequence} induces a parameterized family of progressive
multiple sequence alignment algorithms $\left\{ M_{\vec{\rho}} \mid
\vec{\rho} \in \reals^d\right\}$. In particular, the algorithm
$M_{\vec{\rho}}$ takes as input a collection of input sequences $S_1, \dots,
S_\numseqs \in \Sigma^n$ and a guide tree $G$, and it outputs a multiple-sequence
alignment $L$ by applying the pairwise alignment algorithm $A_{\vec{\rho}}$ at
each node of the guide tree. We assume that there is a utility function that characterizes an alignment's quality, denoted $u\left(S_1,\dots, S_{\kappa}, L\right) \in [0,1]$. We then define $u_{\vec{\rho}}\left(S_1, \dots, S_{\kappa}, G\right) = u\left(S_1, \dots, S_{\kappa}, M_{\vec{\rho}}\left(S_1, \dots, S_{\kappa}, G\right)\right)$ to be the utility of the alignment returned by the algorithm $M_{\vec{\rho}}$. The proof of the following lemma is in Appendix~\ref{app:multisequence}. It follows by the same logic as Lemma~\ref{lem:sequence} for pairwise sequence alignment, inductively over the guide tree.

\begin{restatable}{lemma}{progressive}\label{lem:progressive}
Let $G$ be a guide tree of depth $\eta$ and let $\cU$ be the set
  of functions
  \[
  \cU =
  \left\{
    u_{\vec{\rho}} :
    (S_1, \dots, S_\numseqs, G) \mapsto
    u\bigl(S_1, \dots, S_\numseqs, M_{\vec{\rho}}(S_1, \dots, S_\numseqs, G)\bigr)
    \mid \vec{\rho} \in \reals^d
  \right\}.
  \]
  The dual class $\cU^*$ is \[\left(\cF, \cG,
  \left(4^{n\kappa} \left(n\kappa\right)^{4n\kappa + 2}\right)^{2d^{\eta}}4^{d^{\eta+1}}\right)\text{-piecewise decomposable,}\] where $\cG = \left\{g_{\vec{a}, \theta} : \cU \to \{0,1\} \mid \vec{a} \in \R^d, \theta \in \R\right\}$ consists of halfspace indicator functions $g_{\vec{a}, \theta} : u_{\vec{\rho}} \mapsto \ind{\vec{a} \cdot \vec{\rho} \leq \theta}$ and $\cF = \{f_c : \cU \to \R \mid c \in \R\}$ consists of constant functions $f_c : u_{\vec{\rho}} \mapsto c$.
\end{restatable}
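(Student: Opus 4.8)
The plan is to mimic the structure of the proof of Lemma~\ref{lem:sequence}, but carry it out inductively over the guide tree $G$ rather than for a single pairwise alignment. Fix a collection of input sequences $S_1, \dots, S_\numseqs \in \Sigma^n$ and a guide tree $G$ of depth $\eta$. The key quantity to control is the number of distinct multiple-sequence alignments $M_{\vec{\rho}}(S_1, \dots, S_\numseqs, G)$ the algorithm family can produce as $\vec{\rho}$ ranges over $\R^d$; call this set $\cL_G$. Once we bound $|\cL_G|$, the same argument as in Lemma~\ref{lem:sequence} finishes the proof: for any two alignments $L, L' \in \cL_G$, the set of parameter vectors for which the algorithm prefers $L$ over $L'$ is (generically) delimited by a hyperplane in $\R^d$ coming from comparing the two objective values $\sum_j \rho[j]\ell_j(\cdot, L)$ versus $\sum_j \rho[j]\ell_j(\cdot, L')$; aggregating over all $\binom{|\cL_G|}{2}$ pairs gives a set $\cH$ of hyperplanes whose cells each fix the output alignment, hence fix the utility, so $\cU^*$ is $(\cF, \cG, \binom{|\cL_G|}{2})$-piecewise decomposable with $\cF$ constants and $\cG$ halfspace indicators. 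It then remains to check that $\binom{|\cL_G|}{2} \le 4^{\,\cdot} (\cdot)^{\,\cdot}$ matches the stated bound.

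The heart of the argument is the inductive bound on $|\cL_G|$. First I would handle a single internal node $v$ with children $c_1, c_2$ whose consensus sequences $\sigma'_{v_1}, \sigma'_{v_2}$ each have length at most $n\kappa$ (since a consensus sequence at any node has length at most the total length of all leaf sequences below it). By Lemma~\ref{lem:count}, the pairwise algorithm $A_{\vec\rho}$ applied to two sequences of length at most $n\kappa$ produces at most $2^{n\kappa}(n\kappa)^{2n\kappa+1}$ distinct alignments, hence at most $4^{n\kappa}(n\kappa)^{4n\kappa+2}$ once we account for the bookkeeping the paper uses (the number-of-pairs bound). Then I would induct on depth: if $v$ has two children each of which can be aligned in at most $T_{\eta-1}$ ways (as subtrees of depth $\le \eta-1$), then the consensus sequence and alignment at $v_1$ and $v_2$ are each determined by one of $T_{\eta-1}$ choices, and \emph{given} those, node $v$ adds a factor of at most $4^{n\kappa}(n\kappa)^{4n\kappa+2}$ fresh alignment choices; so $T_\eta \le \left(4^{n\kappa}(n\kappa)^{4n\kappa+2}\right) \cdot T_{\eta-1}^2$. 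Unrolling this recurrence with $T_0 = 1$ at the leaves yields $T_\eta \le \left(4^{n\kappa}(n\kappa)^{4n\kappa+2}\right)^{1 + 2 + \cdots + 2^{\eta-1}} \le \left(4^{n\kappa}(n\kappa)^{4n\kappa+2}\right)^{2^\eta}$, and accounting for the branching of a $d$-dimensional parameter space at each of the (at most) $d^\eta$-ish nodes replaces the exponent by $2d^\eta$ and contributes the extra $4^{d^{\eta+1}}$ factor; I would reconcile the precise exponents with the statement's $\left(4^{n\kappa}(n\kappa)^{4n\kappa+2}\right)^{2d^\eta} 4^{d^{\eta+1}}$.

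I expect the main obstacle to be the inductive step establishing that the output alignment at node $v$ is indeed a \emph{piecewise} function of $\vec\rho$ with hyperplane boundaries, compounded across the tree. The subtlety is the ``once a gap, always a gap'' propagation and the consensus-sequence construction: the consensus sequence at $v$ depends on which alignment $A_{\vec\rho}$ returned at $v$'s children, so as $\vec\rho$ varies, not only the final gap-insertion changes but the very inputs to the pairwise subproblem at $v$ change. The clean way to handle this is to observe that within any cell of the hyperplane arrangement built up from \emph{all} pairwise subproblems at \emph{all} nodes simultaneously, every pairwise alignment used by $M_{\vec\rho}$ is fixed, hence every consensus sequence is fixed, hence the entire propagated alignment $L$ and its utility are fixed — exactly the co-optimal-constant tie-breaking assumption stated at the start of Section~\ref{sec:bio}. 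The remaining care is just to count how many hyperplanes this global arrangement has: at most $\binom{T_\eta}{2}$ per node times the number of nodes, which is absorbed into the stated $k$. Once this ``global arrangement fixes everything'' observation is in place, the rest is the same packaging as Lemma~\ref{lem:sequence}, so I would present the proof as: (i) bound $|\cL_G|$ by the recurrence, (ii) build the global hyperplane arrangement, (iii) invoke the co-optimal-constant assumption to get piecewise-constant utility, (iv) read off the $(\cF,\cG,k)$ decomposition.
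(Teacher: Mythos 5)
The structural core of your third paragraph --- build one hyperplane arrangement out of all the pairwise subproblems at all nodes of the guide tree, argue that within any cell every pairwise alignment, hence every consensus sequence, hence the propagated final alignment is fixed, invoke the co-optimal-constant tie-breaking assumption, and then package the decomposition exactly as in Lemma~\ref{lem:sequence} --- is the same strategy the paper uses (Claim~\ref{claim:induction} in Appendix~\ref{app:multisequence}). The genuine gap is in the counting, i.e., in actually producing the stated $k$. Your opening reduction is not valid as stated: $M_{\vec{\rho}}$ does not select its output by maximizing a single linear objective over the set $\cL_G$ of reachable multiple alignments, so the regions on which its output is constant are not the cells of an arrangement of ${|\cL_G| \choose 2}$ comparison hyperplanes; the algorithm makes a cascade of local argmax decisions whose inputs (the children's consensus sequences) themselves change from region to region. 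You notice this later, but it means the hyperplane count has to be organized node by node, keeping track of which consensus sequences can reach each node, rather than by comparing final outputs.

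Moreover, your account of where the exponents come from is incorrect, and it is exactly the step you defer (``I would reconcile the precise exponents''). A binary guide tree of depth $\eta$ has $O(2^{\eta})$ nodes, not $d^{\eta}$; in the paper the factors $d^{\eta}$ and $4^{d^{\eta+1}}$ do not come from the number of nodes but from cell counting in $\R^d$. At an internal node $v$ the paper takes the union $\cH$ of the two children's hyperplane sets (each of size at most $s$), bounds the number of connected components of $\R^d \setminus \cH$ by $(2s+1)^d \le (3s)^d$ using \citet{Buck43:Partition}, and adds $\ell := 4^{n\kappa}(n\kappa)^{4n\kappa+2}$ fresh hyperplanes inside each cell (the Lemma~\ref{lem:sequence} argument applied to the two consensus sequences, which have length at most $n\kappa$ and are fixed on that cell). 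This yields the recurrence $|\cH_v| \le \ell(3s)^d + 2s \le \ell(4s)^d$, and unrolling \emph{that} recurrence --- not your recurrence $T_{\eta} \le \ell\, T_{\eta-1}^2$ on the number of alignments --- is what produces $\ell^{d^{\height{G}}}\left(\ell 4^d\right)^{(d^{\height{G}}-1)/(d-1)}$ and hence the stated $k$. Your alignment-counting idea can be salvaged: if at each node you include, for every pair of consensus sequences that the children can possibly output, all comparison hyperplanes for aligning that pair, you obtain a region-independent arrangement with at most (number of reachable consensus-sequence pairs) times $\ell$ hyperplanes per node, and a short calculation shows this is no larger than the stated $k$ when $d \ge 2$; but that calculation is precisely what is missing, and the mechanism you do offer (``branching of the parameter space at $d^{\eta}$-ish nodes'' contributing $4^{d^{\eta+1}}$) does not correspond to anything in the construction and would not yield the claimed bound.
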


This lemma together with Lemma~\ref{lem:simple_multidim} implies that $
\pdim(\cU) =  O\left(
d^{\eta + 1} n \kappa \ln(n \kappa)
+ d^{\eta + 2}
\right) 
$. Therefore, the pseudo-dimension grows only linearly in $n$ and quadratically in $\kappa$
in the affine-gap model ($d = 3$) when the guide tree is balanced ($\eta \leq \log \kappa$). 

\subsection{RNA folding}\label{sec:RNA}
RNA molecules have many essential roles including protein coding and enzymatic functions~\cite{Holley65:Structure}.
RNA is assembled as a chain of \emph{bases} denoted \texttt{A}, \texttt{U}, \texttt{C}, and \texttt{G}.
It is often found as a single strand folded onto itself with non-adjacent bases physically bound together.
RNA folding algorithms infer the way strands would naturally fold, shedding light on their functions.
Given a sequence $S \in \left\{\texttt{A}, \texttt{U}, \texttt{C}, \texttt{G} \right\}^n$,
we represent a folding by a set of pairs $\phi \subset [n] \times [n]$.
If $(i,j) \in \phi$, then the $i^{th}$ and $j^{th}$ bases of $S$ bind together.
Typically, the bases \texttt{A} and \texttt{U} bind together, as do \texttt{C} and \texttt{G}.
Other matchings are likely less stable.
We assume that the foldings do not contain any \emph{pseudoknots}, which are pairs $(i,j), (i',j')$ that cross with $i < i' < j < j'$.

A well-studied algorithm returns a folding that maximizes a parameterized objective function~\cite{Nussinov80:Fast}.
At a high level, this objective function trades off between global properties of the folding (the number of binding pairs $|\phi|$) and local properties (the likelihood that bases would appear close together in the folding). Specifically,  the algorithm $A_{\rho}$ uses dynamic programming to  return the folding $A_{\rho}(S)$ that maximizes
\begin{equation}
\rho \left| \phi \right| + \left( 1 - \rho \right) \sum_{(i,j) \in \phi} M_{S[i], S[j], S[i-1],S[j+1]} \ind{(i-1,j+1)\in \phi},\label{eq:RNA_obj}
\end{equation}
where $\rho \in [0,1]$ is a parameter and $M_{S[i], S[j], S[i-1],S[j+1]}\in  \R$ is a score for having neighboring pairs of the letters $(S[i],S[j])$ and $(S[i-1],S[j+1])$.
These scores help identify stable sub-structures.

We assume there is a utility function that characterizes a folding's quality, denoted $u(S, \phi)$.
For example, $u(S, \phi)$ might measure the fraction of pairs shared between $\phi$ and a ``ground-truth'' folding, obtained via expensive computation or laboratory experiments.

\begin{restatable}{lemma}{RNA}\label{lem:RNA_piecewise}
	Let $\cU$ be the set of functions $\cU = \left\{u_{\rho} : S \mapsto u\left(S, A_{\rho}\left(S\right)\right) \mid \rho \in \R\right\}$.
	The dual class $\cU^*$ is $\left(\cF, \cG, n^{2}\right)$-piecewise decomposable, where $\cG = \{g_{a} : \cU \to \{0,1\} \mid a \in \R\}$ consists of threshold functions $g_{a} : u_{\rho} \mapsto \ind{\rho < a}$ and $\cF = \{f_c : \cU \to \R \mid c \in \R\}$ consists of constant functions $f_c : u_{\rho} \mapsto c$.
\end{restatable}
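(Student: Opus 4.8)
The plan is to mimic the structure of the proof of Lemma~\ref{lem:sequence}, exploiting the fact that here the parameter space is one-dimensional ($\rho \in \R$, effectively $[0,1]$) and that the objective function \eqref{eq:RNA_obj} is \emph{linear} in $\rho$ for any fixed folding $\phi$. First I would fix an input sequence $S \in \{\texttt{A},\texttt{U},\texttt{C},\texttt{G}\}^n$ and let $\Phi$ be the set of foldings the algorithm $A_\rho$ might output as $\rho$ ranges over $\R$, i.e. $\Phi = \{A_\rho(S) \mid \rho \in \R\}$. For any fixed folding $\phi$, the objective \eqref{eq:RNA_obj} can be written as $\rho \cdot \alpha_\phi + \beta_\phi$ where $\alpha_\phi = |\phi| - \sum_{(i,j)\in\phi} M_{\cdots}\ind{(i-1,j+1)\in\phi}$ and $\beta_\phi = \sum_{(i,j)\in\phi} M_{\cdots}\ind{(i-1,j+1)\in\phi}$ — both independent of $\rho$. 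Thus for any two foldings $\phi,\phi'$, the set of $\rho$ where $A_\rho$ (weakly) prefers $\phi$ to $\phi'$ is a halfline determined by the sign of $\rho(\alpha_\phi-\alpha_{\phi'}) + (\beta_\phi-\beta_{\phi'})$, i.e. a single threshold in $\rho$.

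The key quantitative step is to bound $|\Phi|$. I would argue $|\Phi| \le n^2$ as follows: the pairwise-comparison thresholds above are all of the form ``$\rho$ vs.\ some ratio,'' and since $|\phi|$ is an integer in $\{0,1,\dots,\lfloor n/2\rfloor\}$ and the second term in \eqref{eq:RNA_obj} is a bounded sum, the objective as a function of $\rho$ is an upper envelope of at most... — more carefully, I expect the intended argument is that for a fixed value of $|\phi| = m$, the dynamic program's choice is determined, and as $\rho$ increases the optimal $|\phi|$ is monotone, giving $O(n)$ breakpoints; combined with the structure of the local term one gets $|\Phi| \le n^2$ (this is the bound stated in the lemma, with $k = n^2$, so the count must come out this way — I would verify the DP-based counting argument gives exactly $O(n^2)$ distinct foldings, possibly by counting distinct achievable pairs $(\alpha_\phi,\beta_\phi)$ on the upper envelope). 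Given $|\Phi| \le n^2$, there are at most $\binom{n^2}{2} \le n^2$... wait — $\binom{n^2}{2} = O(n^4)$, so to land on $k = n^2$ I would instead appeal directly to the upper-envelope structure: the piecewise-linear concave-or-convex upper envelope of $|\Phi| \le n^2$ lines has at most $|\Phi|$ pieces, hence at most $n^2$ breakpoints in $\rho$, each a threshold.

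With the breakpoints $a_1 < \dots < a_k$ ($k \le n^2$) in hand, within each resulting open interval of $\rho$ the co-optimal set of foldings is fixed, so by the co-optimal-constant tie-breaking assumption (Section~\ref{sec:bio}) the output $A_\rho(S)$ is fixed, hence $u_\rho(S) = u(S, A_\rho(S))$ is constant. I would then package this exactly as in Lemma~\ref{lem:sequence}: take $\cG$ to be the threshold functions $g_a : u_\rho \mapsto \ind{\rho < a}$, one for each breakpoint $a_i$ (padding with trivial functions if fewer than $n^2$), and $\cF$ the constant functions $f_c : u_\rho \mapsto c$; for each bit vector $\vec b \in \{0,1\}^{n^2}$ that corresponds to a realizable sign pattern, set $f_{\vec b} = f_{c_R}$ where $c_R$ is the (constant) value of $u_\rho(S)$ on the corresponding interval $R$, and $f_{\vec b} = f_0$ otherwise. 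This verifies $(\cF,\cG,n^2)$-piecewise decomposability per Definition~\ref{def:decomposable_sign}. The main obstacle is the counting step: pinning down why the number of distinct foldings on the upper envelope (equivalently, the number of $\rho$-breakpoints) is $O(n^2)$ rather than something larger — this requires either a careful DP-state argument or an envelope-complexity argument specific to the objective \eqref{eq:RNA_obj}, and everything else is a routine adaptation of the sequence-alignment proof.
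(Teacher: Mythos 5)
There is a genuine gap, and you have correctly localized it yourself: the entire content of the lemma beyond routine packaging is the bound on the number of breakpoints, and your proposal leaves that step as "I would verify..." The paper closes it with a two-line observation that you brush against but never carry through: for a fixed cardinality $|\phi| = m$, the objective~\eqref{eq:RNA_obj} equals $\rho m + (1-\rho)\sum_{(i,j)\in\phi} M_{S[i],S[j],S[i-1],S[j+1]}\ind{(i-1,j+1)\in\phi}$, and since $1-\rho \geq 0$ on $[0,1]$, the maximizer among foldings of cardinality $m$ is the folding $\phi_m$ with the largest local score, \emph{independently of $\rho$}. Hence the output $A_\rho(S)$ always lies in $\{\phi_0,\dots,\phi_{n/2}\}$, so $|\Phi| \leq \frac{n}{2}+1$. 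With this in hand, even the crude pairwise-crossing count you initially tried works: each pair of foldings in $\Phi$ contributes at most one threshold in $\rho$ (linearity), giving at most $\binom{|\Phi|}{2} \leq n^2$ intervals on which the co-optimal set, and hence (by the tie-breaking assumption) the output and $u_\rho(S)$, is constant. Your worry that $\binom{n^2}{2} = O(n^4)$ arose only because you were working with the placeholder bound $|\Phi| \leq n^2$ rather than the true $O(n)$ bound; no upper-envelope argument is needed, although your envelope observation (each line appears in at most one piece of the upper envelope, so at most $|\Phi|$ pieces) is correct and would give an even smaller count of $O(n)$ breakpoints if you preferred it.

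Two smaller remarks. Your claimed monotonicity of the optimal $|\phi|$ in $\rho$ is plausible but unproven and unnecessary, so do not lean on it. The final packaging step—thresholds $g_{a}$ at the breakpoints, constant piece functions $f_{c}$ assigned to realizable bit vectors, $f_0$ otherwise—is exactly what the paper does and is fine as you wrote it; once the counting step above is inserted, your argument matches the paper's proof.
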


\begin{proof}
	Fix a sequence $S$. Let $\Phi$ be the set of alignments that the algorithm returns as we range over all parameters $\rho \in \R$. In other words, 
	$\Phi = \{A_{\rho}(S) \mid \rho \in [0,1]\}$.
	We know that every folding has length at most $n/2$. For any $k \in \{0,\dots, n/2\}$, let $\phi_k$ be the folding of length $k$ that maximizes the right-hand-side of Equation~\eqref{eq:RNA_obj}: \[\phi_k = \argmax_{\phi : |\phi| = k}\sum_{(i,j) \in \phi} M_{S[i], S[j], S[i-1],S[j+1]} \ind{(i-1,j+1)\in \phi}.\] The folding the algorithm returns will always be one of $\left\{\phi_0, \dots, \phi_{n/2}\right\}$, so $\left|\Phi\right| \leq \frac{n}{2} + 1.$
	
	Fix an arbitrary folding $\phi \in \Phi$. We know that $\phi$ will be the folding returned by the algorithm $A_{\rho}(S)$ if and only if \begin{align}\begin{split}
			&\rho\left|\phi\right|+(1-\rho)\sum_{(i,j) \in \phi} M_{S[i], S[j], S[i-1],S[j+1]} \ind{(i-1,j+1)\in \phi}\\
			\geq\text{ } & \rho\left|\phi'\right|+(1-\rho)\sum_{(i,j) \in \phi'} M_{S[i], S[j], S[i-1],S[j+1]} \ind{(i-1,j+1)\in \phi'}\label{eq:halfspace_folding}
	\end{split}\end{align}
	for all $\phi' \in \Phi \setminus \{\phi\}$. Since these functions are linear in $\rho$, this means there is a set of $T \leq {|\Phi| \choose 2} \leq n^{2}$ intervals $[\rho_1, \rho_2), [\rho_2, \rho_3), \dots, [\rho_{T}, \rho_{T+1}]$ with $\rho_1 := 0 < \rho_2 < \cdots < \rho_{T} < 1 := \rho_{T+1}$ such that for any one interval $I$, across all $\rho \in I$, $A_{\rho}(S)$ is fixed. This means that for any one interval $[\rho_i, \rho_{i + 1})$, there exists a real value $c_i$ such that $u_{\rho}(S) = c_i$ for all $\rho \in [\rho_i, \rho_{i + 1})$. By definition of the dual, this means that $u_S^*(u_{\rho}) = u_{\rho}(S) = c_i$ as well.
	
We now use this structure to show that the dual class $\cU^*$ is $\left(\cF, \cG, n^{2}\right)$-piecewise decomposable, as per Definition~\ref{def:decomposable_sign}.
	Recall that $\cG = \{g_{a} : \cU \to \{0,1\} \mid a \in \R\}$ consists of threshold functions $g_{a} : u_{\rho} \mapsto \ind{\rho < a}$ and $\cF = \{f_c : \cU \to \R \mid c \in \R\}$ consists of constant functions $f_c : u_{\rho} \mapsto c$.
	We claim that there exists a function $f^{(\vec{b})} \in \cF$ for every vector $\vec{b} \in \{0,1\}^{T}$ such that for every $\rho \in [0,1]$, \begin{equation}u_S^*(u_{\rho}) = \sum_{\vec{b} \in \{0,1\}^{T}} \ind{g_{\rho_i}(u_{\rho}) = b[i], \forall i \in [T]} f^{(\vec{b})}(u_{\rho}).\label{eq:folding_piecewise}
	\end{equation}
	To see why, suppose $\rho \in [\rho_i, \rho_{i + 1})$ for some $i \in [T]$. Then $g_{\rho_j}(u_{\rho}) = \ind{\rho \leq \rho_j} = 1$ for all $j \geq i + 1$ and $g_{\rho_j}(u_{\rho}) = \ind{\rho \leq \rho_j} = 0$ for all $j \leq i$. Let $\vec{b}_i \in \{0,1\}^{T}$ be the vector that has only 0's in its first $i$ coordinates and all $1$'s in its remaining $T -i$ coordinates. For all $i \in [T]$, we define $f^{\left(\vec{b}_i\right)} = f_{c_i}$, so $f^{\left(\vec{b}_i\right)}\left(u_{\rho}\right) = c_i$ for all $\rho \in [0,1]$. For any other $\vec{b}$, we set $f^{(\vec{b})} = f_0$,  so $f^{(\vec{b})}\left(u_{\rho}\right) = 0$ for all $\rho \in [0,1]$. Therefore, Equation~\eqref{eq:folding_piecewise} holds.
\end{proof}

	Since constant functions have zero oscillations, Lemmas~\ref{lem:oscillate_decomp} and \ref{lem:RNA_piecewise} imply that $\pdim(\cU)=O\left(\ln n\right).$ 
\subsection{Prediction of topologically associating domains}\label{sec:TAD}
Inside a cell, the linear DNA of the genome wraps into three-dimensional structures that influence genome function.
Some regions of the genome are
closer than others and thereby interact more.
\emph{Topologically associating domains (TADs)} are contiguous segments of the genome that fold into compact regions.
More formally, given the genome length $n$, a TAD set is a set $T = \left\{(i_1, j_1), \dots, (i_t, j_t)\right\} \subset [n] \times [n]$ such that $i_1 < j_1 < i_2 < j_2 < \cdots < i_t < j_t$. If $(i,j) \in T$, the bases within the corresponding substring physically interact more frequently with each other than with other bases.
Disrupting TAD boundaries can affect the expression of nearby genes, which can trigger diseases such as congenital malformations and cancer~\cite{Lupianez16:Breaking}.

The contact frequency of any two genome locations, denoted by a matrix $M \in \R^{n \times n}$, can be measured via experiments~\cite{Lieberman-Aiden09:Comprehensive}.
A dynamic programming algorithm $A_{\rho}$ introduced by~\citet{Filippova14:Identification} returns the TAD set $A_{\rho}(M)$ that maximizes
\begin{equation}
	\sum_{(i,j) \in T}s_\rho(i,j) - \mu_\rho(j-i),
	\label{eq:TAD_obj}\end{equation} where $\rho \geq 0$ is a parameter,
\[s_\rho(i,j) = \frac{1}{(j-i)^\rho}\sum_{i\le p < q \le j}M_{pq}\] is the scaled density of the subgraph induced by the interactions between genomic loci $i$ and $j$, and
\[\mu_\rho(d) = \frac{1}{n-d}\sum_{t=0}^{n-d-1} s_\rho(t, t+d)\] is the mean value of $s_{\rho}$ over all sub-matrices of length $d$ along the diagonal of $M$.
We note that unlike the sequence alignment and RNA folding algorithms, the parameter $\rho$ appears in the exponent of the objective function.

We assume there is a utility function that characterizes the quality of a TAD set $T$, denoted $u(M, T) \in \R$. 
For example, $u(M, T)$ might measure the fraction 
of TADs in $T$ that are in the correct location with respect to a ground-truth TAD set.

\begin{restatable}{lemma}{TAD}\label{lem:TAD_decomp}
	Let $\cU$ be the set of functions $\cU = \left\{u_{\rho} : M \mapsto u\left(M, A_{\rho}\left(M\right)\right) \mid \rho \in \R\right\}$.
	The dual class $\cU^*$ is $\left(\cF, \cG, 2n^2 4^{n^2}\right)$-piecewise decomposable, where $\cG = \{g_{a} : \cU \to \{0,1\} \mid a \in \R\}$ consists of threshold functions $g_{a} : u_{\rho} \mapsto \ind{\rho < a}$ and $\cF = \{f_c : \cU \to \R \mid c \in \R\}$ consists of constant functions $f_c : u_{\rho} \mapsto c$.
\end{restatable}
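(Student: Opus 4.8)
The plan is to follow the template of Lemma~\ref{lem:RNA_piecewise}, with one genuinely new ingredient needed because the parameter $\rho$ sits in the exponent of the objective~\eqref{eq:TAD_obj}. First I would fix a contact matrix $M$ and let $\mathcal{T}_M = \left\{A_\rho(M) \mid \rho \geq 0\right\}$ be the collection of TAD sets the algorithm can output as $\rho$ varies. Every TAD set is a subset of $\left\{(i,j) : 1 \leq i < j \leq n\right\}$, a ground set of size less than $n^2$, so $\left|\mathcal{T}_M\right| < 2^{n^2}$. The algorithm returns a given $T \in \mathcal{T}_M$ exactly on the set of $\rho$ for which $T$ lies in the set of co-optimal TAD sets of~\eqref{eq:TAD_obj} over $\mathcal{T}_M$ (ties broken via the co-optimal-constant assumption of Section~\ref{sec:bio}), so $A_\rho(M)$ can change only at a value of $\rho$ at which two distinct TAD sets $T, T' \in \mathcal{T}_M$ attain equal objective value.

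The key step is to bound, for each pair $T, T'$, the number of such crossing points. Writing $d = j - i$ and regrouping the terms of~\eqref{eq:TAD_obj} by interval length, the objective of any TAD set $T$ equals $\sum_{d=1}^{n-1} c_{T,d}\, d^{-\rho}$ for coefficients $c_{T,d}$ depending only on $M$ and $T$; here one uses that $\mu_\rho(d) = d^{-\rho}\cdot \frac{1}{n-d}\sum_{t=0}^{n-d-1}\sum_{t \leq p < q \leq t+d} M_{pq}$ also factors as $d^{-\rho}$ times a $\rho$-independent quantity. Hence $\rho \mapsto \sum_{d=1}^{n-1}\left(c_{T,d} - c_{T',d}\right) e^{-\rho \ln d}$ is an exponential polynomial with at most $n-1$ distinct frequencies $\left\{\ln 1, \dots, \ln(n-1)\right\}$, and a standard Rolle-type induction shows such a function is either identically zero or has at most $n-2$ real zeros. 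Therefore $\rho \mapsto \ind{\mathrm{obj}_\rho(T) \geq \mathrm{obj}_\rho(T')}$ is piecewise constant with at most $n-1$ breakpoints, so the breakpoints of $\rho \mapsto A_\rho(M)$ lie among a set of at most $\binom{\left|\mathcal{T}_M\right|}{2}(n-1) < 2n^2 4^{n^2}$ values $0 = \rho_1 < \rho_2 < \cdots < \rho_k < \rho_{k+1} = \infty$, where $k = 2n^2 4^{n^2}$. On each interval $[\rho_i, \rho_{i+1})$, $A_\rho(M)$ and hence $u_\rho(M) = u\left(M, A_\rho(M)\right)$ is a constant $c_i$.

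Finally I would package this into a decomposition exactly as in Lemma~\ref{lem:RNA_piecewise}: take boundary functions $g_{\rho_i} : u_\rho \mapsto \ind{\rho < \rho_i}$ for $i \in [k]$; let $\vec{b}_i \in \{0,1\}^k$ be the bit vector that is $0$ on its first $i$ coordinates and $1$ afterward; set $f^{\left(\vec{b}_i\right)} = f_{c_i}$, and set $f^{(\vec{b})} = f_0$ for every other $\vec{b} \in \{0,1\}^k$. Since $\rho \in [\rho_i, \rho_{i+1})$ implies $\left(g_{\rho_1}(u_\rho), \dots, g_{\rho_k}(u_\rho)\right) = \vec{b}_i$, we get $u_M^*(u_\rho) = \sum_{\vec{b} \in \{0,1\}^k} \ind{g_{\rho_i}(u_\rho) = b[i],\ \forall i \in [k]}\, f^{(\vec{b})}(u_\rho)$ for all $\rho$, which is the required $\left(\cF, \cG, 2n^2 4^{n^2}\right)$-piecewise decomposition (padding with duplicate thresholds if there are fewer than $k$ breakpoints).

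I expect the main obstacle to be the second paragraph: unlike the sequence-alignment and RNA-folding objectives, which are (piecewise-)linear in the parameters, here the objective is a sum of powers $d^{-\rho}$, so the clean ``two outputs tie on a hyperplane'' argument must be replaced by the bound on the number of real zeros of an exponential polynomial. Care is also needed to confirm that $\mu_\rho(d)$ truly factors as $d^{-\rho}$ times a $\rho$-independent term (so the whole objective is an exponential sum with only the $n-1$ frequencies $\ln d$), and to dispose of the degenerate case $\mathrm{obj}_\rho(T) \equiv \mathrm{obj}_\rho(T')$, which simply contributes no breakpoints.
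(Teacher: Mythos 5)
Your proposal is correct and follows essentially the same route as the paper's proof: fix $M$, note that each candidate TAD set's objective is an exponential sum $\sum c_{ij}/(j-i)^\rho$ with $\rho$-independent coefficients, bound the crossing points of each pair of candidates (of which there are at most $\binom{2^{n^2}}{2}$) by a Rolle-type zero bound for exponential sums, and package the resulting intervals with threshold boundary functions and constant piece functions exactly as in the RNA-folding lemma. The only difference is bookkeeping in the per-pair zero count: you regroup terms by interval length $d=j-i$ to get at most $n-1$ frequencies (hence at most $n-2$ zeros per pair, and you correctly dispose of the identically-zero case), whereas the paper applies its Rolle corollary (Corollary~\ref{cor:roots}) directly to the at most $2n^2$ terms, getting at most $2n^2$ zeros per pair; both counts land within the stated $k = 2n^2 4^{n^2}$.
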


\begin{proof}
	Fix a matrix $M$. We begin by rewriting Equation~\eqref{eq:TAD_obj} as follows:
	\begin{align*}
		A_\rho(M) &= \underset{T \subset [n] \times [n]}{\argmax}\sum_{(i,j) \in T}\left(\frac{1}{(j-i)^\rho}\left(\sum_{i\le u < v \le j}M_{uv}\right) -\frac{1}{n-j+i}\sum_{t=0}^{n-j+i} \frac{1}{(j-i)^\rho}\sum_{t\le p < q \le t+j-i}M_{pq}\right)\\
		&= \argmax\sum_{(i,j) \in T}\frac{1}{(j-i)^\rho}\left(\left(\sum_{i\le u < v \le j}M_{uv}\right) -\frac{1}{n-j+i}\sum_{t=0}^{n-j+i} \sum_{t\le p < q \le t+j-i}M_{pq}\right)\\
		&= \argmax\sum_{(i,j) \in T}\frac{c_{ij}}{(j-i)^\rho},
	\end{align*} where \[c_{ij} = \left(\sum_{i\le u < v \le j}M_{uv}\right) -\frac{1}{n-j+i}\sum_{t=0}^{n-j+i} \sum_{t\le p < q \le t+j-i}M_{pq}\]  is a constant that does not depend on $\rho$.
	
Let $\cT$ be the set of TAD sets that the algorithm returns as we range over all parameters $\rho \geq 0$. In other words, $\cT = \{A_{\rho}(M) \mid \rho \in \R_{\geq 0}\}$. Since each TAD set is a subset of $[n] \times [n]$, $\left|\cT\right| \leq 2^{n^2}$.
For any TAD set $T \in \cT$,  the algorithm $A_{\vec{\rho}}$ will return $T$ if and only if \[\sum_{(i,j) \in T}\frac{c_{ij}}{(j-i)^\rho} > \sum_{(i',j') \in T'}\frac{c_{i'j'}}{(j'-i')^\rho}\]
	for all $T' \in \cT \setminus \{T\}$.
	This means that as we range $\rho$ over the positive reals, the TAD set returned by algorithm $A_{\rho}(M)$ will only change when \begin{equation}\sum_{(i,j) \in T}\frac{c_{ij}}{(j-i)^\rho} - \sum_{(i',j') \in T'}\frac{c_{i'j'}}{(j'-i')^\rho} = 0\label{eq:TAD_zeros}\end{equation} for some $T, T' \in \cT$. As a result of Rolle's Theorem (Corollary~\ref{cor:roots}), we know that Equation~\eqref{eq:TAD_zeros} has at most $|T| + |T'| \leq 2n^2$ solutions. This means there are $t \leq 2n^2{|\cT| \choose 2} \leq 2n^2 4^{n^2}$ intervals $\left[\rho_1, \rho_2\right), \left[\rho_2, \rho_3\right), \dots, \left[\rho_{t}, \rho_{t+1}\right)$ with $ \rho_1 := 0 < \rho_2 < \cdots < \rho_{t} < \infty := \rho_{t+1}$ that partition  $\R_{\geq 0}$ such that across all $\rho$ within any one interval $\left[\rho_i, \rho_{i+1}\right)$, the TAD set returned by algorithm $A_{\rho}(M)$ is fixed. Therefore, there exists a real value $c_i$ such that $u_{\rho}(M) = c_i$ for all $\rho \in \left[\rho_i, \rho_{i+1}\right)$. By definition of the dual, this means that $u_{M}^*(u_{\rho}) = u_{\rho}(M) = c_i$ as well.
	
We now use this structure to show that the dual class $\cU^*$ is $\left(\cF, \cG, 2n^2 4^{n^2}\right)$-piecewise decomposable, as per Definition~\ref{def:decomposable_sign}.
	Recall that $\cG = \{g_{a} : \cU \to \{0,1\} \mid a \in \R\}$ consists of threshold functions $g_{a} : u_{\rho} \mapsto \ind{\rho < a}$ and $\cF = \{f_c : \cU \to \R \mid c \in \R\}$ consists of constant functions $f_c : u_{\rho} \mapsto c$.
	We claim that there exists a function $f^{(\vec{b})} \in \cF$ for every vector $\vec{b} \in \{0,1\}^{t}$ such that for every $\rho \geq 0$, \begin{equation}u_M^*(u_{\rho}) = \sum_{\vec{b} \in \{0,1\}^{t}} \ind{g_{\rho_i}(u_{\rho}) = b[i], \forall i \in [t]} f^{(\vec{b})}(u_{\rho}).\label{eq:TAD_piecewise}
	\end{equation}
	To see why, suppose $\rho \in [\rho_i, \rho_{i + 1})$ for some $i \in [t]$. Then $g_{\rho_j}(u_{\rho}) = \ind{\rho \leq \rho_j} = 1$ for all $j \geq i + 1$ and $g_{\rho_j}(u_{\rho}) = \ind{\rho \leq \rho_j} = 0$ for all $j \leq i$. Let $\vec{b}_i \in \{0,1\}^{t}$ be the vector that has only 0's in its first $i$ coordinates and all $1$'s in its remaining $t-i$ coordinates. For all $i \in [t]$, we define $f^{\left(\vec{b}_i\right)} = f_{c_i}$, so $f^{\left(\vec{b}_i\right)}\left(u_{\rho}\right) = c_i$ for all $\rho \in [0,1]$. For any other $\vec{b}$, we set $f^{(\vec{b})} = f_0$, so $f^{(\vec{b})}\left(u_{\rho}\right) = 0$ for all $\rho \in [0,1]$. Therefore, Equation~\eqref{eq:TAD_piecewise} holds.
\end{proof}

	Since constant functions have zero oscillations, Lemmas~\ref{lem:oscillate_decomp} and \ref{lem:TAD_decomp} imply that $\pdim(\cU)=O\left(n^2\right).$
\section{Parameterized voting mechanisms}\label{sec:econ}
A large body of research in economics studies how to design protocols---or \emph{mechanisms}---that help groups of agents come to collective decisions. For example, when children inherit an estate, how should they divide the property? When a jointly-owned company is dissolved, which partner should buy the others out? There is no one protocol that best answers these questions; the optimal mechanism depends on the setting at hand. 

We study a family of mechanisms called \emph{neutral affine maximizers (NAMs})~\cite{Roberts79:Characterization,Mishra12:Roberts,Nath19:Efficiency}. A NAM takes as input a set of agents' reported values for each possible outcome and returns one of those outcomes. A NAM can thus be thought of as an algorithm that the agents use to arrive at a single outcome. NAMs are \emph{incentive compatible}, which means that each agent is incentivized to report his values truthfully. In order to satisfy incentive compatibility, each agent may have to make a payment. NAMs are also \emph{budget-balanced} which means that the aggregated payments are redistributed among the agents.

Formally, we study a setting where there is a set of $m$ alternatives and a set of $n$ agents. Each agent $i$ has a value $v_i(j)  \in \R$ for each alternative $j \in [m]$. We denote all of his values as $\vec{v}_i \in \R^m$ and all $n$ agents' values as $\vec{v} = \left(\vec{v}_1, \dots, \vec{v}_n\right) \in \R^{nm}$.
 In this case, the unknown distribution $\dist$ is over vectors $\vec{v} \in \R^{nm}$.

	A NAM is defined by $n$ parameters (one per agent) $\vec{\rho} = \left(\rho[1], \dots, \rho[n]\right) \in \R^n_{\geq 0}$ such that at least one agent is assigned a weight of zero.
There is a \emph{social choice function}
$\psi_{\vec{\rho}} : \R^{nm} \to [m]$ which uses the values $\vec{v} \in \R^{nm}$ to choose an alternative $\psi_{\vec{\rho}}(\vec{v}) \in [m]$.
In particular, $\psi_{\vec{\rho}}(\vec{v}) = \argmax_{j \in [m]} \sum_{i = 1}^n \rho[i]v_i(j)$ maximizes the agents' weighted values. Each agent $i$ with zero weight $\rho[i] = 0$ is called a \emph{sink agent} because his values do not influence the outcome. 
For every agent who is not a sink agent $\left(\rho[i] \not = 0\right)$, their payment is defined as in the weighted version of the classic Vickrey-Clarke-Groves mechanism~\citep{Vickrey61:Counterspeculation,Clarke71:Multipart,Groves73:Incentives}. To achieve budget balance, these payments are given to the sink agent(s). More formally,
let $j^* = \psi_{\vec{\rho}}(\vec{v})$ and for each agent $i$, let $j_{-i} =  \argmax_{j \in [m]} \sum_{i' \not= i} \rho[i']v_{i'}(j).$ The payment function is defined as \[p_i(\vec{v}) = \begin{cases} \frac{1}{\rho[i]} \left(\sum_{i' \not= i} \rho[i']v_{i'}\left(j^*\right) - \sum_{i' \not= i} \rho[i']v_{i'}\left(j_{-i}\right)\right) &\text{if } \rho[i] \not= 0\\
	-\sum_{i' \not= i} p_{i'}(\vec{v}) &\text{if } i = \min\left\{i' : \rho[i'] = 0\right\}\\
	0 &\text{otherwise.}\end{cases}\]

We aim to optimize the expected social welfare $\E_{\vec{v}\sim \dist}\left[\sum_{i = 1}^n v_i\left(\psi_{\vec{\rho}}(\vec{v}) \right)\right]$ of the NAM's outcome $\psi_{\vec{\rho}}(\vec{v})$, so we define the utility function
$u_{\vec{\rho}}(\vec{v}) = \sum_{i = 1}^n v_i\left(\psi_{\vec{\rho}}(\vec{v}) \right)$.

\begin{restatable}{lemma}{NAM}\label{lem:AM_decomposable}
	Let $\cU$ be the set of functions $\cU = \left\{u_{\vec{\rho}} \mid \vec{\rho} \in \R_{\geq 0}^n, \left\{i \mid \rho[i] = 0\right\} \not= \emptyset\right\}$. The dual class $\cU^*$ is $\left(\cF, \cG, m^2\right)$-piecewise decomposable, where $\cG = \{g_{\vec{a}} : \cU \to \{0,1\} \mid \vec{a} \in \R^n\}$ consists of halfspace indicators $g_{\vec{a}} : u_{\vec{\rho}} \mapsto \ind{\vec{\rho} \cdot \vec{a} \leq 0}$ and $\cF = \{f_c : \cU \to \R \mid c \in \R\}$ consists of constant functions $f_c : u_{\vec{\rho}} \mapsto c$.
\end{restatable}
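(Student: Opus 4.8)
The plan is to follow the template of the proofs of Lemmas~\ref{lem:sequence}, \ref{lem:RNA_piecewise}, and \ref{lem:TAD_decomp}. Fix a value profile $\vec{v} \in \R^{nm}$. The utility $u_{\vec{\rho}}(\vec{v}) = \sum_{i=1}^n v_i\bigl(\psi_{\vec{\rho}}(\vec{v})\bigr)$ depends on $\vec{\rho}$ only through the alternative $\psi_{\vec{\rho}}(\vec{v}) = \argmax_{j \in [m]} \sum_i \rho[i] v_i(j)$ that the NAM selects; the payments $p_i$ and the sink-agent restriction $\{i : \rho[i] = 0\} \neq \emptyset$ play no role here, since the utility equals the social welfare of the selected outcome only. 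For each ordered pair of distinct alternatives $j, j' \in [m]$, let $\vec{a}_{j,j'} = \bigl(v_1(j) - v_1(j'), \dots, v_n(j) - v_n(j')\bigr) \in \R^n$, so that $g_{\vec{a}_{j,j'}}(u_{\vec{\rho}}) = \ind{\vec{\rho} \cdot \vec{a}_{j,j'} \le 0} = \ind{\sum_i \rho[i] v_i(j) \le \sum_i \rho[i] v_i(j')}$. There are $m(m-1) < m^2$ such boundary functions in $\cG$, and the bit vector they produce at $\vec{\rho}$ records, for every pair, which alternative has the larger weighted value under $\vec{\rho}$, ties included; in other words it encodes the full preorder on $[m]$ induced by the weights $\vec{\rho}$.

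Given this, I would argue that $u_{\vec{\rho}}(\vec{v})$ is constant on every set of the form $\{\vec{\rho} : (g_{\vec{a}_{j,j'}}(u_{\vec{\rho}}))_{(j,j')} = \vec{b}\}$. On such a set the set of co-optimal alternatives---those maximizing $\sum_i \rho[i] v_i(\cdot)$---is determined by the preorder that $\vec{b}$ encodes, hence is constant; so by the standing assumption of Section~\ref{sec:bio} that the algorithm's output is fixed wherever the co-optimal set is fixed, $\psi_{\vec{\rho}}(\vec{v})$ is constant on the set, and therefore so is the welfare $u_{\vec{\rho}}(\vec{v})$. Letting $c_{\vec{b}}$ denote this constant value when the set is nonempty, I would then package everything exactly as in Definition~\ref{def:decomposable_sign}: order the boundary functions arbitrarily as $g^{(1)}, \dots, g^{(k)}$ with $k \le m^2$, set $f^{(\vec{b})} = f_{c_{\vec{b}}} \in \cF$ for each realized $\vec{b}$ and $f^{(\vec{b})} = f_0$ otherwise, and conclude that $u_{\vec{v}}^*(u_{\vec{\rho}}) = \sum_{\vec{b} \in \{0,1\}^k} \ind{g^{(i)}(u_{\vec{\rho}}) = b[i],\ \forall i \in [k]}\, f^{(\vec{b})}(u_{\vec{\rho}})$ for all $\vec{\rho}$, which is precisely $(\cF, \cG, m^2)$-piecewise decomposability. (This, with Theorem~\ref{thm:main} and the standard dual-complexity bounds for homogeneous halfspaces and constant functions, yields $\pdim(\cU) = O(n \ln(nm))$.)

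The one place where care is needed---and the reason I would use \emph{ordered} pairs rather than the ${m \choose 2}$ unordered comparisons---is the constancy of $u_{\vec{\rho}}(\vec{v})$ on the lower-dimensional pieces of the arrangement, where several weighted values coincide: a single bit $\ind{\sum_i \rho[i] v_i(j) \le \sum_i \rho[i] v_i(j')}$ per unordered pair cannot distinguish a tie from a strict win on one of the two sides, so the co-optimal set would fail to be a function of the bit vector and the argument would break. Keeping both orientations $g_{\vec{a}_{j,j'}}$ and $g_{\vec{a}_{j',j}}$ removes this ambiguity while keeping the count below $m^2$, after which the argument is a direct transcription of the sequence-alignment case and presents no further obstacle.
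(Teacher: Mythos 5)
Your proposal is correct and follows essentially the same route as the paper's proof: pairwise-comparison halfspaces for the at most $m^2$ pairs of alternatives, constancy of the selected outcome (hence of the welfare) on each sign-pattern region, and the same packaging into Definition~\ref{def:decomposable_sign} with constant piece functions. Your extra care with ordered pairs to handle ties on lower-dimensional pieces is a minor refinement the paper glosses over (it uses unordered pairs and connected components of $\R^n \setminus \cH$), but it does not change the argument or the bound.
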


\begin{proof}
	Fix a valuation vector $\vec{v} \in \R^{nm}$. We know that for any two alternatives $j, j' \in [m]$, the alternative $j$ would be selected over $j'$ so long as \begin{equation}\sum_{i = 1}^n \rho[i] v_i(j) > \sum_{i = 1}^n \rho[i] v_i\left(j'\right).\label{eq:AM_halfspace}\end{equation} Therefore, there is a set $\cH$ of ${m \choose 2}$ hyperplanes such that across all parameter vectors $\vec{\rho}$ in a single connected component of $\R^n \setminus \cH$, the outcome of the NAM defined by $\vec{\rho}$ is fixed. When the outcome of the NAM is fixed, the social welfare is fixed as well. This means that for a single connected component $R$ of $\R^n \setminus \cH$, there exists a real value $c_R$ such that $u_{\vec{\rho}}(\vec{v}) = c_R$ for all $\vec{\rho} \in R$. By definition of the dual, this means that $u^*_{\vec{v}}\left(u_{\vec{\rho}}\right)  = u_{\vec{\rho}}(\vec{v}) = c_R$ as well.

We now use this structure to show that the dual class $\cU^*$ is $\left(\cF, \cG, m^2\right)$-piecewise decomposable, as per Definition~\ref{def:decomposable_sign}.
	Recall that $\cG = \left\{g_{\vec{a}} : \cU \to \{0,1\} \mid \vec{a} \in \R^n\right\}$ consists of halfspace indicator functions $g_{\vec{a}} : u_{\vec{\rho}} \mapsto \ind{\vec{a} \cdot \vec{\rho} < 0}$ and $\cF = \{f_c : \cU \to \R \mid c \in \R\}$ consists of constant functions $f_c : u_{\vec{\rho}} \mapsto c$.
	For each pair  of alternatives $j,j' \in \cL$, let $g^{(j,j')} \in \cG$ correspond to the halfspace represented in Equation~\eqref{eq:AM_halfspace}. Order these $k := {m \choose 2}$ functions arbitrarily as $g^{(1)}, \dots, g^{(k)}$.
	Every connected component $R$ of $\R^n \setminus \cH$ corresponds to a sign pattern of the $k$ hyperplanes. For a given region $R$, let $\vec{b}_R \in \{0,1\}^k$ be the corresponding sign pattern. Define the function $f^{\left(\vec{b}_R\right)} \in \cF$ as $f^{\left(\vec{b}_R\right)}= f_{c_R}$, so $f^{\left(\vec{b}_R\right)}\left(u_{\vec{\rho}}\right) =  c_R$ for all $\vec{\rho} \in \R^n$. Meanwhile, for every vector $\vec{b}$ not corresponding to a sign pattern of the $k$ hyperplanes, let $f^{(\vec{b})} = f_0$, so $f^{(\vec{b})}\left(u_{\vec{\rho}}\right) = 0$ for all $\vec{\rho} \in \R^n$.
	In this way, for every $\vec{\rho} \in \R^n$, \[u_{\vec{v}}^*\left(u_{\vec{\rho}}\right) = \sum_{\vec{b} \in \{0,1\}^{k}} \ind{g^{(i)}\left(u_{\vec{\rho}}\right) = b[i], \forall i \in [k]} f^{(\vec{b})}(u_{\vec{\rho}}),\] as desired.
\end{proof}

Theorem~\ref{thm:main} and Lemma~\ref{lem:AM_decomposable} imply that the pseudo-dimension of $\cU$ is $O(n \ln m).$
Next, we prove that the pseudo-dimension of $\cU$ is at least $\frac{n}{2}$, which means that our pseudo-dimension upper bound is tight up to log factors.

\begin{theorem}\label{thm:NAM_lb}
	Let $\cU$ be the set of functions
	$\cU = \left\{u_{\vec{\rho}} \mid \vec{\rho} \in \R_{\geq 0}^n,
	\left\{\rho[i] \mid i = 0\right\} \not= \emptyset\right\}$.
	Then $\pdim(\cU) \geq \frac{n}{2}$.
\end{theorem}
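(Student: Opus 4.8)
The plan is to directly lower-bound the pseudo-dimension by constructing $N = \lfloor n/2\rfloor$ valuation vectors and $N$ thresholds that $\cU$ shatters; this only requires $m \ge 2$ alternatives. The idea is to turn disjoint pairs of agents into independent ``switches.'' Partition the agents into pairs, with pair $\ell$ consisting of agents $2\ell-1$ and $2\ell$. For the $\ell$-th instance $\vec{v}^{(\ell)} \in \R^{nm}$ I would make every agent outside pair $\ell$ completely indifferent (value $0$ for all alternatives), give agent $2\ell-1$ value $2$ for alternative $1$ and $0$ for alternative $2$, and give agent $2\ell$ value $0$ for alternative $1$ and $1$ for alternative $2$ (if $m>2$, give agents $2\ell-1$ and $2\ell$ a negative value, say $-1$, for alternatives $3,\dots,m$ so those are never selected). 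Then the $\vec{\rho}$-weighted total value of alternative $1$ versus alternative $2$ on $\vec{v}^{(\ell)}$ is governed solely by whether $\rho[2\ell-1]$ or $\rho[2\ell]$ is larger, and crucially the two attainable outcomes have genuinely different social welfare: $\sum_i v_i(\text{alt }1) = 2$ while $\sum_i v_i(\text{alt }2) = 1$. Accordingly I would set the threshold $z_\ell = \tfrac32$ for every $\ell$.

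Next, for each subset $T \subseteq [N]$ I would define a parameter vector $\vec{\rho}_T \in \R_{\ge 0}^n$ by setting, for $\ell \in T$, $\rho_T[2\ell-1] = 1$ and $\rho_T[2\ell] = 0$, and for $\ell \notin T$, $\rho_T[2\ell-1] = 0$ and $\rho_T[2\ell] = 1$. Each $\vec{\rho}_T$ is a legal NAM because at least one coordinate (in fact $\lfloor n/2\rfloor$ of them) is zero, so there is a sink agent. The verification is then routine: on instance $\vec{v}^{(\ell)}$, the agents outside pair $\ell$ contribute $0$ to the weighted total value of every alternative (they are indifferent in $\vec{v}^{(\ell)}$), so the outcome is determined within pair $\ell$. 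If $\ell \in T$, agent $2\ell-1$ has weight $1$ and agent $2\ell$ has weight $0$, so the weighted total value is $2$ on alternative $1$ and $0$ on alternative $2$; the mechanism selects alternative $1$ and $u_{\vec{\rho}_T}(\vec{v}^{(\ell)}) = 2 > z_\ell$. If $\ell \notin T$, symmetrically it selects alternative $2$ and $u_{\vec{\rho}_T}(\vec{v}^{(\ell)}) = 1 < z_\ell$. Hence the sign vector $\bigl(\sign(u_{\vec{\rho}_T}(\vec{v}^{(1)}) - z_1), \dots, \sign(u_{\vec{\rho}_T}(\vec{v}^{(N)}) - z_N)\bigr)$ is exactly the indicator vector of $T$, so ranging over all $2^N$ subsets $T$ realizes all $2^N$ sign patterns, giving $\pdim(\cU) \ge N = \lfloor n/2\rfloor$, which is $n/2$ when $n$ is even.

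I do not expect any real obstacle here; the care needed is all bookkeeping. The three things to check are: (i) every $\vec{\rho}_T$ satisfies the NAM constraint that some agent is a sink --- handled by the $0/1$ structure within each pair; (ii) agents in the other pairs cannot perturb the outcome on $\vec{v}^{(\ell)}$ --- handled by making them indifferent, and for $m>2$ by the negative values on the spurious alternatives; and (iii) the two reachable outcomes have distinct social welfare so a single threshold $z_\ell = \tfrac32$ separates them --- handled by the asymmetric $2$-versus-$1$ choice of values. The NAM's payments play no role since $u_{\vec{\rho}}$ is defined purely through social welfare. When $n$ is odd one can additionally use a single-agent gadget for one extra instance --- one lone agent with value $0$ on alternative $1$ and $1$ on alternative $2$, so the mechanism picks alternative $2$ (welfare $1$) exactly when that agent's weight is nonzero and alternative $1$ (welfare $0$) when it is a sink --- to push the bound to $\lceil n/2\rceil$.
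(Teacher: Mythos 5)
Your proposal is correct and follows essentially the same route as the paper's proof: partition the agents into $n/2$ disjoint pairs, let each instance activate one pair with one agent favoring a high-welfare alternative and the other a low-welfare alternative, and use the choice of which agent in each pair is the sink to toggle the outcomes independently, shattering $n/2$ valuation vectors with a common witness. The differences (pair indexing, values $2$ vs.\ $1$ instead of $1$ vs.\ $\epsilon$, threshold $3/2$ instead of $1/2$, and the optional $m>2$ and odd-$n$ gadgets) are cosmetic.
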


\begin{proof}
	Let the number of alternatives $m = 2$ and without loss of generality, suppose that $n$ is even. To prove this theorem, we will identify a set of $N = \frac{n}{2}$ valuation vectors $\vec{v}^{(1)}, \dots, \vec{v}^{(N)}$ that are shattered by the set $\cU$ of social welfare functions.
	
	Let $\epsilon$ be an arbitrary number in $\left(0,\frac{1}{2}\right)$. For each $\ell \in [N]$, define agent $i$'s values for the first and second alternatives under the $\ell^{th}$ valuation vector $\vec{v}^{(\ell)}$---namely, $v_i^{(\ell)}(1)$ and $v_i^{(\ell)}(2)$---as follows:
	\[v_i^{(\ell)}(1) = \begin{cases} 1 &\text{if } \ell = i\\
		0 &\text{otherwise}\end{cases} \text{ and } v_i^{(\ell)}(2) = \begin{cases} \epsilon &\text{if } \ell = \frac{n}{2} + i\\
		0 &\text{otherwise.}\end{cases}\]
For example, if there are $n = 6$ agents, then across the $N = \frac{n}{2} = 3$ valuation vectors $\vec{v}^{(1)}, \vec{v}^{(2)}, \vec{v}^{(3)}$, the agents' values for the first alternative are defined as
\[\begin{bmatrix} v_1^{(1)}(1) & \cdots & v_6^{(1)}(1)\\
v_1^{(2)}(1)  & \cdots & v_6^{(2)}(1) \\
v_1^{(3)}(1) & \cdots & v_6^{(3)}(1)\end{bmatrix} = \begin{bmatrix}
1 & 0 & 0 & 0 & 0 & 0 \\
0 & 1 & 0 & 0 & 0 & 0 \\
0& 0 &1 & 0 & 0 & 0 
\end{bmatrix}\]	and their values for the second alternative are defined as \[\begin{bmatrix} v_1^{(1)}(2) & \cdots & v_6^{(1)}(2)\\
v_1^{(2)}(2)  & \cdots & v_6^{(2)}(2) \\
v_1^{(3)}(2) & \cdots & v_6^{(3)}(2)\end{bmatrix} = \begin{bmatrix}
0 & 0 & 0 & \epsilon & 0 & 0 \\
0 & 0 & 0 & 0 & \epsilon & 0 \\
0& 0 & 0 & 0 & 0 & \epsilon
\end{bmatrix}.\]	
	
	Let $\vec{b} \in \{0,1\}^N$ be an arbitrary bit vector. We will construct a NAM parameter vector $\vec{\rho}$ such that for any $\ell \in [N]$, if $b_{\ell} = 0$, then the outcome of the NAM given bids $\vec{v}^{(\ell)}$ will be the second alternative, so $u_{\vec{\rho}}\left(\vec{v}^{(\ell)}\right) = \epsilon$ because there is always exactly one agent who has a value of $\epsilon$ for the second alternative, and every other agent has a value of $0$. Meanwhile, if $b_{\ell} = 0$, then the outcome of the NAM given bids $\vec{v}^{(\ell)}$ will be the first alternative, so $u_{\vec{\rho}}\left(\vec{v}^{(\ell)}\right) = 1$ because there is always exactly one agent who has a value of $1$ for the first alternative, and every other agent has a value of $0$.
	To do so, when $b_{\ell} = 0$, $\vec{\rho}$ must ignore the values of agent $\ell$ in favor of the values of agent $\frac{n}{2} + \ell$. After all, under $\vec{v}^{(\ell)}$, agent $\ell$ has a value of $1$ for the first alternative and agent $\frac{n}{2} + \ell$ has a value of $\epsilon$ for the second alternative, and all other values are $0$. By a similar argument, when $b_{\ell} = 1$, $\vec{\rho}$ must ignore the values of agent $\frac{n}{2} + \ell$ in favor of the values of agent $\ell$.
	Specifically, we define $\vec{\rho} \in \{0,1\}^n$ as follows: for all $\ell \in [N] = \left[\frac{n}{2}\right]$, if $b_\ell = 0$, then $\rho[\ell] = 0$ and $\rho\left[\frac{n}{2} + \ell\right] = 1$ and if $b_\ell = 1$, then $\rho[\ell] = 1$ and $\rho\left[\frac{n}{2} + \ell\right] = 0$. All other entries of $\vec{\rho}$ are set to $0$.
	
	We claim that if $b_\ell = 0$, then $u_{\vec{\rho}}\left(\vec{v}^{(\ell)}\right) = \epsilon$. To see why, we know that $\sum_{i = 1}^n \rho[i] v_i^{(\ell)}(1) = \rho[\ell]v_{\ell}^{(\ell)}(1) = \rho[\ell] = 0$. Meanwhile, $\sum_{i = 1}^n \rho[i] v_i^{(\ell)}(2) = \rho\left[\frac{n}{2} + \ell\right]v_{\frac{n}{2} + \ell}^{(\ell)}(1) = \epsilon$. Therefore, the outcome of the NAM is alternative 2. The social welfare of this alternative is $\epsilon$, so $u_{\vec{\rho}}\left(\vec{v}^{(\ell)}\right) = \epsilon$.
	
	Next, we claim that if $b_\ell = 1$, then $u_{\vec{\rho}}\left(\vec{v}^{(\ell)}\right) = 1$. To see why, we know that $\sum_{i = 1}^n \rho[i] v_i^{(\ell)}(1) = \rho[\ell]v_{\ell}^{(\ell)}(1) = \rho[\ell] = 1$. Meanwhile, $\sum_{i = 1}^n \rho[i] v_i^{(\ell)}(2) = \rho\left[\frac{n}{2} + \ell\right]v_{\frac{n}{2} + \ell}^{(\ell)}(1) = 0$. Therefore, the outcome of the NAM is alternative 1. The social welfare of this alternative is $1$, so $u_{\vec{\rho}}\left(\vec{v}^{(\ell)}\right) = 1$.
	
	We conclude that the valuation vectors $\vec{v}^{(1)}, \dots, \vec{v}^{(N)}$ that are shattered by the set $\cU$ of social welfare functions with witnesses $z^{(1)} = \cdots = z^{(N)} = \frac{1}{2}$.
\end{proof}

Theorem~\ref{thm:NAM_lb} implies that the pseudo-dimension upper bound from Lemma~\ref{lem:AM_decomposable} is tight up to logarithmic factors.
\section{Subsumption of prior research on generalization guarantees}\label{sec:connections}
Theorem~\ref{thm:main} also recovers existing guarantees for data-driven algorithm design. In all of these cases, Theorem~\ref{thm:main} implies generalization guarantees that match the existing bounds,
but in many cases, our approach provides a more succinct proof. 
\begin{enumerate}
\item In Section~\ref{sec:clustering}, we analyze several parameterized clustering algorithms~\citep{Balcan17:Learning}, which have piecewise-constant dual functions.
These algorithms first run a linkage routine which builds a hierarchical tree of clusters. The parameters interpolate between the popular single, average, and complete linkage. The linkage routine is followed by a dynamic  programming procedure that returns a clustering corresponding to a pruning of the hierarchical tree.
\item \citet{Balcan20:LearningToLink} study a family of linkage-based clustering algorithms where the parameters control the distance metric used for clustering in addition to the linkage routine. The algorithm family has two sets of parameters. The first set of parameters interpolate between linkage algorithms, while the second set interpolate between distance metrics. The dual functions are piecewise-constant with \emph{quadratic} boundary functions. We recover their generalization bounds in Section~\ref{sec:learningToLink}.
\item In Section~\ref{sec:IP}, we analyze several integer programming algorithms, which have piecewise-constant and piecewise-inverse-quadratic dual functions (as in Figure~\ref{fig:quadratic}). The first is branch-and-bound, which is used by commercial solvers such as CPLEX. Branch-and-bound always finds an optimal solution and its parameters control runtime and memory usage. We also study semidefinite programming approximation algorithms for integer quadratic programming. We analyze a parameterized algorithm introduced by~\citet{Feige06:RPR} which includes the Goemans-Williamson algorithm~\citep{Goemans95:Improved} as a special case. We recover previous generalization bounds in both settings~\citep{Balcan18:Learning,Balcan17:Learning}.
\item \citet{Gupta17:PAC} introduced parameterized greedy algorithms for the knapsack and maximum weight independent set problems, which we show have piecewise-constant dual functions. We recover their generalization bounds in Section~\ref{sec:greedy}.
\item We provide generalization bounds for parameterized selling mechanisms when the goal is to maximize revenue, which have piecewise-linear dual functions (as in Figure~\ref{fig:linear}). A long line of research has studied revenue maximization via machine learning~\citep{Likhodedov04:Boosting,Likhodedov05:Approximating,Sandholm15:Automated,Balcan05:Mechanism,Elkind07:Designing,Cole14:Sample,Devanur16:Sample,Gonczarowski17:Efficient,Guo19:Settling,Cai17:Learning,Gonczarowski21:Sample,Morgenstern16:Learning,Mohri14:Learning}.
	In Section~\ref{sec:revenue}, we recover \citeauthor*{Balcan18:General}'s generalization bounds~\citep{Balcan18:General} which apply to a variety of  pricing, auction, and lottery mechanisms. They proved new bounds for mechanism classes not  previously studied  in  the  sample-based  mechanism  design  literature  and  matched  or  improved  over the  best  known  guarantees  for  many  classes.
\end{enumerate}

\subsection{Clustering algorithms}\label{sec:clustering}
A clustering instance is made up of a set points $V$ from a data domain $\cX$ and a distance metric $d : \cX \times \cX \to \mathbb{R}_{\geq 0}$. The goal is to split up the points into groups, or ``clusters,'' so that within
each group, distances are minimized and between each group, distances are maximized.
Typically, a clustering's quality is quantified by some objective function. Classic choices include the $k$-means, $k$-median, or $k$-center objective functions. Unfortunately, finding the clustering that minimizes any one of these objectives is NP-hard.
Clustering algorithms have uses in data science, computational biology~\citep{Navlakha09:Finding}, and many other fields.

\citet{Balcan17:Learning,Balcan20:LearningToLink} analyze \emph{agglomerative clustering algorithms.}
This type of algorithm requires a merge function $c(A,B; d) \to \R_{\geq 0}$, defining the distances between point sets $A,B \subseteq V$.
The algorithm constructs a \emph{cluster tree}. This tree starts with $n$ leaf nodes, each containing a point from $V$. Over a series of rounds, the algorithm merges the sets with minimum distance according to $c$. The tree is complete when
there is one node remaining, which consists of the set $V$.
The children of each internal node consist of the two sets merged to create the node. There are several common merge function $c$:
$\min_{a\in A,b\in B} d(a,b)$ (single-linkage),  $\frac{1}{|A|\cdot |B|}\sum_{a\in A,b\in B}d(a,b)$ (average-linkage), and $\max_{a\in A,b\in B} d(a,b)$ (complete-linkage).
Following the linkage procedure, there is a dynamic programming step. This steps finds the tree pruning that minimizes an objective function, such as the $k$-means, -median, or -center objectives.

To evaluate the quality of a clustering, we assume access to a utility function $u: \cT \to [-1,1]$ where $\cT$ is the set of all cluster trees over the data domain $\cX$. For example, $u\left(T\right)$ might measure the distance between the ground truth clustering and the optimal $k$-means pruning of the cluster tree $T \in \cT$.

In Section~\ref{sec:merge}, we present results for learning merge functions and in Section~\ref{sec:learningToLink}, we present results for learning distance functions in addition to merge functions. The latter set of results apply to a special subclass of merge functions called \emph{two-point-based} (as we describe in Section~\ref{sec:learningToLink}), and thus do not subsume the results in Section~\ref{sec:merge}, but do apply to the more general problem of learning a distance function in addition to a merge function.

\subsubsection{Learning merge functions}\label{sec:merge}

\citet*{Balcan17:Learning} study several families of merge functions:
\begin{align*}
	\cC_1 & =\left\lbrace\left. c_{1, \rho} : (A,B;d) \mapsto \left(
	\min_{u \in A, v \in B}(d(u,v))^{\rho} + \max_{u \in A, v \in B}(d(u,v))^
	\rho \right)^{1/\rho}\, \right| \, \rho\in\mathbb{R}\cup\{\infty, -\infty\}\right\rbrace,                                                       \\
	\cC_2 & =\left\lbrace \left. c_{2, \rho} : (A,B;d) \mapsto \rho\min_{u\in A,v\in B}d(u,v)+(1-\rho)\max_{u\in A,v\in B}d(u,v)\, \right| \,
	\rho\in[0,1]\right\rbrace,                                                                                                                      \\
	\cC_3 & =\left\lbrace c_{3, \rho} : (A,B;d) \mapsto \left(
	\left. \frac{1}{|A||B|}\sum_{u \in A, v \in B} \left(d(u, v)\right)^{\rho}\right)^{1/\rho} \, \right| \, \rho \in \mathbb{R} \cup \{\infty, -\infty \}\right\rbrace.
\end{align*}

The classes
$\cC_1$ and $\cC_2$ interpolate between single- ($c_{1,-\infty}$ and $c_{2,1}$) and complete-linkage ($c_{1,\infty}$ and $c_{2,0}$). The class
$\cC_3$ includes as special cases average-, complete-, and single-linkage.

For each class $i \in \{1, 2, 3\}$ and each parameter $\rho$, let $A_{i, \rho}$ be the algorithm that takes as input a clustering instance $(V, d)$ and returns a cluster tree $A_{i, \rho}(V,d) \in \cT$.

\citet{Balcan17:Learning} prove the following useful structure about the classes $\cC_1$ and $\cC_2$:

\begin{lemma}[\citep{Balcan17:Learning}]
	Let $(V, d)$ be an arbitrary clustering instance over $n$ points. There is a partition of $\R$ into $k \leq n^8$ intervals $I_1, \dots, I_k$ such that for any interval $I_j$ and any two parameters $\rho, \rho' \in I_j$, the sequences of merges the agglomerative clustering algorithm makes using the merge functions $c_{1, \rho}$ and $c_{1, \rho'}$ are identical. The same holds for the set of merge functions $\cC_2$.
\end{lemma}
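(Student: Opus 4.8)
The plan is to show, separately for $\cC_1$ and $\cC_2$, that as $\rho$ varies the agglomerative algorithm's behavior changes only at a bounded number of parameter values. The key observation is that every decision the algorithm makes is a comparison between two merge-function values, and for a fixed clustering instance each merge-function value depends on $\rho$ in a highly structured way. I would handle the easier class $\cC_2$ first, then $\cC_1$, and in both cases conclude with the same induction over merge steps.

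For $\cC_2$: for any two candidate clusters $A,B\subseteq V$ the value $c_{2,\rho}(A,B;d)=\rho\min_{u\in A,v\in B}d(u,v)+(1-\rho)\max_{u\in A,v\in B}d(u,v)$ is affine in $\rho$, and its slope and intercept are determined entirely by the pair $\left(\min_{u\in A,v\in B}d(u,v),\ \max_{u\in A,v\in B}d(u,v)\right)$. Each coordinate of this pair is one of the at most $\binom{n}{2}$ pairwise distances of the instance, so there are at most $\binom{n}{2}^2$ distinct affine functions that can ever be compared. Two distinct affine functions cross at exactly one point, so the at most $\binom{\binom{n}{2}^2}{2}\le n^8$ pairwise comparisons among them partition $\R$ into at most $n^8$ intervals $I_1,\dots,I_k$ on each of which the relative order of all these functions is fixed. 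I then argue by induction on the number of merges that the algorithm produces the identical merge sequence for all $\rho,\rho'$ in a fixed $I_j$: at each step the collection of current clusters is determined by the merges performed so far, and among those clusters the pair minimizing $c_{2,\rho}$ (under a fixed tie-breaking rule) is the same for all $\rho\in I_j$ because all relevant comparisons have constant outcome on $I_j$.

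For $\cC_1$: since $d\mapsto d^{\rho}$ is monotone (increasing for $\rho>0$, decreasing for $\rho<0$), one checks that $\min_{u\in A,v\in B}d(u,v)^{\rho}+\max_{u\in A,v\in B}d(u,v)^{\rho}=m^{\rho}+M^{\rho}$, where $m$ and $M$ are the true minimum and maximum inter-cluster distances, so $c_{1,\rho}(A,B;d)=(m^{\rho}+M^{\rho})^{1/\rho}$ for every $\rho\neq 0$ irrespective of $\sgn(\rho)$. As before the pair $(m,M)$ ranges over at most $\binom{n}{2}^2$ values, so it suffices to bound the number of breakpoints of the comparison between $(m^{\rho}+M^{\rho})^{1/\rho}$ and $((m')^{\rho}+(M')^{\rho})^{1/\rho}$ for a fixed quadruple. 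Because $t\mapsto t^{1/\rho}$ is monotone with a direction depending only on $\sgn(\rho)$, this comparison is governed by the sign of $\phi(\rho):=m^{\rho}+M^{\rho}-(m')^{\rho}-(M')^{\rho}$ together with the single global breakpoint $\rho=0$, and $\phi$ is a linear combination of at most four exponentials $\rho\mapsto e^{\rho\ln a}$ with distinct exponents, which by a routine Rolle's-theorem induction has at most three real zeros. Summing $O(1)$ breakpoints over the at most $\binom{\binom{n}{2}^2}{2}$ pairs of cluster types again yields at most $n^8$ breakpoints, and the same induction over merge steps as in the $\cC_2$ case finishes the argument.

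I expect the $\cC_1$ case to be the main obstacle, and within it the crux is bounding the number of sign changes of $\rho\mapsto(m^{\rho}+M^{\rho})^{1/\rho}-((m')^{\rho}+(M')^{\rho})^{1/\rho}$. What makes this tractable is stripping off the outer exponent $1/\rho$ using monotonicity, reducing the problem to counting zeros of a difference of four exponentials, after separately accounting for $\rho=0$ (where $c_{1,\rho}$ is undefined and the monotonicity direction of $t\mapsto t^{1/\rho}$ flips). The remaining ingredients --- counting the distinct merge-function ``types'' and the induction showing that a fixed ordering of merge values forces a fixed merge sequence and hence a fixed cluster tree --- are routine bookkeeping.
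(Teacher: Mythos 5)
Your proof is correct and takes essentially the same route as the source this lemma is quoted from (the paper itself imports it from \citet{Balcan17:Learning} without reproving it): reduce every merge decision to sign conditions on comparisons indexed by the $(\min,\max)$ distance pair of each cluster pair, bound the crossings per comparison (one for the affine $\cC_2$ values, $O(1)$ zeros of a four-term exponential sum for $\cC_1$ after stripping the outer $1/\rho$ exponent and treating $\rho=0$ separately), and induct over merge steps to conclude the merge sequence is fixed on each resulting interval. No gaps worth flagging.
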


This structure immediately implies that the corresponding class of utility functions has a piecewise-structured dual class.

\begin{cor}\label{cor:simple_clustering}
Let $\cU$ be the set of functions \[\cU = \left\{u_{\rho} : (V, d) \mapsto u\left(A_{1, \rho}(V, d)\right) \mid \rho \in \R \cup \{- \infty, \infty\}\right\}\] mapping clustering instances $(V, d)$ to $[-1,1]$. The dual class $\cU^*$ is $(\cF, \cG, n^8)$-piecewise decomposable, where $\cG = \{g_{a} : \cU \to \{0,1\} \mid a \in \R\}$ consists of threshold functions $g_{a} : u_{\rho} \mapsto \ind{\rho < a}$ and $\cF = \{f_c : \cU \to \R \mid c \in \R\}$ consists of constant functions $f_c : u_{\rho} \mapsto c$. The same holds when $\cU$ is defined according to merge functions in $\cC_2$ as $\cU = \left\{u_{\rho} : (V, d) \mapsto u\left(A_{2, \rho}(V, d)\right) \mid \rho \in [0,1]\right\}.$
\end{cor}

Lemma~\ref{lem:oscillate_decomp} and Corollary~\ref{cor:simple_clustering} imply the following pseudo-dimension bound.

\begin{cor}\label{cor:clustering_pdim_simple}
Let $\cU$ be the set of functions
	\[\cU = \left\{u_{\rho} : (V, d) \mapsto u\left(A_{1, \rho}(V,
	d)\right) \mid \rho \in \R \cup \{- \infty, \infty\}\right\}\]
	mapping clustering instances $(V, d)$ to $[-1,1]$.
	Then $\pdim(\cU) = O(\ln n)$.
	The same holds when $\cU$ is defined according to merge functions in $\cC_2$ as $\cU = \left\{u_{\rho} : (V, d) \mapsto u\left(A_{2, \rho}(V, d)\right) \mid \rho \in [0,1]\right\}.$
\end{cor}

\citet{Balcan17:Learning} prove a similar guarantee for the more complicated class $\cC_3$.

\begin{lemma}[\citep{Balcan17:Learning}]
	Let $(V, d)$ be an arbitrary clustering instance over $n$ points. There is a partition of $\R$ into $k \leq n^23^{2n}$ intervals $I_1, \dots, I_k$ such that for any interval $I_j$ and any two parameters $\rho, \rho' \in I_j$, the sequences of merges the agglomerative clustering algorithm makes using the merge functions $c_{3, \rho}$ and $c_{3, \rho'}$ are identical.
\end{lemma}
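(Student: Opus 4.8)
The plan is to follow the same template as the other piecewise-structure arguments in this paper: fix an instance $(V,d)$, observe that the sequence of merges made by the agglomerative algorithm with merge function $c_{3,\rho}$ is determined by the relative order of the candidate merge scores, and bound the number of values of $\rho$ at which this order can change. First I would record the reduction to orderings. The algorithm runs for $n-1$ rounds, and in each round it merges the two current clusters $A,B$ achieving the smallest value of $c_{3,\rho}(A,B;d)$, with ties broken by a fixed rule. By induction on the round, if for two parameters $\rho,\rho'$ and every two pairs $(A,B)$ and $(A',B')$ of disjoint nonempty subsets of $V$ the quantities $c_{3,\rho}(A,B;d) - c_{3,\rho}(A',B';d)$ and $c_{3,\rho'}(A,B;d) - c_{3,\rho'}(A',B';d)$ have the same sign, then the two runs make the identical sequence of merges (the same minimizer is chosen in round $1$, so the cluster collections agree entering round $2$, and so on). It therefore suffices to partition $\R$ into intervals on each of which all of these comparison functions have constant sign. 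The number of pairs $(A,B)$ of disjoint nonempty subsets of $V$ is at most $3^n$ (each point lies in $A$, in $B$, or in neither), so the number of comparison functions to control is at most $\binom{3^n}{2} < \tfrac12 \cdot 3^{2n}$.

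Next I would analyze a single comparison $\rho \mapsto c_{3,\rho}(A,B;d) - c_{3,\rho}(A',B';d)$. For $\rho \neq 0$, write $\Sigma_{AB}(\rho) = \sum_{u \in A, v \in B}(d(u,v))^\rho$, so that $c_{3,\rho}(A,B;d) = \bigl(\Sigma_{AB}(\rho)/(|A||B|)\bigr)^{1/\rho}$ (here I assume, as is standard, that distinct points are at positive distance; otherwise coincident points are merged first at distance $0$ and the argument is applied to what remains). Since $x \mapsto x^{1/\rho}$ is increasing for $\rho > 0$ and decreasing for $\rho < 0$, on each of $(0,\infty)$ and $(-\infty,0)$ the sign of the comparison is determined by the sign of
\[
F(\rho) = |A'|\,|B'|\,\Sigma_{AB}(\rho) - |A|\,|B|\,\Sigma_{A'B'}(\rho).
\]
Collecting terms by the at most $\binom{n}{2}$ distinct pairwise distances among the points of $V$, $F(\rho)$ is an exponential sum $\sum_\delta c_\delta\, \delta^\rho$ in $\rho$ with at most $\binom{n}{2}$ terms, so by the generalized Rolle's theorem (Corollary~\ref{cor:roots}) it has at most $\binom{n}{2} - 1$ real zeros and hence changes sign at most $\binom{n}{2} - 1 < \tfrac12 n^2$ times on $\R \setminus \{0\}$. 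Accounting for the single extra point $\rho = 0$ at which the comparison may flip, each of the $\binom{3^n}{2}$ comparison functions contributes at most $\tfrac12 n^2$ breakpoints.

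Taking the union over all comparisons gives at most $\binom{3^n}{2} \cdot \tfrac12 n^2 + 1 < n^2 3^{2n}$ breakpoints, so $\R$ is cut into at most $k \le n^2 3^{2n}$ intervals, on each of which every comparison --- and therefore, by the reduction of the first paragraph, the entire sequence of merges made by the algorithm with $c_{3,\rho}$ --- is constant. The main obstacle is the step of bounding the number of \emph{candidate} cluster pairs that can ever arise: since tracking exactly which clusters appear as $\rho$ varies seems hard, I bound the count crudely by the number $3^n$ of disjoint-subset pairs of $V$, which is precisely what forces $k$ to be exponential in $n$. The other point requiring care is the $1/\rho$ exponent in $c_{3,\rho}$, but cross-multiplying (as in the definition of $F$) reduces every comparison to the sign of a genuine $\binom{n}{2}$-term exponential sum, to which the Rolle-type root bound applies directly.
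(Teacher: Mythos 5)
Your argument is correct and is essentially the proof from the cited source (this paper itself only quotes the lemma from \citet{Balcan17:Learning} without reproving it): reduce the merge sequence to the signs of all pairwise comparisons among the at most $3^n$ candidate cluster pairs, clear the $1/\rho$ exponent by monotonicity and cross-multiplication, and bound the sign changes of each resulting exponential sum in $\rho$ by its at most $\binom{n}{2}$ distinct distance terms via the Rolle-type root bound (the same tool the paper uses as Corollary~\ref{cor:roots}). No gaps; the handling of $\rho=0$, zero distances, and ties is adequate, so the $n^2 3^{2n}$ interval count follows as claimed for the class $\cC_3$.
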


Again, this structure immediately implies that the corresponding class of utility functions has a piecewise-structured dual class.

\begin{cor}\label{cor:clustering_complex}
Let $\cU$ be the set of functions \[\cU = \left\{u_{\rho} : (V, d) \mapsto u\left(A_{3, \rho}(V, d)\right) \mid \rho \in \R \cup \{- \infty, \infty\}\right\}\] mapping clustering instances $(V, d)$ to $[-1,1]$. The dual class $\cU^*$ is $\left(\cF, \cG, n^23^{2n}\right)$-piecewise decomposable, where $\cG = \{g_{a} : \cU \to \{0,1\} \mid a \in \R\}$ consists of threshold functions $g_{a} : u_{\rho} \mapsto \ind{\rho < a}$ and $\cF = \{f_c : \cU \to \R \mid c \in \R\}$ consists of constant functions $f_c : u_{\rho} \mapsto c$.
\end{cor}

Lemma~\ref{lem:oscillate_decomp} and Corollary~\ref{cor:clustering_complex} imply the following pseudo-dimension bound.

\begin{cor}\label{cor:clustering_pseudo_complex}
Let $\cU$ be the set of functions
	\[\cU = \left\{u_{\rho} : (V, d) \mapsto u\left(A_{3, \rho}(V,
	d)\right) \mid \rho \in \R \cup \{- \infty, \infty\}\right\}\]
	mapping clustering instances $(V, d)$ to $[-1,1]$.
	Then $\pdim(\cU) = O(n)$.
\end{cor}

Corollaries~\ref{cor:clustering_pdim_simple} and \ref{cor:clustering_pseudo_complex} match the pseudo-dimension guarantees that \citet{Balcan17:Learning} prove.

\subsubsection{Learning merge functions and distance functions}\label{sec:learningToLink}
	\citet*{Balcan20:LearningToLink} extend the clustering generalization bounds of \citet*{Balcan17:Learning} to the case of learning both a distance metric and a merge function. 
	They introduce a family of linkage-based clustering algorithms that simultaneously interpolate between a collection of base metrics $d_1, \dots, d_L$ and base merge functions $c_1, \dots, c_L$. 
	The algorithm family is parameterized by $\vec{\rho} = (\vec{\alpha}, \vec{\beta}) \in \Delta_{L'} \times \Delta_{L}$, where $\vec{\alpha}$ and $\vec{\beta}$ are mixing weights for the merge functions and metrics, respectively. 
	The algorithm with parameters $\vec{\rho} = (\vec{\alpha}, \vec{\beta})$ starts with each point in a cluster of its own and repeatedly merges the pair of clusters $A$ and $B$ minimizing $c_{\vec{\alpha}}(A, B; d_{\vec{\beta}})$, where 
	\[
	c_{\vec{\alpha}}(A,B; d) = \sum_{i=1}^{L'} \alpha_i \cdot c_i(A, B; d)
	\qquad\text{and}\qquad  
	d_{\vec{\beta}}(a, b) = \sum_{i=1}^L \beta_i \cdot d_i(a,b).
	\]
We use the notation $A_{\vec{\rho}}$ to denote the algorithm that takes as input a clustering instance $(V, d)$ and returns a cluster tree $A_{\vec{\rho}}(V,d) \in \cT$ using the merge function $c_{\vec{\alpha}}(A, B; d_{\vec{\beta}})$, where $\vec{\rho} = (\vec{\alpha}, \vec{\beta})$.
	
When analyzing this algorithm family, \citet{Balcan20:LearningToLink} prove that the following piecewise-structure holds when all of the merge functions are \emph{two-point-based}, which roughly requires that for any pair of clusters $A$ and $B$, there exist points $a \in A$ and $b \in B$ such that $c(A,B;d) = d(a,b)$. Single- and complete-linkage are two-point-based, but average-linkage is not.
	\begin{lemma}[\citep{Balcan20:LearningToLink}]
	For any clustering instance $V$, there exists a collection of $O\left(|V|^{4L'}\right)$ quadratic boundary functions that partition the $(L+L')$-dimensional parameter space into regions where the algorithm's output is constant on each region in the partition.
	\end{lemma}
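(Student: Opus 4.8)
The plan is to show that the algorithm's execution is governed by finitely many comparisons, and to organize these into two layers: linear boundaries that fix which point pairs ``witness'' each two-point-based merge function, and quadratic boundaries that fix the ordering of candidate merges once the witnesses are pinned down.

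First I would record the elementary fact that throughout any execution the current clusters are subsets of $V$, and that each base merge function $c_i$ is two-point-based, so $c_i(A,B;d_{\vec{\beta}}) = d_{\vec{\beta}}(a,b)$ for some $a \in A$, $b \in B$. The witnessing pair is selected by a rule depending only on the relative order of the pairwise distances (for single- and complete-linkage it is the closest and the farthest pair, respectively), so it is determined by the ordering of the $O(|V|^2)$ values $\{d_{\vec{\beta}}(p,q) : p,q \in V\}$. Since $d_{\vec{\beta}}(p,q) = \sum_{j=1}^{L} \beta_j d_j(p,q)$ is linear in $\vec{\beta}$, this ordering is constant on each cell of the arrangement of the $O(|V|^4)$ \emph{witness hyperplanes} $\bigl\{\sum_{j=1}^L \beta_j\bigl(d_j(p,q) - d_j(p',q')\bigr) = 0\bigr\}$, ranging over all pairs of point pairs.

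Next, in a cell where the witness ordering is fixed, for every cluster pair $(A,B)$ and every $i \in [L']$ the witnessing pair $(a_i^{AB}, b_i^{AB})$ is a fixed element of $V \times V$, so
\[
c_{\vec{\alpha}}(A,B;d_{\vec{\beta}}) = \sum_{i=1}^{L'} \alpha_i\, d_{\vec{\beta}}\bigl(a_i^{AB}, b_i^{AB}\bigr) = \sum_{i=1}^{L'}\sum_{j=1}^{L} \alpha_i \beta_j\, d_j\bigl(a_i^{AB}, b_i^{AB}\bigr),
\]
a bilinear --- hence quadratic --- function of $\vec{\rho} = (\vec{\alpha}, \vec{\beta})$ that depends only on the $L'$ point pairs $(a_1^{AB}, b_1^{AB}), \dots, (a_{L'}^{AB}, b_{L'}^{AB})$. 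Therefore the comparison of two candidate merges $c_{\vec{\alpha}}(A,B;d_{\vec{\beta}}) \le c_{\vec{\alpha}}(A',B';d_{\vec{\beta}})$ is a quadratic inequality in $\vec{\rho}$ determined by the $2L'$ point pairs involved, i.e.\ by at most $4L'$ points of $V$. Crucially, this collection of quadratic boundary functions --- one per choice of $4L'$ points --- is indexed independently of the cell, so there are at most $|V|^{4L'}$ of them; adding the $O(|V|^4) = O(|V|^{4L'})$ witness hyperplanes (viewed as degenerate quadratics) gives the claimed collection. To finish, I would verify by induction on the number of merges that the algorithm's output is constant on each cell $R$ of the full arrangement: if the first $t$ merges agree across $R$, the current cluster set is the same for all $\vec{\rho} \in R$, each candidate value $c_{\vec{\alpha}}(A,B;d_{\vec{\beta}})$ is a fixed quadratic on $R$, and every pairwise comparison between them is one of our boundary functions, so $R$ lies on one side of each and the minimizer (under any fixed tie-breaking rule) --- hence merge $t+1$, and ultimately the whole cluster tree $A_{\vec{\rho}}(V,d)$ --- is the same throughout $R$.

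The main obstacle is the interaction between the two layers: because the witnessing pairs of the two-point-based merge functions themselves move with $\vec{\beta}$, the merge-function values are only \emph{piecewise} bilinear rather than globally bilinear, so the quadratic boundaries cannot be written down until after refining by the witness hyperplanes. The delicate point is then the bookkeeping that shows the resulting quadratic boundaries are still indexed by a cell-independent set of size $|V|^{4L'}$, rather than by the exponentially many possible cluster pairs --- this is exactly what produces the $O(|V|^{4L'})$ count.
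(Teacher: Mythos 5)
Your proposal is correct and follows essentially the same route as the source from which this lemma is imported (the paper itself states it only as a citation to \citet{Balcan20:LearningToLink} without reproving it): fix the witnessing pairs of the two-point-based merge functions via linear comparisons of the $d_{\vec{\beta}}$-distances, note that each candidate merge value is then bilinear in $(\vec{\alpha},\vec{\beta})$ so that each merge comparison is a quadratic boundary indexed by at most $4L'$ points of $V$, and conclude by induction over merge steps that the output is constant on every sign-pattern region. Your closing remark that the boundaries are indexed by cell-independent point tuples rather than by the exponentially many cluster pairs is exactly the crux that yields the $O\left(|V|^{4L'}\right)$ count.
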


This lemma immediately implies that the corresponding class of utility functions has a piecewise-structured dual class.

\begin{cor}\label{cor:linkage}
	Let $\cU$ be the set of functions $\cU = \left\{u_{\vec{\rho}} : (V, d) \mapsto u\left(A_{\vec{\rho}}(V, d)\right) \mid \vec{\rho} \in \Delta_{L'} \times \Delta_{L}\right\}$ mapping clustering instances $(V, d)$ to $[-1,1]$. The dual class $\cU^*$ is $\left(\cF, \cG, O\left(|V|^{4L'}\right)\right)$-piecewise decomposable, where $\cF$ is the set of constant functions and $\cG$ is the set of quadratic functions defined on $\Delta_{L'} \times \Delta_{L}$.
\end{cor}
	
	Using the fact that $\VC\left(\cG^*\right) = O((L+L')^2)$, 
	we obtain the following pseudo-dimension bound.
	\begin{cor}
		Let $\cU$ be the set of functions $\cU = \left\{u_{\vec{\rho}} : (V, d) \mapsto u\left(A_{\vec{\rho}}(V, d)\right) \mid \vec{\rho} \in \Delta_{L'} \times \Delta_{L}\right\}$ 
		mapping clustering instances $(V, d)$ to $[-1,1]$.
		Then \[\pdim(\cU) = O\left(\left(L+L'\right)^2 \log\left(L + L'\right) + \left(L+L'\right)^2 L' \log(n)\right).\]
	\end{cor}
	
This matches the generalization bound that~\citet{Balcan20:LearningToLink} prove.

\subsection{Integer programming}\label{sec:IP}
Several papers~\citep{Balcan17:Learning,Balcan18:Learning} study data-driven algorithm design for both integer linear and integer quadratic programming, as we describe below.

\paragraph{Integer linear programming.} In the context of integer linear programming, \citet{Balcan18:Learning} focus on branch-and-bound (B\&B)~\citep{Land60:Automatic}, an algorithm for solving mixed integer linear programs (MILPs). A MILP is defined by a matrix $A \in \R^{m \times n}$, a vector $\vec{b} \in \R^m$, a vector $\vec{c}\in \R^n$, and a set of indices $I \subseteq [n]$. The goal is to find a vector $\vec{x} \in \R^n$ such that $\vec{c}\cdot \vec{x}$ is maximized, $A\vec{x} \leq \vec{b}$, and for every index $i \in I$, $x_i$ is constrained to be binary: $x_i \in \{0,1\}$.

Branch-and-bound builds a search tree to solve an input MILP $Q$. At the root of the search tree is the original MILP $Q$. At each round, the algorithm chooses a leaf of the search tree, which represents an MILP $Q'$. It does so using a \emph{node selection policy}; common choices include depth- and best-first search. Then, it chooses an index $i \in I$ using a \emph{variable selection policy}. It next \emph{branches} on $x_i$: it sets the left child of $Q'$ to be that same integer program, but with the additional constraint that $x_i = 0$, and it sets the right child of $Q'$ to be that same integer program, but with the additional constraint that $x_i = 1$. The algorithm \emph{fathoms} a leaf, which means that it never will branch on that leaf, if it can guarantee that the optimal solution does not lie along that path. The algorithm terminates when it has fathomed every leaf. At that point, we can guarantee that the best solution to $Q$ found so far is optimal. See the paper by \citet{Balcan18:Learning} for more details.

\citet{Balcan18:Learning} study \emph{mixed integer linear programs} (MILPs)
where the goal is to maximize an objective function $\vec{c}^\top \vec{x}$ subject to the constraints that $A\vec{x} \leq \vec{b}$ and that some of the components of $\vec{x}$ are contained in $\{0,1\}$.
Given a MILP $Q$, we use the notation $\breve{\vec{x}}_Q = \left(\breve{x}_{Q}[1], \dots \breve{x}_{Q}[n]\right)$ to denote an optimal solution to the MILP's LP relaxation. We denote the optimal objective value to the MILP's LP relaxation as $\breve{c}_Q$, which means that $\breve{c}_Q = \vec{c}^{\top} \breve{\vec{x}}_Q$.

Branch-and-bound systematically partitions the feasible set in order to find an optimal solution, organizing the partition as a tree. At the root of this tree is the original integer program. Each child represents the simplified integer program obtained by partitioning the feasible set of the problem contained in the parent node. The algorithm prunes a branch if the corresponding subproblem is infeasible or its optimal solution cannot be better than the best one discovered so far.
Oftentimes, branch-and-bound partitions the feasible set by adding a constraint. For example, if the feasible set is characterized by the constraints $A\vec{x} \leq \vec{b}$ and $\vec{x} \in \{0,1\}^n$, the algorithm partition the feasible set into one subset where $A \vec{x} \leq \vec{b}$, $x_1 = 0$, and $x_2, \dots, x_n \in \{0,1\}$, and another where $A \vec{x} \leq \vec{b}$, $x_1 = 1$, and $x_2, \dots, x_n \in \{0,1\}$. In this case, we say that the algorithm \emph{branches} on $x_1$.

\citet{Balcan18:Learning} show how to learn variable selection policies. Specifically, they study \emph{score-based variable selection policies}, defined below.
\begin{definition}[Score-based variable selection policy~\citep{Balcan18:Learning}]
	Let $\score$ be a deterministic function that takes as input a partial search tree $\tree$, a leaf $Q$ of that tree, and an index $i$, and returns a real value $\score(\tree, Q, i) \in \R$. For a leaf $Q$ of a tree $\tree$, let $N_{\tree, Q}$ be the set of variables that have not yet been branched on along the path from the root of $\tree$ to $Q$. A score-based variable selection policy selects the variable $\argmax_{x_i \in N_{\tree, Q}} \{\score(\tree, Q, i)\}$ to branch on at the node $Q$.
\end{definition}
This type of variable selection policy is widely used~\citep{Linderoth99:Computational, Achterberg09:SCIP, Gilpin11:Information}. See the paper by \citet{Balcan18:Learning} for examples.

Given $d$ arbitrary scoring rules $\score_1, \dots, \score_d$, \citet{Balcan18:Learning} provide guidance for learning a linear combination $\rho[1]\score_1 + \cdots + \rho[d]\score_d$ that leads to small expected tree sizes. They assume that all aspects of the tree search algorithm except the variable selection policy, such as the node selection policy, are fixed. In their analysis, they prove the following lemma.

\begin{lemma}[\citep{Balcan18:Learning}]
	Let $\score_1, \dots, \score_d$ be $d$ arbitrary scoring rules and let $Q$ be an arbitrary MILP over $n$ binary variables. Suppose we limit B\&B to producing search trees of
	size $\tau$. There is a set $\mathcal{H}$ of at most $n^{2(\tau + 1)}$ hyperplanes such that for any connected component $R$ of $[0,1]^d \setminus \mathcal{H}$, the search tree B\&B builds using the scoring rule $\rho[1]\score_1 + \cdots + \rho[d]\score_d$ is invariant across all $(\rho[1], \dots, \rho[d]) \in R$.
\end{lemma}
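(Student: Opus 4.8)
The plan is to isolate the single place where the parameter vector $\vec{\rho}$ affects B\&B's execution and to show that each such decision is resolved by a hyperplane in $\vec{\rho}$-space. By hypothesis, every component of the algorithm other than the variable selection policy --- the node selection policy, the branching operation, the fathoming rule, and the tracking of the best feasible solution found so far --- is fixed and does not depend on $\vec{\rho}$. At a state consisting of a partial search tree $\tree$ and a leaf $\node$, the variable selection policy branches on $\argmax_{x_i \in N_{\tree, \node}}\left(\rho[1]\score_1(\tree, \node, i) + \cdots + \rho[d]\score_d(\tree, \node, i)\right)$, which is linear in $\vec{\rho}$. Hence whether candidate $x_i$ outscores candidate $x_{i'}$ at this state is governed by the sign of $\sum_{j=1}^d \rho[j]\left(\score_j(\tree, \node, i) - \score_j(\tree, \node, i')\right)$, that is, by which side of the hyperplane $\left\{\vec{\rho} \in \reals^d : \sum_{j=1}^d \rho[j]\left(\score_j(\tree, \node, i) - \score_j(\tree, \node, i')\right) = 0\right\}$ the point $\vec{\rho}$ lies on. First I would let $\mathcal{H}$ be the collection of all such hyperplanes, ranging over every partial search tree $\tree$ of size at most $\tau$, every leaf $\node$ of $\tree$, and every unordered pair $\{x_i, x_{i'}\}$ of variables not yet branched on along the path to $\node$.

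Next I would show that $\mathcal{H}$ suffices, by induction on the number of steps B\&B has taken. Fix a connected component $R$ of $[0,1]^d \setminus \mathcal{H}$ and parameter vectors $\vec{\rho}, \vec{\rho}' \in R$. The inductive invariant is that after $t$ steps the two runs --- on $\vec{\rho}$ and on $\vec{\rho}'$ --- have produced the same partial search tree, the same best feasible solution so far, and the same set of fathomed leaves. The base case is immediate. For the inductive step, since the partial trees agree and node selection is fixed, both runs process the same leaf $\node$ next, so the only $\vec{\rho}$-dependent action is the choice of a variable to branch on at $\node$. Because $R$ lies on one side of every hyperplane in $\mathcal{H}$ --- in particular of every pairwise-comparison hyperplane attached to the state $(\tree, \node)$ --- the vectors $\vec{\rho}$ and $\vec{\rho}'$ rank the candidate scores at $(\tree, \node)$ identically and hence branch on the same variable, with ties broken identically by the fixed rule. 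The two runs then create the same children, and the subsequent (fixed) fathoming test and update of the best feasible solution keep the states equal. B\&B halts within $\tau$ steps, so the final search trees coincide, which is the invariance claimed for $R$.

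Finally I would bound $|\mathcal{H}|$ by counting the distinct triples $(\tree, \node, \{x_i, x_{i'}\})$ that contribute a hyperplane: a partial search tree built by B\&B is a full binary tree (each node has $0$ or $2$ children) with at most $\tau$ nodes, and recording its shape together with the branching variable at each internal node determines it; multiplying the number of such labeled trees by the at most $\tau$ leaves of each and the at most $\binom{n}{2}$ candidate pairs at each leaf yields $|\mathcal{H}| \le n^{2(\tau+1)}$.

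I expect the induction in the second step to be the main obstacle, for a somewhat subtle reason: one must ensure that the sequence of states at which $\vec{\rho}$-dependent decisions are made is \emph{itself} determined by the earlier decisions, so that only hyperplanes from the fixed set $\mathcal{H}$ --- and none produced ``on the fly'' as the tree grows --- are ever consulted. This is exactly where the assumption that node selection, fathoming, and incumbent tracking are $\vec{\rho}$-independent is used: without it, $\vec{\rho}$ and $\vec{\rho}'$ inside the same cell of $[0,1]^d \setminus \mathcal{H}$ could build different partial trees, reach different states, and consult hyperplanes outside $\mathcal{H}$. The combinatorial count is routine but needs mild care to land exactly at the exponent $2(\tau+1)$.
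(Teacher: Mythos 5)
This paper does not actually reprove the lemma; it is imported verbatim from \citet{Balcan18:Learning}, and your proposal reconstructs essentially the same argument as that cited proof: the only $\vec{\rho}$-dependent decision is the variable-selection argmax, each pairwise comparison of candidate variables at a state $(\tree,\node)$ is the sign of a linear form in $\vec{\rho}$, and an induction over the otherwise deterministic execution shows that two parameter vectors in the same cell of the arrangement make identical branching choices and therefore build identical trees. That core of your proposal is sound, including the treatment of ties (degenerate comparisons are tied for all $\vec{\rho}$ and resolved by the fixed rule) and of the size cap $\tau$.

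The one step that does not go through as literally written is the count. Enumerating triples by ``tree shape with at most $\tau$ nodes'' together with a branching-variable label per internal node introduces a Catalan-type factor of order $4^{\tau}$ on top of roughly $n^{\tau}$ labelings, and for small $n$ the resulting quantity, of order $\tau\, 4^{\tau} n^{\tau+2}$, exceeds $n^{2(\tau+1)}$. The repair is exactly the observation your induction already relies on: since node selection, fathoming, and incumbent updates are $\vec{\rho}$-independent, the whole execution---including the tree shape---is a deterministic function of the sequence of branching variables chosen so far. Hence the number of partial trees reachable by any parameter setting is at most $\sum_{t \leq \tau} n^{t} \leq n^{\tau+1}$, and it suffices to put into $\mathcal{H}$ only the hyperplanes attached to reachable states (the induction never consults any other), giving $|\mathcal{H}| \leq n^{\tau+1}\cdot \tau \cdot \binom{n}{2} \leq n^{2(\tau+1)}$. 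With that adjustment your proof is complete and matches the cited argument.
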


This piecewise structure immediately implies the following guarantee.

\begin{cor}\label{cor:BB}
	Let $\score_1, \dots, \score_d$ be $d$ arbitrary scoring rules and let $Q$ be an arbitrary MILP over $n$ binary variables. Suppose we limit B\&B to producing search trees of
	size $\tau$. For each parameter vector $\vec{\rho} = (\rho[1], \dots, \rho[d]) \in [0,1]^d$, let $u_{\vec{\rho}}(Q)$ be the size of the tree, divided by $\tau$, that B\&B builds using the scoring rule $\rho[1]\score_1 + \cdots + \rho[d]\score_d$ given $Q$ as input. Let $\cU$ be the set of functions $\cU = \left\{u_{\rho} \mid \vec{\rho} \in [0,1]^d\right\}$ mapping MILPs to $[0,1]$. The dual class $\cU^*$ is $\left(\cF,\cG, n^{2(\tau + 1)}\right)$-piecewise decomposable, where $\cG = \{g_{\vec{a}, \theta} : \cU \to \{0,1\} \mid \vec{a} \in \R^d, \theta \in \R\}$ consists of halfspace indicator functions $g_{\vec{a}, \theta} : u_{\vec{\rho}} \mapsto \ind{\vec{\rho} \cdot \vec{a} \leq \theta}$ and $\cF = \{f_c : \cU \to \R \mid c \in \R\}$ consists of constant functions $f_c : u_{\vec{\rho}} \mapsto c$.
\end{cor}

Corollary~\ref{cor:BB} and Lemma~\ref{lem:simple_multidim} imply the following pseudo-dimension bound.

\begin{cor}\label{cor:BB_pseudo}
	Let $\score_1, \dots, \score_d$ be $d$ arbitrary scoring rules and let
	$Q$ be an arbitrary MILP over $n$ binary variables. Suppose we limit
	B\&B to producing search trees of size $\tau$. For each parameter
	vector $\vec{\rho} = (\rho[1], \dots, \rho[d]) \in [0,1]^d$, let
	$u_{\vec{\rho}}(Q)$ be the size of the tree, divided by $\tau$, that
	B\&B builds using the scoring rule
	$\rho[1]\score_1 + \cdots + \rho[d]\score_d$ given $Q$ as input. Let
	$\cU$ be the set of functions
	$\cU = \left\{u_{\rho} \mid \vec{\rho} \in [0,1]^d\right\}$ mapping
	MILPs to $[0,1]$.
	Then $\pdim(\cU) = O(d (\tau \ln(n) + \ln(d)))$.
\end{cor}

Corollary~\ref{cor:BB_pseudo} matches the pseudo-dimension guarantee that \citet{Balcan18:Learning} prove.

\paragraph{Integer quadratic programming.} A diverse array of NP-hard problems, including max-2SAT, max-cut, and correlation clustering, can be characterized as integer quadratic programs (IQPs). An IQP is represented by a matrix $A \in \R^{n \times n}$. The goal is to find a set $X = \{x_1, \dots, x_n\} \in \{-1,1\}^n$ maximizing $\sum_{i,j \in [n]} a_{ij}x_ix_j$.
The most-studied IQP approximation algorithms operate via an SDP relaxation:
\begin{equation}\label{eq:SDP}\text{maximize } \sum_{i,j \in [n]} a_{ij}\langle \vec{u}_i , \vec{u}_j\rangle \qquad \text{subject to }\vec{u}_i \in S^{n-1}.\end{equation}
The approximation algorithm must transform, or ``round,'' the unit vectors into a binary assignment of the variables $x_1, \dots, x_n$. In the seminal \emph{GW algorithm}~\citep{Goemans95:Improved}, the algorithm projects the unit vectors onto a random vector $\vec{Z}$, which it draws from the $n$-dimensional Gaussian distribution, which we denote using $\vec{Z}$. If $\left\langle\vec{u}_i, \vec{Z} \right\rangle > 0$, it sets $x_i = 1$. Otherwise, it sets $x_i = -1$.

The GW algorithm's approximation ratio can sometimes be improved if the algorithm probabilistically assigns the binary variables. In the final step, the algorithm can use any rounding function $r: \R \to [-1,1]$ to set $x_i = 1$ with probability $\frac{1}{2} + \frac{1}{2}\cdot r\left(\langle \vec{Z}, \vec{u}_i \rangle\right)$ and $x_i = -1$ with probability $\frac{1}{2} - \frac{1}{2}\cdot r\left(\langle \vec{Z}, \vec{u}_i \rangle\right)$. See Algorithm~\ref{alg:GW} for the pseudocode.
\begin{algorithm}[t]
	\caption{SDP rounding algorithm with rounding function $r$}\label{alg:GW}
	\begin{algorithmic}[1]
		\Require Matrix $A \in \R^{n \times n}$.
		\State Draw a random vector $\vec{Z}$ from $\mathcal{Z}$, the $n$-dimensional Gaussian distribution.\label{step:draw}
		\State Solve the SDP (\ref{eq:SDP}) for the optimal embedding $U = \left\{\vec{u}_1, \dots, \vec{u}_n\right\}$.
		\State Compute set of fractional assignments $r(\langle \vec{Z}, \vec{u}_1\rangle), \dots, r(\langle \vec{Z}, \vec{u}_n\rangle)$.\label{step:fractional}
		\State For all $i \in [n]$, set $x_i$ to 1 with probability $\frac{1}{2} + \frac{1}{2}\cdot r\left(\langle \vec{Z}, \vec{u}_i \rangle\right)$ and $-1$ with probability $\frac{1}{2} - \frac{1}{2}\cdot r\left(\langle \vec{Z}, \vec{u}_i \rangle\right)$.\label{step:round}
		\Ensure $x_1, \dots, x_n$.
	\end{algorithmic}
\end{algorithm}
Algorithm~\ref{alg:GW} is known as a \emph{Random Projection, Randomized Rounding} (RPR$^2$) algorithm, so named by the seminal work of \citet{Feige06:RPR}.

\citet{Balcan17:Learning} analyze $s$-linear rounding functions~\citep{Feige06:RPR} $\phi_s:\R \to [-1,1]$, parameterized by $s > 0$, defined as follows:
\[\phi_s(y) = \begin{cases} -1  & \text{if } y < -s           \\
	y/s & \text{if } -s \leq y \leq s \\
	1   & \text{if } y > s.\end{cases}\]

The goal is to learn a parameter $s$ such that in expectation, $\sum_{i, j \in [n]} a_{ij} x_i x_j$ is maximized. The expectation is over several sources of randomness: first, the distribution $\dist$ over matrices $A$; second, the vector $\vec{Z}$; and third, the assignment of $x_1, \dots, x_n$. This final assignment depends on the parameter $s$, the matrix $A$, and the vector $\vec{Z}$.  \citet{Balcan17:Learning} refer to this value as the \emph{true utility of the parameter $s$}. Note that the distribution over matrices, which defines the algorithm's input, is unknown and external to the algorithm, whereas the Gaussian distribution over vectors as well as the distribution defining the variable assignment are internal to the algorithm.

The distribution over matrices is unknown, so we cannot know any parameter's true utility. Therefore, to learn a good parameter $s$, we must use samples. \citet{Balcan17:Learning} suggest drawing samples from two sources of randomness:  the distributions over vectors and matrices. In other words, they suggest drawing a set of samples $\sample = \left\{\left(A^{(1)}, \vec{Z}^{(1)}\right), \dots, \left(A^{(m)}, \vec{Z}^{(m)}\right)\right\} \sim \left(\dist \times \mathcal{Z}\right)^m.$ Given these samples, \citet{Balcan17:Learning} define a parameter's \emph{empirical utility} to be the expected objective value of the solution Algorithm~\ref{alg:GW} returns given input $\qp$, using the vector $\vec{Z}$ and $\phi_{s}$ in Step~\ref{step:fractional}, on average over all $(A, \vec{Z}) \in \sample$. Generally speaking, \citet{Balcan17:Learning} suggest sampling the first two randomness sources in order to isolate the third randomness source. They argue that this third source of randomness has an expectation that is simple to analyze. Using pseudo-dimension, they prove that every parameter $s$, its empirical and true utilities converge.

A bit more formally, \citet{Balcan17:Learning} use the notation $p_{(i, \vec{Z}, A, s)}$ to denote the distribution that the binary value $x_i$ is drawn from when Algorithm~\ref{alg:GW} is given $\qp$ as input and uses the rounding function $r = \phi_s$ and the hyperplane $\vec{Z}$ in Step~\ref{step:fractional}. Using this notation, the parameter $s$ has a true utility of \[\E_{A, \vec{Z} \sim \dist \times \mathcal{Z}}\left[\E_{x_i \sim p_{(i, \vec{Z}, A, s)}}\left[\sum_{i,j} a_{ij} x_i x_j\right]\right].\footnote{We, like \citet{Balcan17:Learning}, use the abbreviated notation \[\E_{A, \vec{Z} \sim \dist \times \mathcal{Z}}\left[\E_{x_i \sim p_{(i, \vec{Z}, A, s)}}\left[\sum_{i,j} a_{ij} x_i x_j\right]\right] = \E_{A, \vec{Z} \sim \dist \times \mathcal{Z}}\left[\E_{x_1 \sim p_{(1, \vec{Z}, A, s)}, \dots, x_n \sim p_{(n, \vec{Z}, A, s)}}\left[\sum_{i,j} a_{ij} x_i x_j\right]\right].\]}\] We also use the notation $\slin_s(A, \vec{Z})$ to denote the expected objective value of the solution Algorithm~\ref{alg:GW} returns given input $\qp$, using the vector $\vec{Z}$ and $\phi_{s}$ in Step~\ref{step:fractional}. The expectation is over the final assignment of each variable $x_i$. Specifically, $\slin_s(A, \vec{Z}) = \E_{x_i \sim p_{(i, \vec{Z}, A, s)}}\left[\sum_{i,j} a_{ij} x_i x_j\right]$. By definition, a parameter's true utility equals $\E_{A, \vec{Z} \sim \dist \times \mathcal{Z}}\left[\slin_s(A, \vec{Z})\right]$.
Given a set $\left(A^{(1)}, \vec{Z}^{(1)}\right), \dots, \left(A^{(m)}, \vec{Z}^{(m)}\right) \sim \dist \times \mathcal{Z}$, a parameter's empirical utility is $\frac{1}{m} \sum_{i = 1}^m \slin_s\left(A^{(i)}, \vec{Z}^{(i)}\right)$.

Both we and \citet{Balcan17:Learning} bound the pseudo-dimension of the function class $\cU = \left\{\slin_s : s > 0\right\}$.
\citet{Balcan17:Learning} prove that the functions in $\cU$ are piecewise structured: roughly speaking, for a fixed matrix $A$ and vector $\vec{Z}$, each function in $\cU$ is a piecewise, inverse-quadratic function of the parameter $s$. To present this lemma, we use the following notation: given a tuple $\left(\qp, \vec{Z}\right)$, let $\slin_{\qp, \vec{Z}} : \R \to \R$ be defined such that $\slin_{\qp, \vec{Z}}(s) = \slin_s\left(\qp, \vec{Z}\right)$.

\begin{lemma}[\citep{Balcan17:Learning}]\label{lem:piecewise_quad}
	For any matrix $A$ and vector $\vec{Z}$, the function $\slin_{\qp, \vec{Z}}:\R_{>0} \to \R$ is made up of $n+1$ piecewise components of the form $\frac{a}{s^2} + \frac{b}{s} + c$ for some $a,b,c \in \R$. Moreover, if the border between two components falls at some $s \in \R_{>0}$, then it must be that $s = \left|\langle \vec{u}_i, \vec{Z}\rangle \right|$ for some $\vec{u}_i$ in the optimal
	SDP embedding of $\qp$.
\end{lemma}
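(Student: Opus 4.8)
The plan is to write $\slin_{\qp,\vec{Z}}(s)$ in closed form and then partition $\R_{>0}$ according to the ``regime'' in which each coordinate's rounding falls. First I would use the fact that, conditioned on $\qp$ and $\vec{Z}$, Algorithm~\ref{alg:GW} rounds the coordinates independently in Step~\ref{step:round}, setting $x_i$ to $1$ with probability $\frac{1}{2} + \frac{1}{2}\phi_s(\langle \vec{Z}, \vec{u}_i\rangle)$ and to $-1$ otherwise. Hence $\E_{x_i \sim p_{(i,\vec{Z},\qp,s)}}[x_i] = \phi_s(\langle \vec{Z}, \vec{u}_i\rangle)$ while $x_i^2 \equiv 1$, so, writing $y_i := \langle \vec{Z}, \vec{u}_i\rangle$ (a fixed real number once $\qp$ and $\vec{Z}$ are fixed) and using independence across coordinates,
\[\slin_{\qp,\vec{Z}}(s) = \sum_{i \in [n]} a_{ii} + \sum_{i \neq j} a_{ij}\,\phi_s(y_i)\,\phi_s(y_j).\]

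Next I would analyze each factor $\phi_s(y_i)$ as a univariate function of $s > 0$ for fixed $y_i$. Directly from the definition of the $s$-linear rounding function, $\phi_s(y_i) = \sgn(y_i)$ whenever $0 < s < |y_i|$, and $\phi_s(y_i) = y_i/s$ whenever $s \geq |y_i|$ (the two formulas agree at $s = |y_i|$). So the only point at which the formula for $\phi_s(y_i)$ changes is $s = |y_i| = |\langle \vec{u}_i, \vec{Z}\rangle|$. Sorting the at most $n$ distinct positive values among $|y_1|, \dots, |y_n|$ therefore partitions $\R_{>0}$ into at most $n+1$ intervals, and on the interior of any one of them every $\phi_s(y_i)$ is either the constant $\sgn(y_i)$ or the reciprocal map $y_i/s$. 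Consequently each product $\phi_s(y_i)\phi_s(y_j)$ restricted to such an interval is one of $\sgn(y_i)\sgn(y_j)$, $\sgn(y_i)\,y_j/s$, $y_i\,\sgn(y_j)/s$, or $y_iy_j/s^2$; summing these against the coefficients $a_{ij}$ and adding the constant $\sum_i a_{ii}$ shows that $\slin_{\qp,\vec{Z}}$ equals $\frac{a}{s^2} + \frac{b}{s} + c$ on that interval for suitable $a,b,c \in \R$. This yields both assertions: at most $n+1$ pieces, with each border sitting at some $s = |\langle \vec{u}_i, \vec{Z}\rangle|$.

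I do not expect a substantial obstacle; the only steps that need care are (i) justifying the factorization $\E[x_i x_j] = \phi_s(y_i)\phi_s(y_j)$ for $i \neq j$, which is immediate from the coordinatewise-independent rounding in Step~\ref{step:round}, and (ii) the degenerate cases --- a coordinate with $y_i = 0$ has $\phi_s(0) = 0$ for every $s > 0$ and hence contributes nothing and creates no border, while ties among the $|y_i|$ only decrease the number of pieces --- so the count of $n+1$ components is never exceeded.
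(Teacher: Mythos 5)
Your proof is correct and complete: the factorization $\E[x_ix_j]=\phi_s(\langle \vec{Z},\vec{u}_i\rangle)\,\phi_s(\langle \vec{Z},\vec{u}_j\rangle)$ from the independent coordinatewise rounding, together with the observation that each $\phi_s(\langle \vec{Z},\vec{u}_i\rangle)$ equals $\sgn(\langle \vec{Z},\vec{u}_i\rangle)$ for $s<\left|\langle \vec{Z},\vec{u}_i\rangle\right|$ and $\langle \vec{Z},\vec{u}_i\rangle/s$ thereafter, immediately gives at most $n+1$ intervals on which $\slin_{\qp,\vec{Z}}(s)=\frac{a}{s^2}+\frac{b}{s}+c$, with borders only at $s=\left|\langle \vec{Z},\vec{u}_i\rangle\right|$. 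The present paper does not reprove this lemma (it is imported from the cited prior work), and your argument is essentially the same as the one given there, including the careful handling of $\langle \vec{Z},\vec{u}_i\rangle=0$ and of ties among the breakpoints.
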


This piecewise structure immediately implies the following corollary about the dual class $\cU^*$.

\begin{cor}\label{cor:slin}
	Let $\cU$ be the set of functions $\cU = \left\{\slin_s : s > 0\right\}$. The dual class $\cU^*$ is $\left(\cF, \cG, n\right)$-piecewise decomposable, where $\cG = \{g_{a} : \cU \to \{0,1\} \mid a \in \R\}$ consists of threshold functions $g_{a} : u_{s} \mapsto \ind{s \leq a}$ and $\cF = \{f_{a,b,c} : \cU \to \R \mid a,b,c \in \R\}$ consists of inverse-quadratic functions $f_{a,b,c} : u_{s} \mapsto \frac{a}{s^2} + \frac{b}{s} + c$.
\end{cor}

Lemma~\ref{lem:oscillate_decomp} and Corollary~\ref{cor:slin} imply the following pseudo-dimension bound.

\begin{cor}\label{cor:slin_pseudo}
	Let $\cU$ be the set of functions
	$\cU = \left\{\slin_s : s > 0\right\}$. The pseudo-dimension of
	$\cU$ is at most $O(\ln n)$.
\end{cor}

Corollary~\ref{cor:slin_pseudo} matches the pseudo-dimension bound that \citet{Balcan17:Learning} prove.

\subsection{Greedy algorithms}\label{sec:greedy}
\citet{Gupta17:PAC} provide pseudo-dimension bounds for greedy algorithm configuration, analyzing two canonical combinatorial problems: the maximum weight independent set problem and the knapsack problem. We recover their bounds in both cases.

\paragraph{Maximum weight independent set (MWIS)} In the MWIS problem, the input is a graph $G$ with a weight $w\left(v\right) \in \R_{\geq 0}$ per vertex $v$. The objective is to
find a maximum-weight (or \emph{independent}) set of non-adjacent vertices. On each iteration, the classic greedy algorithm adds the vertex $v$ that maximizes $w\left(v\right)/\left(1 + \deg\left(v\right)\right)$ to the set. It then removes $v$ and its neighbors from the graph.
Given a parameter $\rho \geq 0$, \citet{Gupta17:PAC} propose
the greedy heuristic $w\left(v\right)/\left(1 + \deg\left(v\right)\right)^{\rho}$. 
In this context, the utility function $u_{\rho}(G, w)$ equals the weight of the vertices in the set returned by the algorithm parameterized by $\rho$.
\citet{Gupta17:PAC} implicitly prove the following lemma about each function $u_{\rho}$ (made explicit in work by \citet{Balcan18:Dispersion}). To present this lemma, we use the following notation: let $u_{G, w} : \R \to \R$ be defined such that $u_{G,w}(\rho) = u_{\rho}\left(G,w\right)$.

\begin{lemma}[\citep{Gupta17:PAC}]
	For any weighted graph $\left(G,w\right)$, the function $u_{G,w} : \R \to \R$ is piecewise constant with at most $n^4$ discontinuities.
\end{lemma}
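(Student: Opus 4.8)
The plan is to show that the sequence of vertices the greedy heuristic selects — and hence the independent set it returns and that set's total weight, which is exactly $u_{G,w}(\rho)$ — changes only at a bounded collection of ``critical'' parameter values that can be written down in advance, without simulating the algorithm. Throughout, I would fix a deterministic tie-breaking rule (say, lowest vertex index) so that the algorithm is a well-defined deterministic function of $\rho$.

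The single structural fact driving the argument is the following: for two vertices $u, v$ whose degrees in the current residual graph are $d_u, d_v \in \{0, 1, \dots, n-1\}$, the heuristic prefers $u$ to $v$ precisely when $w(u)/(1+d_u)^\rho \ge w(v)/(1+d_v)^\rho$, which after taking logarithms becomes the inequality $\ln w(u) - \ln w(v) \ge \rho\bigl(\ln(1+d_u) - \ln(1+d_v)\bigr)$, linear in $\rho$. (Zero-weight vertices can be handled separately, or we may assume positive weights.) Such an inequality flips its truth value at most once over $\rho \in \R$: never when $d_u = d_v$, and otherwise exactly at $\rho = (\ln w(u) - \ln w(v))/(\ln(1+d_u) - \ln(1+d_v))$. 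I would then collect the set $R$ of all such crossing points, ranging over every unordered pair of vertices $\{u,v\}$ and every pair of candidate degree values $(d_u, d_v) \in \{0,\dots,n-1\}^2$; this yields $|R| \le \binom{n}{2} n^2 \le n^4$ critical values, say $\rho_1 < \cdots < \rho_k$ with $k \le n^4$.

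Finally, on any open interval $(\rho_i, \rho_{i+1})$ (with $\rho_0 = -\infty$, $\rho_{k+1} = \infty$), every comparison of the above form has a fixed outcome for every choice of vertices and every residual degree that could conceivably arise. I would conclude by a short induction on the iteration count that the algorithm's entire execution trace is identical for all $\rho$ in such an interval: by the inductive hypothesis the residual graph at the start of a step is the same, so each remaining vertex's degree is fixed, so the argmax is determined by the fixed pairwise comparisons and the fixed tie-break, so the same vertex and its neighbors are deleted. Hence the returned independent set, and therefore $u_{G,w}(\rho)$, is constant on each of these at most $n^4 + 1$ intervals, which is exactly the claim.

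\textbf{Main obstacle.} The point needing care is that a vertex's degree is not fixed in advance — it decreases as neighbors are removed — so it does not suffice to compare only the $\binom{n}{2}$ original heuristic values; one must anticipate every residual degree each vertex might attain, which is what inflates the count from $n^2$ to $n^4$. The accompanying subtlety is making the ``identical execution trace'' induction airtight, i.e.\ verifying that pinning down the outcomes of these finitely many comparisons (plus a deterministic tie-break) really does pin down the algorithm's behavior step by step; zero-weight vertices and exact ties exactly at the $\rho_i$ are minor edge cases absorbed into the ``at most $n^4$ discontinuities'' bookkeeping.
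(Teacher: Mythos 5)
Your proof is correct, and it is essentially the argument behind this lemma: the paper itself states it without proof, citing \citet{Gupta17:PAC} (with the piecewise-constant structure made explicit by \citet{Balcan18:Dispersion}), and their reasoning is exactly yours---each pairwise comparison $w(u)/(1+d_u)^\rho$ vs.\ $w(v)/(1+d_v)^\rho$ becomes linear in $\rho$ after taking logarithms and so flips at most once, one anticipates all $n^2$ possible residual degree pairs for each of the $\binom{n}{2}$ vertex pairs to get at most $n^4$ critical values, and an induction on the iterations shows the execution trace (hence the returned set and its weight) is fixed on each interval between consecutive critical values. Your handling of the key subtlety---that residual degrees are not known in advance, which is precisely what inflates the count to $n^4$---matches the intended argument, so there is nothing to add.
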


This structure immediately implies that the function class $\cU = \left\{u_{\rho} : \rho > 0\right\}$ has a piecewise-structured dual class.

\begin{cor}\label{cor:MWIS}
	Let $\cU$ be the set of functions $\cU = \left\{u_{\rho} : \rho > 0\right\}$.
	The dual class $\cU^*$ is $(\cF, \cG, n^4)$-piecewise decomposable, where $\cG = \{g_{a} : \cU \to \{0,1\} \mid a \in \R\}$ consists of threshold functions $g_{a} : u_{\rho} \mapsto \ind{\rho < a}$ and $\cF = \{f_c : \cU \to \R \mid c \in \R\}$ consists of constant functions $f_c : u_{\rho} \mapsto c$.
\end{cor}

Lemma~\ref{lem:oscillate_decomp} and Corollary~\ref{cor:MWIS} imply the following pseudo-dimension bound.

\begin{cor}
	Let $\cU$ be the set of functions
	$\cU = \left\{u_{\rho} : \rho > 0\right\}$. The pseudo-dimension of
	$\cU$ is $O(\ln n)$.
\end{cor}

This matches the pseudo-dimension bound by \citet{Gupta17:PAC}.

\paragraph{Knapsack.} Moving to the classic knapsack problem, the input is a knapsack capacity $C$ and a set of $n$ items
$i$ each with a value $\nu_i$ and a size $s_i$. The goal is to determine a set $I
\subseteq \left\{1, \dots, n\right\}$ with maximium total value $\sum_{i \in I} \nu_i$ such
that $\sum_{i \in I} s_i \leq C$.
\citet{Gupta17:PAC} suggest the family of algorithms parameterized by $\rho >0$ where each algorithm returns the better of the following two
solutions:
\begin{itemize}
	\item Greedily pack items in order of nonincreasing value $\nu_i$ subject to feasibility.
	\item Greedily pack items in order of $\nu_i/s_i^{\rho}$ subject to feasibility.
\end{itemize}
It is well-known that the algorithm with $\rho= 1$ achieves a 2-approximation.
We use the notation $u_{\rho}\left(\vec{\nu}, \vec{s}, C\right)$ to denote the
total value of the items returned by the algorithm parameterized by $\rho$ given input $(\vec{\nu}, \vec{s}, C)$.

\citet{Gupta17:PAC} implicitly prove the following fact about the functions $u_{\rho}$
(made explicit in work by \citet{Balcan18:Dispersion}). To present this lemma, we use the following notation: given a tuple $\left(\vec{\nu}, \vec{s}, C\right)$, let $u_{\vec{\nu}, \vec{s}, C} : \R \to \R$ be defined such that $u_{\vec{\nu}, \vec{s}, C}(\rho) = u_{\rho}\left(\vec{\nu}, \vec{s}, C\right)$.

\begin{lemma}[\citep{Gupta17:PAC}]
	For any tuple $\left(\vec{\nu}, \vec{s}, C\right)$, the function $u_{\vec{\nu}, \vec{s}, C} : \R \to \R$ is piecewise constant with at most $n^2$ discontinuities.
\end{lemma}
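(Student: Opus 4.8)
The plan is to track the item ordering that drives the algorithm and show that $u_{\vec{\nu},\vec{s},C}$ can change value only at a small number of parameters $\rho$. Fix the tuple $(\vec{\nu},\vec{s},C)$. The algorithm returns the better of two solutions. The first solution—greedily packing items in nonincreasing order of $\nu_i$ subject to feasibility—does not depend on $\rho$ at all, so it contributes no discontinuities. The second solution—greedily packing items in nonincreasing order of the key $\nu_i/s_i^{\rho}$ subject to feasibility—depends on $\rho$ \emph{only} through the permutation of $[n]$ that sorts the items by this key: once that permutation is fixed, the greedy feasibility sweep processes the items in a fixed order and returns a fixed set, hence a fixed total value. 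So it suffices to bound the number of parameters $\rho > 0$ at which this sorted order can change, and then take a union over the (at most) two candidate solutions.

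Next I would argue that the relative order of any fixed pair $i \neq j$ flips at most once. Since $\nu_i, \nu_j, s_i, s_j > 0$, we have $\nu_i/s_i^{\rho} \geq \nu_j/s_j^{\rho}$ if and only if $\ln \nu_i - \rho \ln s_i \geq \ln \nu_j - \rho \ln s_j$, and both sides are affine functions of $\rho$; two affine functions cross at most once, and if $s_i = s_j$ they never cross, so in that case the relative order of $i$ and $j$ is constant on $\rho > 0$. Committing to any fixed tie-breaking rule (say, by index) to resolve equal keys, the full sorted order is therefore piecewise constant in $\rho$, and it can change only at one of the at most $\binom{n}{2}$ pairwise crossing points. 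Ordering the crossing parameters as $0 < \rho_1 < \cdots < \rho_T$ with $T \leq \binom{n}{2}$, on each of the intervals $(0,\rho_1), (\rho_1,\rho_2), \dots, (\rho_T,\infty)$ the sorted order, hence the second candidate solution, hence the algorithm's output, and hence $u_{\vec{\nu},\vec{s},C}$ are all constant. Thus $u_{\vec{\nu},\vec{s},C}$ is piecewise constant with at most $\binom{n}{2} < n^2$ discontinuities, as claimed.

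The only real subtlety—and thus the main thing to handle carefully—is the bookkeeping around ties: one must fix a tie-breaking rule in the definition of the greedy order and check that, with that rule fixed, items with equal keys do not generate extra discontinuities, and that the outer ``better of two'' comparison (a pointwise maximum of two functions) introduces no discontinuity beyond those of its two arguments. Both points are immediate once stated precisely. This lemma is due to \citet{Gupta17:PAC}, and the argument sketched here is essentially theirs; we restate it because it is exactly the structure needed to instantiate Lemma~\ref{lem:oscillate_decomp}, with $\cF$ the constant functions and $\cG$ the threshold functions, to obtain a pseudo-dimension bound of $O(\ln n)$.
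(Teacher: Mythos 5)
Your proof is correct: the paper itself states this lemma without proof, citing \citet{Gupta17:PAC}, and your pairwise-crossing argument (each pair $i,j$ swaps order at most once because $\rho \mapsto \ln\nu_i - \rho\ln s_i$ and $\rho \mapsto \ln\nu_j - \rho\ln s_j$ are affine, giving at most $\binom{n}{2} < n^2$ critical parameters, with the value-ordered greedy candidate and the pointwise max contributing no further discontinuities) is essentially the original argument being cited. No gaps; the tie-breaking and ``better of two'' bookkeeping you flag are handled exactly as you describe.
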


This structure immediately implies that the function class $\cU = \left\{u_{\rho} : \rho > 0\right\}$ has a piecewise-structured dual class.

\begin{cor}\label{cor:knapsack}
	Let $\cU$ be the set of functions $\cU = \left\{u_{\rho} : \rho > 0\right\}$.
	The dual class $\cU^*$ is $(\cF, \cG, n^2)$-piecewise decomposable, where $\cG = \{g_{a} : \cU \to \{0,1\} \mid a \in \R\}$ consists of threshold functions $g_{a} : u_{\rho} \mapsto \ind{\rho < a}$ and $\cF = \{f_c : \cU \to \R \mid c \in \R\}$ consists of constant functions $f_c : u_{\rho} \mapsto c$.
\end{cor}

Lemma~\ref{lem:oscillate_decomp} and Corollary~\ref{cor:knapsack} imply the following pseudo-dimension bound.

\begin{cor}
	Let $\cU$ be the set of functions
	$\cU = \left\{u_{\rho} : \rho > 0\right\}$. The pseudo-dimension of
	$\cU$ is $O(\ln n)$.
\end{cor}

This matches the pseudo-dimension bound by \citet{Gupta17:PAC}.

\subsection{Revenue maximization}\label{sec:revenue}
The design of revenue-maximizing multi-item mechanisms is a notoriously challenging problem. Remarkably, the revenue-maximizing mechanism is not known even when there are just two items for sale. In this setting, the mechanism designer's goal is to field a mechanism
with high expected revenue on the distribution over agents' values. A line of research has provided generalization guarantees for mechanism design in the context of revenue maximization~\citep{Elkind07:Designing,Cole14:Sample,Devanur16:Sample,Gonczarowski17:Efficient,Guo19:Settling,Cai17:Learning,Gonczarowski21:Sample,Morgenstern16:Learning,Mohri14:Learning,Balcan16:Sample}.
These papers focus on sales settings: there is a seller, not included among the agents, who will use a mechanism to allocate a set of goods among the agents. The agents submit bids describing their values for the goods for sale. The mechanism determines which agents receive which items and how much the agents pay. The seller's revenue is the sum of the agents' payments. The mechanism designer's goal is to select a mechanism that maximizes the revenue.
In contrast to the mechanisms we analyze in Section~\ref{sec:econ}, these papers study mechanisms that are not necessarily budget-balanced. Specifically, under every mechanism they study, the sum of the agents' payments---the revenue---is at least zero. As in Section~\ref{sec:econ}, all of the mechanisms they analyze are incentive compatible.

We study the problem of selling $m$ heterogeneous goods to $n$ buyers. They denote a bundle of goods as a subset $b \subseteq [m]$. Each buyer $j \in [n]$ has a valuation function $v_j : 2^{[m]} \to \R$ over bundles of goods. In this setting, the set $\cX$ of problem instances consists of $n$-tuples of buyer values $\vec{v} = \left(v_1, \dots, v_n\right)$.
As in Section~\ref{sec:econ}, selling mechanisms defined by an \emph{allocation function} and a set of \emph{payment functions}. Every auction in the classes they study is incentive compatible, so they assume that the bids equal the bidders' valuations. An allocation function $\psi: \cX \to \left(2^{[m]}\right)^n$ maps the values $\vec{v} \in \cX$ to a division of the goods $\left(b_1, \dots, b_n\right) \in \left(2^{[m]}\right)^n$, where $b_i \subseteq [m]$ is the set of goods buyer $i$ receives. For each agent $i \in [n]$, there is a payment function $p_i: \cX \to \R$ which maps values $\vec{v} \in \cX$ to a payment $p_i(\vec{v}) \in \R_{\geq 0}$ that agent $i$ must make.

We recover generalization guarantees proved by \citet{Balcan18:General} which apply to a variety of widely-studied parameterized mechanism classes, including posted-price mechanisms,
multi-part tariffs,
second-price auctions with reserves,
affine maximizer auctions,
virtual valuations combinatorial auctions
mixed-bundling auctions,
and randomized mechanisms. They provided new bounds for mechanism classes not  previously studied  in  the  sample-based  mechanism  design  literature  and  matched  or  improved  over the  best  known  guarantees  for  many  classes.
They proved these guarantees by uncovering structure shared by all of these mechanisms: for any set of buyers' values, revenue is a piecewise-linear function of the mechanism's parameters. This structure is captured by our definition of piecewise decomposability.

Each of these mechanism classes is parameterized by a $d$-dimensional vector $\vec{\rho} \in \cP \subseteq \R^d$ for some $d \geq 1$. For example, when $d = m$, $\vec{\rho}$ might be a vector of prices for each of the items. The revenue of a mechanism is the sum of the agents' payments. Given a mechanism parameterized by a vector $\vec{\rho} \in \R^d$, we denote the revenue as $u_{\vec{\rho}} : \cX \to \R$, where $u_{\vec{\rho}}(\vec{v}) = \sum_{i = 1}^n p_i(\vec{v})$.

\citet{Balcan18:General} provide psuedo-dimension bounds for any mechanism class that is \emph{delineable}. To define this notion, for any fixed valuation vector $\vec{v} \in \cX$, we use the notation $u_{\vec{v}}(\vec{\rho})$ to denote revenue as a function of the mechanism's parameters.

\begin{definition}[$\left(d,t\right)$-delineable~\citep{Balcan18:General}] A mechanism class is \emph{$\left(d,t\right)$-delineable} if:
	\begin{enumerate}
		\item The class consists of mechanisms parameterized by vectors $\vec{\rho}$ from a set $\pspace \subseteq \R^d$; and
		\item For any $\vec{v} \in \cX$, there is a set $\cH$ of $t$ hyperplanes such that for any connected component $\pspace'$ of $\pspace \setminus \cH$, the function $u_{\vec{v}}\left(\vec{\rho}\right)$ is linear over $\pspace'.$
	\end{enumerate}
\end{definition}

Delineability naturally translates to decomposability, as we formalize below.

\begin{lemma}\label{lem:delineable}
	Let $\cU$ be a set of revenue functions corresponding to a $(d,t)$-delineable mechanism class. The dual class $\cU^*$ is $\left(\cF, \cG, t\right)$-piecewise decomposable, where $\cG = \{g_{\vec{a}, \theta} : \cU \to \{0,1\} \mid \vec{a} \in \R^d, \theta \in \R\}$ consists of halfspace indicator functions $g_{\vec{a}, \theta} : u_{\vec{\rho}} \mapsto \ind{\vec{\rho} \cdot \vec{a} \leq \theta}$ and $\cF = \{f_{\vec{a}, \theta} : \cU \to \R \mid \vec{a} \in \R^d, \theta \in \R\}$ consists of linear functions $f_{\vec{a}, \theta} : u_{\vec{\rho}} \mapsto \vec{\rho} \cdot \vec{a} + \theta$.
\end{lemma}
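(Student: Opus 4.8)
The plan is to unwind the definition of $(d,t)$-delineability and translate it directly into the $(\cF,\cG,k)$-piecewise decomposability condition of Definition~\ref{def:decomposable_sign}, following the same template used in the proofs of Lemma~\ref{lem:sequence} and Lemma~\ref{lem:AM_decomposable}. Fix an arbitrary valuation vector $\vec{v} \in \cX$; we must exhibit $t$ boundary functions in $\cG$ and a piece function in $\cF$ for each bit vector so that $u_{\vec{v}}^*\left(u_{\vec{\rho}}\right) = u_{\vec{\rho}}(\vec{v})$ is correctly reconstructed for every $\vec{\rho} \in \pspace$.

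First I would invoke delineability to obtain the set $\cH = \{h_1, \dots, h_t\}$ of hyperplanes such that $u_{\vec{v}}(\vec{\rho})$ is linear on each connected component of $\pspace \setminus \cH$. Each hyperplane $h_\ell$ is of the form $\vec{a}_\ell \cdot \vec{\rho} = \theta_\ell$, so I associate to it the boundary function $g^{(\ell)} := g_{\vec{a}_\ell, \theta_\ell} \in \cG$, where $g_{\vec{a}_\ell, \theta_\ell}(u_{\vec{\rho}}) = \ind{\vec{a}_\ell \cdot \vec{\rho} \leq \theta_\ell}$. These $t$ boundary functions partition $\pspace$ according to sign patterns $\vec{b} \in \{0,1\}^t$. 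Crucially, any connected component $\pspace'$ of $\pspace \setminus \cH$ lies entirely within a single one of these sign-pattern cells, so on the cell indexed by any realized $\vec{b}$ the function $u_{\vec{v}}(\vec{\rho})$ agrees with some linear function $\vec{\rho} \mapsto \vec{a}_{\vec{b}} \cdot \vec{\rho} + \theta_{\vec{b}}$ — here I need the mild observation that although a sign-pattern cell can consist of several connected components, delineability only promises linearity per connected component; but since we are free to pick the piece function only to match $u_{\vec{v}}$ pointwise on whatever subset of $\pspace$ the cell occupies, and Definition~\ref{def:decomposable_sign} requires agreement for all $y \in \cY$ with that bit vector, I should instead argue that the set $\cH$ can be taken to include enough hyperplanes that each sign-pattern cell is connected, or simply note (as the analogous earlier proofs implicitly do) that within a fixed cell the algorithm's behavior — hence revenue — is determined by a single linear function. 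I define $f^{(\vec{b})} := f_{\vec{a}_{\vec{b}}, \theta_{\vec{b}}} \in \cF$ for each $\vec{b}$ that corresponds to a genuine cell, and $f^{(\vec{b})} := f_{\vec{0}, 0}$ (the zero function) for bit vectors not realized by any $\vec{\rho} \in \pspace$, exactly as in Lemma~\ref{lem:sequence}.

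It then follows that for every $\vec{\rho} \in \pspace$, writing $\vec{b}_{\vec{\rho}} = \left(g^{(1)}(u_{\vec{\rho}}), \dots, g^{(t)}(u_{\vec{\rho}})\right)$, we have $u_{\vec{v}}^*\left(u_{\vec{\rho}}\right) = u_{\vec{\rho}}(\vec{v}) = f^{(\vec{b}_{\vec{\rho}})}(u_{\vec{\rho}})$, which is precisely the condition of Definition~\ref{def:decomposable_sign} with $k = t$. Since $\vec{v}$ was arbitrary, $\cU^*$ is $(\cF,\cG,t)$-piecewise decomposable. The main obstacle is the bookkeeping subtlety flagged above: delineability guarantees linearity only per connected component, not per sign-pattern cell, so I must either (i) argue the two notions coincide by adding hyperplanes to $\cH$ so that each cell is connected — this is always possible with finitely many more hyperplanes and does not change the order of $t$ in the applications — or (ii) observe, as in the earlier biology and mechanism proofs, that the underlying mechanism's allocation and payments are literally constant-formula (linear) on each cell, so the per-cell linear function is well-defined. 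Everything else is a direct, essentially notational, transcription paralleling Lemma~\ref{lem:sequence} and Lemma~\ref{lem:AM_decomposable}.
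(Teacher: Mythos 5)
Your construction is exactly the intended argument: the paper states Lemma~\ref{lem:delineable} without proof, treating the passage from delineability to Definition~\ref{def:decomposable_sign} as an immediate transcription following the same template as Lemmas~\ref{lem:sequence} and~\ref{lem:AM_decomposable}, which is precisely what you carry out (hyperplanes as boundary functions, realized sign patterns assigned their linear piece, the zero function for unrealized bit vectors). One remark on the subtlety you flag: a sign-pattern cell of the $t$ hyperplanes is an intersection of halfspaces and hence convex, so whenever $\pspace$ is convex (as in all of the paper's applications) the cell minus the hyperplanes is a convex, connected subset of $\pspace \setminus \cH$ and carries a single linear function---so your remedy (i), which would inflate the number of boundary functions beyond $t$, is unnecessary; the only points delineability genuinely leaves unconstrained are those lying on the hyperplanes themselves, a boundary/tie-breaking issue that the paper elides as well.
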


Lemmas~\ref{lem:simple_multidim} and \ref{lem:delineable} imply the following bound.

\begin{cor}\label{cor:delineable}
	Let $\cU$ be a set of revenue functions corresponding to a
	$(d,t)$-delineable mechanism class. The pseudo-dimension of $\cU$ is
	at most $O(d \ln (td))$.
\end{cor}

Corollary~\ref{cor:delineable} matches the pseudo-dimension bound that \citet{Balcan18:General} prove.

\section{Experiments}\label{sec:experiments}
We complement our theoretical guarantees with experiments in several settings to help illustrate our bounds.
Our experiments are in the contexts of both new and previously-studied domains: tuning parameters of sequence alignment algorithms in computational biology, tuning parameters of sales mechanisms so as to maximize revenue, and tuning parameters of voting mechanisms so as to maximize social good. We summarize two observations from the experiments that help illustrate the theoretical message of this paper.

\begin{observation}\label{obs:estimate}
	\textnormal{First, using sequence alignment data (Section~\ref{sec:experiments_sequence}), we demonstrate that with a finite number of samples, an algorithm's average performance over the samples provides a tight estimate of its expected performance on unseen instances.}
\end{observation}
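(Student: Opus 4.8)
The plan is to validate Observation~\ref{obs:estimate} empirically, since it is a statement about behavior on real data rather than a mathematical claim. I would work in the affine-gap scoring model of Section~\ref{app:affine}, so that the algorithm family is $\left\{A_{\vec{\rho}} \mid \vec{\rho} \in \R_{\geq 0}^3\right\}$ and the utility $u_{\vec{\rho}}(S_1, S_2)$ is the Q score of the alignment $A_{\vec{\rho}}(S_1,S_2)$ against a trusted reference alignment of $S_1$ and $S_2$. I would draw a corpus of sequence pairs with reference alignments from an established benchmark (for instance, in the spirit of the parameter-advising data of \citet{DeBlasio18:Parameter}), hold out a large test portion to serve as a proxy for the unknown distribution $\dist$, and vary the size $N$ of the remaining training sample $\sample$.

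For each $N$, and for a dense grid of parameter vectors $\vec{\rho}$ (including the empirically optimal $\hat{\vec{\rho}}$ that maximizes $\frac{1}{N}\sum_{(S_1, S_2) \in \sample} u_{\vec{\rho}}(S_1, S_2)$), I would record the training average and the held-out estimate of $\E_{\dist}\!\left[u_{\vec{\rho}}\right]$, and then plot their absolute difference against $N$ and against $\vec{\rho}$. Lemma~\ref{lem:sequence_tighter} combined with Lemma~\ref{lem:simple_multidim} gives $\pdim(\cU) = O(\ln n)$, which plugged into Equation~\eqref{eq:pollard} predicts that this gap should be uniformly small already for modest $N$ and should decay at roughly a $\sqrt{\ln n / N}$ rate; the experiment is meant to show that the observed gap is in fact this well behaved (and typically even smaller than the bound), and that it is small \emph{uniformly} over the grid, not merely pointwise at $\hat{\vec{\rho}}$.

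The main obstacle I expect is obtaining enough sequence pairs with reliable ground-truth alignments for the held-out set to be a faithful stand-in for $\dist$, and splitting the data so that no distributional skew is introduced between training and test (for example, by keeping related protein families on the same side of the split or by sampling splits at random and averaging). A secondary issue is that the pseudo-dimension bound hides constants, so the range of $N$ must be chosen so that the decay of the gap is visible above the statistical noise of the finite held-out estimate. Finally, because $u_{\vec{\rho}}$ is the output of a dynamic program that may admit co-optimal alignments, I would fix a deterministic tie-breaking rule consistent with the co-optimal-constant assumption of Section~\ref{sec:bio}, so that $u_{\vec{\rho}}$ is well defined and the measurements are reproducible.
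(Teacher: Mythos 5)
Your proposal is correct and matches the paper's own approach: the observation is established empirically in Section~\ref{sec:experiments_sequence} by taking protein alignment benchmarks with reference alignments (the data of \citet{DeBlasio18:Parameter}), splitting into training and test sets, learning alignment parameters on training subsets of increasing size, and showing that training and test accuracy converge once the training set is large enough (around $1000$ examples). The only differences are cosmetic: the paper optimizes parameters with an inverse-alignment tool (\texttt{IPA}/\texttt{Opal}, including substitution costs) and reports train versus test accuracy of the learned parameters rather than sweeping a dense grid uniformly, whereas you additionally propose checking the gap uniformly over parameters and comparing its decay to the $O(\ln n)$ pseudo-dimension prediction --- a reasonable strengthening but not needed for the observation as stated.
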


\begin{observation}\label{obs:two_families}
\textnormal{Second, our experiments empirically illustrate that two algorithm families for the same computational problem may require markedly different training set sizes to avoid overfitting, a fact that has also been explored in prior theoretical research~\cite{Balcan17:Learning,Balcan18:General}. This shows that it is crucial to bound an algorithm family's intrinsic complexity to provide accurate guarantees, and in this paper we provide tools for doing so. Our experiments here are in the contexts of mechanism design for revenue maximization (Section~\ref{sec:experiments_revenue}) and mechanism design for voting (Section~\ref{sec:experiments_welfare}), both using real-world data. In both of these settings, we analyze two natural, practical mechanism families where one of the families is intrinsically simple and the other is intrinsically complex. When we use Theorem~\ref{thm:main} to select enough samples to ensure that overfitting does not occur for the simple class, we indeed empirically observe overfitting when optimizing over the complex class. The complex class requires more samples to avoid overfitting. In revenue-maximizing mechanism design, it was known that there exists a separation between the intrinsic complexities of the two corresponding mechanism families~\cite{Balcan18:General}, though this was not known in voting mechanism design.
In both of these settings, despite surface-level similarities between the complex and simple mechanism families, there is a pronounced difference in their intrinsic complexities, as illustrated by these experiments.}
\end{observation}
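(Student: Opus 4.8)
The plan is to substantiate Observation~\ref{obs:two_families} not by a mathematical argument but by a two-part program: first, for each of the two domains I would pin down a pair of mechanism families---one \emph{intrinsically simple}, one \emph{intrinsically complex}---whose pseudo-dimensions can be bounded (above for the simple family, below for the complex one) using the machinery of Section~\ref{sec:general}; second, I would calibrate a training-set size $N$ to the simple family via the generalization bound of Equation~\eqref{eq:pollard} and demonstrate experimentally, on real-world data, that at that same $N$ the complex family overfits while the simple one does not.

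For revenue maximization the separation is already known~\citep{Balcan18:General}, and I would recover it through Corollary~\ref{cor:delineable}: the simple family is second-price auctions with a single anonymous reserve, which is $(1,t)$-delineable for a small $t$ and hence has $\pdim = O(\log t)$, independent of the number of items $m$; the complex family is affine maximizer auctions (or virtual-valuations combinatorial auctions), which are $(d,t)$-delineable with $d$ and $t$ polynomial in $m$ and the number of bidders $n$, so Corollary~\ref{cor:delineable} gives an upper bound polynomial in $m,n$ while the lower bounds of \citet{Balcan18:General} show this is essentially unavoidable. Plugging these two pseudo-dimensions into Equation~\eqref{eq:pollard} yields the gap in required sample sizes.

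For voting the separation is new, and this is where the analytic content lies. The complex family is the NAM family of Section~\ref{sec:econ}: Lemma~\ref{lem:AM_decomposable} together with Theorem~\ref{thm:main} gives $\pdim(\cU) = O(n\ln m)$, and Theorem~\ref{thm:NAM_lb} shows $\pdim(\cU) \ge n/2$, so its complexity genuinely scales with the number of agents. For the simple family I would restrict the weight vector $\vec{\rho}$ to a constant-dimensional subfamily---for instance, NAMs in which one designated agent carries a tunable weight $\rho \ge 0$, every other non-sink agent carries weight $1$, and a fixed agent is the sink---whose dual class is still $(\cF,\cG,m^2)$-piecewise decomposable with $\cF$ the constant functions and $\cG$ one-dimensional thresholds, so Lemma~\ref{lem:oscillate_decomp} gives $\pdim = O(\ln m)$, independent of $n$. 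The two families are structurally alike (both are weighted-VCG outcome rules that are incentive compatible and budget balanced) yet differ sharply in intrinsic complexity, which is exactly the content of the observation.

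The experimental protocol would then be: fix $\delta$ and a target gap $\varepsilon$; use Equation~\eqref{eq:pollard} with the simple family's pseudo-dimension bound to solve for $N = N_{\mathrm{simple}}(\varepsilon,\delta)$; repeatedly draw $N$ instances from the real-world distribution, compute an empirically optimal configuration in each family (exploiting the piecewise structure to enumerate the regions cut out by the boundary functions, exactly as in the proof of Lemma~\ref{lem:AM_decomposable}), and compare empirical to held-out performance averaged over the draws. The predicted outcome is a small, stable gap for the simple family and a pronounced one---empirical performance exceeding test performance---for the complex family. The main obstacle is not a subtle inequality but an empirical/computational one: I must (a) choose the voting subfamily so that its $O(\ln m)$ upper bound is provable from the paper's lemmas while it still reads as a ``natural'' mechanism, and (b) ensure the real-world data distribution leaves the complex family enough freedom to overfit at the chosen $N$, since on a benign distribution even a high-pseudo-dimension family can generalize well. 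Verifying both of these, rather than any computation, is the crux.
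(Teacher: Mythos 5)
Your program is sound and matches the paper's strategy at the top level---pick a simple and a complex family in each domain, bound the simple family's estimation error theoretically, calibrate $N$ to that bound, and empirically exhibit overfitting for the complex family at the same $N$ on real-world (Jester) data---but the concrete instantiations differ from the paper's in ways worth noting. For revenue, the paper does not use affine maximizers as the complex family: it pits \emph{anonymous} second-price auctions with reserve (simple, $\pdim \le 2$ via the oscillation argument of Lemma~\ref{lem:oscillate} in Lemma~\ref{lem:SPA}, rather than via Corollary~\ref{cor:delineable}) against \emph{non-anonymous} SPAs (complex, $\tilde\Theta(n)$ with the lower bound from \citealp{Balcan18:General}). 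This choice buys two things your proposal gives up: the two families are genuinely ``surface-similar'' (both are SPAs with reserves), which is part of the observation's content, and the empirically optimal overfitting configuration has a closed form (set each agent's reserve to the observed value), so no heavy optimization over an AMA-like family is needed. For voting, the paper's simple family is not a one-parameter continuous NAM subfamily but the \emph{finite} class $\cA_0$ of $n$ mechanisms obtained by choosing the sink agent and giving everyone else weight one; its estimation error is bounded by a plain Hoeffding-plus-union bound (Equation~\eqref{eq:union}) rather than through Theorem~\ref{thm:main}, while the complex class is all NAMs with $\pdim = \tilde\Theta(n)$ (Lemma~\ref{lem:AM_decomposable}, Theorem~\ref{thm:NAM_lb}). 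Your continuous one-weight subfamily with $\pdim = O(\ln m)$ via Lemma~\ref{lem:oscillate_decomp} would also work, but the finite family is simpler and keeps the simple/complex pair structurally parallel (both are $0/1$-weight NAMs). Finally, the obstacle you flag as the crux---ensuring the data distribution actually lets the complex family overfit at the chosen $N$---is exactly what the paper resolves constructively: it builds a distribution supported on specially crafted valuation vectors derived from the Jester ratings and exhibits explicit empirically optimal parameter vectors (e.g., Equation~\eqref{eq:NAM_bad_param}) whose estimation error provably exceeds the simple class's bound, rather than hoping the raw data suffices.
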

	
	\subsection{Sequence alignment experiments}\label{sec:experiments_sequence}
	
Changing the alignment parameter can alter the accuracy of the produced alignments. 
Figure~\ref{fig:singleAlignmentDecomp} shows the regions of the gap-open/gap-extension penalty plane 
divided into regions such that each region corresponds to a different computed alignment. 
The regions in the figure are produced using the \texttt{XPARAL} software of~\citet{Gusfield96:Parametric}, 
with using the BLOSUM62 amino acid replacement matrix, the scores in each region were computed using Robert Edgar's \texttt{qscore} package\footnote{\url{http://drive5.com/qscore/}}.
The alignment sequences are a single pairwise alignment from the data set described below. 

To illustrate Observation~\ref{obs:estimate},
we use the \texttt{IPA} tool~\citep{Kim07:inverse} to learn optimal parameter choices for a given set of example pairwise sequence alignments. 
We used 861 protein multiple sequence alignment benchmarks that had been previously been used in~\citet{DeBlasio18:Parameter},
which split these benchmarks into 12 cross-validation folds that evenly distributed the ``difficulty'' of an alignment 
(the accuracy of the alignment produced using aligner defaults parameter choice). 
All pairwise alignments were extracted from each multiple sequence alignment.
We then took randomized increasing sized subsets of the pairwise sequence alignments from each training set
and found the optimal parameter choices for affine gap costs and alphabet-dependent substitution costs. 
These parameters were then given to the \texttt{Opal} aligner~\citep[v3.1b, ][]{Wheeler07:Multiple} to realign the pairwise alignments in the associated test sets.

\begin{figure}
\centering
\includegraphics[width=\textwidth]{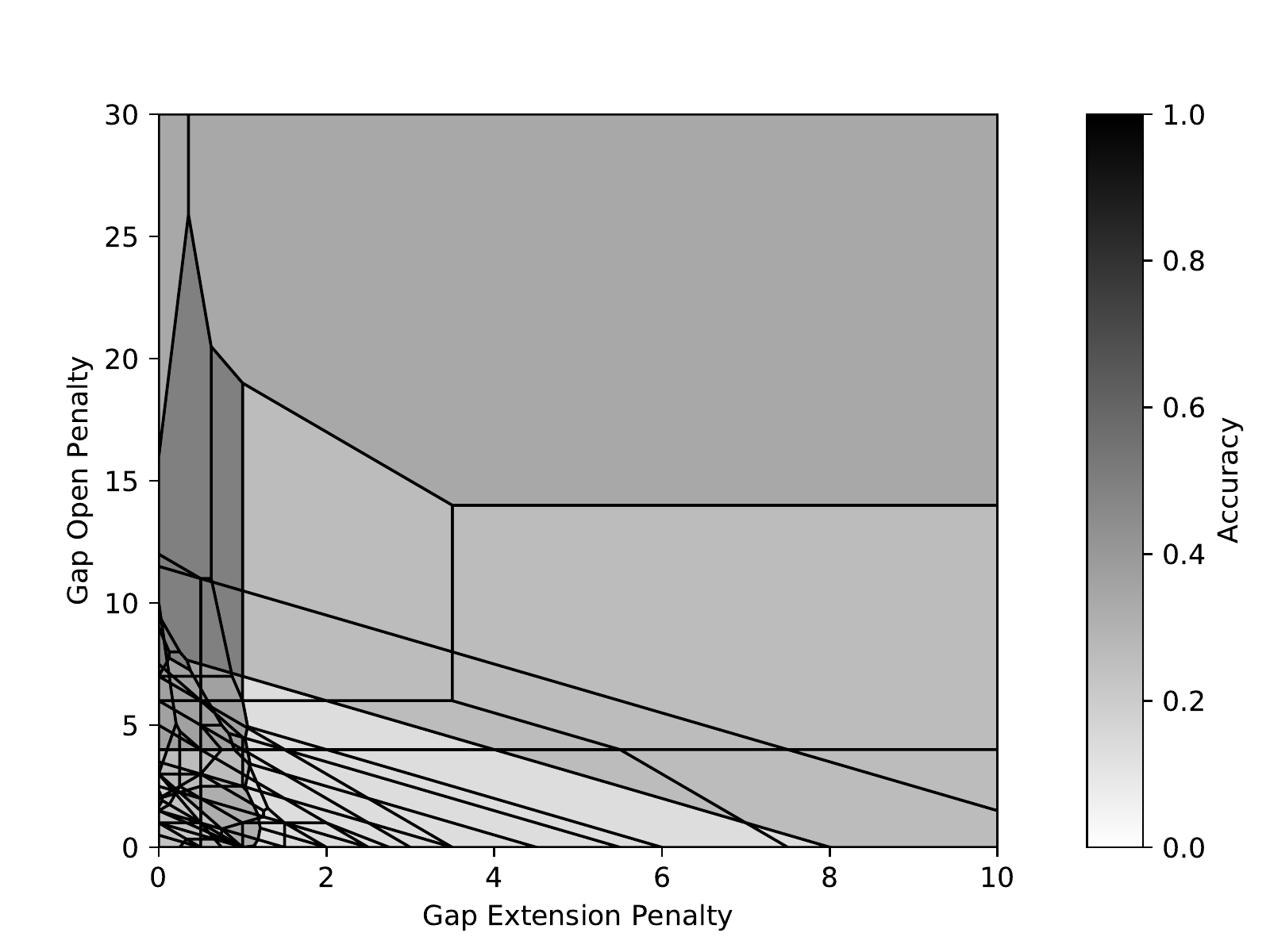}
\caption{Parameter space decomposition for a single example.}
\label{fig:singleAlignmentDecomp}
\end{figure}
\begin{figure}
\centering
\includegraphics{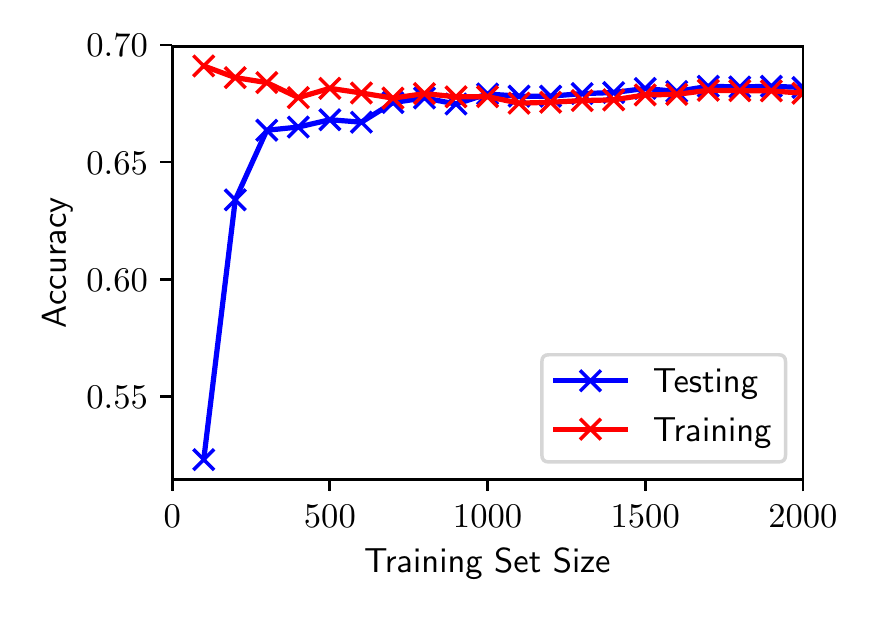}
\caption{Pairwise sequence alignment experiments showing the average accuracy on training and test examples using parameter choices optimized for various training set sizes. }
\label{fig:alignmentTrainTest}
\end{figure}

Figure~\ref{fig:alignmentTrainTest} shows the impact of increasing the number of training examples used to optimize parameter choices. 
As the number of training examples increases, 
the optimized parameter choice is 
less able to fit the training data exactly
and thus the training accuracy decreases.
For the same reason the parameter choices are more general and the test accuracy increases. 
The test and training accuracies are roughly equal when the training set size is close to $1000$ examples and remains equal for larger training sets.
The test accuracy is actually slightly higher and this is likely due to the training subset not representing the distribution of inputs as well as the full test set 
due to the randomization being on all of the alignments rather than across difficulty, as was done to create the cross-validation separations.

	\subsection{Mechanism design experiments}\label{sec:mechanism}
	In this section, we provide experiments analyzing the estimation error of several mechanism classes. We introduced the notion of estimation error in Section~\ref{sec:prelim}, but review it briefly here. For a class of mechanisms parameterized by vectors $\vec{\rho}$ (such as the class of neutral affine maximizers from Section~\ref{sec:econ}), let $u_{\vec{\rho}}(\vec{v}) \in [0,H]$ be the utility of the mechanism parameterized by $\vec{\rho}$ when the agents' values are defined by the vector $\vec{v}$. For example, this utility might equal its social welfare or revenue. Given a set of $N$ samples $\sample$, the mechanism class's estimation error equals the maximum difference, across all parameter vectors $\vec{\rho}$, between the average utility of the mechanism over the samples $\frac{1}{N} \sum_{\vec{v} \in \sample} u_{\vec{\rho}}(\vec{v}) $ and its expected utility $\E_{\vec{v} \sim \dist}\left[u_{\vec{\rho}}(\vec{v})\right]$. We know that with high probability, the estimation error is bounded by $\tilde{O}\left(H\sqrt{d/N}\right)$, where $d$ is the pseudo-dimension of the mechanism class. In other words, for all parameter vectors $\vec{\rho}$, \[\left|\frac{1}{N} \sum_{\vec{v} \in \sample} u_{\vec{\rho}}(\vec{v}) - \E_{\vec{v} \sim \dist}\left[u_{\vec{\rho}}(\vec{v})\right]\right| = \tilde{O}\left(H\sqrt{\frac{d}{N}}\right).\] Said another way, $\tilde{O}\left(\frac{H^2d}{\epsilon^2}\right)$ samples are sufficient to ensure that with high probability, for every parameter vector $\vec{\rho}$, the difference between the average and expected utility of the mechanism parameterized by $\vec{\rho}$ is at most $\epsilon$.

\subsubsection{Revenue maximization for selling mechanisms}\label{sec:experiments_revenue}
We provide experiments that illustrate our guarantees in the context of revenue maximization (which we introduced in Section~\ref{sec:revenue}). The type of mechanisms we analyze in our experiments are \emph{second-price auctions with reserves}, one of the most well-studied mechanism classes in economics~\citep{Vickrey61:Counterspeculation}. In this setting, there is one item for sale and a set of $n$ agents who are interested in buying that item. This mechanism family is defined by a parameter vector $\vec{\rho} \in \R^n_{\geq 0}$, where each entry $\rho[i]$ is the \emph{reserve price} for agent $i$. The agents report their values for the item to the auctioneer. If the highest bidder---say, agent $i^*$---reports a bid that is larger than her reserve $\rho\left[i^*\right]$, she wins the item and pays the maximum of the second-highest bid and her reserve $\rho\left[i^*\right]$. Otherwise, no agent wins the item. The second-price auction (SPA) is called \emph{anonymous} if every agent has the same reserve price: $\rho[1] = \rho[2] = \cdots = \rho[n]$. Otherwise, the SPA is called \emph{non-anonymous}. Like neutral affine maximizers, SPAs are incentive compatible, so we assume the agents' bids equal their true values. We refer to the class of non-anonymous SPAs as $\cA_N$ and the class of anonymous SPAs as $\cA_A$.

The class of non-anonymous SPAs $\cA_N$ is more complex than the class of anonymous SPAs $\cA_A$, and thus the sample complexity of the former is higher than the latter. However, non-anonymous SPAs can provide much higher revenue than anonymous SPAs.
We illustrate this tradeoff in our experiments, which helps exemplify Observation~\ref{obs:two_families}.

In a bit more detail, the pseudo-dimension of  the class of non-anonymous SPAs $\cA_N$ is $\tilde{O}(n)$, an upper bound proved in prior research~\citep{Balcan18:General,Morgenstern16:Learning} which can be recovered using the techniques in this paper, as we summarize in Section~\ref{sec:revenue}. What's more,~\citet{Balcan18:General} proved a pseudo-dimension lower bound of $\Omega(n)$ for this class.  Meanwhile, the pseudo-dimension of the class of anonymous SPAs $\cA_A$ is $O(1)$. This upper bound was proved in prior research by \citet{Morgenstern16:Learning} and \citet{Balcan18:General}, and it follows from the general theory presented in this paper, as we formalize below in Lemma~\ref{lem:SPA}.
In our experiments, we exhibit a distribution over agents' values under which the following properties hold:
\begin{enumerate}
	\item The true estimation error of the set of non-anonymous SPAs $\cA_N$ is larger than our upper bound on the worst-case estimation error of the set of anonymous SPAs $\cA_A$.
	\item The expected revenue of the optimal non-anonymous SPA is significantly larger than the expected revenue of the optimal anonymous SPA: the former is 0.38 and the latter is 0.57.
\end{enumerate}
These two points imply that even though it may be tempting to optimize over the class of non-anonymous SPAs $\cA_N$ in order to obtain higher expected revenue, the training set must be large enough to ensure the empirically-optimal mechanism has nearly optimal expected revenue.

The distribution we analyze is over the Jester Collaborative Filtering Dataset~\citep{Goldberg01:Eigentaste}, which consists of ratings from tens of thousands of users of one hundred jokes---in this example, the jokes could be proxies for comedians, and the agents could be venues who are bidding for the opportunity to host the comedians. Since the auctions we analyze in these experiments are for a single item, we run experiments using agents' values for a single joke, which we select to have a roughly equal number of agents with high values as with low values (we describe this selection in greater detail below). Figure~\ref{fig:auction_experiment} illustrates the outcome of our experiments.
\begin{figure}[t]
	\centering
	\includegraphics[scale=0.7]{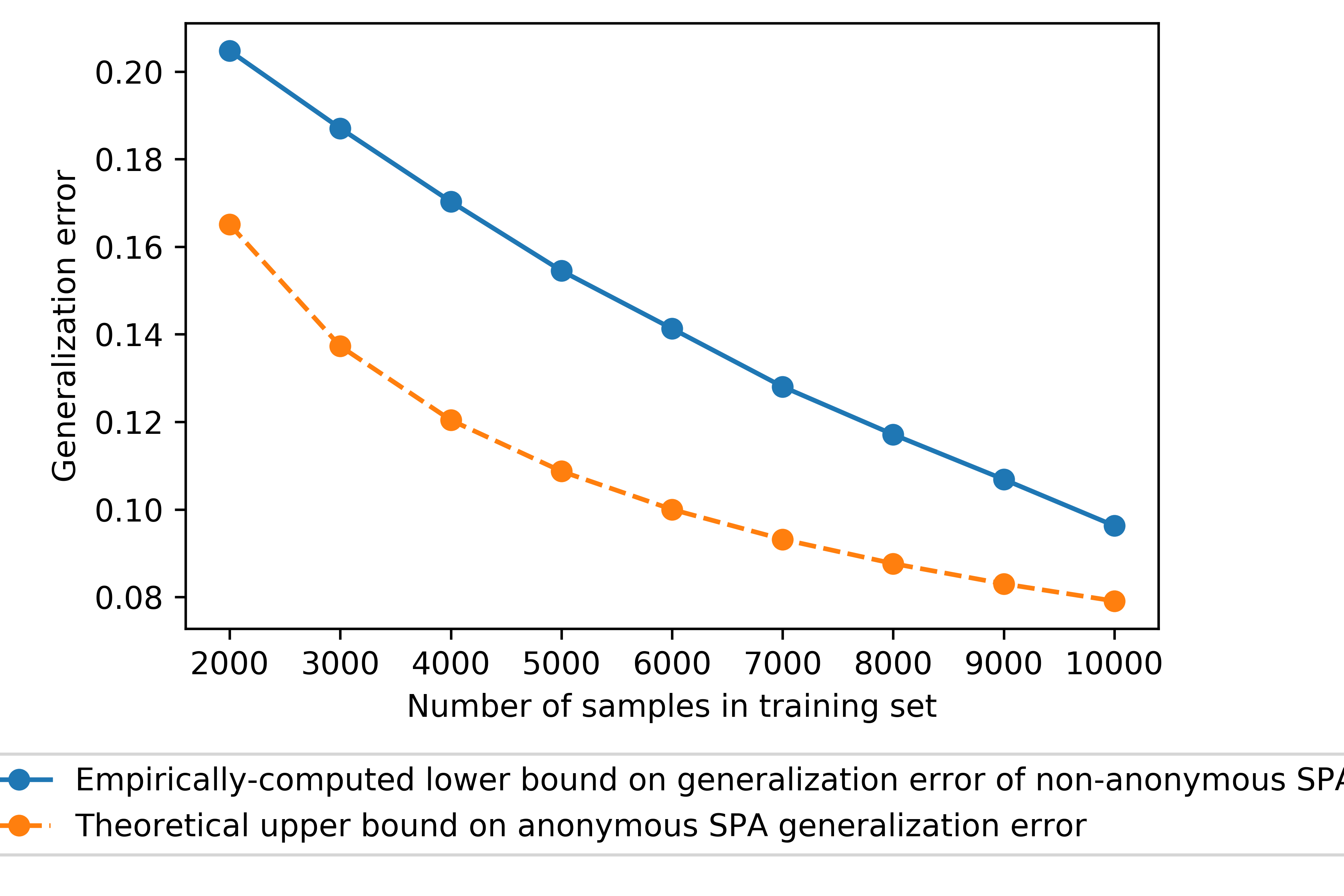}
	\caption{Revenue maximization experiments.
		We vary the size of the training set, $N$, along the $x$-axis.
		The orange dashed line is our theoretical upper bound on the estimation error of the class of anonymous SPAs $\cA_A$, $\sqrt{\frac{4}{N}\ln(eN)} + \sqrt{\frac{1}{2N}\ln 100}$, which follows from our pseudo-dimension bound (Lemma~\ref{lem:SPA}). The blue solid line lower bounds the true estimation error of the class of non-anonymous SPAs $\cA_N$ over the Jester dataset. For several choices of $N \in [2000,10000]$, we compute this lower bound by drawing a set of $N$ training instances, finding a mechanism in $\cA_N$ with high average revenue over the training set, and calculating the mechanism's estimation error (the difference between its average revenue and expected revenue). For scale, estimation error is a quantity in the range $[0,1]$.}
	\label{fig:auction_experiment}
\end{figure}
The orange dashed line is our upper bound on the estimation error of the class of anonymous SPAs $\cA_A$, which equals $\sqrt{\frac{4}{N}\ln(eN)} + \sqrt{\frac{1}{2N}\ln \frac{1}{\delta}}$, with $\delta = 0.01$. This upper bound has been presented in prior research~\citep{Morgenstern16:Learning,Balcan18:General}, and we recover it using the general theory presented in this paper, as we demonstrate below in Lemma~\ref{lem:SPA}. The blue solid line is the true estimation error\footnote{Sample complexity and estimation error bounds for SPAs have been studied in prior research~\citep{Morgenstern16:Learning, Devanur16:Sample, Balcan18:General}, and our guarantees match the bounds provided in that literature.} of the class of non-anonymous SPAs $\cA_N$ over the Jester dataset, which we calculate via the following experiment. For a selection of training set sizes $N \in [2000, 12000]$, we draw $N$ sample agent values, with the number of agents equal to 10,612 (as we explain below). We then find a vector of non-anonymous reserves with maximum average revenue over the samples but low expected revenue on the true distribution, as we describe in greater detail below. We plot this difference between average and expected revenue, averaged over 100 trials of the same procedure. As these experiments illustrate, there is a tradeoff between the sample complexity and revenue guarantees of these two classes, and it is crucial to calculate a class's intrinsic complexity to provide accurate guarantees. We now explain the details of these experiments.

\paragraph{Distribution over agents' values.} We use the Jester Collaborative Filtering Dataset~\citep{Goldberg01:Eigentaste}, which consists of ratings from 24,983 users of 100 jokes. The users' ratings are in the range $[-10, 10]$, so we normalize their values to lie in the interval $[0,1]$. We select a joke which has at least 5000 (normalized) ratings in the interval $\left[\frac{1}{4}, \frac{1}{2}\right]$ and at least 5000 (normalized) ratings in the interval $\left[\frac{3}{4}, 1\right]$ (specifically, Joke \#22). There is a total of 5334 ratings of the first type and 5278 ratings of the second type. Let $W = \left\{w_1, \dots, w_{10,612}\right\}$ be all 10,612 values. For every $i \in [10612]$, we define the following valuation vector $\vec{v}^{(i)}$: agent $i$'s value $v_i^{(i)}$ equals $w_i$ and for all other agents $j \not= i$, $v_j^{(i)} = 0$. We define the distribution $\dist$ over agents' values to be the uniform distribution over $\vec{v}^{(1)}, \dots, \vec{v}^{(10,612)}$.

\paragraph{Empirically-optimal non-anonymous reserve vectors with poor generalization.} Given a set of samples $\sample \sim \dist^N$, let $p\left(\vec{v}^{(1)}\right), \dots, p\left(\vec{v}^{(10,612)}\right)$ define the empirical distribution over $\sample$ (that is, $p\left(\vec{v}^{(i)}\right)$ equals the number of times $\vec{v}^{(i)}$ appears in $\sample$ divided by $N$). Then for any non-anonymous reserve vector $\vec{\rho} \in \R^n$, the average revenue over the samples is \begin{equation}\frac{1}{N} \sum_{i = 1}^{10,612} p\left(\vec{v}^{(i)}\right) \rho[i] \textbf{1}_{\left\{w_i \geq \rho[i]\right\}}.\label{eq:empirical_revenue}\end{equation} This is because for every vector $\vec{v}^{(i)}$, the only agent with a non-zero value is agent $i$, whose value is $w_i$. Therefore, the highest bidder is agent $i$, who wins the item if $w_i \geq \rho[i]$, and pays reserve $\rho[i]$, which is the maximum of $\rho[i]$ and the second-highest bid. As is clear from Equation~\eqref{eq:empirical_revenue}, if $\vec{v}^{(i)} \in \sample$, an empirically-optimal reserve vector $\hat{\vec{\rho}}$ (which maximizes average revenue over the samples) will set $\hat{\rho}_i = w_i$, and if $\vec{v}^{(i)} \not\in \sample$, $\hat{\rho}_i$ can be set arbitrarily, because it has no effect on the average revenue over the samples. In our experiments, for all $\vec{v}^{(i)} \not\in \sample$, we set $\hat{\rho}_i = \frac{3}{4}$. The intuition is that those agents $i$ with $\vec{v}^{(i)} \not\in \sample$ and $w_i \in \left[\frac{1}{4}, \frac{1}{2}\right]$ will not win the item, and therefore will drag down expected revenue.

In Figure~\ref{fig:auction_experiment}, the orange dashed line is the difference between the average value of $\hat{\vec{\rho}}$ over $\sample$ and its expected value, which we calculate via the following experiment. For a selection of training set sizes $N \in [2000, 12000]$, we draw the set $\sample \sim \dist^N$. We then construct the reserve vector $\hat{\vec{\rho}}$ as described above. We plot the difference between the average value of $\hat{\vec{\rho}}$ over $\sample$ and its expected value, averaged over 100 trials of the same procedure.

\paragraph{Estimation error of anonymous SPAs.}
We now use our general theory to bound the estimation error of the class of anonymous SPAs $\cA_A$ so that we can plot the blue solid line in Figure~\ref{fig:auction_experiment}.
This pseudo-dimension upper bound has been presented in prior research~\citep{Morgenstern16:Learning,Balcan18:General}, but here we show that it can be recovered using this paper's techniques.
 We use Lemma~\ref{lem:oscillate} to obtain a slightly tighter pseudo-dimension bound (up to constant factors) than that of Corollary~\ref{cor:delineable}.

Given a reserve price $\rho \geq 0$ and valuation vector $\vec{v} \in \R^n$, let $u_{\rho}(\vec{v}) \in [0,1]$ be the revenue of the anonymous second-price auction with a reserve price of $\rho$ when the bids equal $\vec{v}$. Using Lemma~\ref{lem:oscillate}, we prove that the pseudo-dimension of this class of revenue functions is at most 2.

\begin{lemma}\label{lem:SPA}
The pseudo-dimension of the class $\cA_A = \left\{u_{\rho} : \rho \geq 0\right\}$ is at most 2.
\end{lemma}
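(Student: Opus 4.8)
The plan is to reduce $\pdim(\cA_A)$ to an oscillation count and then invoke Lemma~\ref{lem:oscillate}, sharpening its generic $O(\ln B)$ conclusion to the exact value $2$. Fix a valuation vector $\vec v \in \R^n$ and let $h_{\vec v}(\rho) := u_\rho(\vec v)$ be the revenue of the anonymous second-price auction with reserve $\rho$, viewed as a function of $\rho \geq 0$. The class $\cH := \{h_{\vec v} \mid \vec v \in \R^n\}$ is exactly the dual class $\cA_A^*$, and, via the same primal/dual identity used in the proof of Lemma~\ref{lem:oscillate_decomp} (namely $h^*_\rho(h_{\vec v}) = h_{\vec v}(\rho) = u_\rho(\vec v)$), the double dual $\cH^*$ coincides with $\cA_A$ under the natural identification $h^*_\rho \leftrightarrow u_\rho$. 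So it suffices to show every $h_{\vec v}$ has at most two oscillations in the sense of Definition~\ref{def:oscillate}.

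First I would write $h_{\vec v}$ in closed form. Let $v_{(1)} \geq v_{(2)}$ denote the two largest order statistics of $\vec v$ (with $v_{(2)} = 0$ when $n = 1$). In a second-price auction with reserve $\rho$, the top bidder wins the item at price $\max\{\rho, v_{(2)}\}$ whenever $\rho \leq v_{(1)}$, and nothing is sold otherwise, so
\[
h_{\vec v}(\rho) = \begin{cases} v_{(2)} & \text{if } 0 \leq \rho \leq v_{(2)},\\ \rho & \text{if } v_{(2)} < \rho \leq v_{(1)},\\ 0 & \text{if } \rho > v_{(1)} \end{cases}
\]
(with the middle piece empty when $v_{(1)} = v_{(2)}$); behavior at the two boundary points is immaterial. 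In particular $h_{\vec v}$ is continuous and nondecreasing on $[0, v_{(1)}]$ and then drops to $0$.

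Next I would bound the oscillations by a short case analysis on a fixed threshold $z$. The map $\rho \mapsto \ind{h_{\vec v}(\rho) \geq z}$ is identically $1$ if $z \leq 0$; identically $0$ if $z > v_{(1)}$; equal to $1$ on $[0, v_{(1)}]$ and $0$ afterward if $0 < z \leq v_{(2)}$ (one discontinuity, at $v_{(1)}$); and equal to $1$ exactly on $[z, v_{(1)}]$ and $0$ elsewhere if $v_{(2)} < z \leq v_{(1)}$ (two discontinuities, at $z$ and at $v_{(1)}$). In every case there are at most two discontinuities, so each $h_{\vec v}$ has at most $B = 2$ oscillations, and Lemma~\ref{lem:oscillate} already gives $\pdim(\cA_A) = \pdim(\cH^*) = O(\ln 2) = O(1)$.

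Finally, to pin down the constant I would retrace the counting step inside the proof of Lemma~\ref{lem:oscillate} with $B = 2$: if $\pdim(\cA_A) = \pdim(\cH^*) = N$, choose valuation vectors $\vec v_1, \dots, \vec v_N$ and witnesses $z_1, \dots, z_N$ realizing a shattering; the $N$ piecewise-constant indicators $\rho \mapsto \ind{h_{\vec v_i}(\rho) \geq z_i}$ jointly partition $\R_{\geq 0}$ into at most $2N + 1$ intervals on each of which all of them are constant, and at most one shattering parameter can lie in a given interval, so $2^N \leq 2N + 1$, which forces $N \leq 2$. The only real work here is the closed form for $h_{\vec v}$ and the oscillation count — getting the reserve-binding versus non-binding regimes and ties among the top bids right — since the reduction to $\cH^*$ and the final inequality are immediate once Lemma~\ref{lem:oscillate} is in hand.
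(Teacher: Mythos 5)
Your proof is correct and follows essentially the same route as the paper: you write the dual (revenue-as-a-function-of-$\rho$) in closed form via $\max\{v_{(2)},\rho\}\ind{v_{(1)} \geq \rho}$, verify it has at most $B=2$ oscillations in the sense of Definition~\ref{def:oscillate}, and then run the counting step of Lemma~\ref{lem:oscillate} with $B=2$ to get $2^N \leq 2N+1$, hence $\pdim(\cA_A) \leq 2$. Your explicit case analysis on the threshold $z$ just spells out what the paper summarizes by saying the dual is increasing on $[0,v_{(1)}]$ and zero afterward.
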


\begin{proof}
	Given a vector $\vec{v}$ of bids, let $v_{(1)}$ be the highest bid in $\vec{v}$ and let $v_{(2)}$ be the second-highest bid. Under the anonymous SPA, the highest bidder wins the item if $v_{(1)} \geq \rho$ and pays $\max\left\{v_{(2)}, \rho\right\}$. Therefore $u_{\rho}(\vec{v}) = \max\left\{v_{(2)}, \rho\right\}\textbf{1}_{\left\{v_{(1)} \geq \rho\right\}}.$ By definition of the dual function, \[u_{\vec{v}}^*\left(u_{\rho}\right) = \max\left\{v_{(2)}, \rho\right\}\textbf{1}_{\left\{v_{(1)} \geq \rho\right\}}.\] The dual function is thus an increasing function of $\rho$ in the interval $\left[0, v_{(1)}\right]$ and is equal to zero in the interval $\left(v_{(1)}, \infty\right)$. Therefore, the function has at most two oscillations (as in Definition~\ref{def:oscillate}). By Lemma~\ref{lem:oscillate}, the pseudo-dimension of $\cA_A$ is at most the largest integer $D$ such that $2^D \leq 2D + 1$, which equals 2. Therefore, the theorem statement holds.
	\end{proof}

By Equation~\eqref{eq:pollard}, with probability $1-\delta$ over the draw of $\sample \sim \dist^N$, for any reserve $\rho \geq 0$, \begin{equation}\left|\frac{1}{N}\sum_{\vec{v} \in \sample}
u_{\rho}(\vec{v}) - \E_{\vec{v} \sim \dist}\left[u_{\rho}(\vec{v})\right]\right| \leq \sqrt{\frac{4}{N}\ln(eN)} + \sqrt{\frac{1}{2N}\ln \frac{1}{\delta}}.\label{eq:SPA_generalization}\end{equation} In Figure~\ref{fig:auction_experiment}, the blue solid line equals the right-hand-side of Equation~\eqref{eq:SPA_generalization} with $\delta = 0.01$ as a function of $N$.

\paragraph{Discussion.} In summary, this section exemplifies a distribution over agents' values where:
\begin{enumerate}
	\item The true estimation error of the set of non-anonymous SPAs (the orange dashed line in Figure~\ref{fig:auction_experiment}) is larger than our upper bound on the worst-case estimation error of the set of anonymous SPAs (the blue solid line in Figure~\ref{fig:auction_experiment}).
	\item The expected revenue of the optimal non-anonymous SPA is significantly larger than the expected revenue of the optimal anonymous SPA: the former is 0.38 and the latter is 0.57.
\end{enumerate}
Therefore, there is a tradeoff between the sample complexity and revenue guarantees of these two classes.

\subsubsection{Social welfare maximization for voting mechanisms}\label{sec:experiments_welfare}

In this section, we provide similar experiments as those in the previous section, but in the context of neutral affine maximizers (NAMs), which we defined in Section~\ref{sec:econ}. We present a simple subset of neutral affine maximizers (NAMs) with a small estimation error upper bound. We then experimentally demonstrate that the true estimation error of the class of NAMs is larger than this simple subset's estimation error. Therefore, it is crucial to calculate a class's intrinsic complexity in order to provide accurate guarantees. These experiments further illustrate Observation~\ref{obs:two_families}.

Our simple set of mechanisms is defined as follows. One agent is selected to be the sink agent (a sink agent $i$ has the weight $\rho[i] = 0$), and every other agent's weight is set to 1. In other words, this class is defined by the set of all vectors $\vec{\rho} \in \{0,1\}^n$ where exactly one component of $\vec{\rho}$ is equal to zero. We use the notation $\cA_0$ to denote this simple class, $\cA_{NAM}$ to denote the set of all NAMs and $u_{\vec{\rho}}(\vec{v})$ to denote the social welfare of the NAM parameterized by $\vec{\rho}$ given the valuation vector $\vec{v}$.

Since there are $n$ NAMs in $\cA_0$, a Hoeffding and union bound tells us that for any distribution $\dist$, with probability $0.99$ over the draw of $N$ valuation vectors $\sample \sim \dist^N$, for all $n$ parameter vectors $\vec{\rho}$, \begin{equation}\left|\E_{\vec{v} \sim \dist}\left[u_{\vec{\rho}}(\vec{v})\right] - \frac{1}{N} \sum_{\vec{v} \in \sample} u_{\vec{\rho}}(\vec{v})\right|\leq \sqrt{\frac{\ln (200 n)}{2N}}.\label{eq:union}\end{equation}
\begin{figure}[t]
	\centering
	\includegraphics[width=0.7\textwidth]{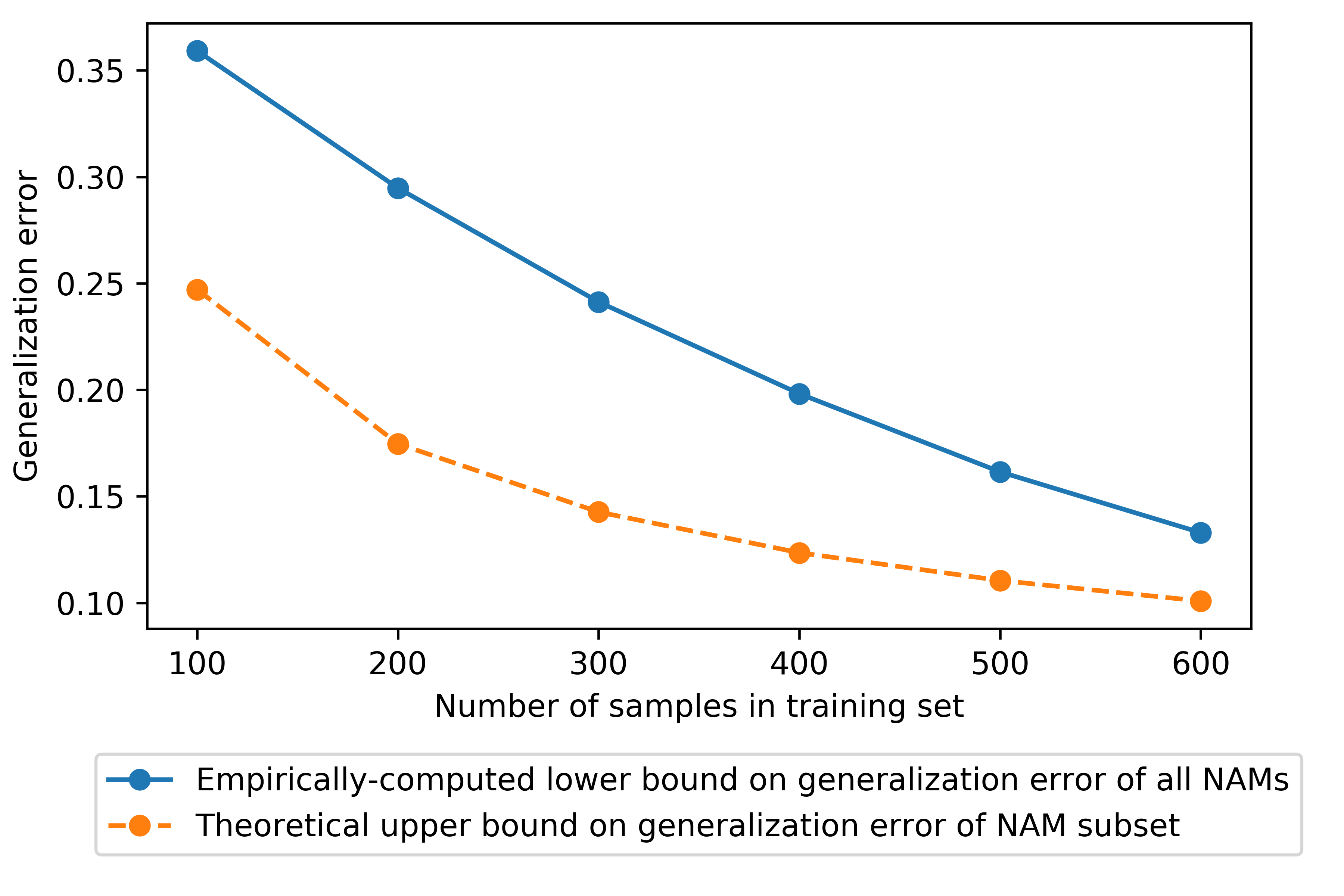}
	\caption{Neutral affine maximizer experiments. 
We vary the size of the training set, $N$, along the $x$-axis.
The orange dashed line is our upper bound on the estimation error of the simple subset of NAMs $\cA_0$, $\sqrt{\frac{\ln (200 n)}{2N}}$ (Equation~\eqref{eq:union}). The blue solid line lower bounds the true estimation error of the entire class of NAMs $\cA_{NAM}$ over the Jester dataset. For several choices of $N \in [100,600]$, we compute this lower bound by drawing a set of $N$ training instances, finding a mechanism in $\cA_{NAM}$ with high average social welfare over the training set, and calculating the mechanism's estimation error (the difference between its average social welfare and expected social welfare). For scale, estimation error is a quantity in the range $[0,1]$.}
	\label{fig:NAM}
\end{figure}
This is the orange dashed line in Figure~\ref{fig:NAM}.
Meanwhile, as we prove in Section~\ref{sec:econ}, the pseudo-dimension of the class of all NAMs $\cA_{NAM}$ is $\tilde{\Theta}(n)$, so it is a more complex set of mechanisms than $\cA_0$.
 To experimentally compute the lower bound on the true estimation error of the class of all NAMs $\cA_{NAM}$, we identify a distribution such that the class has high estimation error, as we describe below.

\paragraph{Distribution.}
As in the previous section, we use the Jester Collaborative Filtering Dataset~\citep{Goldberg01:Eigentaste} to define our distribution.
This dataset consists of ratings from 24,983 users of 100 jokes. The users' ratings are in the range $[-10, 10]$, so we normalize their ratings to lie in the interval $[-0.5,0.5]$. 
 We begin by selecting two jokes (jokes \#7 and \#15) such that---at a high level---a large number of agents either like the first joke a lot and do not like the second joke, or do not like the first joke and like the second joke a medium amount. We explain the intuition behind this choice below.
Specifically, we split the agents into two groups: in the first group, the agents rated joke 1 at least 0.35 and rated joke 2 at most 0, and in the second group, the agents rated joke 1 at most 0 and rated joke 2 between 0 and 0.15.
We call the set of ratings corresponding to the first group $A_1 \subseteq \R^2$ and those corresponding to the second group $A_2 \subseteq \R^2$. The set $A_1$ has size 870 and $A_2$ has size 1677.

We use $A_1$ and $A_2$ to define a distribution $\dist$ over the valuations of $n = 1000$ agents for two jokes. The support of $\dist$ consists of 500 valuation vectors $\vec{v}^{(1)}, \dots, \vec{v}^{(500)} \in \R^{2 \times 1000}$.
For $i\in [500]$, $\vec{v}^{(i)}$ is defined as follows. The values of agent $i$ for the two jokes, $\left(v_i^{(i)} (1), v_i^{(i)} (2)\right)$, are chosen uniformly at random from $A_1$ and the values of agent $500 + i$ are chosen uniformly at random from $A_2$. Every other agent $i$ has a value of $v_i^{(i)}(1) = v_i^{(i)}(2) = 0$.

\paragraph{Parameter vector with poor estimation error.} Given a set of samples $\sample \subseteq \left\{\vec{v}^{(1)}, \dots, \vec{v}^{(500)}\right\}$, we define a parameter vector $\vec{\rho} \in \{0,1\}^{1000}$ with high estimation error as follows: for all $i \in [500]$, 
\begin{equation}\rho[i] = \begin{cases} 1 &\text{if } \vec{v}^{(i)} \in \sample\\
0 &\text{otherwise}\end{cases} \qquad \text{ and } \rho[500+i] = \begin{cases} 0 &\text{if } \vec{v}^{(i)} \not\in \sample\\
1 &\text{otherwise.}\end{cases}\label{eq:NAM_bad_param}\end{equation} Intuitively, this parameter vector\footnote{Although this parameter vector has high average social welfare over the samples, it may set multiple agents to be sink agents, which may be wasteful. We leave the problem of finding a parameter vector with high estimation error and only a single sink agent to future research.} has high estimation error for the following reason. Suppose $\vec{v}^{(i)} \in \sample$. The only agents with nonzero values in $\vec{v}^{(i)}$ are agent $i$ and agent $500 + i$. Since $\vec{v}^{(i)} \in \sample$, $\rho[i] = 1$ and $\rho[500+i] = 0$. Therefore, agent $i$'s favorite joke is selected. Since agent $i$'s values are from the set $A_1$, they have a value of at least 0.35 for joke 1 and a value of at most 0 for joke 2. Therefore, joke 1 will be the selected joke. Meanwhile, by the same reasoning for every $\vec{v}^{(i)} \not\in \sample$, if we run the NAM defined by $\vec{\rho}$, joke 2 will be the selected joke. In expectation over $\dist$, joke 1 has a significantly higher social welfare than joke 2. Therefore, the NAM defined by $\vec{\rho}$ will have a high average social welfare over the samples in $\sample$ but a low expected social welfare, which means it will have high estimation error. We illustrate this intuition in our experiments. 

\paragraph{Experimental procedure.} We repeat the following experiment 100 times. For various choices of $N \in [600]$, we draw a set of samples $\sample \sim \dist^N$, compute the parameter vector $\vec{\rho}$ defined by Equation~\eqref{eq:NAM_bad_param}, and compute the difference between the average social welfare of $\vec{\rho}$ over $\sample$ and its expected social welfare. We plot the difference averaged over all 100 runs.

\paragraph{Discussion.}
These experiments demonstrate that although the simple and complex NAM families $\cA_{NAM}$ and $\cA_0$ are artificially similar (they are both defined by the $m$ agent weights), the complex family $\cA_{NAM}$ requires far more samples to avoid overfitting than the simple family $\cA_0$. This illustrates the importance of using our pseudo-dimension bounds to provide accurate guarantees.

\section{Conclusions}
We provided a general sample complexity theorem for learning high-performing algorithm configurations.
Our bound applies whenever a parameterized algorithm's performance is a piecewise-structured function of its parameters: for any fixed problem instance, boundary functions partition the parameters into regions where performance is a well-structured function. We proved this guarantee by exploiting intricate connections between primal function classes (measuring the algorithm's performance as a function of its input) and dual function classes (measuring the algorithm's performance on a fixed input as a function of its parameters). We demonstrated that many parameterized algorithms exhibit this structure and thus our main theorem implies sample complexity guarantees for a broad array of algorithms and application domains. 

	A great direction for future research is to build on these ideas for the sake of learning a \emph{portfolio} of configurations, rather than a single high-performing configuration. At runtime, machine learning is used to determine which configuration in the portfolio to employ for the given input. \citet{Gupta17:PAC} and \citet{Balcan21:Generalization} have provided initial results in this direction, but a general theory of portfolio-based algorithm configuration is yet to be developed.

\paragraph{Acknowledgments.}
This research is funded in part by
the Gordon and Betty Moore Foundation's Data-Driven Discovery Initiative (GBMF4554 to C.K.),
the US National Institutes of Health (R01GM122935 to C.K.),
the US National Science Foundation (a Graduate Research Fellowship to E.V. and grants IIS-1901403 to M.B. and T.S., IIS-1618714, CCF-1535967, CCF-1910321, and SES-1919453 to M.B., IIS-1718457, IIS-1617590, and CCF-1733556 to T.S., and DBI-1937540 to C.K.),
the US Army Research Office (W911NF-17-1-0082 and W911NF2010081 to T.S.),
the Defense Advanced Research Projects Agency under cooperative agreement HR00112020003 to M.B.,
an AWS Machine Learning Research Award to M.B.,
an Amazon Research Award to M.B.,
a Microsoft Research Faculty Fellowship to M.B.,
a Bloomberg Research Grant to M.B.,
a fellowship from Carnegie Mellon University’s Center for Machine Learning and Health to E.V.,
and by the generosity of Eric and Wendy Schmidt by recommendation of the Schmidt Futures program.

\bibliographystyle{plainnat}
\bibliography{bibliography.bib}

\appendix

\section{Helpful lemmas}
\begin{lemma}[\citet{Shalev14:Understanding}]\label{lem:log_ineq}
Let $a \geq 1$ and $b > 0$. If $y < a\ln y + b$, then $y < 4a \ln (2a) + 2b$.
\end{lemma}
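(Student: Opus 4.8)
The plan is to derive the conclusion directly from the hypothesis, controlling $\ln y$ with the elementary inequality $\ln t \le t - 1$, which holds for every $t > 0$. The only idea needed is to apply this inequality not to $y$ itself but to the rescaled quantity $y/(2a)$: the resulting linear term in $y$ then carries the coefficient $\frac{1}{2a}$, which after multiplying through by $a$ becomes $\frac{1}{2}$ and absorbs half of $y$.

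Concretely, first note $y > 0$, since $\ln y$ appears in the hypothesis. Because $a \ge 1$, we have $2a \ge 2$, so $y/(2a) > 0$ and $\ln(2a) > 0$. Applying $\ln t \le t - 1$ with $t = y/(2a)$ and adding $\ln(2a)$ to both sides gives
\[
\ln y = \ln\!\left(\frac{y}{2a}\right) + \ln(2a) \le \frac{y}{2a} - 1 + \ln(2a).
\]
Multiplying by $a > 0$ and dropping the negative term $-a$ yields $a \ln y \le \frac{y}{2} + a\ln(2a)$.

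Now combine this with the hypothesis $y < a\ln y + b$: we obtain $y < \frac{y}{2} + a\ln(2a) + b$, hence $\frac{y}{2} < a\ln(2a) + b$, i.e.\ $y < 2a\ln(2a) + 2b$. Finally, since $\ln(2a) \ge 0$ we have $2a\ln(2a) \le 4a\ln(2a)$, so $y < 4a\ln(2a) + 2b$, as claimed. There is no genuine obstacle: the whole content is the choice of the rescaling constant $2a$ in the substitution, and everything else is a two-line manipulation; in fact this argument even produces the slightly stronger bound $y < 2a\ln(2a) + 2b$.
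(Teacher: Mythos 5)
Your proof is correct, and every step checks out: $y>0$ is forced by the appearance of $\ln y$, the inequality $\ln t \le t-1$ applied at $t = y/(2a)$ gives $a\ln y \le \tfrac{y}{2} - a + a\ln(2a)$, and combining with the hypothesis and halving yields $y < 2a\ln(2a) + 2b$, which is indeed (slightly stronger than) the stated bound since $\ln(2a) \ge \ln 2 > 0$. Note that the paper itself does not prove this lemma at all; it simply cites it from \citet{Shalev14:Understanding}, where the argument runs through a preliminary lemma (roughly, $x \ge 2a\ln a$ implies $x \ge a\ln x$) together with a case split on whether $x \le 2b$, which is how the constant $4a\ln(2a)$ arises there. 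Your route is genuinely different: a single application of the tangent-line bound $\ln t \le t-1$ at the rescaled point $y/(2a)$, with no auxiliary lemma and no case analysis, and it buys both a shorter self-contained proof and the sharper constant $2a\ln(2a)+2b$; the textbook's version buys nothing extra here beyond matching the form in which the lemma is usually quoted. So your proposal is a valid, and arguably cleaner, substitute for the cited result.
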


The following is a corollary of Rolle's theorem that we use in the proof of Lemma~\ref{lem:TAD_decomp}.

\begin{lemma}[\citet{Tossavainen06:On}]\label{lem:roots}
	Let $h$ be a polynomial-exponential sum of the form $h(x) = \sum_{i = 1}^t a_i b_i^x$, where $b_i > 0$ and $a_i \in \R$. The number of roots of $h$ is upper bounded by $t$.
\end{lemma}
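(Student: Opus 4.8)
The plan is to prove this by induction on $t$, using Rolle's theorem in the standard way one bounds the number of zeros of an exponential sum. First I would make two harmless reductions. If some $a_i = 0$, drop that term, which only decreases $t$; and if two bases coincide, $b_i = b_j$, merge the terms via $a_i b_i^x + a_j b_j^x = (a_i + a_j)\, b_i^x$, which again does not increase $t$. After these reductions either $h \equiv 0$ (a degenerate case implicitly excluded wherever the lemma is invoked), or we have $h(x) = \sum_{i=1}^{t'} a_i b_i^x$ with $t' \le t$, all $a_i \ne 0$, and the $b_i > 0$ pairwise distinct; it then suffices to show such an $h$ has at most $t' - 1 \le t$ roots.

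For the base case $t' = 1$, $h(x) = a_1 b_1^x$ with $a_1 \ne 0$ and $b_1 > 0$ is nowhere zero, so it has $0 = t' - 1$ roots. For the inductive step, assume the claim for sums of $t'-1$ terms. The key trick is that multiplying $h$ by the everywhere-nonzero function $b_{t'}^{-x}$ does not change its zero set, so I would instead analyze $\tilde h(x) := b_{t'}^{-x} h(x) = a_{t'} + \sum_{i=1}^{t'-1} a_i (b_i/b_{t'})^x$. Differentiating kills the constant term: $\tilde h'(x) = \sum_{i=1}^{t'-1} a_i \ln(b_i/b_{t'})\, (b_i/b_{t'})^x$. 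Because the $b_i$ are distinct and positive, each ratio $b_i/b_{t'}$ is positive, the ratios are pairwise distinct, and none equals $1$, so each coefficient $a_i \ln(b_i/b_{t'})$ is nonzero; hence $\tilde h'$ is again a sum of the same form with $t'-1$ nonzero-coefficient, distinct-base terms, and by the inductive hypothesis it has at most $t' - 2$ zeros. If $\tilde h$ had $m$ distinct zeros $r_1 < \cdots < r_m$, Rolle's theorem would place a zero of $\tilde h'$ strictly between each consecutive pair, giving $m - 1 \le t' - 2$, i.e. $m \le t' - 1$. Since $h$ and $\tilde h$ share the same zeros, $h$ has at most $t' - 1 \le t$ zeros, completing the induction.

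There is no serious obstacle here: the argument is a textbook application of Rolle's theorem, and the only point requiring care is the bookkeeping in the reductions — after merging equal bases one must check the surviving coefficients are nonzero (otherwise the induction could stall) and flag the $h \equiv 0$ edge case, which is the sole way the bound can fail and is ruled out at every call site (e.g. in the proof of Lemma~\ref{lem:TAD_decomp}, where $h$ is the nonzero difference of two distinct objective values). I would also remark that this recovers, with a slightly weaker constant, the classical fact that a real exponential sum with $t$ distinct frequencies has at most $t-1$ real zeros, which is all that is needed downstream.
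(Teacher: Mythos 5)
Your proof is correct; the paper itself does not prove Lemma~\ref{lem:roots} but imports it from \citet{Tossavainen06:On}, and your divide-by-$b_{t'}^{-x}$, differentiate, and apply Rolle induction is exactly the standard argument behind that cited result (in fact you obtain the slightly stronger bound of $t-1$ roots for a nontrivial reduced sum). The only caveats you need are the ones you already flag: handling zero coefficients and repeated bases before inducting, and excluding the identically-zero case, which is indeed how the lemma is used downstream.
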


\begin{cor}\label{cor:roots}
	Let $h$ be a polynomial-exponential sum of the form \[h(x) = \sum_{i = 1}^t \frac{a_i}{b_i^{x}},\] where $b_i > 0$ and $a_i \in \R$. The number of roots of $h$ is upper bounded by $t$.
\end{cor}

\begin{proof}
	Note that $\sum_{i = 1}^t \frac{a_i}{b_i^{x}} = 0$ if and only if \[\left(\prod_{j=1}^n b_i^x\right)\sum_{i = 1}^t \frac{a_i}{b_i^{x}} = \sum_{i = 1}^n a_i \left(\prod_{j \not= i} b_i\right)^x = 0.\] Therefore, the corollary follows from Lemma~\ref{lem:roots}.
\end{proof}

\section{Additional details about our general theorem}\label{app:general}
\linear*

\begin{proof}
	First, we prove that the VC-dimension of the dual class $\cG^*$ is at most $d+1$. The dual class $\cG^*$ consists of functions $g_{u_{\vec{\rho}}}^*$ for all $\vec{\rho} \in \cP$ where $g_{u_{\vec{\rho}}}^*\left(g_{\vec{a}, \theta}\right) = \ind{\vec{a} \cdot \vec{\rho} \leq \theta}$. Let $\hat{\cG} = \left\{\hat{g}_{\vec{\rho}} : \R^{d+1} \to \{0,1\}\right\}$ be the class of halfspace thresholds $\hat{g}_{\vec{\rho}} : (\vec{a}, \theta) \mapsto \ind{\vec{a} \cdot \vec{\rho} \leq \theta}$. It is well-known that $\VC\left(\hat{\cG}\right) \leq d+1$, which we prove means that $\VC\left(\cG^*\right) \leq d+1$. For a contradiction, suppose $\cG^*$ can shatter $d+2$ functions $g_{\vec{a}_1, \theta_1}, \dots, g_{\vec{a}_{d+2}, \theta_{d+2}} \in \cG$. Then for every subset $T \subseteq [d+2]$, there exists a parameter vector $\vec{\rho}_T$ such that $\vec{a}_i \cdot \vec{\rho}_T \leq \theta_i$ if and only if $i \in T$. This means that $\hat{\cG}$ can shatter the tuples $\left(\vec{a}_1, \theta_1\right), \dots, \left(\vec{a}_{d+2}, \theta_{d+2}\right)$ as well, which contradicts the fact that $\VC\left(\hat{\cG}\right) \leq d+1$. Therefore, $\VC\left(\cG^*\right) \leq d+1$.
	
	By a similar argument, we prove that the pseudo-dimension of the dual class $\cF^*$ is at most $d+1$.  The dual class $\cF^*$ consists of functions $f_{u_{\vec{\rho}}}^*$ for all $\vec{\rho} \in \cP$ where $f_{u_{\vec{\rho}}}^*\left(f_{\vec{a}, \theta}\right) = \vec{a} \cdot \vec{\rho} + \theta$. Let $\hat{\cF} = \left\{\hat{f}_{\vec{\rho}} : \R^{d+1} \to \R\right\}$ be the class of linear functions $\hat{f}_{\vec{\rho}} : (\vec{a}, \theta) \mapsto \vec{a} \cdot \vec{\rho} + \theta$. It is well-known that $\pdim\left(\hat{\cF}\right) \leq d+1$, which we prove means that $\pdim\left(\cF^*\right) \leq d+1$. For a contradiction, suppose $\cF^*$ can shatter $d+2$ functions $f_{\vec{a}_1, \theta_1}, \dots, f_{\vec{a}_{d+2}, \theta_{d+2}} \in \cF$. Then there exist witnesses $z_1, \dots, z_{d+2}$ such that for every subset $T \subseteq [d+2]$, there exists a parameter vector $\vec{\rho}_T$ such that $\vec{a}_i \cdot \vec{\rho}_T + \theta_i \leq z_i$ if and only if $i \in T$. This means that $\hat{\cF}$ can shatter the tuples $\left(\vec{a}_1, \theta_1\right), \dots, \left(\vec{a}_{d+2}, \theta_{d+2}\right)$ as well, which contradicts the fact that $\pdim\left(\hat{\cF}\right) \leq d+1$. Therefore, $\pdim\left(\cF^*\right) \leq d+1$.
	
	The lemma statement now follows from Theorem~\ref{thm:main}.
\end{proof}

\section{Additional details about sequence alignment (Section~\ref{sec:sequence})}\label{app:sequence}
\begin{lemma}\label{lem:count}
Fix a pair of sequences $S_1, S_2 \in \Sigma^n$. There are at most $2^n n^{2n+1}$ alignments of $S_1$ and $S_2$.
\end{lemma}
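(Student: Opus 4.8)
The plan is to count alignments by first pinning down how many \emph{columns} an alignment can have, and then counting the gap patterns for a fixed number of columns. Recall that an alignment of $S_1, S_2 \in \Sigma^n$ is a pair $(\tau_1,\tau_2)$ of equal-length strings over $\Sigma \cup \{\gapchar\}$ with $\texttt{del}(\tau_1) = S_1$ and $\texttt{del}(\tau_2) = S_2$; following condition~3 of the multiple-sequence-alignment definition, which I would adopt here for the pairwise case (otherwise there are infinitely many alignments, since one may freely append all-gap columns), no column $(\tau_1[i], \tau_2[i])$ equals $(\gapchar, \gapchar)$. I would first bound the common length $L := |\tau_1| = |\tau_2|$: since $\tau_1$ contains the $n$ characters of $S_1$ in $n$ distinct columns, $L \geq n$; and since each of the $L$ columns contains at least one of the $2n$ non-gap characters appearing across $\tau_1$ and $\tau_2$, we get $L \leq 2n$. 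Hence there are at most $n+1$ possible lengths.

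Next, for a \emph{fixed} length $L = n+k$ (with $0 \leq k \leq n$), I would observe that an alignment of that length is determined by two choices. First, the set $J_1 \subseteq [L]$ of columns in which $\tau_1$ has a real (non-gap) character; since $|J_1| = n$, there are $\binom{L}{n} = \binom{n+k}{k}$ such choices. Second, the set of columns in which $\tau_2$ has a gap; because no column is all-gaps, this set must be contained in the $n$ columns where $\tau_1$ has a real character, giving at most $\binom{n}{k}$ further choices. Given these two sets, each $\tau_t$ is forced: its $r$-th non-gap column holds $S_t[r]$, and its other columns hold $\gapchar$. Thus the number of alignments of length $n+k$ is at most $\binom{n+k}{k}\binom{n}{k} \leq (2n)^k \cdot n^k = 2^k n^{2k}$, and summing this (essentially geometric) series over $k = 0, \dots, n$ yields a bound of the claimed form $2^n n^{2n+1}$; this last step is pure bookkeeping.

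The only genuine subtlety — the ``hard part'' — is the first step: recognizing that the count is finite at all, which requires fixing the convention that columns are never entirely gaps, and then noticing that this convention caps the number of columns at $2n$. After that the counting is routine and, in fact, very lossy: the true number of alignments is $\Theta(c^n)$ for an absolute constant $c$ (the central Delannoy numbers), but the much weaker bound stated here already suffices for Lemma~\ref{lem:sequence}, which only uses it inside a $\binom{\,\cdot\,}{2}$ and ultimately a logarithm.
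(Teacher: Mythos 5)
Your proof is correct and follows essentially the same route as the paper's: both count alignments by the common number of gaps, choosing the gap positions of $\tau_1$ in $\binom{n+i}{i}$ ways and then the gap positions of $\tau_2$ among the $n$ non-gap columns of $\tau_1$ in $\binom{n}{i}$ ways, and summing over the gap count. Your explicit remarks on the no-all-gap-column convention and the $n \le L \le 2n$ range just make explicit what the paper assumes implicitly, so there is no substantive difference.
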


\begin{proof}
For any alignment $\left(\tau_1, \tau_2\right)$, we know that $\left|\tau_1\right| = \left|\tau_2\right|$ and for all $i \in [|\tau_1|]$, if $\tau_1[i] = -$, then $\tau_2[i] \not= -$ and vice versa. This means that $\tau_1$ and $\tau_2$ have the same number of gaps. To prove the upper bound, we count the number of alignments $\left(\tau_1, \tau_2\right)$ where $\tau_1$ and $\tau_2$ each have exactly $i$ gaps. There are ${ n+i \choose i}$ choices for the sequence $\tau_1$. Given a sequence $\tau_1$, we can only pair a gap in $\tau_2$ with a non-gap in $\tau_1$. Since there are $i$ gaps in $\tau_2$ and $n$ non-gaps in $\tau_1$, there are ${n \choose i}$ choices for the sequence $\tau_2$ once $\tau_1$ is fixed. This means that there are ${n+i \choose i} {n \choose i} \leq 2^n n^{2n}$ alignments $\left(\tau_1, \tau_2\right)$ where $\tau_1$ and $\tau_2$ each have exactly $i$ gaps. Summing over $i \in [n]$, the total number of alignments is at most $2^n n^{2n+1}$.
\end{proof}

\seqLB*
	\begin{proof}
	To prove this theorem, we identify:
	\begin{enumerate}
		\item An alphabet $\Sigma$,
		\item A set of $N = \Theta(\log n)$ sequence pairs $\left(S_1^{(1)}, S_2^{(1)}\right), \dots, \left(S_1^{(N)}, S_2^{(N)}\right) \in \cup_{i = 1}^n\Sigma^i \times \Sigma^i$,
		\item A ground-truth alignment $L_*^{(i)}$ for each sequence pair $\left(S_1^{(i)}, S_2^{(i)}\right)$, and
		\item A set of $N$ witnesses $z_1, \dots, z_N \in \R$ such that for any subset $T \subseteq[N]$, there exists an indel penalty parameter $\rho[T]$ such that if $i \in [T]$, then $u_{0, \rho[T], 0}\left(S_1^{(i)}, S_2^{(i)}\right) < z_i$ and if $i \not\in [T]$, then $u_{0, \rho[T], 0}\left(S_1^{(i)}, S_2^{(i)}\right) \geq z_i$.
	\end{enumerate}
	
 We now describe each of these four elements in turn.
	
	\paragraph{\textbf{The alphabet $\Sigma$.}} Let $k = 2^{\left\lfloor \log  \sqrt{n/2}\right\rfloor} - 1 = \Theta(\sqrt{n})$. The alphabet $\Sigma$ consists of $4k$ characters\footnote{To simplify the proof, we use this alphabet of size $4k$, but we believe it is possible to adapt this proof to handle the case where there are only 4 characters in the alphabet.} we denote as $\left\{\texttt{a}_i, \texttt{b}_i, \texttt{c}_i, \texttt{d}_i\right\}_{i =1}^k$.
	
	\paragraph{\textbf{The set of $N = \Theta(\log n)$ sequence pairs.}}
	These $N$ sequence pairs are defined by a set of $k$ subsequence pairs $\left(t_1^{(1)}, t_2^{(1)}\right), \dots, \left(t_1^{(k)}, t_2^{(k)}\right) \in \Sigma^* \times \Sigma^*$. Each pair $\left(t_1^{(i)}, t_2^{(i)}\right)$ is defined as follows:
	\begin{itemize}
		\item The subsequence $t_1^{(i)}$ begins with $i$ $\texttt{a}_i$s followed by $\texttt{b}_i\texttt{d}_i$. For example, $t_1^{(3)} = \texttt{a}_3\texttt{a}_3\texttt{a}_3\texttt{b}_3\texttt{d}_3$.
		\item The subsequence $t_2^{(i)}$ begins with 1 $\texttt{b}_i$, followed by $i$ $\texttt{c}_i$s, followed by 1 $\texttt{d}_i$. For example,  $t_2^{(3)} = \texttt{b}_3\texttt{c}_3\texttt{c}_3\texttt{c}_3\texttt{d}_3$.
	\end{itemize}
	Therefore, $t_1^{(i)}$ and $t_2^{(i)}$ are both of length $i + 2$.
	
	We use these subsequence pairs to define a set of $N = \log(k + 1) = \Theta(\log n)$ sequence pairs. The first sequence pair, $\left(S_1^{(1)}, S_2^{(1)}\right)$ is defined as follows: $S_1^{(1)}$ is the concatenation of all subsequences $t_1^{(1)}, \dots, t_1^{(k)}$ and $S_2^{(1)}$ is the concatenation of all subsequences $t_2^{(1)}, \dots, t_2^{(k)}$:
	\[S_1^{(1)} = t_1^{(1)} t_1^{(2)} t_1^{(3)}  \cdots t_1^{(k)} \quad \text{ and } \quad S_2^{(1)} = t_2^{(1)} t_2^{(2)} t_2^{(3)}  \cdots t_2^{(k)}.\label{eq:first_seq}\] Next, $S_1^{(2)}$ and $S_2^{(2)}$ are the concatenation of every $2^{nd}$ subsequence: \[S_1^{(2)} = t_1^{(2)} t_1^{(4)} t_1^{(6)}  \cdots t_1^{(k-1)} \quad \text{ and } \quad S_2^{(2)} = t_2^{(2)} t_2^{(4)} t_2^{(6)}  \cdots t_2^{(k-1)}.\] Similarly, $S_1^{(3)}$ and $S_2^{(3)}$ are the concatenation of every $4^{th}$ subsequence: \[S_1^{(3)} = t_1^{(4)} t_1^{(8)} t_1^{(12)}  \cdots t_1^{(k-3)} \quad \text{ and } \quad S_2^{(3)} = t_2^{(4)} t_2^{(8)} t_2^{(12)}  \cdots t_2^{(k-3)}.\] Generally speaking, $S_1^{(i)}$ and $S_2^{(i)}$ are the concatenation of every $\left(2^{i-1}\right)^{th}$ subsequence: \[S_1^{(i)} = t_1^{(2^{i-1})} t_1^{(2 \cdot 2^{i-1})} t_1^{(3\cdot 2^{i-1})}  \cdots t_1^{(k+1-2^{i-1})} \quad \text{ and } \quad S_2^{(i)} = t_2^{(2^{i-1})} t_2^{(2 \cdot 2^{i-1})} t_2^{(3\cdot 2^{i-1})}  \cdots t_2^{(k+1-2^{i-1})}.\] To explain the index of the last subsequence of every pair, since $k+1$ is a power of two, we know that $k-1$ is divisible by 2, $k-3$ is divisible by 4, and more generally, $k + 1 - 2^{i-1}$ is divisible by $2^{i-1}$.
	
We claim that there are a total of $N = \log(k + 1)$ such sequence pairs. To see why, note that each sequence in the first pair $S_1^{(1)}$ and $S_2^{(1)}$ consists of $k$ subsequences. Each sequence in the second pair $S_1^{(2)}$ and $S_2^{(2)}$ consists of $\frac{k-1}{2}$ subsequences. More generally,  each sequence in the $i^{th}$ pair $S_1^{\left(k+1-2^{i-1}\right)}$ and $S_2^{\left(k + 1-2^{i-1} \right)}$ consists of $\frac{k + 1-2^{i-1}}{2^{i-1}}$ subsequences. The final pair will will consist of only one subsequence, so $\frac{k + 1-2^{i-1}}{2^{i-1}} = 1$, or in other words $i = \log (k+1)$.
		
		We also claim that each sequence has length at most $n$. This is because the longest sequence pair is the first, $\left(S_1^{(1)}, S_2^{(1)}\right)$. By definition of the subsequences $t_j^{(i)}$, these two sequences are of length $\sum_{i = 1}^k (i + 2) = \frac{1}{2}k (k+5) \leq n.$ Therefore, all $N$ sequence pairs are of length at most $n$.
	
	\begin{example}\label{ex:seq_lb}
		Suppose that $n = 128$. Then $k=2^{\left\lfloor \log  \sqrt{n/2}\right\rfloor} - 1= 7$ and $N = \log (k+1)= 3$. The three sequence pairs have the following form\footnote{The maximum length of these six strings is 42, which is smaller than 128, as required.}:
		{\small\begin{align*}
				S_1^{(1)} &= \texttt{a}_1\texttt{b}_1\texttt{d}_1\texttt{a}_2\texttt{a}_2\texttt{b}_2\texttt{d}_2\texttt{a}_3\texttt{a}_3 \texttt{a}_3\texttt{b}_3\texttt{d}_3\texttt{a}_4\texttt{a}_4\texttt{a}_4\texttt{a}_4\texttt{b}_4\texttt{d}_4\texttt{a}_5\texttt{a}_5\texttt{a}_5\texttt{a}_5\texttt{a}_5\texttt{b}_5\texttt{d}_5\texttt{a}_6\texttt{a}_6\texttt{a}_6\texttt{a}_6\texttt{a}_6\texttt{a}_6\texttt{b}_6\texttt{d}_6\texttt{a}_7\texttt{a}_7\texttt{a}_7\texttt{a}_7\texttt{a}_7\texttt{a}_7\texttt{a}_7\texttt{b}_7\texttt{d}_7\\
				S_2^{(1)} &= \texttt{b}_1\texttt{c}_1\texttt{d}_1 \texttt{b}_2\texttt{c}_2\texttt{c}_2\texttt{d}_2\texttt{b}_3\texttt{c}_3\texttt{c}_3 \texttt{c}_3\texttt{d}_3\texttt{b}_4\texttt{c}_4\texttt{c}_4\texttt{c}_4\texttt{c}_4\texttt{d}_4\texttt{b}_5\texttt{c}_5\texttt{c}_5\texttt{c}_5\texttt{c}_5\texttt{c}_5\texttt{d}_5\texttt{b}_6\texttt{c}_6\texttt{c}_6\texttt{c}_6\texttt{c}_6\texttt{c}_6\texttt{c}_6\texttt{d}_6\texttt{b}_7\texttt{c}_7\texttt{c}_7\texttt{c}_7\texttt{c}_7\texttt{c}_7\texttt{c}_7\texttt{c}_7\texttt{d}_7\\
				S_1^{(2)} &= \texttt{a}_2\texttt{a}_2\texttt{b}_2\texttt{d}_2\texttt{a}_4\texttt{a}_4\texttt{a}_4\texttt{a}_4\texttt{b}_4\texttt{d}_4\texttt{a}_6\texttt{a}_6\texttt{a}_6\texttt{a}_6\texttt{a}_6\texttt{a}_6\texttt{b}_6\texttt{d}_6\\
				S_2^{(2)} &= \texttt{b}_2\texttt{c}_2\texttt{c}_2\texttt{d}_2\texttt{b}_4\texttt{c}_4\texttt{c}_4\texttt{c}_4\texttt{c}_4\texttt{d}_4\texttt{b}_6\texttt{c}_6\texttt{c}_6\texttt{c}_6\texttt{c}_6\texttt{c}_6\texttt{c}_6\texttt{d}_6\\
				S_1^{(3)} &= \texttt{a}_4\texttt{a}_4\texttt{a}_4\texttt{a}_4\texttt{b}_4\texttt{d}_4\\
				S_2^{(3)} &= \texttt{b}_4\texttt{c}_4\texttt{c}_4\texttt{c}_4\texttt{c}_4\texttt{d}_4
		\end{align*}}
	\end{example}
	
	\paragraph{\textbf{A ground-truth alignment for every sequence pair.}} To define a ground-truth alignment for all $N$ sequence pairs, we first define two alignments per subsequence pair $\left(t_1^{(i)}, t_2^{(i)}\right)$. The resulting ground-truth alignments will be a concatenation of these alignments. The first alignment, which we denote as $\left(h_1^{(i)}, h_2^{(i)}\right)$, is defined as follows: $h_1^{(i)}$ begins with $i$ $\texttt{a}_i$s, followed by 1 $\texttt{b}_i$, followed by $i$ gap characters, followed by 1 $\texttt{d}_i$; $h_2^{(i)}$ begins with $i$ gap characters, followed by 1 $\texttt{b}_i$, followed by $i$ $\texttt{c}_i$s, followed by 1 $\texttt{d}_i$. For example, \[\begin{matrix}h_1^{(3)}=\texttt{a}_3&\texttt{a}_3&\texttt{a}_3&\texttt{b}_3&\texttt{-}&\texttt{-}&\texttt{-} & \texttt{d}_3 \\
		h_2^{(3)}=\texttt{-}&\texttt{-}&\texttt{-}&\texttt{b}_3&\texttt{c}_3&\texttt{c}_3&\texttt{c}_3 & \texttt{d}_3 \end{matrix}.\] The second alignment, which we denote as $\left(\ell_1^{(i)}, \ell_2^{(i)}\right)$, is defined as follows: $\ell_1^{(i)}$ begins with $i$ $\texttt{a}_i$s, followed by 1 $\texttt{b}_i$, followed by $i$ gap characters, followed by 1 $\texttt{d}_i$; $\ell_2^{(i)}$ begins with 1 $\texttt{b}_i$, followed by $i$ gap characters, followed by $i$ $\texttt{c}_i$s, followed by 1 $\texttt{d}_i$. For example, \[\begin{matrix}\ell_1^{(3)}=\texttt{a}_3&\texttt{a}_3&\texttt{a}_3&\texttt{b}_3&\texttt{-}&\texttt{-}&\texttt{-}& \texttt{d}_3\\
		\ell_2^{(3)}=\texttt{b}_3&\texttt{-}&\texttt{-}&\texttt{-}&\texttt{c}_3&\texttt{c}_3&\texttt{c}_3& \texttt{d}_3 \end{matrix}.\]
	
	We now use these $2k$ alignments to define a ground-truth alignment $L_*^{(i)}$ per sequence pair $\left(S_1^{(i)}, S_2^{(i)}\right)$. Beginning with the first pair $\left(S_1^{(1)}, S_2^{(1)}\right)$, where \[S_1^{(1)} = t_1^{(1)} t_1^{(2)} t_1^{(3)}  \cdots t_1^{(k)} \quad \text{ and } \quad S_2^{(1)} = t_2^{(1)} t_2^{(2)} t_2^{(3)}  \cdots t_2^{(k)},\] we define the alignment of $S_1^{(1)}$ to be $\ell_1^{(1)}h_1^{(2)}\ell_1^{(3)}h_1^{(4)}\cdots\ell_1^{(k)}$ and we define the alignment of $S_2^{(1)}$ to be $\ell_2^{(1)}h_2^{(2)}\ell_2^{(3)}h_2^{(4)}\cdots\ell_2^{(k)}$. Moving on to the second pair $\left(S_1^{(2)}, S_2^{(2)}\right)$, where \[S_1^{(2)} = t_1^{(2)} t_1^{(4)} t_1^{(6)}  \cdots t_1^{(k-1)} \quad \text{ and } \quad S_2^{(2)} = t_2^{(2)} t_2^{(4)} t_2^{(6)}  \cdots t_2^{(k-1)},\] we define the alignment of $S_1^{(2)}$ to be $\ell_1^{(2)}h_1^{(4)}\ell_1^{(6)}h_1^{(8)}\cdots\ell_1^{(k-1)}$ and we define the alignment of $S_2^{(1)}$ to be $\ell_2^{(2)}h_2^{(4)}\ell_2^{(6)}h_2^{(8)}\cdots\ell_2^{(k-1)}$. Generally speaking, each pair $\left(S_1^{(i)}, S_2^{(i)}\right)$, where \[S_1^{(i)} = t_1^{(2^{i-1})} t_1^{(2 \cdot 2^{i-1})} t_1^{(3\cdot 2^{i-1})}  \cdots t_1^{(k-2^{i-1} + 1)} \quad \text{ and } \quad S_2^{(i)} = t_2^{(2^{i-1})} t_2^{(2 \cdot 2^{i-1})} t_2^{(3\cdot 2^{i-1})}  \cdots t_2^{(k-2^{i-1} + 1)},\]
	is made up of $\frac{k+1}{2^{i-1}} - 1$ subsequences. Since $k+1$ is a power of two, this number of subsequences is odd. We define the alignment of $S_1^{(i)}$ to alternate between alignments of type $\ell_1^{(j)}$ and alignments of type $h_1^{(j')}$, beginning and ending with alignments of the first type. Specifically, the alignment $S_1^{(i)}$ is $\ell_1^{(2^{i-1})} h_1^{(2 \cdot 2^{i-1})} \ell_1^{(3\cdot 2^{i-1})} h_1^{(4\cdot 2^{i-1})} \cdots \ell_1^{(k-2^{i-1} + 1)}$. Similarly, we define the alignment of $S_2^{(i)}$ to be \[\ell_2^{(2^{i-1})} h_2^{(2 \cdot 2^{i-1})} \ell_2^{(3\cdot 2^{i-1})} h_2^{(4\cdot 2^{i-1})} \cdots \ell_2^{(k-2^{i-1} + 1)}.\]
	
	\paragraph{\textbf{The $N$ witnesses.}} We define the $N$ values that witness the shattering of these $N$ sequence pairs to be $z_1 = z_2 = \cdots = z_N = \frac{3}{4}$.
	
	\paragraph{\textbf{Shattering the $N$ sequence pairs.}} Our goal is to show that for any subset $T \subseteq[N]$, there exists an indel penalty parameter $\rho[T]$ such that if $i \in [T]$, then $u_{0, \rho[T], 0}\left(S_1^{(i)}, S_2^{(i)}\right) < \frac{3}{4}$ and if $i \not\in [T]$, then $u_{0, \rho[T], 0}\left(S_1^{(i)}, S_2^{(i)}\right) \geq \frac{3}{4}$. To prove this, we will use two helpful claims, Claims~\ref{claim:d_match} and \ref{claim:b_match}.
	
	\begin{claim}\label{claim:d_match}
		For any pair $\left(S_1^{(i)}, S_2^{(i)}\right)$ and indel parameter $\rho[2] \geq 0$, there exists an alignment \[\alignment \in \argmax_{\alignment'}  \mt\left(S_1^{(i)}, S_2^{(i)}, \alignment'\right) - \rho[2] \cdot \id\left(S_1^{(i)}, S_2^{(i)}, \alignment'\right)\] such that each $\texttt{d}_j$ character in $S_1^{(i)}$ is matched to $\texttt{d}_j$ in $S_2^{(i)}$.
	\end{claim}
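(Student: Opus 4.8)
The plan is to reduce the global question to a per-subsequence analysis, exploiting that across distinct subsequence pairs $\left(t_1^{(j)}, t_2^{(j)}\right)$ the character sets $\left\{\texttt{a}_j, \texttt{b}_j, \texttt{c}_j, \texttt{d}_j\right\}$ are pairwise disjoint. First I would record the structural consequence: in any alignment of $S_1^{(i)}$ with $S_2^{(i)}$, no matched letter pair can straddle a subsequence boundary, so the matched pairs partition by subsequence, and within a single pair $\left(t_1^{(j)}, t_2^{(j)}\right)$ the only letters that can ever be matched are $\texttt{b}_j$ and $\texttt{d}_j$. Second, since $\left|t_1^{(j)}\right| = \left|t_2^{(j)}\right| = j+2$ and $\texttt{d}_j$ is the last letter of each, I would enumerate the within-pair optima under the objective $\mt - \rho[2]\cdot\id$ (mismatches and the gap count carry zero weight): matching both $\texttt{b}_j$ and $\texttt{d}_j$ forces exactly $2j$ indels and scores $2 - 2j\rho[2]$, aligning the two length-$(j+2)$ strings position-by-position with no gaps scores $1$ (the lone match $\texttt{d}_j$, no indels), and any other within-pair alignment is dominated; hence the within-pair optimum equals $\max\{2 - 2j\rho[2],\, 1\}$ and --- crucially --- every within-pair optimal alignment matches $\texttt{d}_j$.

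The remaining, and hardest, step is a ``no gain from crossing subsequences'' lemma: whenever $U_1 V_1$ and $U_2 V_2$ satisfy $\left|U_1\right| = \left|U_2\right|$, $\left|V_1\right| = \left|V_2\right|$, and the alphabet of the $U$'s is disjoint from that of the $V$'s, the optimal objective is additive, i.e.\ $\mathrm{OPT}(U_1 V_1, U_2 V_2) = \mathrm{OPT}(U_1, U_2) + \mathrm{OPT}(V_1, V_2)$. The ``$\ge$'' direction is immediate by concatenating optimal alignments. For ``$\le$'' I would take an optimal alignment, view it as a monotone lattice path, and inspect where it crosses the line separating the $U$-part from the $V$-part (which, by the matched lengths, falls at the same coordinate in both sequences); if the path passes through the corresponding corner it decomposes and we are done, and otherwise I would reroute it through that corner, using the fact that the ``corner-cutting'' stretch of the path contains no matches (disjoint alphabets) to argue that the objective does not decrease. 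Applying this lemma repeatedly, peeling off one subsequence pair at a time, shows that $\mathrm{OPT}\left(S_1^{(i)}, S_2^{(i)}\right)$ equals the sum of the within-pair optima and is attained by a concatenation of within-pair optimal alignments; by the previous paragraph such a concatenation matches every $\texttt{d}_j$, which is exactly the claim.

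I expect the rerouting argument to be the main obstacle. Because mismatch columns carry zero weight, a corner-cutting path can trade indels for mismatches, so a naive reroute through the corner can strictly \emph{increase} the indel count. Resolving this requires a careful tally of the indels on the two sides of the corner --- using that $\left|S_1^{(i)}\right| = \left|S_2^{(i)}\right|$ and that corresponding subsequences have equal lengths --- together with the freedom to re-select the within-pair alignment (one of only the two shapes identified above) on the pair being peeled off, so that the net change in $\mt - \rho[2]\cdot\id$ is nonnegative.
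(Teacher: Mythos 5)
There is a genuine gap: the additivity lemma your plan rests on---$\mathrm{OPT}(U_1V_1,U_2V_2)=\mathrm{OPT}(U_1,U_2)+\mathrm{OPT}(V_1,V_2)$ whenever $\left|U_1\right|=\left|U_2\right|$, $\left|V_1\right|=\left|V_2\right|$, and the $U$- and $V$-alphabets are disjoint---is false for the objective $\mt-\rho[2]\cdot\id$. Take $U_1=\texttt{a}\texttt{b}$, $U_2=\texttt{b}\texttt{c}$, $V_1=\texttt{p}\texttt{q}$, $V_2=\texttt{q}\texttt{r}$ (all six hypotheses hold). Then $\mathrm{OPT}(U_1,U_2)=\mathrm{OPT}(V_1,V_2)=\max\left\{0,\,1-2\rho[2]\right\}$, so the split value is $\max\left\{0,\,2-4\rho[2]\right\}$; but the concatenation $(\texttt{a}\texttt{b}\texttt{p}\texttt{q},\ \texttt{b}\texttt{c}\texttt{q}\texttt{r})$ admits the alignment with columns $(\texttt{a},\texttt{-}),(\texttt{b},\texttt{b}),(\texttt{p},\texttt{c}),(\texttt{q},\texttt{q}),(\texttt{-},\texttt{r})$, worth $2-2\rho[2]$, which strictly exceeds the split value for every $\rho[2]\in(0,1)$. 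The crossing mismatch column $(\texttt{p},\texttt{c})$ is exactly the phenomenon you flagged: it converts two indels into zero, and no rerouting through the corner or re-selection of the within-block alignments can recover the loss, because once $\texttt{b}$ is matched the $(U_1,U_2)$-alignment cannot contain any further aligned columns. So the ``careful tally'' you hope for does not exist at this level of generality; disjoint alphabets plus equal lengths are simply not enough, and since your peeling argument invokes the lemma verbatim, the proof does not go through as written.

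This does not contradict the claim itself: for the paper's specific blocks the crossing trade is never profitable, but seeing that requires structure your lemma ignores---crossing across a \emph{matched} $\texttt{d}_j$ is impossible, crossing partners only exist when the adjacent blocks are in their ``match-$\texttt{b}$'' regime $\rho[2]\le\frac{1}{2j}$, and one must then compare the sacrificed $\texttt{d}$-match against the indel savings for blocks of \emph{different} sizes. At that point you are doing a bespoke exchange argument anyway, which is what the paper does directly and more simply: take an optimal alignment, locate the first pair of unmatched $\texttt{d}_j$'s, and shift the gap characters in the stretch since the previous matched $\texttt{d}_{j'}$ so that the $\texttt{d}_j$'s become aligned; the number of indels is unchanged, at most the $\texttt{b}_j$ match is lost while the $\texttt{d}_j$ match is gained, so the objective does not decrease. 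I recommend replacing the blockwise-additivity route with such a local gap-shifting exchange (your within-pair analysis in the second step can then be salvaged, but only after the $\texttt{d}_j$-anchoring is established, not as a means of establishing it).
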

	
	\begin{proof}
		Let	$\alignment_0 \in \argmax_{\alignment'} \mt\left(S_1^{(i)}, S_2^{(i)}, \alignment'\right) - \rho[2] \cdot \id\left(S_1^{(i)}, S_2^{(i)}, \alignment'\right)$ be an alignment such that some $\texttt{d}_j$ character in $S_1^{(i)}$ is not matched to $\texttt{d}_j$ in $S_2^{(i)}$. Denote the alignment $L_0$ as $\left(\tau_1, \tau_2\right)$.
		Let $j \in \Z$ be the smallest integer such that for some indices $\ell_1 \not= \ell_2$, $\tau_1[\ell_1] = \texttt{d}_j$ and $\tau_2[\ell_2] = \texttt{d}_j$.
		Next, let $\ell_0$ be the maximum index smaller than $\ell_{1}$ and $\ell_{2}$ such that $\tau_1[\ell_0] = \texttt{d}_{j'}$ for some $j' \not= j$. We illustrate $\ell_0, \ell_1,$ and $\ell_2$ in Figure~\ref{fig:d_match_1}.
		\begin{figure}[t]
			\centering
			\begin{subfigure}{\textwidth}
				\includegraphics[scale =.8]{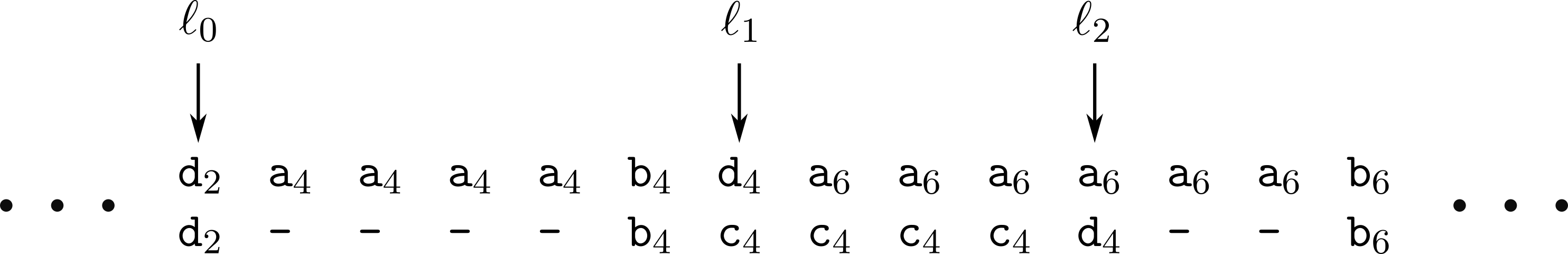}\centering
				\caption{An initial alignment where the $\texttt{d}_j$ characters are not aligned.}\label{fig:d_match_1}\vspace{7mm}
			\end{subfigure}
			\begin{subfigure}{\textwidth}
				\includegraphics[scale =.8]{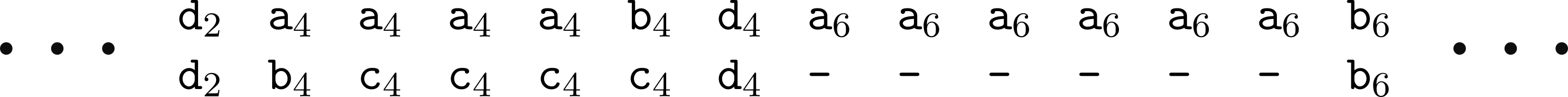}\centering
				\caption{An alignment where the gap characters in Figure~\ref{fig:d_match_1} are shifted such that the $\texttt{d}_j$ characters are aligned. The objective function value of both alignments is the same.}\label{fig:d_match_2}
			\end{subfigure}
			\caption{Illustration of Claim~\ref{claim:d_match}: we can assume that each $\texttt{d}_j$ character in $S_1^{(i)}$ is matched to $\texttt{d}_j$ in $S_2^{(i)}$.}
			\label{fig:d_match}
		\end{figure}
		By definition of $j$, we know that $\tau_1[\ell_0] = \tau_2[\ell_0] = \texttt{d}_{j'}$. We also know there is at least one gap character in $\left\{\tau_1[i] : \ell_0 + 1 \leq i \leq \ell_{1}\right\} \cup \left\{\tau_2[i] : \ell_0 + 1 \leq i \leq \ell_{2}\right\}$ because otherwise, the $\texttt{d}_j$ characters would be aligned in $L_0$. Moreover, we know there is at most one match among these elements between characters other than $\texttt{d}_j$ (namely, between the character $\texttt{b}_j$). If we rearrange all of these gap characters so that they fall directly after $\texttt{d}_j$ in both sequences, as in Figure~\ref{fig:d_match_2}, then we may lose the match between the character $\texttt{b}_j$, but we will gain the match between the character $\texttt{d}_j$. Moreover, the number of indels remains the same, and all matches in the remainder of the alignment will remain unchanged. Therefore, this rearranged alignment has at least as high an objective function value as $L_0$, so the claim holds.
	\end{proof}
	
	Based on this claim, we will assume, without loss of generality, that for any pair $\left(S_1^{(i)}, S_2^{(i)}\right)$ and indel parameter $\rho[2] \geq 0$, under the alignment $\alignment = A_{0, \rho[2], 0}\left(S_1^{(i)}, S_2^{(i)}\right)$ returned by the algorithm $A_{0, \rho[2], 0}$, all $\texttt{d}_j$ characters in $S_1^{(i)}$ are matched to $\texttt{d}_j$ in $S_2^{(i)}$.

	\begin{claim}\label{claim:b_match}
		Suppose that the character $\texttt{b}_j$ is in $S_1^{(i)}$ and $S_2^{(i)}$. The $\texttt{b}_j$ characters will be matched in $\alignment = A_{0, \rho[2], 0}\left(S_1^{(i)}, S_2^{(i)}\right)$ if and only if $\rho[2] \leq \frac{1}{2j}$.
	\end{claim}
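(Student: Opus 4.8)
The plan is to reduce the alignment of the full sequence pair to a sum of independent per-subsequence contributions and then to compute the optimal contribution of the subsequence containing the $\texttt{b}_j$ characters exactly. First I would fix the pair $\left(S_1^{(i)}, S_2^{(i)}\right)$, which by construction is a concatenation of subsequence pairs $\left(t_1^{(j')}, t_2^{(j')}\right)$ over the relevant indices $j'$, with $t_1^{(j')} = \texttt{a}_{j'}\cdots\texttt{a}_{j'}\texttt{b}_{j'}\texttt{d}_{j'}$ ($j'$ copies of $\texttt{a}_{j'}$) and $t_2^{(j')} = \texttt{b}_{j'}\texttt{c}_{j'}\cdots\texttt{c}_{j'}\texttt{d}_{j'}$ ($j'$ copies of $\texttt{c}_{j'}$). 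By Claim~\ref{claim:d_match} together with the co-optimal-constant tie-breaking convention, I would assume the output $\alignment = A_{0,\rho[2],0}\left(S_1^{(i)}, S_2^{(i)}\right)$ matches every $\texttt{d}_{j'}$ of $S_1^{(i)}$ with the corresponding $\texttt{d}_{j'}$ of $S_2^{(i)}$. Since alignments preserve character order, these matched $\texttt{d}$-columns pin the alignment into independent blocks: the characters of $S_1^{(i)}$ strictly between two consecutive matched $\texttt{d}$-columns are exactly the prefix $\texttt{a}_{j'}\cdots\texttt{a}_{j'}\texttt{b}_{j'}$ of a single $t_1^{(j')}$, and they can only be aligned against the prefix $\texttt{b}_{j'}\texttt{c}_{j'}\cdots\texttt{c}_{j'}$ of the same $t_2^{(j')}$, because every character carrying subscript $j'$ occurs nowhere else in $S_1^{(i)}$ or $S_2^{(i)}$. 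Consequently $\mt - \rho[2]\cdot\id$ splits as a sum over subsequence indices, and the algorithm maximizes each summand separately.

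Next I would analyze a single block $t_1^{(j)}$ versus $t_2^{(j)}$, charging the $\texttt{d}_j$ match to this block, and split into two cases. If the $\texttt{b}_j$ characters are matched, then in the induced alignment of the prefixes $\texttt{a}_j\cdots\texttt{a}_j\texttt{b}_j$ and $\texttt{b}_j\texttt{c}_j\cdots\texttt{c}_j$, the matched $\texttt{b}_j$ column has all $j$ copies of $\texttt{a}_j$ to its left with no $S_2$-character to pair with, and all $j$ copies of $\texttt{c}_j$ to its right with no $S_1$-character to pair with; this forces exactly $2j$ indels and a single prefix match, so including the $\texttt{d}_j$ match the block contributes $2 - 2j\rho[2]$. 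If the $\texttt{b}_j$ characters are not matched, then the prefixes contain no matchable pair at all (the only candidate is $\texttt{b}_j$ against $\texttt{b}_j$), so the best option is to align the two length-$(j+1)$ prefixes column by column as mismatches at zero indel cost (mismatches are free, since the mismatch weight is $0$), and the block contributes $1$.

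Finally, comparing the two cases, matching the $\texttt{b}_j$ characters is the optimal choice for this block exactly when $2 - 2j\rho[2] \ge 1$, i.e.\ when $\rho[2] \le \frac{1}{2j}$; for $\rho[2] < \frac{1}{2j}$ the inequality is strict, so every optimal alignment matches $\texttt{b}_j$, for $\rho[2] > \frac{1}{2j}$ it is strictly reversed, so every optimal alignment leaves $\texttt{b}_j$ unmatched, and at the single value $\rho[2] = \frac{1}{2j}$ the two options are co-optimal and tie-breaking can be taken to match (this borderline value is never used in the shattering argument, where $\rho[2]$ is always chosen strictly inside an interval). I expect the block-decomposition step to be the main obstacle: one must argue carefully that pinning all the $\texttt{d}_{j'}$ matches really does force the alignment to factor into independent per-subsequence alignments, which is exactly where the design choice that each subscripted character lives in a unique subsequence is used; the remaining per-block analysis is a short two-case computation.
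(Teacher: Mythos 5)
Your proof is correct and takes essentially the same approach as the paper's: given that all $\texttt{d}$ characters are matched, matching the $\texttt{b}_j$ pair forces exactly $2j$ indels on the $\texttt{a}_j$ and $\texttt{c}_j$ characters in exchange for one extra match, so it is worthwhile precisely when $1 \geq 2j\rho[2]$. The only difference is that you spell out the block decomposition induced by the matched $\texttt{d}$ columns and the tie-breaking at $\rho[2] = \frac{1}{2j}$, both of which the paper's two-sentence proof leaves implicit.
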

	
	\begin{proof}
		Since all $\texttt{d}_j$ characters are matched in $L$,	in order to match $\texttt{b}_j$, it is necessary to add exactly $2j$ gap characters: all $2j$ $\texttt{a}_j$ and $\texttt{c}_j$ characters must be matched with gap characters. Under the objective function $\mt\left(S_1^{(i)}, S_2^{(i)}, \alignment'\right) - \rho[2] \cdot \id\left(S_1^{(i)}, S_2^{(i)}, \alignment'\right)$, this one match will be worth the $2j\rho[2]$ penalty if and only if $1 \geq 2j\rho[2]$, as claimed.
	\end{proof}
	
	We now use Claims~\ref{claim:d_match} and \ref{claim:b_match} to prove that we can shatter the $N$ sequence pairs $\left(S_1^{(1)}, S_2^{(1)}\right)$, \dots, $\left(S_1^{(N)}, S_2^{(N)}\right)$.

	\begin{claim}
		There are $\frac{k+1}{2^{i-1}} - 1$ thresholds $\frac{1}{2(k + 1) - 2^i} < \frac{1}{2(k + 1) - 2 \cdot 2^i} < \frac{1}{2(k + 1) - 3 \cdot 2^i} < \cdots < \frac{1}{2^i}$ such that as $\rho[2]$ ranges from 0 to 1, when $\rho[2]$ crosses one of these thresholds, $u_{0, \rho[2], 0}\left(S_1^{(i)}, S_2^{(i)}\right)$ switches from above $\frac{3}{4}$ to below $\frac{3}{4}$, or vice versa, beginning with $u_{0, 0, 0}\left(S_1^{(i)}, S_2^{(i)}\right) < \frac{3}{4}$ and ending with $u_{0, 1, 0}\left(S_1^{(i)}, S_2^{(i)}\right) > \frac{3}{4}$.
	\end{claim}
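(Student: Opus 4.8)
The plan is to reduce the claim to additive bookkeeping over the $M := \tfrac{k+1}{2^{i-1}} - 1$ subsequence pairs that make up $\left(S_1^{(i)}, S_2^{(i)}\right)$; writing $j_m := m\,2^{i-1}$ for $m = 1, \dots, M$, so that the $m$-th subsequence pair is $\left(t_1^{(j_m)}, t_2^{(j_m)}\right)$ and noting that $\tfrac{k+1}{2^{i-1}}$ is a power of two that is at least $2$, so $M$ is odd. The first step is to argue that $A_{(0,\rho[2],0)}$ acts on each subsequence pair independently: since the letters $\texttt{a}_j,\texttt{b}_j,\texttt{c}_j,\texttt{d}_j$ are specific to $j$ and each string of a pair has the same length $j_m + 2$, every alignment of $\left(S_1^{(i)}, S_2^{(i)}\right)$ restricts to an alignment of each $\left(t_1^{(j_m)}, t_2^{(j_m)}\right)$, with $\mt$ and $\id$ adding over these restrictions, and conversely the restrictions can be chosen freely. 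Within one subsequence pair the only achievable matches are $\texttt{b}_{j_m}$--$\texttt{b}_{j_m}$ and $\texttt{d}_{j_m}$--$\texttt{d}_{j_m}$, so by Claim~\ref{claim:d_match} I may assume $\texttt{d}_{j_m}$ is always matched, after which there are exactly two candidate optima: the \emph{$h$-pattern}, which matches $\texttt{b}_{j_m}$ as well by aligning all $j_m$ copies of $\texttt{a}_{j_m}$ and all $j_m$ copies of $\texttt{c}_{j_m}$ against gaps (two matches, $2j_m$ indels, value $2 - 2 j_m \rho[2]$), and the gap-free column-by-column alignment (one match, zero indels, value $1$). Each is the unique alignment attaining its value — matching $\texttt{b}_{j_m}$ and $\texttt{d}_{j_m}$ forces all $2j_m$ gaps, and $\id = 0$ forces the gap-free alignment — so by Claim~\ref{claim:b_match} together with the co-optimal-constant tie-break at equality, $A_{(0,\rho[2],0)}$ outputs the $h$-pattern on subsequence $m$ if and only if $\rho[2] \le \tfrac{1}{2 j_m}$.

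Next I would compute the $Q$ score. By construction $L_*^{(i)}$ is the concatenation, over $m = 1, \dots, M$, of $\left(\ell_1^{(j_m)}, \ell_2^{(j_m)}\right)$ for $m$ odd and $\left(h_1^{(j_m)}, h_2^{(j_m)}\right)$ for $m$ even; since $M$ is odd it begins and ends with an $\ell$-type block, as the construction requires. Reading off columns, each subsequence contributes exactly two aligned letter pairs to $L_*^{(i)}$: the pair $(\texttt{d}_{j_m}, \texttt{d}_{j_m})$ always, plus $(\texttt{b}_{j_m}, \texttt{b}_{j_m})$ if $m$ is even and the pair consisting of the first $\texttt{a}_{j_m}$ of $t_1^{(j_m)}$ against the $\texttt{b}_{j_m}$ of $t_2^{(j_m)}$ if $m$ is odd; thus $L_*^{(i)}$ has $2M$ aligned pairs. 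Comparing with the algorithm's output, $(\texttt{d}_{j_m}, \texttt{d}_{j_m})$ is always reproduced, the $h$-pattern additionally reproduces $(\texttt{b}_{j_m}, \texttt{b}_{j_m})$ but never the $(\texttt{a}_{j_m}, \texttt{b}_{j_m})$ pair, and the gap-free alignment reproduces that $(\texttt{a}_{j_m}, \texttt{b}_{j_m})$ pair but not $(\texttt{b}_{j_m}, \texttt{b}_{j_m})$. Hence subsequence $m$ contributes $2$ correct pairs exactly when $m$ is even and $\texttt{b}_{j_m}$ is matched, or $m$ is odd and $\texttt{b}_{j_m}$ is not matched, and $1$ correct pair otherwise.

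Finally I would track the single integer $r = r(\rho[2])$, the number of subsequences on which $\texttt{b}$ is matched. Because $\rho[2] \le \tfrac{1}{2 j_m}$ is equivalent to $m \le \tfrac{1}{2^i \rho[2]}$, this set of subsequences is always an initial segment $\{1, \dots, r\}$; the relevant thresholds are the $M$ values $\tfrac{1}{2 j_m} = \tfrac{1}{m\, 2^i}$, which in increasing order are $\tfrac{1}{2(k+1) - 2^i} < \tfrac{1}{2(k+1) - 2\cdot 2^i} < \cdots < \tfrac{1}{2^i}$ (the $m = M$ threshold is smallest since $M\,2^i = 2(k+1) - 2^i$), and $r$ decreases from $M$ to $0$ by exactly one as $\rho[2]$ passes each of them. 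Summing the per-subsequence contributions — $M$ from the $\texttt{d}$-pairs, $\lfloor r/2 \rfloor$ from even $m \le r$, and $\tfrac{M+1}{2} - \lceil r/2 \rceil$ from odd $m > r$ — the number of correctly reproduced pairs is $M + \tfrac{M+1}{2} - (r \bmod 2) = \tfrac{3M+1}{2} - (r \bmod 2)$, so $u_{(0,\rho[2],0)}\left(S_1^{(i)}, S_2^{(i)}\right)$ equals $\tfrac{3M+1}{4M} > \tfrac34$ when $r$ is even and $\tfrac{3M-1}{4M} < \tfrac34$ when $r$ is odd, and is constant on each of the $M+1$ intervals cut out by the thresholds. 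Since $r(0) = M$ is odd, $r(1) = 0$ is even, and $r$ flips parity at each threshold, the value starts below $\tfrac34$, ends above $\tfrac34$, and switches sides at each threshold, which is the claim. The delicate part is the per-subsequence analysis — certifying that the $h$-pattern and the gap-free alignment are the only optima, so that the algorithm's output (hence each subsequence's contribution to the $Q$ score) is genuinely pinned down by whether $\texttt{b}_{j_m}$ is matched — together with getting the parity and sign bookkeeping exactly right; everything after that is additive accounting.
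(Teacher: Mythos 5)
Your proposal is correct and takes essentially the same route as the paper's own proof: both rest on Claims~\ref{claim:d_match} and~\ref{claim:b_match} to determine, on each of the $M = \frac{k+1}{2^{i-1}}-1$ threshold-delimited intervals, exactly which ground-truth pairs (the $\texttt{d}$-pairs always, plus per block either the $(\texttt{b},\texttt{b})$ pair or the $(\texttt{a},\texttt{b})$ pair) the algorithm's output reproduces, and then count. Your parity-of-$r$ bookkeeping is simply a cleaner closed form of the paper's interval-by-interval walk, yielding the values $\frac{3M+1}{4M} > \frac{3}{4}$ and $\frac{3M-1}{4M} < \frac{3}{4}$ (and, incidentally, the latter is the corrected value for the paper's third displayed utility, whose numerator and inequality sign appear to contain a typo).
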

	\begin{proof}
		Recall that \[S_1^{(i)} = t_1^{(2^{i-1})} t_1^{(2 \cdot 2^{i-1})} t_1^{(3\cdot 2^{i-1})}  \cdots t_1^{(k-2^{i-1} + 1)} \quad \text{ and } \quad S_2^{(i)} = t_2^{(2^{i-1})} t_2^{(2 \cdot 2^{i-1})} t_2^{(3\cdot 2^{i-1})}  \cdots t_2^{(k-2^{i-1} + 1)},\] so
		in $S_1^{(i)}$ and $S_2^{(i)}$, the $\texttt{b}_j$ characters are $\texttt{b}_{2^{i-1}}, \texttt{b}_{2 \cdot 2^{i-1}}, \texttt{b}_{3 \cdot 2^{i-1}}, \dots, \texttt{b}_{k - 2^{i-1} + 1}$. Also, the reference alignment of $S_1^{(i)}$ is $\ell_1^{(2^{i-1})} h_1^{(2 \cdot 2^{i-1})} \ell_1^{(3\cdot 2^{i-1})} h_1^{(4\cdot 2^{i-1})} \cdots \ell_1^{(k-2^{i-1} + 1)}$ and the reference alignment of $S_2^{(i)}$ is \[\ell_2^{(2^{i-1})} h_2^{(2 \cdot 2^{i-1})} \ell_2^{(3\cdot 2^{i-1})} h_2^{(4\cdot 2^{i-1})} \cdots \ell_2^{(k-2^{i-1} + 1)}.\] We know that when the indel penalty $\rho[2]$ is equal to zero, all $\texttt{d}_j$ characters will be aligned, as will all $\texttt{b}_j$ characters. This means we will correctly align all $\texttt{d}_j$ characters and we will correctly align all $\texttt{b}_j$ characters in the $\left(h_1^{(j)}, h_2^{(j)}\right)$ pairs, but we will incorrectly align the $\texttt{b}_j$ characters in the $\left(\ell_1^{(j)}, \ell_2^{(j)}\right)$ pairs. The number of $\left(h_1^{(j)}, h_2^{(j)}\right)$ pairs in this reference alignment is $\frac{k+1}{2^{i}} -1$ and the number of $\left(\ell_1^{(j)}, \ell_2^{(j)}\right)$ pairs is $\frac{k+1}{2^{i}}$. Therefore, the utility of the alignment that maximizes the number of matches equals the following: \[u_{0, 0, 0}\left(S_1^{(i)}, S_2^{(i)}\right) = \frac{\frac{k+1}{2^{i-1}} -1 + \frac{k+1}{2^{i}} -1}{\frac{k+1}{2^{i-2}} -2} = \frac{3(k +1) - 2^{i+1}}{4(k +1) - 2^{i+1}} < \frac{3}{4},\] where the final inequality holds because $2^{i+1} \leq 2(k+1) < 3 (k+1)$.
		
		Next, suppose we increase $\rho[2]$ to lie in the interval $\left(\frac{1}{2(k + 1) - 2^i}, \frac{1}{2(k + 1) - 2 \cdot 2^i}\right]$. Since it is no longer the case that $\rho[2] \leq \frac{1}{2\left(k - 2^{i-1} + 1\right)}$, we know that the $\texttt{b}_{k - 2^{i-1} + 1}$ characters will no longer be matched, and thus we will correctly align this character according to the reference alignment. This means we will correctly align all $\texttt{d}_j$ characters and we will correctly align all $\texttt{b}_j$ characters in the $\left(h_1^{(j)}, h_2^{(j)}\right)$ pairs, but we will incorrectly align all but one of the $\texttt{b}_j$ characters in the $\left(\ell_1^{(j)}, \ell_2^{(j)}\right)$ pairs. Therefore, the utility of the alignment that maximizes $\mt(S_1, S_2, \alignment) - \rho[2] \cdot \id(S_1, S_2, \alignment)$ is  \[u_{0, \rho[2], 0}\left(S_1^{(i)}, S_2^{(i)}\right) = \frac{\frac{k+1}{2^{i-1}} -1 + \frac{k+1}{2^{i}}}{\frac{k+1}{2^{i-2}} -2} = \frac{3(k +1) - 2^{i}}{4(k +1) - 2^{i+1}} > \frac{3}{4},\] where the final inequality holds because $2^{i+1} \leq 2(k + 1) < 4(k+1)$.
		
		Next, suppose we increase $\rho[2]$ to lie in the interval $\left(\frac{1}{2(k + 1) - 2\cdot 2^i}, \frac{1}{2(k + 1) - 3 \cdot 2^i}\right]$. Since it is no longer the case that $\rho[2] \leq \frac{1}{2\left(k - 2\cdot 2^{i-1} + 1\right)}$, we know that the $\texttt{b}_{k - 2\cdot 2^{i-1} + 1}$ characters will no longer be matched, and thus we will incorrectly align this character according to the reference alignment. This means we will correctly align all $\texttt{d}_j$ characters and we will correctly align the $\texttt{b}_j$ characters in all but one of the $\left(h_1^{(j)}, h_2^{(j)}\right)$ pairs, but we will incorrectly align all but one of the $\texttt{b}_j$ characters in the $\left(\ell_1^{(j)}, \ell_2^{(j)}\right)$ pairs. Therefore, the utility of the alignment that maximizes $\mt(S_1, S_2, \alignment) - \rho[2] \cdot \id(S_1, S_2, \alignment)$ is \[u_{0, \rho[2], 0}\left(S_1^{(i)}, S_2^{(i)}\right) = \frac{\frac{k+1}{2^{i-1}} -1 + \frac{k+1}{2^{i}}}{\frac{k+1}{2^{i-2}} -2} > \frac{3}{4}.\]
		
		In a similar fashion, every time $\rho[2]$ crosses one of the thresholds $\frac{1}{2(k + 1) - 2^i} < \frac{1}{2(k + 1) - 2 \cdot 2^i} < \frac{1}{2(k + 1) - 3 \cdot 2^i} < \cdots < \frac{1}{2^i}$, the utility will shift from above $\frac{3}{4}$ to below or vice versa, as claimed.
	\end{proof}
	
	The above claim demonstrates that the $N$ sequence pairs are shattered, each with the witness $\frac{3}{4}$. After all, for every $i \in \{2, \dots, N\}$ and every interval $\left(\frac{1}{2(k+1) -j2^i}, \frac{1}{2(k+1) -(j + 1)2^i}\right)$ where $u_{0, \rho[2], 0}\left(S_1^{(i)}, S_2^{(i)}\right)$ is uniformly above or below $\frac{3}{4}$, there exists a subpartition of this interval into the two intervals \[\left(\frac{1}{2(k+1) -j2^i}, \frac{1}{2(k+1) -(2j + 1)2^{i-1}}\right) \text{ and }\left(\frac{1}{2(k+1) -(2j+1)2^{i-1}}, \frac{1}{2(k+1) -(j + 1)2^i}\right)\] such that in the first interval, $u_{0, \rho[2], 0}\left(S_1^{(i - 1)}, S_2^{(i - 1)}\right) < \frac{3}{4}$ and in the second, $u_{0, \rho[2], 0}\left(S_1^{(i - 1)}, S_2^{(i - 1)}\right) > \frac{3}{4}$. Therefore, for any subset $T \subseteq[N]$, there exists an indel penalty parameter $\rho[T]$ such that if $i \in [T]$, then $u_{0, \rho[T], 0}\left(S_1^{(i)}, S_2^{(i)}\right) < \frac{3}{4}$ and if $i \not\in [T]$, then $u_{0, \rho[T], 0}\left(S_1^{(i)}, S_2^{(i)}\right) > \frac{3}{4}$.
\end{proof}

\subsection{Tighter guarantees for a structured algorithm subclass: sequence alignment using hidden Markov models}
While we focused on the affine gap model in the previous section, which was inspired by the results in~\citet*{Gusfield94:Parametric}, 
the result in~\citet*{Pachter04:Parametric} helps to provide uniform convergence guarantees for any alignment scoring function that can be modeled as a \emph{hidden Markov model (HMM)}.
A bound on the number of parameter choices that emit distinct sets of co-optimal alignments in that work is found by taking an algebraic view of the alignment HMM with $d$ tunable parameters.
In fact, the bounds provided can be used to provide guarantees for many types of HMMs.

\begin{lemma}\label{lem:sequence_tighter_highd}
Let $\left\{A_{\vec{\rho}} \mid \vec{\rho} \in \R^d\right\}$ be a set of co-optimal-constant algorithms and let $u$ be a utility function mapping tuples $(S_1, S_2, L)$ of sequence pairs and alignments to the interval $[0,1]$. Let $\cU$ be the set of functions $\cU = \left\{u_{\vec{\rho}} : (S_1, S_2) \mapsto u\left(S_1, S_2, A_{\vec{\rho}}\left(S_1, S_2\right)\right) \mid \vec{\rho} \in \R\right\}$ mapping sequence pairs $S_1, S_2 \in \Sigma^n$ to $[0,1]$.
For some constant $c_1 > 0$, the dual class $\cU^*$ is $\left(\cF, \cG, c_1^2n^{2d(d-1)/(d+1)}\right)$-piecewise decomposable, where $\cG = \{g_{\vec{a}} : \cU \to \{0,1\} \mid \vec{a} \in \R^{d+1}\}$ consists of halfspace indicator functions $g_{\vec{a}} : u_{\vec{\rho}} \mapsto \ind{a_1\rho[1] + ... + a_d \rho[d] < a_{d+1}}$ and $\cF = \{f_c : \cU \to \R \mid c \in \R\}$ consists of constant functions $f_c : u_{\vec{\rho}} \mapsto c$.
\end{lemma}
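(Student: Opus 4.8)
The plan is to mirror the structure of the proof of Lemma~\ref{lem:sequence}, replacing the generic counting bound of Lemma~\ref{lem:count} with the stronger algebraic bound on the number of distinct co-optimal alignment sets coming from \citet{Pachter04:Parametric}. Fix a sequence pair $S_1, S_2 \in \Sigma^n$, and let $\cL = \left\{A_{\vec{\rho}}(S_1, S_2) \mid \vec{\rho} \in \R^d\right\}$ be the set of alignments the algorithm family can output as we range over all parameter vectors. The key external input is that when the alignment objective is modeled as a hidden Markov model with $d$ tunable parameters, the number of distinct co-optimal alignment sets — equivalently, the number of full-dimensional regions of the parameter space on which the output is constant — is $O\left(n^{2d(d-1)/(d+1)}\right)$; this is the algebraic-geometry bound from \citet{Pachter04:Parametric} on the number of vertices of the associated Newton polytope. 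Since each region corresponds to a fixed output alignment (using the co-optimal-constant tie-breaking assumption), we get $|\cL| = O\left(n^{2d(d-1)/(d+1)}\right)$, say $|\cL| \leq c_1 n^{d(d-1)/(d+1)}$ for some constant $c_1 > 0$.

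Next I would argue exactly as in Lemma~\ref{lem:sequence} that the output alignment is determined by a sign pattern of halfspaces. For any alignment $L \in \cL$, the algorithm returns $L$ if and only if its objective value under $\vec{\rho}$ strictly exceeds that of every other $L' \in \cL$; since the objective $\sum_{i=1}^d \rho[i]\,\ell_i(S_1, S_2, L)$ is linear in $\vec{\rho}$, each such comparison is a halfspace in $\R^d$ (after moving the threshold term, a halfspace of the form $\ind{a_1\rho[1] + \cdots + a_d\rho[d] < a_{d+1}}$, which is the form in the lemma statement, allowing for an affine term). There are at most $\binom{|\cL|}{2} \leq c_1^2 n^{2d(d-1)/(d+1)}$ such hyperplanes, and across all parameter vectors in a single connected component of $\R^d$ minus this arrangement, $A_{\vec{\rho}}(S_1, S_2)$ is fixed, hence $u_{\vec{\rho}}(S_1, S_2)$ is a fixed constant $c_R$ on each component $R$. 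By definition of the dual, $u_{S_1,S_2}^*(u_{\vec{\rho}}) = c_R$ as well.

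Finally, I would package this into the $(\cF, \cG, k)$-piecewise decomposability formalism of Definition~\ref{def:decomposable_sign} verbatim as in Lemma~\ref{lem:sequence}: order the $k := \binom{|\cL|}{2} \leq c_1^2 n^{2d(d-1)/(d+1)}$ halfspaces as $g^{(1)}, \dots, g^{(k)} \in \cG$, and for each bit vector $\vec{b} \in \{0,1\}^k$ set $f^{(\vec{b})} = f_{c_R}$ if $\vec{b}$ is the sign pattern of some connected component $R$ and $f^{(\vec{b})} = f_0$ otherwise, so that $u_{S_1,S_2}^*(u_{\vec{\rho}}) = \sum_{\vec{b}} \ind{g^{(i)}(u_{\vec{\rho}}) = b[i],\,\forall i \in [k]}\, f^{(\vec{b})}(u_{\vec{\rho}})$ for all $\vec{\rho}$. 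Since this holds for every sequence pair, $\cU^*$ is $\left(\cF, \cG, c_1^2 n^{2d(d-1)/(d+1)}\right)$-piecewise decomposable. The only substantive obstacle is correctly invoking the \citet{Pachter04:Parametric} bound: one must check that their algebraic count of distinct co-optimal alignment vertex sets for an $d$-parameter alignment HMM really does give the stated $O\left(n^{2d(d-1)/(d+1)}\right)$ exponent and that it applies under our co-optimal-constant tie-breaking convention; everything after that is a routine repackaging of the Lemma~\ref{lem:sequence} argument.
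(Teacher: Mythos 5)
Your proposal is correct and follows essentially the same route as the paper: invoke the \citet{Pachter04:Parametric} bound of $O\left(n^{d(d-1)/(d+1)}\right)$ on the number of distinct output alignments $|\cL|$, introduce the $\binom{|\cL|}{2} \leq c_1^2 n^{2d(d-1)/(d+1)}$ pairwise-comparison halfspaces, and repackage exactly as in Lemma~\ref{lem:sequence}. Note only the small slip where you first state the external bound with exponent $2d(d-1)/(d+1)$; the factor $2$ in the exponent should arise solely from squaring $|\cL| \leq c_1 n^{d(d-1)/(d+1)}$, which is the bound you in fact use thereafter.
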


\begin{proof}
Fix a sequence pair $S_1$ and $S_2$ and consider the function $u_{S_1,S_2}^* : \cU \to \R$ from the dual class $\cU^*$, where $u_{S_1, S_2}^*(u_{\vec{\rho}}) = u_{\vec{\rho}}(S_1, S_2)$. 
Consider the set of alignments $\cL_{S_1, S_2} = \{A_{\vec{\rho}}(S_1, S_2) \mid \vec{\rho} \in \R^d\}$.
There are at most 
$O\left(n^{d(d-1)/(d+1)}\right)$ 
sets of co-optimal solutions as we range $\vec{\rho}$ over $\R^d$ \citep{Pachter04:Parametric}.
The remainder of the proof is analogous to that for Lemma~\ref{lem:sequence_tighter}.
\end{proof}

Finally the results of Lemma~\ref{lem:sequence_tighter_highd} imply the following pseudo-dimension bound.

\begin{cor}\label{cor:sequence_tighter_highd}
Let $\left\{A_{\vec{\rho}} \mid \vec{\rho} \in \R^d\right\}$ be a set of co-optimal-constant algorithms and let $u$ be a utility function mapping tuples $(S_1, S_2, L)$ to $[0,H]$. Let $\cU$ be the set of functions \[\cU = \left\{u_{\vec{\rho}} : (S_1, S_2) \mapsto u\left(S_1, S_2, A_{\vec{\rho}}\left(S_1, S_2\right)\right) \mid \vec{\rho}\in \R^d\right\}\] mapping sequence pairs $S_1, S_2 \in \Sigma^n$ to $[0,1]$. Then $\pdim(\cU) = O\left(d^2 \ln n\right).$
\end{cor}
\subsection{Additional details about progressive multiple sequence alignment (Section~\ref{sec:multisequence})}\label{app:multisequence}
\begin{algorithm}
\caption{Progressive Alignment Algorithm \textsc{ProgressiveAlignment}}
\label{alg:progressivealignment}
	\begin{algorithmic}
\Require{Binary guide tree $G$, pairwise sequence alignment algorithm $A_{\vec{\rho}}$}
\State {Let $v_1, \dots, v_m$ be an ordering of the nodes in $G$ from deepest to shallowest, with nodes of the same depth ordered arbitrarily}
\For{$i \in \{1, \dots, m\}$}  \Comment {Compute the consensus sequences}
	  \If{$v_i$ is a leaf}
	\State{Set $\sigma'_{v_i}$ to be the leaf's sequence}
	\Else
	\State{Let $c_1$ and $c_2$ be the children of $v_i$}
	\State{Compute the pairwise alignment $L'_{v_i}= A_{\vec{\rho}}\left(\sigma'_{c_1}, \sigma'_{c_2}\right)$}
	\State{Set $\sigma'_{v_i}$ to be the consensus sequence of $L'_{v_i}$ (as in Definition~\ref{def:consensus})}
	\EndIf
\EndFor

\State{set $\sigma_{v_m} = \sigma'_{v_m}$}\Comment{note $v_m$ is the root of $G$}
\For{$i \in \{m, \dots, 1\}$}  \Comment {Compute the alignment sequences}
	\If{$v_i$ is not a leaf}
	\State{Let $L'_{v_i} = \left(\tau'_1, \tau'_2\right)$ be the alignment sequences computed at $v_i$}
	\State{Let $c_1$ and $c_2$ be the children of $v_i$}
	\State{Set $\sigma_{c_1} = \sigma_{c_2} = $ ``''}
	\State{Set $k = 0$}
	\For{$j \in \left[\left|\sigma_{v_i}\right|\right]$}
		\If{$\sigma_{v_i}[j] = $ `-'}
		   \State{Append `-' to the end of both $\sigma_{c_1}$ and $\sigma_{c_2}$}
		\Else
		    \State{Append  $\tau'_1[k]$ to the end of  $\sigma_{c_1}$}
		    \State{Append  $\tau'_2[k]$ to the end of  $\sigma_{c_2}$}
		    \State{Increment $k$ by 1}
		\EndIf
	\EndFor
	\EndIf
\EndFor	
	\For{$i \in \{1, \dots, m\}$} \Comment{Construct the final alignment}
	  \If{$v_i$ is a leaf representing sequence $S_j$}
	    \State{Set $\tau_j = \sigma_{v_i}$}
	   \EndIf
	 \EndFor
\end{algorithmic}
\end{algorithm}

	\begin{definition}\label{def:consensus}
		Let  $\left(\tau_1, \tau_2\right)$ be a sequence alignment. The \emph{consensus sequence} of this alignment is the sequence $\sigma \in \Sigma^*$ where $\sigma[j]$ is the
		most-frequent non-gap character in the $j^\text{th}$ position in the alignment (breaking ties in a fixed but arbitrary way). For example, the consensus sequence of the alignment \[\begin{bmatrix}
			\texttt{A} & \texttt{T} & \texttt{\texttt{-}} & \texttt{C}\\
			\texttt{G} & \texttt{-} &\texttt{C} & \texttt{C}\\\end{bmatrix}\] is \texttt{ATCC} when ties are broken in favor of \texttt{A} over \texttt{G}.
	\end{definition}

\begin{figure}[t]
\centering
\begin{subfigure}{.47\textwidth}
\includegraphics[width=\textwidth]{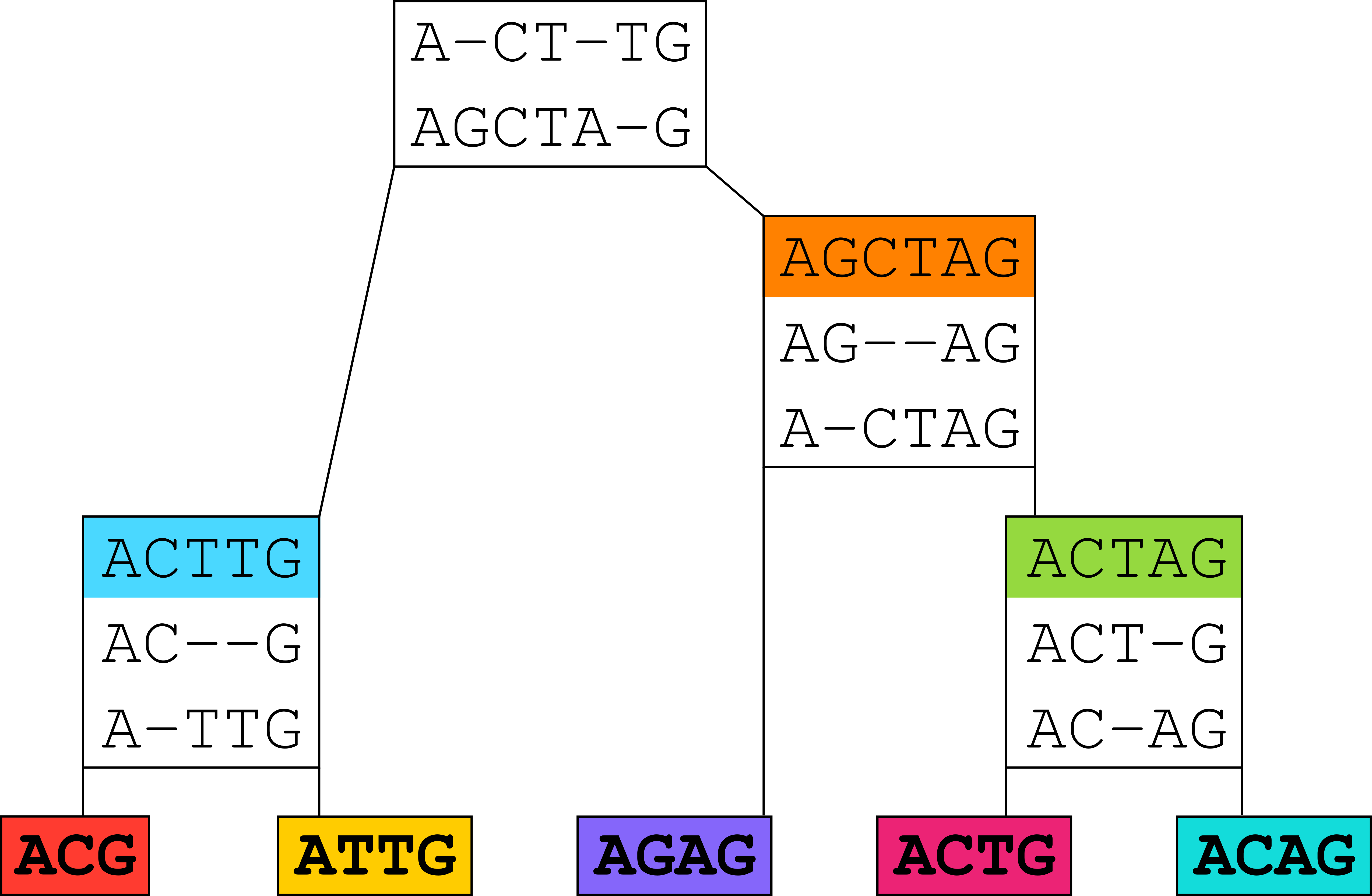}\centering
\caption{A guide tree}\label{fig:tree}
\end{subfigure}\qquad
\begin{subfigure}{.47\textwidth}
\includegraphics[width=\textwidth]{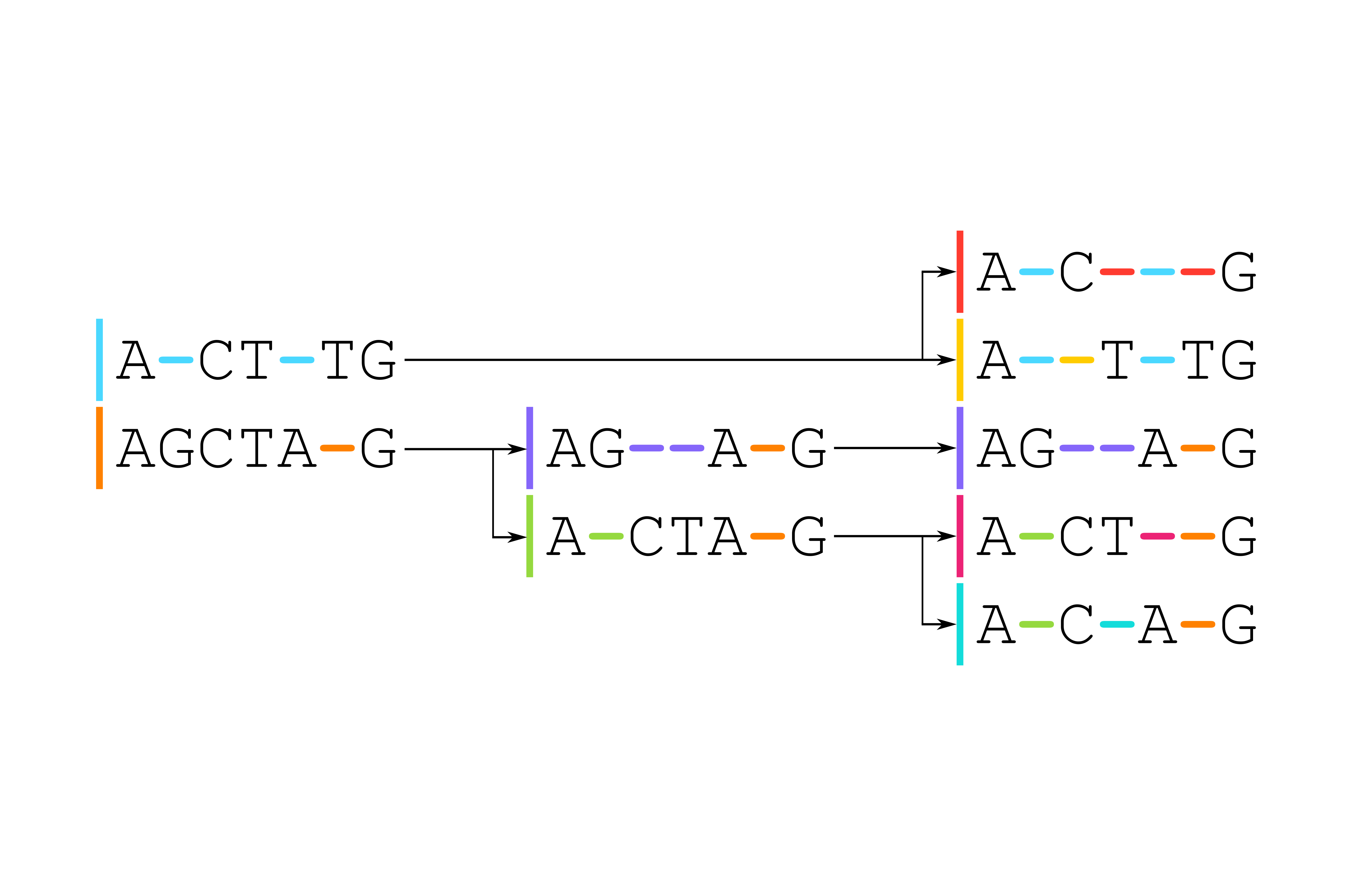}\centering
\caption{Constructing the alignment using the guide tree}\label{fig:alignment_from_tree}
\end{subfigure}
\caption{This figure illustrates an example of the progressive sequence alignment algorithm in action. Figure~\ref{fig:tree} depicts a completed guide tree. The five input sequences are represented by the leaves. Each internal leaf, depicts an alignment of the (consensus) sequences contained in the leaf's children. Each internal leaf other than the root also contains the consensus sequence corresponding to that alignment. Figure~\ref{fig:alignment_from_tree} illustrates how to extract an alignment of the five input strings (as well as the consensus strings) from Figure~\ref{fig:tree}.}
\label{fig:progressive}
\end{figure}
Figure~\ref{fig:progressive} illustrates an example of this algorithm in action, and corresponds to the psuedo-code given in Algorithm~\ref{alg:progressivealignment}.
The first loop matches with Figure~\ref{fig:tree}, the second and third match with Figure~\ref{fig:alignment_from_tree}.

\progressive*

\begin{proof}
	A key step in the proof of Lemma~\ref{lem:sequence} for pairwise alignment shows that for any pair of
	sequences $S_1, S_2 \in \Sigma^n$, we can find a set $\cH$ of $4^n n^{4n + 2}$
	hyperplanes such that for any pair $\vec{\rho}$ and $\vec{\rho'}$
	belonging to the same connected component of $\reals^d \setminus \cH$, we have
	$A_{\vec{\rho}}(S_1, S_2) = A_{\vec{\rho'}}(S_1, S_2)$. We use this result
	to prove
	the following claim.
	
	\begin{claim}\label{claim:induction}
		For each node $v$ in the guide tree, there is a set $\cH_v$ of  hyperplanes where for any connected component $R$ of $\R^d \setminus \cH_v$, the alignment and consensus sequence computed by $M_{\vec{\rho}}$ is fixed across all $\vec{\rho} \in R$. Moreover, the size of $\cH_v$ is bounded as follows:
		\[\left|\cH_v\right| \leq \ell^{d^{\height{v}}}\left(\ell4^d\right)^{\left(d^{\height{v}} - 1\right)/(d - 1)},\] where $\ell:= 4^{n\kappa} (n\kappa)^{4n\kappa + 2}$.
	\end{claim}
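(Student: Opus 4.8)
The plan is to prove Claim~\ref{claim:induction} by structural induction on the guide tree, working from the leaves up to the root, mirroring the single-application argument in Lemma~\ref{lem:sequence}. Fix a collection of input sequences $S_1, \dots, S_\kappa \in \Sigma^n$ and a guide tree $G$. The base case is a leaf $v$: here $M_{\vec{\rho}}$ performs no computation---the consensus sequence is just the input sequence itself, and the trivial alignment does not depend on $\vec{\rho}$---so we may take $\cH_v = \emptyset$, and the bound holds vacuously since $\height{v} = 0$ gives an exponent of $0$ on the first factor and $(d^0 - 1)/(d-1) = 0$ on the second.

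For the inductive step, consider an internal node $v$ with children $c_1$ and $c_2$. By the inductive hypothesis, there are hyperplane sets $\cH_{c_1}$ and $\cH_{c_2}$ such that within any connected component of $\R^d \setminus (\cH_{c_1} \cup \cH_{c_2})$, the consensus sequences $\sigma'_{c_1}$ and $\sigma'_{c_2}$ are fixed. Now fix one such component $R$, so $\sigma'_{c_1}$ and $\sigma'_{c_2}$ are fixed strings over $R$; each has length at most $n\kappa$ (since each consensus sequence for the subtree rooted at a child is no longer than the total length of all input sequences in that subtree). Applying the key step from Lemma~\ref{lem:sequence} to the \emph{pair} $(\sigma'_{c_1}, \sigma'_{c_2})$, there is a set $\cH_{c_1, c_2, R}$ of at most $\ell := 4^{n\kappa}(n\kappa)^{4n\kappa+2}$ hyperplanes such that across all $\vec{\rho}$ in a single connected component of $R \setminus \cH_{c_1,c_2,R}$, the pairwise alignment $A_{\vec{\rho}}(\sigma'_{c_1}, \sigma'_{c_2})$---and hence $L'_v$ and the consensus sequence $\sigma'_v$---is fixed. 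We then set $\cH_v = \cH_{c_1} \cup \cH_{c_2} \cup \bigcup_R \cH_{c_1,c_2,R}$, where the union is over all connected components $R$ of $\R^d \setminus (\cH_{c_1} \cup \cH_{c_2})$; within any connected component of $\R^d \setminus \cH_v$, the alignment and consensus sequence at $v$ are fixed.

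The remaining work is the arithmetic bounding $|\cH_v|$. A set of $m$ hyperplanes in $\R^d$ partitions it into at most $O(m^d)$ regions (this is a standard fact; we can absorb constants into the base of the exponent), so $\R^d \setminus (\cH_{c_1}\cup\cH_{c_2})$ has at most roughly $(|\cH_{c_1}| + |\cH_{c_2}|)^d$ components, and hence $|\cH_v| \le |\cH_{c_1}| + |\cH_{c_2}| + \ell \cdot (|\cH_{c_1}|+|\cH_{c_2}|)^d$. Using the inductive hypothesis to bound $|\cH_{c_1}|$ and $|\cH_{c_2}|$ by $B_{h-1} := \ell^{d^{h-1}}(\ell 4^d)^{(d^{h-1}-1)/(d-1)}$ where $h = \height{v}$, the dominant term is $\ell \cdot (2 B_{h-1})^d$, and one checks that $\ell \cdot (2B_{h-1})^d \le \ell^{d^h}(\ell 4^d)^{(d^h - 1)/(d-1)}$: the $\ell^{d^{h-1}\cdot d} = \ell^{d^h}$ piece matches, and the factor $\ell \cdot 2^d \le \ell 4^d$ together with $(\ell 4^d)^{d(d^{h-1}-1)/(d-1)}$ combines to $(\ell 4^d)^{(d^h-1)/(d-1)}$ since $d\cdot\frac{d^{h-1}-1}{d-1} + 1 = \frac{d^h - 1}{d-1}$.

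I expect the main obstacle to be precisely this last bookkeeping: making sure the recursion $|\cH_v| \le |\cH_{c_1}| + |\cH_{c_2}| + \ell\cdot(|\cH_{c_1}| + |\cH_{c_2}|)^d$ closes under the claimed closed form, being careful that (a) the ``number of regions a hyperplane arrangement induces'' bound is invoked with the right constant so it can be folded into the $4^d$ slack, (b) the length bound $n\kappa$ on consensus sequences is justified (once a gap, always a gap, so lengths only grow, but are bounded by the sum of input lengths), and (c) the exponent identity $d\cdot\frac{d^{h-1}-1}{d-1}+1 = \frac{d^h-1}{d-1}$ is applied correctly at each level. Everything else is a direct lift of the Lemma~\ref{lem:sequence} argument applied to consensus-sequence pairs rather than raw input pairs. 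Once Claim~\ref{claim:induction} is established, instantiating it at the root $v_m$ (whose height is at most the tree depth $\eta$) gives a set of at most $\ell^{d^\eta}(\ell 4^d)^{(d^\eta - 1)/(d-1)} = O\big((4^{n\kappa}(n\kappa)^{4n\kappa+2})^{2d^\eta} 4^{d^{\eta+1}}\big)$ hyperplanes partitioning $\R^d$ into regions where $M_{\vec{\rho}}(S_1,\dots,S_\kappa,G)$---and therefore $u_{\vec{\rho}}$---is constant, and the piecewise-decomposability statement then follows by exactly the packaging argument at the end of the proof of Lemma~\ref{lem:sequence} (defining a constant piece function $f_c$ for each sign pattern realized by these halfspaces, and $f_0$ for the rest).
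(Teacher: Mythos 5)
Your proposal is correct and follows essentially the same route as the paper's proof: induction over the guide tree, invoking the hyperplane-counting step of Lemma~\ref{lem:sequence} on the children's (fixed) consensus sequences of length at most $n\kappa$ within each cell of the children's arrangement, bounding the number of cells of that arrangement by roughly $(2s)^d$ (the paper cites Buck's theorem for $(2s+1)^d \leq (3s)^d$), and closing the recursion $\left|\cH_v\right| \leq 2s + \ell \cdot (\text{number of cells}) \leq \ell(4s)^d$ via the same geometric-series exponent identity. The only difference is cosmetic bookkeeping of the region-counting constant, which you correctly flag and which the $4^d$ slack absorbs exactly as in the paper.
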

	
	Before we prove Claim~\ref{claim:induction}, we remark that the longest consensus
	sequence computed for any node $v$ of the guide tree has length at most
	$n \kappa$, which is a bound on the sum of the lengths of the
	input sequences.
	
	\begin{proof}[Proof of Claim~\ref{claim:induction}]
		We prove this claim by induction on the guide tree $G$. The base case corresponds to the leaves of $G$. On each leaf, the alignment and
		consensus sequence constructed by $M_{\vec{\rho}}$ is constant for all
		$\vec{\rho} \in \reals^d$, since there is only one string to align (i.e.,
		the input string placed at that leaf). Therefore, the claim holds for the
		leaves of $G$. Moving to an internal node $v$, suppose that the inductive hypothesis
		holds for its children $v_1$ and $v_2$. Assume without loss of generality that
		$\height{v_1} \geq \height{v_2}$, so that $\height{v} = \height{v_1} + 1$.
		Let $\cH_{v_1}$ and
		$\cH_{v_2}$ be the sets of hyperplanes corresponding to the children $v_1$ and $v_2$. By the inductive hypothesis, these sets are each of size at most \[s := \ell^{d^{\height{v_1}}}\left(\ell4^d\right)^{\left(d^{\height{v_1}} - 1\right)/(d - 1)}\]   Letting $\cH = \cH_{v_1} \cup \cH_{v_2}$, we are
		guaranteed that for every connected component of $\reals^d \setminus \cH$, the alignment
		and consensus string computed by $M_{\vec{\rho}}$ for both children
		$v_1$ and $v_2$ is constant. Based on work by \citet{Buck43:Partition}, we know that there are at most
		$(2s + 1)^d \leq (3s)^d$
		connected components of $\reals^d \setminus \cH$.
		For each region, by the same argument as in the proof of Lemma~\ref{lem:sequence}, there are an additional
		$\ell$ hyperplanes that partition
		the region into subregions where the outcome of the pairwise merge at node $v$ is
		constant.
		Therefore, there is a set $\cH_v$ of at most
		\begin{align*}
			\ell(3s)^d + 2s &\leq \ell(4s)^d\\
			&= \ell\left(4\ell^{d^{\height{v_1}}}\left(\ell4^d\right)^{\left(d^{\height{v_1}} - 1\right)/(d - 1)}\right)^d\\
			&= \ell^{d^{\height{v_1} + 1}}\left(\ell4^d\right)^{\left(d^{\height{v_1}+1} - d\right)/(d - 1) + 1}\\
			&= \ell^{d^{\height{v_1} + 1}}\left(\ell4^d\right)^{\left(d^{\height{v_1}+1} - 1\right)/(d - 1)}\\
			&= \ell^{d^{\height{v}}}\left(\ell4^d\right)^{\left(d^{\height{v}} - 1\right)/(d - 1)}
		\end{align*}  
		hyperplanes where for every connected component of $\reals^d \setminus \cH$, the alignment
		and consensus string computed by $M_{\vec{\rho}}$ at $v$ is invariant.
	\end{proof}
	Applying Claim~\ref{claim:induction} to the root
	of the guide tree, the function $\vec{\rho} \mapsto
	M_{\vec{\rho}}(S_1, \dots, S_\numseqs, G)$ is piecewise constant with
	$\ell^{d^{\height{G}}}\left(\ell4^d\right)^{\left(d^{\height{G}} - 1\right)/(d - 1)}$  linear boundary
	functions. The lemma then follows from the following chain of inequalities:
	\begin{align*}\ell^{d^{\height{G}}}\left(\ell4^d\right)^{\left(d^{\height{G}} - 1\right)/(d - 1)} &\leq \ell^{d^{\height{G}}}\left(\ell4^d\right)^{d^{\height{G}}}\\
		&= \ell^{2d^{\height{G}}}4^{d^{\height{G}+1}}\\
		&= \left(4^{n\kappa} (n\kappa)^{4n\kappa + 2}\right)^{2d^{\height{G}}}4^{d^{\height{G}+1}}\\
		&= \left(4^{n\kappa} \left(n\kappa\right)^{4n\kappa + 2}\right)^{2d^{\height{G}}}4^{d^{\height{G}+1}}\\
		&\leq \left(4^{n\kappa} \left(n\kappa\right)^{4n\kappa + 2}\right)^{2d^{\eta}}4^{d^{\eta+1}}.\end{align*}
\end{proof}

\end{document}